\def\notes{0}
\def\neurips{0}
\def\tpdp{0}
\let\emptyset\varnothing
\newtheorem{theorem}{Theorem}[section]
\newtheorem{corollary}[theorem]{Corollary}
\newtheorem{lemma}[theorem]{Lemma}
\newtheorem{definition}[theorem]{Definition}
\newtheorem{proposition}[theorem]{Proposition}
\newtheorem{example}[theorem]{Example}
\newtheorem{claim}[theorem]{Claim}
\newcommand{\multiline}[1]{%
  \begin{tabularx}{\dimexpr\linewidth-\ALG@thistlm}[t]{@{}X@{}}
    #1
  \end{tabularx}
}
\algnewcommand{\LineComment}[1]{\Statex \(\triangleright\) \begin{small}{\emph #1}\end{small} }
\renewcommand{\epsilon}{\varepsilon} 
\newcommand{\eps}{\varepsilon} 
\newcommand{\X}{\mathcal{X}} 
\newcommand{\dset}{x} 
\renewcommand{\vec}[1]{{\bf #1}} 
\newcommand{\alg}{\ensuremath{\mathcal{A}}\xspace} 
\newcommand{\Lap}{\mathsf{Lap}}
\newcommand{\inputspace}{\mathcal{X}}
\newcommand{\parameterspace}{\mathcal{M}}
\newcommand{\metric}{d}
\newcommand{\Ber}{\texttt{Ber}}
\newcommand{\KL}{\text{\rm KL}}
\newcommand{\TV}{\text{\rm TV}}
\newcommand{\hypothesistestingrate}{\mathcal{R}_{{\rm loc}, n}}
\newcommand{\privhypothesistestingrate}{\mathcal{R}_{{\rm loc}, n, \epsilon}}
\newcommand{\wasserstein}{\mathcal{W}}
\newcommand{\wass}{\mathcal{W}}
\newcommand{\wassersteingeneral}{\mathfrak{W}}
\newcommand{\activenodes}[2]{\gamma_{{#1}}\left({#2}\right)}
\newcommand{\privatedensitytree}{\texttt{PrivDensityEstTree}}
\newcommand{\depth}{D_T}
\newcommand{\activenodethreshold}{\kappa}
\newcommand{\R}{\mathbb{R}} 
\newcommand{\N}{\mathbb{N}} 
\def\polylog{\operatorname{polylog}}
\DeclareMathOperator*{\E}{\mathbb{E}} 
\newcommand{\floor}[1]{{\lfloor {#1} \rfloor}}
\newcommand{\vfnote}[1]{{\color{blue}\footnote{{\color{blue} {\bf VF:} #1}}}}
\newcommand{\ktnote}[1]{{\color{teal}\footnote{{\color{teal} {\bf KT:} #1}}}}
\newcommand{\ssnote}[1]{{\color{red}\footnote{{\color{red} {\bf SS:} #1}}}}
\newcommand{\amnote}[1]{{\color{brown}\footnote{{\color{brown} {\bf AM:} #1}}}}
\newcommand{\vfnote}[1]{}
\newcommand{\amnote}[1]{}
\newcommand{\ktnote}[1]{}
\newcommand{\ssnote}[1]{}
\newcommand{\am}[1]{{\color{magenta} \sf  [[{\bf AM:} #1]]}}
\newcommand{\sstext}[1]{{\color{red} \sf  [[{\bf SS:} #1]]}}
\newcommand{\am}[1]{#1}
\newcommand{\sstext}[1]{#1}
\renewcommand{\sout}[1]{}
\newcommand{\citet}[1]{\cite[#1]}
\newcommand{\citep}[1]{\cite{#1}}
\newcommand{\Nbrs}{\ensuremath{\mathcal{N}}}
\newcommand{\M}{\ensuremath{\mathcal{M}}}
\title{Instance-Optimal Private Density Estimation in the Wasserstein Distance}
\author{Vitaly Feldman \\ \small Apple \and Audra McMillan \\ \small Apple \and Satchit Sivakumar\footnote{Work partially done while author was an intern at Apple.} \\ \small Boston University \and Kunal Talwar \\ \small Apple}
\begin{document}

\maketitle

\begin{abstract}
\sstext{Estimating the density of a distribution from samples is a fundamental problem in statistics. In many practical settings, the Wasserstein distance is an appropriate error metric for density estimation. For example, when estimating population densities in a geographic region, a small Wasserstein distance means that the estimate is able to capture roughly where the population mass is. In this work we study differentially private density estimation in the Wasserstein distance. We design and analyze instance-optimal algorithms for this problem that can adapt to easy instances.

For distributions $P$ over $\mathbb{R}$, we consider a strong notion of instance-optimality: an algorithm that uniformly achieves the instance-optimal estimation rate is competitive with an algorithm that is told that the distribution is either $P$ or $Q_P$ for some distribution $Q_P$ whose probability density function (pdf) is within a factor of 2 of the pdf of $P$. For distributions over $\mathbb{R}^2$, we use a different notion of instance optimality. We say that an algorithm is instance-optimal if it is competitive with an algorithm that is given a constant-factor multiplicative approximation of the density of the distribution. We characterize the instance-optimal estimation rates in both these settings and show that they are uniformly achievable (up to polylogarithmic factors). Our approach for $\mathbb{R}^2$ extends to arbitrary metric spaces as it goes via hierarchically separated trees. As a special case our results lead to instance-optimal private learning in TV distance for discrete distributions. }

\end{abstract}

\ifnum\tpdp=0
\ifnum\neurips=0
\tableofcontents
\newpage
\fi
\fi

\section{Introduction}







Distribution estimation is a fundamental problem in statistics. In this work, we focus on the problem of learning the density of a distribution over a \sstext{\sout{ metric space} low-dimensional real space.
Our motivation for studying this problem comes from practical problems such as estimating the population density in a geographical area (defined by bounded two dimensional space, for e.g. $[0,\ell]^2$), learning the distribution of accuracy of a machine learning model (i.e. a distribution over $[0,1]$), estimating the average temperature across latitude, longitude, and altitude (i.e. a distribution over $[0,\ell]^3$) etc.

\sout{In the simplest setting, we want to learn a distribution over the interval $[0,1]$; this problem arises naturally in several practical setting. For example, we may want to learn the distribution of accuracy of a machine learning model, given accuracy metrics from a sample of users.

More generally, we may be interested in learning a distribution over $\mathbb{R}^d$ or $[0,1]^d$, or even more generally over a metric space. As an example, we may be interested in understanding the distribution of a set of real-valued features, such as income, life expectancy, BMI or geographic population densities.}}

In this work, we are interested in the {\em non-parametric} version of this question, where we make no assumptions on the form of the distribution we are learning. \sstext{This is frequently of interest in practice, where population densities for example may change over time (become more or less concentrated), and it is difficult to specify a meaningful parametric class that will simultaneously capture all densities of interest\sout{ and allow for good minimax bounds}. Given estimation is often done using sensitive data (for e.g. health data),} our interest in this question is in, and consequently all our results are for, the differentially private version of this question. While we believe our results in the non-private setting are also novel and interesting, we view the private results as our main contribution.

Any statistical algorithm learning from samples is inexact. The appropriate gauge to measure the (in)accuracy of a density estimation algorithm depends on how this density estimate is used. In this work, we focus on the {\em Wasserstein} distance between the original distribution and the learnt distribution as our measure of accuracy. Known by many names (Earthmover distance, Kantorovich distance, Optimal Transport distance), this distance is defined over any distance metric $d$ as the minimum over all couplings $\pi$ from $P$ to $Q$ of the quantity $\E_{x \sim P} [d(x, \pi(x))]$. It is arguably one of the most natural ways to define distances between distributions over a metric space and has been extensively studied \ifnum\tpdp=0 (see~\cref{sec:related}) \fi. \ifnum\tpdp=0 \sstext{We note that Wasserstein distance is particularly salient in many practical applications of density estimation where the geometry of the space is significant. As a simple example, when creating population density estimates, if the population is concentrated in a few cities, then outputting a distribution concentrated close to these cities (even if not exactly at the cities) is intuitively better than outputting a distribution that is more spread out. Metrics such as TV distance that do not incorporate the geometry of the space do not capture this nuance. Additionally, Wasserstein distance is versatile and can be adapted to the setting of interest by varying the metric.}
In the case of the metric being a discrete metric with $d(x,y) = \mathbf{1}(x\neq y)$, 
it  reduces to the commonly used total variation $(\TV)$ distance. Our focus in this work is on the case of Euclidean distance metric on $[0,1]$ or $[0,1]^2$, though our results apply to both to higher-dimensional Euclidean space as well as to any finite metric. In the $[0,1]$ case (with the standard Euclidean metric), the Wasserstein distance is equivalent to the total area between the cumulative distribution functions. \fi 


The problem of learning a distribution under Wasserstein distance has a long history\ifnum\tpdp=0, starting with \cite{Dudley69} proving worst-case bounds on the rate of convergence of the Wasserstein distance between the empirical distribution $\hat{P}_n$ and the target distribution $P$ over $\mathbb{R}^d$.  Similarly, this question for the case of the discrete metric ($d(x,y) = \mathbf{1}(x\neq y)$) has been very well studied\fi.   However, most known results for this problem look at it from the point of view of worst-case analysis. This can paint a rather pessimistic picture. \sstext{For example, the minimax rate of $\eps$-privately learning a discrete distribution over $\{0,\dots,k\}$ in TV distance (i.e. Wasserstein with the discrete metric described above) scales linearly with $k$, which can be prohibitive for large support size $k$. For Wasserstein distance with $\ell_2$ norm, the rate of convergence of the empirical distribution suffers a curse of dimensionality, with the worst-case error between the distribution and the empirical distribution being $\Theta(n^{-\frac{1}{d}})$ for distributions over $[0,1]^d$.} 
For the differentially private version of this question, recent works~\citep{BoedihardjoSV22,HeVZ23} have shown that the optimal Wasserstein \am{minimax }error between the sample and the private estimate is $\tilde{\Theta}((\varepsilon n)^{-\frac 1 d})$. 
This worst-case analysis viewpoint fails to distinguish between algorithms that perform very differently on the types of instances one may see in practice. \sstext{In particular, many practical distributions may be more feasible to estimate than suggested by the minimax rate.}
\ifnum\tpdp=0
As an example, \cref{fig:example1d}  shows the cumulative distribution function of a bimodal distribution on $[0,1]$ with very sparse support, and the cdf learnt by a minimax optimal algorithm, as well as an algorithm we present in this work \ifnum\neurips=1(See~\Cref{sec:exp} for details on this experiment)\fi.  As is clear from the figure, the minimax optimal algorithm is easily outperformed. This phenomenon only gets worse in higher dimensions.
\else 
As an example, in~\Cref{sec:exp}, we describe an experiment where we consider the cumulative distribution function of a bimodal distribution on $[0,1]$ with very sparse support, and the cdf learnt by a minimax optimal algorithm, as well as an algorithm we present in this work. As is clear from the results presented in that section, the minimax optimal algorithm is easily outperformed. This phenomenon only gets worse in higher dimensions.
\fi
Similarly, if the distribution in $\Re^d$ lies on a $k$-dimensional subspace, the worst-case error scaling with $\tilde{O}((\eps n)^{-\frac{1}{d}})$ is significantly larger than our algorithm's scaling of $\tilde{O}((\eps n)^{-\frac{1}{k}})$.
\ifnum\tpdp=0
\begin{figure}
     \centering
     \begin{subfigure}[b]{0.45\textwidth}
         \centering
         \includegraphics[width=\textwidth]{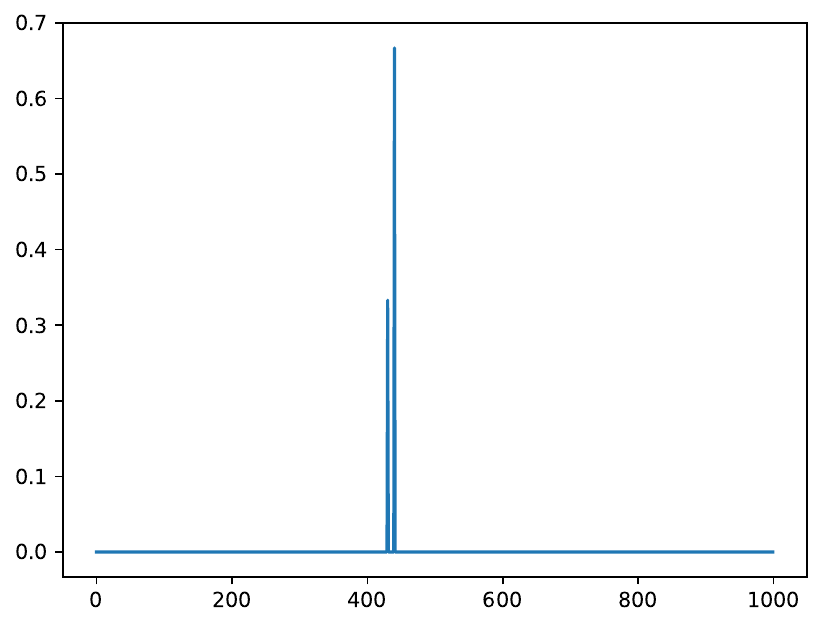}
     \end{subfigure}
     \hfill
     \begin{subfigure}[b]{0.45\textwidth}
         \centering
         \includegraphics[width=\textwidth]{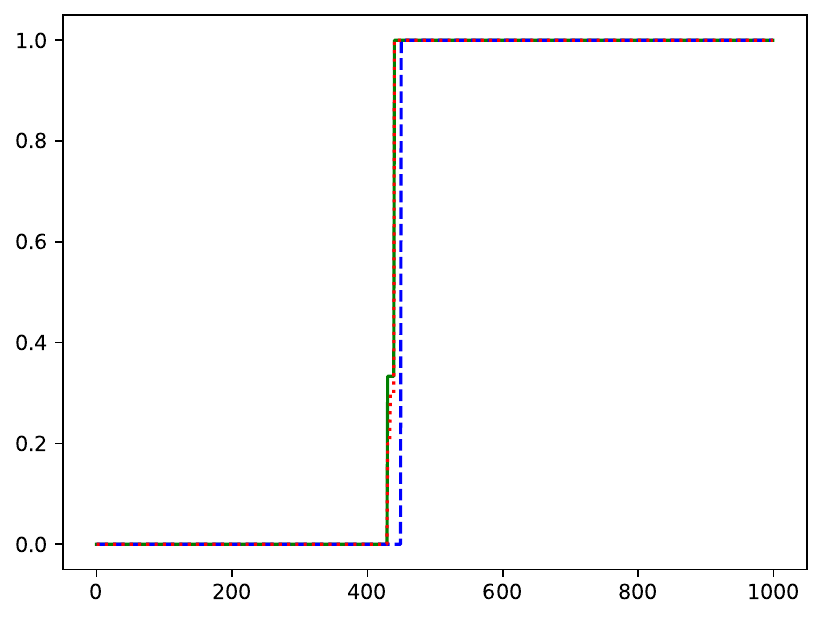}
     \end{subfigure}
     \caption{(Left) A sparsely supported distribution on integers [0,999] (pdf). (Right) CDF for the same distribution (green, solid line), along with a (non-private) minimax optimal learnt distribution (blue, dashed line), as well as 1-DP instance-optimal algorithm (red, dotted), both learnt from the same 1600 samples. The $W_1$ error for the minimax optimal algorithm is 13.4, whereas the DP estimated distribution has $W_1$ error of $0.86$. While this example is artificial, it demonstrates the large potential gap between minimax optimal and instance optimal algorithms on specific instances.}
         \label{fig:example1d}
\end{figure}

\fi

This motivates the problem of viewing this question through the lens of {\em instance optimality}. \ifnum\tpdp=0 \footnote{c.f. related work section for discussion of other beyond worst case analysis approaches for this question\am{ and~\Cref{sec:instoptdisc} for a more in-depth discussion of our approach.}} \fi Briefly, instance optimal algorithms are those that on any given instance of the problem, are able to perform competitively with what any algorithm can do on this instance. 
Let $\M$ be a class of algorithms of interest (e.g. all $(\eps,\delta)$-differentially private algorithms) and $cost(\cdot, P)$ be a cost measure for an instance $P$. In our setting, we have a distribution $P$ over a metric space, and given a set $\hat{P}_n$ of $n$ samples from $P$, we want to learn an estimate $\alg(\hat{P}_n)$ for the distribution. Our measure of performance is the Wasserstein distance $\wasserstein$, so $cost(\mathcal{A},P)=\mathbb{E}[\wasserstein(P,\alg(\hat{P}_n))]$. 
We would ideally like to say that an algorithm $\mathcal{A}$ is $\alpha$-instance optimal in a class $\M$ if for all instances $P$, and all $\mathcal{A'} \in M$, 
\ifnum\tpdp=1 $cost(\mathcal{A}(P), P) \leq \alpha \cdot cost(\mathcal{A'}(P), P).$ \else
\begin{align*}
    cost(\mathcal{A}(\hat{P}_n)), P) \leq \alpha \cdot cost(\mathcal{A'}(\hat{P}_n)), P). \tag{InstanceOptimality-Ideal}
\end{align*}
\fi

The reader would have noticed that this definition is however impossible to achieve except for trivial classes $\M$. The algorithm $\alg'$ that ignores its input and always outputs $P$ makes the right hand side 0. However, this algorithm performs poorly on any distributions far from $P$ and so is not a reasonable benchmark. A common approach in many works is to measure the performance of the competing algorithm $\alg'$ not just on the given instance, but on a small neighborhood around it. Thus we say that that an algorithm $\mathcal{A}$ is $\alpha$-instance optimal amongst a class $\M$ with respect to a neighborhood function $\Nbrs$ if for all instances $P$, and all $\mathcal{A'} \in M$
\begin{align*}
    cost(\mathcal{A}(\hat{P}_n)), P) \leq \alpha \cdot \sup_{P' \in \Nbrs(P)} cost(\mathcal{A'}(\hat{P'}_n)), P').
\end{align*}

In other words, the benchmark we evaluate against is the cost of the best algorithm for a neighborhood $\Nbrs(P)$ \emph{that knows this neighborhood}. We would like our algorithm $\mathcal{A}$, that is not tailor-made for $\Nbrs(P)$, to nevertheless be competitive against this benchmark.

This definition is general, and captures most notions of instance optimality that have been studied in the literature. The set $\Nbrs(P)$ must be carefully defined for this notion to be meaningful; we can always define $\Nbrs(P)$ to be the set of all instances whence this notion reduces to worst-case analysis. In many previous works, this neighborhood map has been defined to capture the belief that any natural 
algorithm must not have significantly different performance on different members of $\Nbrs(P)$. For example, ~\cite{FaginLN01, AfshaniBC17,ValiantV16,OrlitskyS15,GrossmanKM20} include in $\Nbrs(P)$ appropriate renamings of $P$ to capture some kind of permutation invariance of natural algorithms. In statistics, one often enforces that the cardinality of $\Nbrs(P)$ is 2, often called the \emph{hardest one-dimensional subproblem}~\citep{CaiL15,AsiD20,DangLSV23}. Some recent works in privacy~\citep{HuangLY21,DickKSS23} have defined instance optimality w.r.t.~neighboring datasets obtained by deleting a small number of data points. Any reasonable definition of instance optimality for a problem must justify its choice of the neighborhood map; similar choices must be justifiable in every other notion of beyond worst case analysis~\citep{Roughgarden21}. In instance-optimality definitions, this choice of neighborhood is what encapsulates what class of domain-specific algorithms our algorithm competes against. A good definition thus depends on the context and on the kind of domain knowledge we imagine an expert designing a custom algorithm for an application may have. Ideally, the definition is broad (i.e. the neighborhoods $\Nbrs(P)$ are sufficiently contained) so that in a large class of applications, we expect the domain knowledge to not be enough to rule out any member of $\Nbrs(P)$. \ifnum\tpdp=0 We discuss this general definition of instance optimality further in \cref{sec:instance-optimality-def}. \fi We remark that for reasonable neighborhood maps, this is an extremely strong requirement: an instance-optimal algorithm must simultaneously do well on every single input, in fact as well as any other algorithm that is given this neighborhood $\Nbrs(P)$ in advance!  

\am{Instance optimality guarantees are most useful when there is a big difference between achievable utility guarantees for typical cases and the worst-case utility guarantees. Wasserstein estimation is an example of such a problem. We will see that achievable utility bounds for, for example, concentrated distributions are a lot better than worse case distributions. Our definition of instance optimality is particularly suitable for metric spaces, and our notion of neighborhood allows the target utility bound to adapt to the distribution\sout{, and hence strengthens previous work that aims to define notions of ``dimension" that aim to go beyond worst case estimation}. \sstext{We note that for estimation in Wasserstein distance with practically important metrics such as $\ell_1$ and $\ell_2$ norms, it is unclear if existing instance optimality definitions (using notions of neighborhood discussed above) capture this. For example, for discrete distributions, setting the neighborhood to be all permutations of the distribution destroys all structure of the distribution (for e.g. concentration), and hence performance on this neighborhood may not capture the relative ease of estimation of a concentrated distribution. Similar problems apply to other previously studied definitions of instance optimality, which are not well-suited to density estimation with error metrics that incorporate the geometry of the metric space. See~\Cref{sec:instoptdisc} and~\Cref{sec:related} for further discussion on the inadequacy of existing instance optimality definitions for our setting of interest.

}}

\sout{Given that we are dealing with distributions, our algorithms will take as input a set $\hat{P}_n$ of $n$ i.i.d.~samples from the distribution $P$. }Our notion of neighborhood will correspond to small balls in one of the strictest notions of distance between distributions. Recall that for distributions $P, Q$ on $X$, $D_{\infty}(P, Q)$ is defined as $\sup_{x \in X} \max\left(\ln \frac{P(x)}{Q(x)}, \ln\frac{Q(x)}{P(x)}\right)$. 
Our neighborhood map $\Nbrs$ will have the property that for all $P$, and for all $Q \in \Nbrs(P)$, $D_{\infty}(P,Q) \leq \ln 2$. This corresponds to the benchmark algorithm $\alg'$ being given as auxiliary input a multiplicative constant factor approximation to the probability density function $P(x)$ (and we can replace the constant $2$ by any constant). \ifnum\tpdp=0 In particular, an algorithm that knows the support of the distribution $P$ will not be able to do much better than our algorithm that gets no such information. \am{Notice that this implicitly implies that our algorithm is able to exploit sparsity in the data distribution since it is competitive with an algorithm that is told the support.} \fi
\ifnum\tpdp=0 In the one-dimensional real case we can achieve an even stronger notion of instance-optimality. In this case $\Nbrs(P)$ is defined to be $\{P, Q\}$ where $Q$ 
is a distribution with $D_{\infty}(P,Q) \leq \ln 2$. This is a strengthening of the rate defined by the \textit{hardest one-dimensional subproblem}. 

We also give a definition that captures another aspect of instance optimality, related to the notion of super efficiency, that we term local minimality in~\cref{sec:instance-optimality-def}. Informally, local minimality says that if any comparator algorithm does better than $\alg$ on $P$, then there is a distribution $Q$ in the neighborhood of $P$ where $\alg$ does better than the comparator. Approximate local minimality relaxes the latter condition to being better than some constant times the comparator. The two definitions of approximate local minimality and instance optimality are in general incomparable (see~\cref{sec:instance-optimality-def}) but for suitable smooth algorithms, we show that these definitions are equivalent. Our algorithms, both for the 1-dimensional and the case of general metric spaces approximately satisfy both these definitions.
\fi

\am{In order to show that the instance optimality definition is achievable, we give both algorithmic upper bounds and matching, up to logarithmic factors, theoretical lower bounds. The algorithms we use in our upper bounds are built largely from ingredients previously used for similar problems. We see this as an asset since these algorithms are implementable in practice. A key ingredient that we do introduce is the use of randomised HST approximation of finite metric spaces. This replaces deterministic hierarchical decompositions that were used in prior work, allowing us to gain tighter utility guarantees. Our main conceptual contribution is to introduce what we believe to the right notion of instance optimality for this problem, including the definition of a meaningful neighbourhood function. The main technical challenge is in the lower bounds, which require carefully building nets of distributions within each neighborhood $\Nbrs(P)$ that allow us to use a slight generalisation of DP Assoud's Lemma to give a lower bound on the target estimation rate for each distribution $P$.}

\ifnum\neurips=1
\paragraph{Preliminaries:}
First, we define differential privacy. Further discussion on differential privacy can be found in~\Cref{sec:prelims}.

\begin{definition}[Differential Privacy~\citep{DworkMNS06j,DworkKMMN06}]\label{def:differentially private} A randomized algorithm $\mathcal{A}: \X^n \rightarrow \mathcal{Y}$ is {\em $(\eps, \delta)$-differentially private} if for every pair of datasets $\vec{\dset}, \vec{\dset}'\in \X^n$ that differ in at-most one data entry, and for all events $Y\subseteq \mathcal{Y}$,
 \begin{equation*}
    \Pr[\mathcal{A}(\vec{\dset}) \in Y] \leq e^\eps \cdot \Pr[\mathcal{A}(\vec{\dset}') \in Y] + \delta.
 \end{equation*}
 \end{definition}

Given an estimation algorithm $\mathcal{A}:\inputspace^n\to\parameterspace$, the estimation rate of $\mathcal{A}$ for distribution $P$ is:
\begin{align}\label{eq:onedestrate}
R_{\mathcal{A},n}(P) = \inf_{t \in \mathbb{R}} & \{t: \text{w.p. $\ge 0.75$ over $\vec{\dset} \sim P^n$ and the randomness of the algorithm, }  \wass_d(\mathcal{A}(\vec{\dset}), P) \leq  t\}.
\end{align} 
\fi

\subsection{Our Results}
\ifnum\neurips=0
Given an estimation algorithm $\mathcal{A}:\inputspace^n\to\parameterspace$, the estimation rate of $\mathcal{A}$ for distribution $P$ is:
\begin{align}\label{eq:onedestrate}
R_{\mathcal{A},n}(P) = \inf_{t \in \mathbb{R}} & \{t: \text{w.p. $\ge 0.75$ over $\vec{\dset} \sim P^n$ and the randomness of the algorithm, }  \wass_d(\mathcal{A}(\vec{\dset}), P) \leq  t\}.
\end{align}
\fi

We start by stating an informal version of our result in the one-dimensional real case. 

\begin{theorem}[Informal 1-dimensional result]
Let  $\eps, \gamma \in (0,1]$.  
There is an $\eps$-differentially private algorithm $\alg$ such that, for all distributions $P$ supported in $[0,1]$, for all natural numbers $n > \frac{\polylog 1/\gamma}{\eps}$, there exists a distribution $Q$ (with $D_{\infty}(P,Q) \leq \ln 2$) such that the following is satisfied.

For any $\eps$-DP algorithm $\alg'$, with probability at least $0.75$ over the randomness of $\vec{\dset} \sim P^n$ and additional randomness of the algorithm,
\begin{align*}
\wasserstein(\alg(\hat{P}_n), P) &\leq \polylog n \cdot \sup_{P' \in \{P, Q \} } R_{\alg', n'}(P') + \gamma
\end{align*}
where $n' \approx \frac{n}{\polylog n/\gamma}$ 
\end{theorem}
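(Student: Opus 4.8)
# Proof Proposal for the Informal 1-Dimensional Result

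\textbf{Overall strategy.} The plan is to prove this in two halves: an \emph{algorithmic upper bound} (constructing $\alg$ and bounding $\wasserstein(\alg(\hat P_n),P)$ in terms of a distribution-dependent quantity), and a \emph{matching lower bound} (showing that for a suitable $Q$ with $D_\infty(P,Q)\le\ln 2$, \emph{every} $\eps$-DP algorithm $\alg'$ incurs error at least that same quantity, up to logs, on the harder of $P$ and $Q$). The bridge between the two is an explicit instance-dependent rate functional $r_n(P)$ — morally the sum over dyadic scales $j$ of (mass of $P$ on the "active" intervals at scale $j$) $\times 2^{-j}$, truncated at the scale where sampling/privacy noise dominates — such that the upper bound gives $\wasserstein(\alg(\hat P_n),P)\lesssim \polylog(n)\cdot r_n(P) + \gamma$ and the lower bound gives $\sup_{P'\in\{P,Q\}} R_{\alg',n'}(P') \gtrsim r_{n'}(P)$.

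\textbf{Upper bound.} First I would build $\alg$ from a dyadic hierarchy on $[0,1]$: at each level $j$ up to depth $D \approx \log(n\eps/\polylog)$, privately estimate the masses of the $2^j$ dyadic intervals using a tree-aggregation / hierarchical histogram mechanism, splitting the privacy budget $\eps$ across the $O(\log n)$ levels. Crucially, rather than reporting all intervals, $\alg$ privately identifies the set of \emph{active} intervals (those whose estimated mass exceeds a threshold $\kappa \sim \polylog(n/\gamma)/(\eps n)$) and only refines those; the cumulative contribution of all pruned intervals to the Wasserstein distance is $O(\gamma)$ by a geometric-series argument, since the reconstructed CDF deviates from the true CDF only on pruned mass. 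On the retained part of the tree, the per-level error is (Laplace noise of scale $\polylog/\eps$ aggregated over active nodes) plus (empirical deviation $\sqrt{p_j/n}$ on each active node); reconstructing a distribution consistent with these estimates and integrating $|F_{\alg}-F_P|$ over $[0,1]$ gives $\wasserstein(\alg(\hat P_n),P) \lesssim \sum_j 2^{-j}\big(\text{active mass at level } j\big) + \text{noise floor} + \gamma$, i.e. $\polylog(n)\cdot r_n(P)+\gamma$. Privacy is immediate from composition over levels plus the privacy of the active-set selection (report-noisy-max / sparse-vector style).

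\textbf{Lower bound and the choice of $Q$.} This is the hard part. I need to produce \emph{one} distribution $Q$ with $D_\infty(P,Q)\le\ln 2$ such that no $\eps$-DP $\alg'$ beats $r_{n'}(P)$ on $\max(R_{\alg',n'}(P),R_{\alg',n'}(Q))$. The approach is a private Assouad / Fano-type argument localized to the neighborhood: inside each active dyadic interval $I$ at the "critical" scales, I perturb $P$ by moving a $\Theta(1)$ fraction of $I$'s mass left or right within $I$, giving a family of $2^{m}$ distributions indexed by sign vectors, all within $D_\infty\le\ln 2$ of $P$; any two differing in one coordinate are $\Omega(2^{-j})$ apart in $\wasserstein$ but statistically close — TV distance $O(\sqrt{p_j/n'})$ for the sampling constraint and KL-type distance $O(\eps^2 p_j n')$ for the DP constraint (here I'd invoke the stated "slight generalisation of DP Assouad's Lemma"). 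Balancing these forces the per-scale indistinguishability radius and summing over scales recovers $r_{n'}(P)$; I then take $Q$ to be the single member of this family (or a mixture) that is hardest to distinguish from $P$, which by an averaging argument certifies the bound $\sup_{P'\in\{P,Q\}}R_{\alg',n'}(P')\gtrsim r_{n'}(P)$. The main obstacles are (i) ensuring the \emph{same} functional $r_n$ appears on both sides up to only polylog slack, which requires the active-set threshold $\kappa$ in the algorithm to be matched to the indistinguishability radius in the lower bound, and (ii) handling the sampling-versus-privacy tradeoff cleanly across \emph{all} scales simultaneously rather than one scale at a time — this is where the net-of-distributions-within-$\Nbrs(P)$ construction and the generalized DP Assouad lemma do the real work, and where I'd expect the bulk of the technical effort to lie.
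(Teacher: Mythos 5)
Your upper bound uses a genuinely different algorithm from the paper's: you build a dyadic hierarchy on $[0,1]$ with private histogram estimation, active-node pruning at threshold $\kappa \sim \polylog/(\eps n)$, and top-down CDF reconstruction. This is essentially the paper's HST algorithm (Section~\ref{HST}) specialized to the binary tree on $[0,1]$, and it would give an upper bound of the right form up to polylog factors. The paper instead uses a quantile-based algorithm for the 1D case (Algorithm~\ref{alg:wass1dest}): privately estimate $k \approx \eps n/\polylog$ equally spaced quantiles via a DP CDF estimator and place mass $1/k$ at each. The two are comparable in the end, but the quantile formulation makes the rate functional appear directly in terms of the interquantile distance $q_{1-1/k}-q_{1/k}$ and the tail restriction $\wass(P, P|_{q_{1/k},q_{1-1/k}})$, which is the form the lower bound naturally matches. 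Your hierarchical upper bound would require an extra comparison step to show its rate is dominated by the same quantile-expressed functional.

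The real gap is in your lower bound. The theorem claims \emph{there exists a single} $Q$ with $D_\infty(P,Q)\le\ln 2$ such that \emph{every} $\eps$-DP $\alg'$ fails on one of $\{P,Q\}$ — the quantifier order is $\exists Q\; \forall \alg'$. Your DP Assouad argument produces a family $\{P_u : u \in \{\pm 1\}^m\}$ and lower-bounds the worst-case (or average) error of $\alg'$ over the whole family. That gives a lower bound for $\sup_{P'\in\Nbrs(P)}R_{\alg',n'}(P')$ (the weaker benchmark from Theorems~\ref{informalopt2d}/\ref{informalopthighd}), not for $\sup_{P'\in\{P,Q\}}R_{\alg',n'}(P')$ with $Q$ fixed in advance. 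Your proposed fix — ``take $Q$ to be the single member of this family that is hardest to distinguish from $P$, which by an averaging argument certifies the bound'' — does not close the gap: which member of the family is ``hardest'' depends on $\alg'$, and Assouad says only that no algorithm can be good against all of them simultaneously, not that one of them is universally bad. An averaging argument would give you an \emph{expected} lower bound over a random $Q$, again not a single deterministic $Q$ that works against every comparator. What you need here is a Le~Cam-style two-point argument, which is what the paper actually does (Theorems~\ref{thm:1dprivlb} and~\ref{thm:empterm1Dlb}): it writes down a single explicit $Q$ obtained by rescaling $f_P$ multiplicatively — $f_Q=\tfrac12 f_P$ below $q_{1/k}$ and $\tfrac32 f_P$ above $q_{1-1/k}$ for the privacy term, and $f_Q = (1\pm\sqrt{2^i/n})f_P$ at dyadic scale $i<\log n$, $(1\pm\tfrac12)f_P$ at the tail scales, for the sampling term — then bounds $\KL(P,Q)$ or the private testing complexity directly, and argues that if $\alg'$ were accurate on both $P$ and $Q$ it would yield a test distinguishing them, contradicting indistinguishability. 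The careful construction of this single $Q$ (especially making it both indistinguishable and $\Omega(r_n(P))$ far in Wasserstein, while staying in the $D_\infty$ ball) is where the bulk of the 1D-specific technical work lives, and it is not recoverable from the Assouad route.
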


In this one-dimensional case, our algorithm is based on DP quantile estimation. The additive $\gamma$ term can be made polynomially small. \ifnum\tpdp=0 The lower bound is based on (differentially private) simple hypothesis testing where for each distribution $P$, we find a distribution in $\mathcal{N}(P)$ that is indistinguishable from $P$ given $n$ samples but also sufficiently far from $P$ in Wasserstein distance. 
\else This gives us an instance optimal algorithm up to logarithmic factors in $n$. \fi 

Extending the quantiles based approach from the one dimensional setting to even the two dimensional setting is challenging, as there is no ``right'' way to generalize quantiles to dimensions 2 or beyond. 
Several previous works on Wasserstein density estimation (e.g.~\cite{BaNNR09}) have used a hierarchical decomposition approach to address this question. A hierarchical approach has also been used in various more practical works on private density estimation (e.g.~\cite{CormodeB22,QardajiYL12,BagdasaryanKMGBEG21,Mcmillan+22, Zhang:2016}). These works focus on practical performance and do not offer tight theoretical bounds. \sstext{A hierarchical approach was also used by~\cite{GhaziHK0MNV23}, who proved theoretical bounds for a related problem, but not through the lens of instance optimality. We compare our results to theirs in more detail later in this section.}

\sstext{The use of deterministic hierarchical decompositions in all these papers means that some points that are very close (but on opposite sides of the boundaries of the hierarchical decomposition) get mapped to relatively far points, resulting in high distortion factors that are not appropriate for instance optimality. 

Inspired by the above approaches but noting their constraints, we use a randomized embedding into hierarchically separated trees instead of a deterministic one. We define our algorithm on any hierarchically separated tree metric and use the fact that there is a randomized embedding of $[0,1]^2$ on a hierarchically separated tree metric space with low distortion. This, along with some other important technical modifications (such as truncating low values to $0$), allows us to analyze a variant of the above practical algorithms theoretically and show that it satisfies our strong notion of instance optimality, \sout{and consequently applies to any (finite) metric space. We show that the algorithm is instance optimal,} up to polylogarithmic factors in the number of samples.

\begin{theorem}[Informal two-dimensional result]\label{informalopt2d}
There is a polynomial time $\eps$-differentially private algorithm $\alg$ that for any distribution $P$ on $[0,1]^2$, any integer $n$, and any $\eps$-DP algorithm $\alg'$ with probability at least 0.75, satisfies
\begin{align*}
\wasserstein_2(\alg(\hat{P}_n), P) &\leq (\log n)^{O(1)} \sup_{P' : D_{\infty}(P, P') \leq \ln 2} \E[\wasserstein_2(\alg'(\hat{P'}_{n'}), P')],
\end{align*}
where $n' \approx \frac{n}{\polylog n}$. 
Here, the expectation is taken over the internal coin tosses of $\alg$ as well as over the choice of the i.i.d. samples $\hat{P}_n$.
\end{theorem}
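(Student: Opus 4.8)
\smallskip
\noindent\textbf{Proof plan.}
The plan is to identify an explicit instance rate $R^*_n(P)$ and prove two matching estimates: an algorithmic upper bound $\E[\wasserstein_2(\alg(\hat P_n),P)]\le(\log n)^{O(1)}R^*_n(P)$, and a lower bound showing that every $\eps$-DP $\alg'$ satisfies $\sup_{P':\,D_{\infty}(P,P')\le\ln 2}\E[\wasserstein_2(\alg'(\hat P'_{n'}),P')]\ge R^*_{n'}(P)/(\log n)^{O(1)}$ with $n'=n/\polylog n$. Since more samples only shrink the rate (the two rates differ only through the detectability threshold $\activenodethreshold\asymp 1/(\eps n)$ versus $1/(\eps n')$, a $\polylog$ change), chaining $R^*_n(P)\le R^*_{n'}(P)$ across the two estimates proves the expected-error form of the theorem, and Markov's inequality turns the bound on $\E[\wasserstein_2(\alg(\hat P_n),P)]$ into the stated probability-$0.75$ guarantee (the statement is vacuous when $n$ is so small that the right-hand side exceeds the diameter $\sqrt 2$). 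The rate is intrinsic: roughly $R^*_n(P)\asymp\E[\wasserstein_2(\hat P_n,P)]+\sum_j 2^{-j}\,c_j(P)/(\eps n)$, where $c_j(P)$ counts the axis-parallel dyadic cells of side $2^{-j}$ in $[0,1]^2$ with $P$-mass at least $\activenodethreshold$; since $c_j(P)\le\min(4^j,\eps n)$ this never exceeds the minimax rate $\tilde\Theta((\eps n)^{-1/2})$ but is far smaller when $P$ is concentrated or low-dimensional.

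\smallskip
\noindent\textbf{Upper bound.}
First discretize $[0,1]^2$ to a grid of side $1/\mathrm{poly}(n)$ (distorting $\wasserstein_2$ by an additive $1/\mathrm{poly}(n)$, absorbed at the end), then sample a random $2$-HST on the grid with expected multiplicative distortion $O(\log n)$ — an efficiently samplable, dominating embedding, as recalled above. On the sampled tree $\alg$ runs the classical private hierarchical histogram: at each of the $\depth=O(\log n)$ levels it releases, via the Laplace mechanism with budget $\eps/\depth$, the noisy sample count under each node, then post-processes by zeroing every node whose estimate is below $\activenodethreshold\asymp\depth/(\eps n)$, renormalizing level by level, and outputting any distribution $\hat Q$ consistent with the result. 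Since the embedding is dominating, $\wasserstein_2(\hat Q,P)\le\wasserstein_{\mathrm{HST}}(\hat Q,P)$, and the tree distance telescopes over levels, the level-$\ell$ term being $\Delta_\ell\asymp 2^{-\ell}$ times the $\ell_1$-error of the estimated mass vector on level-$\ell$ nodes. That $\ell_1$-error splits into (i) the sampling error $\wasserstein_{\mathrm{HST}}(\hat P_n,P)$; (ii) Laplace noise on nodes of true mass $\ge\activenodethreshold$, contributing $\asymp\Delta_\ell\cdot(\#\text{active level-}\ell\text{ nodes})/(\eps n)$; and (iii) mass lost to truncation, controlled by cutting the tree at depth $\asymp\log(\eps n)$ and bounding the truncated mass at each level by the active-node and sampling terms. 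Taking expectation over the random HST and using that a tree cluster of diameter $\le 2^{-\ell}$ meets only $O(1)$ cells of side $2^{-\ell}$, the expected active-node count at level $\ell$ is $O(c_\ell(P))$ up to a constant shrinkage of the threshold; a Chernoff/union bound over the $O(\log n)$ levels then gives $\E[\wasserstein_2(\alg(\hat P_n),P)]\le(\log n)^{O(1)}R^*_n(P)$, with one $O(\log n)$ factor from the embedding. Every step runs in $\mathrm{poly}(n)$ time.

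\smallskip
\noindent\textbf{Lower bound.}
Since $R^*_{n'}(P)$ is a sum of $O(\log n)$ scale terms, it suffices to lower bound, for a single best scale $j$, the quantity $2^{-j}c_j(P)/(\eps n')$; this avoids perturbing all scales at once. Fix such a $j$, set $\activenodethreshold\asymp 1/(\eps n')$, and consider the side-$2^{-j}$ cells of $P$-mass $\asymp\activenodethreshold$ (by the threshold-robustness remark there are $\gtrsim c_j(P)$ of them after a constant-factor change of threshold). For each such cell $u$ and each bit $b_u\in\oneo$, move a constant fraction of $u$'s mass between two points of $u$ at distance $\asymp 2^{-j}$, in the direction set by $b_u$. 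The resulting family $\{P_b\}$ lies inside $\{P':D_{\infty}(P,P')\le\ln 2\}$ (every point mass stays within a factor $2$); pairwise $\wasserstein_2(P_b,P_{b'})\asymp\sum_{u:\,b_u\ne b'_u}2^{-j}\activenodethreshold$; and flipping $b_u$ changes the law only inside $u$, a region of probability $\asymp\activenodethreshold$, so adjacent hypotheses admit a coupling of $n'$-sample datasets differing in $\asymp n'\activenodethreshold=O(1)$ entries. This is exactly the hypothesis-cube setting of Assouad's lemma with non-identical coordinates (coordinate $u$ carries its own mass $\activenodethreshold$ and loss weight $2^{-j}$); applying a weighted DP version of Assouad — the ``slight generalization'' — gives, since $\eps n'\activenodethreshold=O(1)$, that $\sup_b\E[\wasserstein_2(\alg'(\hat P_{b,n'}),P_b)]\gtrsim\sum_u 2^{-j}\activenodethreshold\asymp 2^{-j}c_j(P)/(\eps n')$ for any $\eps$-DP $\alg'$. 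Taking the best scale, adding the privacy-free sampling lower bound over the ball (which forces $\gtrsim\E[\wasserstein_2(\hat P_{n'},P)]$), and noting $P_b\in\Nbrs(P)$, we obtain $\sup_{P':D_{\infty}(P,P')\le\ln 2}\E[\wasserstein_2(\alg'(\hat P'_{n'}),P')]\gtrsim R^*_{n'}(P)/(\log n)^{O(1)}$, closing the chain.

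\smallskip
\noindent\textbf{Main obstacle.}
The upper bound is mostly assembled from known pieces (randomized HST embeddings and private hierarchical counting); its only delicate points are the truncation/renormalization step and the accounting that turns expected HST active-node counts into the intrinsic counts $c_j(P)$ — both of which work precisely because the multiplicative neighborhood $\Nbrs(P)$ absorbs constant-factor ambiguity near the threshold. The genuine difficulty is the lower bound: for an \emph{arbitrary} $P$ one must build a packing that lives inside the multiplicative ball $\Nbrs(P)$ yet whose $\wasserstein_2$-separation profile matches $R^*_{n'}(P)$ scale by scale, and then aggregate the per-cell private testing bounds through a form of DP Assouad that handles coordinates with non-uniform masses and loss weights. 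Making the constants in $\activenodethreshold\asymp 1/(\eps n')$ and the coupling count $n'\activenodethreshold=O(1)$ line up with the threshold the algorithm actually uses — so that the two bounds meet up to $(\log n)^{O(1)}$ — is exactly where the $n'=n/\polylog n$ loss is incurred.
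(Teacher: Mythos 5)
Your high-level architecture matches the paper's: reduce to an HST via a randomized FRT-type embedding, run a private hierarchical counting algorithm with thresholding, and prove lower bounds by constructing families of distributions inside the $D_\infty$-ball around $P$ to which a DP version of Assouad applies. That part is sound and follows the paper closely. However, there are two genuine gaps, both in the lower bound, and they trace back to the fact that your candidate rate $R^*_n(P)$ omits an entire term that the paper carries.

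The paper's per-level rate (Theorems 4.1/4.4) has three pieces: the sampling error, a \emph{tail-mass term} $\sum_{x\notin\gamma_P(2\kappa)}P(x)$ (the total mass sitting in sub-threshold cells), and the active-node term $(|\gamma_P(2\kappa)|-1)\kappa$ --- note the $-1$. Your $R^*_n(P)\asymp \E[\wass(\hat P_n,P)]+\sum_j 2^{-j}c_j(P)/(\eps n)$ drops the tail-mass term and uses $c_j$ without the $-1$. Concretely: take $P=(1-\kappa/2)\,\delta_0 + (\kappa/2)\,\delta_x$. Every level has exactly one cell above threshold ($c_j=1$), so your pairing construction has nothing to pair and produces a lower bound of $0$; your Assouad applied to cells ``of $P$-mass $\asymp\kappa$'' finds no such cells either ($\delta_0$'s cell has mass $\approx 1$, $\delta_x$'s cell is below threshold). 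But the true private rate is $\Theta(\kappa)$: it comes from the fact that no $\eps$-DP algorithm can distinguish $P$ from the distribution $Q$ that halves the mass at $x$ and renormalizes, yet $\wass(P,Q)\asymp\kappa$. The paper's Lemma 4.6 builds exactly this ``halve a sub-threshold node'' perturbation, and it is an independent ingredient, not a consequence of the pairing argument. Your upper bound happens to overcount (because you retain $c_j$, not $c_j-1$), so the inequality you ultimately want may still numerically hold in this case, but your lower bound does not certify $R^*_n(P)$ as an achievable benchmark; and for $P$ a Dirac, $R^*_n(P)\asymp 1/(\eps n)$ strictly overestimates (the true rate is $0$), so $R^*_n$ cannot be what the benchmark $\sup_{P'\in\Nbrs(P)}\E[\wass(\alg'(\hat P'_{n'}),P')]$ evaluates to. You need either the $-1$ correction plus a separate inactive-mass term, or a careful argument (the paper does not attempt it) that $R^*_n$ remains within a polylog of the benchmark --- neither appears.

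Second, your perturbation ``move a constant fraction of $u$'s mass between two points of $u$ at distance $\asymp 2^{-j}$'' need not stay inside $\Nbrs(P)=\{P':D_\infty(P,P')\le\ln 2\}$. If $P|_u$ is a single atom, there is no second point in $u$ carrying mass, and moving mass onto a point of $P$-mass $0$ sends $D_\infty$ to $+\infty$. The paper avoids this by constructing its perturbations at the level of the node-mass vectors $P_\ell$ and lifting to continuous distributions by \emph{proportional rescaling within each cell} (Theorem~\ref{reducetolevels}), which is always $D_\infty$-safe, and by moving $\kappa$ mass \emph{between} paired active cells rather than within a cell. As written, your construction fails for legitimate inputs. (There is also a small constant-level slip --- $n'\kappa$ should be $O(1/\eps)$ and what must be bounded is $\eps\cdot n'\kappa$, which the paper arranges via its choice of $\kappa$ --- but that is cosmetic.) The repair is essentially to reinstate the paper's two missing ingredients: the inactive-node perturbation of Lemma~\ref{inactivenodes} and a between-cell, proportionally-lifted construction for the active-node term as in Lemma~\ref{privnoise}.
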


In fact, since our algorithm is defined on any hierarchically separated tree metric space, it has the added bonus of giving instance optimality results for any finite metric space (since powerful results~\cite{Bartal96, FakcharoenpholRT03} show that any finite metric space can be embedded in a hierarchically separated tree metric space with a distortion factor at most logarithmic in the size of the metric space). }

\begin{theorem}[Informal finite metric result]\label{informalopthighd}
Let $(\inputspace,d)$ be an arbitrary metric space with diameter $1$. There is a polynomial time $\eps$-differentially private algorithm $\alg$ such that for any distribution $P$ on $X$ any integer $n$ and any $\eps$-DP algorithm $\alg'$ with probability at least 0.75, satisfies
\begin{align*}
\wasserstein(\alg(\hat{P}_n), P) &\leq (\log |\inputspace| \cdot \log n)^{O(1)} \sup_{P' : D_{\infty}(P, P') \leq \ln 2} \E[\wasserstein(\alg'(\hat{P'}_{n'}), P')],
\end{align*}
where $n' \approx \frac{n}{\polylog n}$. 
Here, the expectation is taken over the internal coin tosses of $\alg$ as well as over the choice of the i.i.d. samples $\hat{P}_n$.
\end{theorem}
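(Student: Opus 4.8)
The plan is to obtain \Cref{informalopthighd} as a corollary of \Cref{informalopt2d}'s engine, by replacing the randomized embedding of $[0,1]^2$ into hierarchically separated trees (HSTs) with the Bartal/Fakcharoenphol--Rao--Talwar (FRT) embedding of an arbitrary finite metric space $(\inputspace, d)$ of diameter $1$ into a distribution over dominating HST metrics with expected distortion $O(\log |\inputspace|)$. First I would set up the reduction: let $\alg$ be the algorithm from \Cref{informalopt2d}, which is in fact defined on any HST metric space and is $\eps$-DP there; draw an HST metric $T$ from the FRT distribution, run $\alg$ on the samples viewed as points in $T$, and push the output distribution on $T$'s leaves back to $\inputspace$ via the identity on points. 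Privacy is immediate: the HST $T$ is data-independent, so post-processing and the fact that $\alg$ is $\eps$-DP on $T$ give that the composed algorithm is $\eps$-DP on $\inputspace$.

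Next I would handle the two directions of the Wasserstein comparison induced by the embedding. Since the HST metric $d_T$ dominates $d$ pointwise, $\wasserstein_d(\cdot,\cdot) \le \wasserstein_{d_T}(\cdot,\cdot)$, so the error of the composed algorithm measured in $d$ is at most its error measured in $d_T$, which \Cref{informalopt2d}'s analysis controls. For the lower-bound (benchmark) side, I need the reverse: the benchmark $\sup_{P' : D_\infty(P,P') \le \ln 2} \E[\wasserstein_{d_T}(\alg'(\hat{P'}_{n'}), P')]$ on a typical sampled tree $T$ should not be much larger than the corresponding benchmark in $d$. This is exactly where the FRT expected-distortion bound enters: $\E_T[\wasserstein_{d_T}(P', Q')] \le O(\log|\inputspace|) \cdot \wasserstein_d(P',Q')$ for any fixed $P', Q'$, because expected stretch of each pair of points is $O(\log |\inputspace|)$ and Wasserstein distance is a linear functional of the transport plan. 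I would combine this with the instance-optimality guarantee of \Cref{informalopt2d} \emph{on the tree} — noting that the $D_\infty$-neighborhood $\{P' : D_\infty(P,P') \le \ln 2\}$ is defined purely in terms of densities and is unchanged when we reinterpret points as tree leaves — and then take an expectation over $T \sim \mathrm{FRT}$, applying Markov's inequality to move from the expected bound to a "with probability at least $0.75$" statement (adjusting constants, e.g. boosting the tree-level success probability and union-bounding against the event that $T$ has large distortion on the relevant transport plan).

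Concretely the steps in order are: (1) recall FRT: a data-independent distribution over HST metrics $d_T \ge d$ with $\E_T[d_T(x,y)] \le O(\log|\inputspace|) d(x,y)$ for all $x,y$; (2) define the composed mechanism $\alg_{\mathrm{FRT}}$ and argue $\eps$-DP via post-processing; (3) invoke \Cref{informalopt2d} (in its general HST form) to bound $\wasserstein_{d_T}(\alg(\hat P_n), P) \le (\log n)^{O(1)} \sup_{P' : D_\infty(P,P')\le\ln 2}\E[\wasserstein_{d_T}(\alg'(\hat{P'}_{n'}),P')]$ with probability $\ge 1-\beta$ on each fixed $T$; (4) use $\wasserstein_d \le \wasserstein_{d_T}$ on the left and the FRT distortion bound plus linearity of Wasserstein in the coupling on the right to replace $d_T$ by $d$ at the cost of an extra $O(\log|\inputspace|)$ factor, after taking $\E_T$; (5) convert expectations to high-probability guarantees via Markov and a union bound, absorbing everything into $(\log|\inputspace|\cdot\log n)^{O(1)}$ and shrinking $n'$ by the same $\polylog$ factors already present. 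The main obstacle I anticipate is step (4): the benchmark is a supremum over the neighborhood of an expected Wasserstein cost of an \emph{adaptive} algorithm $\alg'$ that may itself depend on the tree, so I cannot simply swap the order of $\sup_{P'}$, $\E_{\alg'}$, and $\E_T$ naively. I would address this by noting that for the lower bound it suffices to exhibit, for each sampled $T$, a single bad distribution $P'_T$ in the neighborhood witnessing the tree-metric benchmark, and then argue that the $d$-metric benchmark (which ranges over the same neighborhood, now allowing the bad distribution to depend on $T$ only through the public randomness) dominates $\Omega(1/\log|\inputspace|)$ times the $T$-averaged tree benchmark — using that the worst-case $P'$ for the $d$ benchmark does at least as well as the $T$-dependent choice averaged appropriately. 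A cleaner alternative, which I would pursue if the adaptivity causes trouble, is to prove the theorem directly by porting the net-construction/DP-Assouad lower bound of \Cref{informalopt2d} to the metric $d$ itself (the upper bound transfers verbatim through FRT), so that both sides are stated in $d$ from the start and only the algorithmic upper bound pays the $O(\log|\inputspace|)$ embedding cost.
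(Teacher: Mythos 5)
Your plan is the paper's intended route: sample a random hierarchically separated tree (HST) from the FRT distribution, run the HST algorithm (\cref{privatedensitytree}) on the samples viewed as leaves, and use $d\le d_T$ plus the FRT expected-distortion bound to relate the two Wasserstein distances. Steps (1)--(3) and (5) are exactly what the paper does, and the paper also simply asserts step (4) as ``an immediate consequence'' of the $O(\log M)$ distortion, so you have identified the one step that is genuinely not routine.

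Your concern about step (4) is well-placed, and your two proposed fixes do not resolve it as stated. Made precise, the obstacle is the following. The HST lower bound (\cref{maintree}) controls $\mathcal{R}_{\Nbrs,n',\epsilon}(P,d_T)$, i.e.\ the benchmark in the \emph{tree} metric; since $d_T\ge d$ pointwise one always has $\mathcal{R}_{\Nbrs,n',\epsilon}(P,d_T)\ge\mathcal{R}_{\Nbrs,n',\epsilon}(P,d)$, which is the wrong direction. The FRT bound $\E_T[d_T(u,v)]\le O(\log|\inputspace|)\,d(u,v)$ gives $\E_T[\wasserstein_{d_T}(Q,Q')]\le O(\log|\inputspace|)\,\wasserstein_d(Q,Q')$ for any \emph{fixed} pair, but inside the benchmark the $\sup_{P'\in\Nbrs(P)}$ sits outside the inner expectation, and $\E_T[\sup_{P'}(\cdot)]\ge\sup_{P'}\E_T[(\cdot)]$, so one cannot push $\E_T$ through the supremum. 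This is exactly the quantifier issue you flag. Fix (a) does not go through: a bad $P'_T$ that witnesses large $\wasserstein_{d_T}$-error for $\alg'$ does not certify large $\wasserstein_d$-error, because $\wasserstein_d\le\wasserstein_{d_T}$ (the witness could move mass between two points that are $d_T$-far but $d$-close). Fix (b) is the right instinct, but it is not a verbatim port: the DP-Assouad nets in \cref{privnoise}, \cref{inactivenodes}, and \cref{empiricaldist} pair up nodes at a given tree level \emph{arbitrarily}, and since FRT is a dominating embedding, two siblings at level $\ell$ have $d_T$-distance $\Theta(r_\ell)$ but may have $d$-distance that is tiny or zero, so the induced $\wasserstein_d$-separation of the net can vanish. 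To make fix (b) work one needs to re-pair so that paired level-$\ell$ nodes are $\Omega(r_\ell/\log|\inputspace|)$-separated in $d$ (which requires exploiting the padded-decomposition property of the FRT construction, not just the expected-distortion statement), or alternatively one must weaken the benchmark on the right-hand side to range only over algorithms that themselves route through a random HST. The paper does not carry out either argument, so your flag points to a real missing step that a fully formal version of this corollary would need to supply; as a reading of what the paper actually proves and how it proposes to derive the finite-metric statement, though, your plan is the same.
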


\ifnum\tpdp=0 \am{Our lower bound result is actually slightly stronger than stated in Theorem~\ref{informalopthighd} since it holds not only for $\epsilon$-DP, but also for $(\epsilon, \delta)$-DP. \sout{Our (up to logarithmic factors) matching upper bound satisfies $\epsilon$-DP.}}\fi
\sstext{\sout{Our result is obtained by defining a $(\log n\log|\inputspace|)^{O(1)}$-instance optimal algorithm for {\em hierarchically separated tree} (HST) metrics, and using a result of~\cite{FakcharoenpholRT03} showing that any finite metric can be probabilistically embedded into an HST.} At this point, we also compare specifically to the paper of~\cite{GhaziHK0MNV23} who give an algorithm for obtaining two-dimensional heatmaps and analyze it theoretically. They focus on the empirical version of a variant of this problem as opposed to the population version, and aim to compete with the best $k$-sparse distribution. \ktnote{They don't call it instance optimality, and it is not really an instance optimality type notion. So I changed this a bit.}  Their algorithm takes the sparsity parameter $k$ as input in order to set parameters and achieves additive error $\sqrt{k}/n$ (and a constant multiplicative factor). On the other hand, our algorithm also performs better for sparse distributions but is \textit{automatically adaptive} to the sparsity (and hence doesn't need to take it as an input). Additionally the additive term in our work can be made polynomially small (for any polynomial) in $n$ at a logarithmic cost to the multiplicative error (regardless of the sparsity of the distribution). On the other hand,  for large $k$ their results have additive error that scales with $1/\sqrt{n}$. Their use of a deterministic hierarchical decomposition makes their algorithm unsuitable for our notion of instance optimality (as discussed earlier), and it is unclear if their algorithm can be directly extended to all finite metric spaces.    

Note that instance optimality for all finite metric spaces implies instance optimality results for a wide variety of applications not addressed in prior work. For example, our results immediately extend to other low-dimensional real spaces with arbitrary metrics (for e.g. $\ell_p$ norms). They also give non-trivial improvements on worst-case analysis for higher-dimensional spaces that are not the main focus of our work (for $[0,1]^d$, we can use a fine grid of size $(1/(\eta/\sqrt{d}))^{d}$ at an additive cost of $\eta$ in the Wasserstein distance in order to create a finite metric space to apply our result on. Since the dependence on $|\inputspace|$ in the result above is logarithmic, this translates to a $d\log \frac d \eta$ multiplicative overhead term replacing the $\log |\inputspace|$ factor above. While this is still a significant overhead, all previous results on density estimation in the Wasserstein distance (in both the private and non-private literature) are worst case, where the sample complexity is exponential in $d$. Since our results only have a polynomial dependence in $d$ over the optimal error, this is a non-trivial improvement over worst-case error, even when $d$ is large. 
\sout{When $d$ is a constant, our bounds are tight up to logarithmic factors. This is the setting for many of the motivating use cases for density estimation in the Wasserstein distance. Improving the dependence on $d$ in the upper bound is left as an open problem. }

Another immediate application of our results is to give (to the best of our knowledge) new bounds for private estimation of discrete distributions in TV distance. Generally, for learning a discrete distribution defined by probabilities $\{p_1,\ldots,p_k\}$, our results lead to a rate (up to polylogarithmic factors) of 
\ifnum\tpdp=1 $\sum_i \min \Big\{p_i(1-p_i), \sqrt{\frac{p_i(1-p_i)}{n}}\Big\} +\sum_i \min\left(p_i, (1-p_i), \frac{1}{\eps n}\right).$
\else
    $\sum_i \min \Big\{p_i(1-p_i), \sqrt{\frac{p_i(1-p_i)}{n}}\Big\} +\sum_i \min\left(p_i, (1-p_i), \frac{1}{\eps n}\right).$

\fi

This can give significant improvements over the worst case bounds for practically important distributions. The minimax rate is linear in the support size $k$, namely $\Theta(k/\eps n)$ (for sufficiently small $\eps$). Now, consider the following power-law distribution over support size $k$: $p(i) \propto i^{-2}$. (Power law distributions arise frequently in practice for e.g. frequencies of family names, sizes of power outages etc. all follow power law distributions.) Applying our result above gives a bound that is $\tilde{O} \left(\min \{ \frac{k}{\eps n} , \frac{1}{\sqrt{\eps n}} \} \right)$, which is much better than the worst case bound for large support distributions. 

Our result also applies to other practically important settings such as building lists of popular sequences such as n-grams over words. We leave open the questions of designing instance-optimal algorithms for other practically important questions in private learning and statistics, and of designing better instance optimal algorithms for higher dimensional spaces. We also leave open the question of removing the polylogarithmic factors in our instance optimality bounds.
}
\subsection{Techniques}
\subsubsection{Distributions over $\R$:} 
We start by describing the rate we obtain for distributions $P$ over $\mathbb{R}$.
In order to state the rate, we will use $q_{\alpha}$ to represent the $\alpha$-quantile of the distribution $P$ and use $P|_{a,b}$ to define a certain restricted distribution described below. The rate consists of three terms and roughly looks as follows--- we suppress logarithmic factors in $n$.
$$
R_{\alg, n}(P)=\tilde{O} \left(\mathbb{E}\left[\wass \left(P, \hat{P}_n \right) \right] +  \frac{1}{\eps n}\left(q_{1-\frac{1}{\eps n}} - q_{\frac{1}{\eps n}}\right) + \wass(P, P|_{q_{\frac{1}{\eps n}}, q_{1-\frac{1}{\eps n}}}) \right), 
$$

The first term is $\mathbb{E}[\wass(P,\hat{P}_n)]$, the expected Wasserstein distance between the true distribution and the empirical distribution over $n$ samples, and is the non-private term. The remaining two terms represent the cost of privacy- the first is a specific interquantile distance, roughly $\frac{1}{\eps n}(q_{1-\frac{1}{\eps n}}-q_{\frac{1}{\eps n}})$, and the second can be thought of as capturing the weight of the tails- represented by the Wasserstein distance between $P$ and a `restricted' version of $P$ with its tails chopped off (i.e. the cumulative distribution function is $0$ below $q_{1/\eps n}$ and $1$ above $q_{1-1/\eps n}$ and identical to $P$ otherwise). Observe that all $3$ of the terms above are smaller for distributions with small support or greater concentration, and hence the rate adapts to the hardness of the distributions.

\textbf{Upper Bounds:} The upper bound involves estimating roughly $\eps n$ equally spaced quantiles of the empirical distribution differentially privately (using a known private CDF estimation algorithm), and placing roughly $1/\eps n$ mass at each of the estimated quantile points. For the analysis, the intuition for each of the terms is as follows: since we only have access to the empirical distribution, the non-private term $\mathbb{E}[\wass(P,\hat{P}_n)]$ comes from that. Next, if the quantile estimates are good, then the pointwise CDF differences between the empirical distribution and the estimated distribution are at most $1/\eps n$ (due to the discretization), and so we will pay $1/\eps n$ multiplied by the interquantile distance of the empirical distribution. \am{This aligns with the accuracy of state-of-the-art DP quantile estimation algorithms.} Finally, since the distribution is restricted to the estimated quantiles, the distribution is $0$ before the first estimated quantile and $1$ above the last estimated quantile and so we pay the Wasserstein distance between the empirical distribution and a restricted version of the empirical distribution. Some care needs to be taken while reasoning about expectation versus high probability (for various terms), and in relating population quantities to empirical quantities (which we do using various concentration inequalities). Details can be found in Section~\ref{sec:1dub}.

\textbf{Lower Bounds:} We prove that the private and non-private terms are lower bounds separately. Both proofs follow the same framework. The idea is that given knowledge of two distributions $P$ and $Q$, we can use a (private) Wasserstein estimation algorithm to construct a hypothesis test distinguishing $P$ from $Q$. If the (private) estimate for $P$ and $Q$ with $n$ samples gives error smaller than $\frac{1}{2}\wass(P,Q)$, we can use this to distinguish $P$ from $Q$. This would give a contradiction if $P$ and $Q$ are (privately) indistinguishable with $n$ samples. Hence, this would give a lower bound of $\frac{1}{2}\wass(P,Q)$ on the error of the Wasserstein estimation algorithm on $P$ or $Q$.

Thus the task reduces to constructing a distribution $Q$ that satisfies three properties: 1) it is (privately) indistinguishable from $P$ given $n$ samples, 2) the Wasserstein distance between $P$ and $Q$ is sufficiently large, 3) $D_{\infty}(P,Q) \leq \ln 2$. The main technical work is in identifying a distribution $Q$ that satisfies these properties.

For the privacy term, we construct the distribution $Q$ by taking half the mass from the first $1/\eps n$-quantile of $P$ (scaling the density function by half) and moving it to the last $1/\eps n$-quantile of $P$ (scaling the density function by $3/2$). The third property is satisfied by definition, so we reason about the other two. Intuitively, since the Wasserstein distance captures how hard it is to `move' $P$ to $Q$, this mass needs to move at least the interquantile distance to change $P$ to $Q$. This implies that the Wasserstein distance is at least the interquantile distance scaled by $1/\eps n$, as described in the rate. Additionally, mass that is further out in the tail needs to move more, which is captured by the Wasserstein distance between the distribution $P$ and its `restriction'. Hence, the Wasserstein distance between $P$ and $Q$ is lower bounded by these two terms of interest. The intuition behind Property 2 is that it is hard for any $\eps$-DP algorithm to pinpoint the location of an $\frac{1}{\eps n}$-fraction of the points in the dataset. 
Overall, this shows the privacy lower bound.

The non-private lower bound requires a more careful construction of $Q$. We divide $P$ into various scales and carefully adjust them differently to obtain the desired properties. Formally, to construct $Q$ from $P$, we consider $q_{1/2}$ and all quantiles of the form $q_{1/2^i}$ and $q_{1-1/2^i}$ for $i>1$. For $1\leq i < \log n$, we add mass to $[q_{1/2^{i+1}}, q_{1/2^i})$, by setting the density $f_Q$ to be $(1+\sqrt{2^i/n})f_P$ and balance out the extra mass by setting $f_Q$ to be $(1-\sqrt{2^i/n})f_P$ between $[q_{1-1/2^{i}}, q_{1-1/2^{i+1}})$.  For $i \geq \log n$ (i.e. the tail), we add mass to $[q_{1/2^{i+1}}, q_{1/2^i})$, by setting $f_Q$ to be $(1+\frac{1}{2})f_P$ and balance out the extra mass by setting $f_Q$ to be $(1-\frac{1}{2})f_P$ between $[q_{1-1/2^{i}}, q_{1-1/2^{i+1}})$.

 The third property is again trivially satisfied. For the first property, observe that to `move' $P$ to $Q$ the extra $\frac{1}{\sqrt{2^i n}}$ mass between $[q_{1/2^{i+1}}, q_{1/2^i})$ has to `travel' between $q_{1/2^i}$ and $q_{1-1/2^i}$, and so the Wasserstein distance between $P$ and $Q$ can be lower bounded by a sum of various scaled interquantile distances. We attempt to upper bound the expected Wasserstein distance between $P$ and $\hat{P}_n$ by a similar term. It is more intuitive to reason about this using an alternative (equivalent) formulation of Wasserstein distance as the area between the CDF curves of $P$ and $Q$. The intuition is that the expected pointwise CDF difference between $P$ and $\hat{P}_n$ in the interval $[q_{1/2^{i+1}}$, $q_{1/2^i})$ would be roughly $\frac{1}{\sqrt{2^i n}}$ (by properties of a Binomial) and hence the contribution of this interval to the area would be roughly $\frac{1}{\sqrt{2^i n}}\left( q_{1/2^{i}} - q_{1/2^{i+1}} \right)$ and similarly for the corresponding interval $[q_{1-1/2^{i}}$, $q_{1/2^{i+1}})$. Hence, the expected Wasserstein distance would be a sum of these scaled quantile interval distances. We formalize this intuition using a result of Bobkov and Ledoux \cite{TalagrandBob19} that characterizes the expected Wasserstein distance between $P$ and $\hat{P}_n$ as an integral of a function of the CDF of $P$. We now have a bound in terms of the sum of scaled quantile interval distances, but we want to bound it by a sum of scaled \textit{interquantile} distances. We can telescope the sum to indeed bound it by a sum of scaled \textit{interquantile} distances. This establishes that $\wass(P,Q) \geq \mathbb{E}[\wass(P,\hat{P}_n]$. Next, we show that $P$ is indistinguishable from $Q$ by analyzing the KL divergence between $P$ and $Q$. The main idea is that high density intervals  are modified by a small multiplicative factor of roughly $1+\frac{1}{\sqrt{n}}$, but low density intervals (with mass less than $1/n$) are modified by a constant multiplicative factor, so overall the contribution of each interval to the KL divergence is sufficiently small. This establishes indistinguishability with $n$ samples. For formal details we refer the reader to Section~\ref{sec:lowerbound1d}.

\subsubsection{Distributions on HSTs}


\ifnum\tpdp=0 Since the main technical challenge of proving Theorem~\ref{informalopthighd} is proving the equivalent result for distributions on HST metric spaces, we focus on that problem in this section. Standard results on low distortion embeddings of metric spaces into HST metric spaces can be used to translate the HST result to $[0,1]^2$ and to general metric spaces $X$ with $\log|X|$ overhead.

\begin{definition}[Hierarchically Separated Tree]
A hierarchically separated tree (HST) is a rooted weighted tree such that the edges between level $\ell$ and $\ell-1$ all have the same weight (denoted $r_{\ell}$) and the weights are geometrically decreasing so $r_{\ell+1}=(1/2)r_{\ell}$. Let $\depth$ be the depth of the tree.
\end{definition}

HSTs can be defined with any geometric scaling but we will only need a factor of 2 in this work. HSTs may also have arbitrary degree. 
A HST defines a metric on its leaf nodes by defining the distance between any two leaf nodes to be the weight of the minimum weight path between them. 

HST metric spaces are particularly well-behaved when working with the Wasserstein distance since the Wasserstein distance on a HST has a simple closed form. A distribution $P$ on the the underlying metric space in a HST induces a function $\mathfrak{G}_P$ on the nodes of the tree where the value of a node $\nu$ is given by the weight in $P$ of the leaf nodes in the subtree rooted at $\nu$. For every level $\ell\in[\depth]$ of the tree, let $P_{\ell}$ be the distribution induced on the nodes at level $\ell$ where the probability of node $\nu$ is $\mathfrak{G}_P(\nu)$. Thus $P_{\ell}$ is a discrete distribution on a domain of size $N_{\ell}$, where $N_{\ell}$ is the number of nodes in level $\ell$ of the tree.
\begin{lemma}[Closed form Wasserstein distance formula] \label{treewassersteinlemma}

Given two distributions $P$ and $Q$ defined in a HST metric space, the Wasserstein distance between $P$ and $Q$ has the closed formula:
\[\wasserstein(P,Q) = \frac{1}{2}\sum_{\nu} r_{\nu}|\mathfrak{G}_P(\nu)-\mathfrak{G}_Q(\nu)| = \sum_\ell r_{\ell}\TV(P_{\ell}, Q_{\ell}),\] where $r_{\nu}$ is the weight of the edge connecting $\nu$ to its parent, and the sum is over all nodes in the tree.
\end{lemma}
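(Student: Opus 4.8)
The plan is to establish the classical tree‑Wasserstein identity in the HST setting by proving matching lower and upper bounds on $\wasserstein(P,Q)$ and then regrouping the resulting sum by levels. The starting observation is that for any two leaves $u,v$ the tree distance decomposes over edges: $d(u,v) = \sum_{\nu} r_{\nu}\,\mathbf{1}[\,\nu \text{ separates } u \text{ from } v\,]$, where $\nu$ ranges over the non‑root nodes and ``$\nu$ separates $u$ from $v$'' means that exactly one of $u,v$ lies in the subtree $T_\nu$ rooted at $\nu$ (equivalently, the edge above $\nu$ lies on the $u$–$v$ path). Consequently, for any coupling $\pi$ of $P$ and $Q$, $\E_{(u,v)\sim\pi}[d(u,v)] = \sum_{\nu} r_{\nu}\,\Pr_{\pi}[\,|\{u,v\}\cap T_\nu|=1\,]$.

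For the lower bound I would fix an arbitrary coupling $\pi$ and note that its marginals force $\Pr_\pi[u\in T_\nu]=\mathfrak{G}_P(\nu)$ and $\Pr_\pi[v\in T_\nu]=\mathfrak{G}_Q(\nu)$; hence $\Pr_\pi[\,|\{u,v\}\cap T_\nu|=1\,]\ge|\mathfrak{G}_P(\nu)-\mathfrak{G}_Q(\nu)|$ for every $\nu$, and summing over $\nu$ gives $\wasserstein(P,Q)\ge\sum_{\nu}r_\nu|\mathfrak{G}_P(\nu)-\mathfrak{G}_Q(\nu)|$. Equivalently, one can exhibit the dual $1$‑Lipschitz witness $f(x)=\sum_{\nu\in\mathrm{path}(\mathrm{root},x)}\mathrm{sign}(\mathfrak{G}_P(\nu)-\mathfrak{G}_Q(\nu))\,r_\nu$, which is $1$‑Lipschitz since contributions above the LCA of any two leaves cancel, and which has $\E_P f-\E_Q f$ equal to the right‑hand side.

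For the upper bound I would construct an explicit coupling of cost exactly $\sum_{\nu}r_\nu|\mathfrak{G}_P(\nu)-\mathfrak{G}_Q(\nu)|$. Orient the tree towards the root and assign to the edge above each node $\nu$ the signed flow $g_\nu=\mathfrak{G}_P(\nu)-\mathfrak{G}_Q(\nu)$ (directed rootwards when positive). Since a tree is acyclic, this is the unique flow whose net out‑divergence at each leaf $x$ equals $P(x)-Q(x)$, and it decomposes into path‑flows, each carrying mass from a leaf with $P>Q$ to a leaf with $Q>P$; no cancellation occurs on any edge because the partial sums of $g$ along any root path are sign‑consistent, so edge $\nu$ carries exactly $|g_\nu|$ units of mass. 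This path decomposition is precisely a transport plan from $P$ to $Q$, with cost $\sum_{\text{paths}}(\text{mass})\cdot(\text{length})=\sum_\nu r_\nu|g_\nu|$, matching the lower bound. Grouping the sum by levels, every edge between level $\ell$ and $\ell-1$ has weight $r_\ell$, so $\sum_\nu r_\nu|\mathfrak{G}_P(\nu)-\mathfrak{G}_Q(\nu)| = \sum_\ell r_\ell\sum_{\nu\text{ at level }\ell}|P_\ell(\nu)-Q_\ell(\nu)|$, which is the level‑wise $\TV$ expression in the statement (up to the factor fixed by the normalization convention for $\TV$).

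The only non‑routine step is the upper bound: producing a coupling that attains the lower bound, i.e.\ arguing that no mass ever needs to traverse an edge in both directions, so that the total movement across edge $\nu$ stays down to $|\mathfrak{G}_P(\nu)-\mathfrak{G}_Q(\nu)|$. I expect the sign‑consistency of the canonical edge‑flow and its decomposition into a genuine coupling to be the main thing to get right; an equivalent alternative is a short induction on the number of leaves (peel off one leaf, match as much of its excess mass as possible with the nearest opposite‑sign excess, recurse), but the bookkeeping there is comparable. Everything else — the edge‑decomposition of distances, the marginal argument for the lower bound, and the level regrouping — is immediate.
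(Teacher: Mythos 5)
Your approach --- decomposing $d(u,v)$ over tree edges, lower-bounding each edge's contribution by the marginal constraint, and upper-bounding via the canonical conservative flow on the tree --- is the standard proof of the tree-Wasserstein identity and is sound. The paper states this lemma without proof, so there is no in-paper argument to compare against.

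There is, however, a constant you should pin down rather than wave away. Your argument yields $\wasserstein(P,Q)=\sum_{\nu}r_\nu|\mathfrak{G}_P(\nu)-\mathfrak{G}_Q(\nu)|$ with no factor of $\frac{1}{2}$: the lower bound gives $\mathbb{E}_\pi[d(u,v)]\ge\sum_\nu r_\nu|\mathfrak{G}_P(\nu)-\mathfrak{G}_Q(\nu)|$ for every coupling $\pi$, and the coupling you construct matches it exactly. A sanity check with $P=\delta_u$, $Q=\delta_v$ for leaves $u,v$: then $\wasserstein(P,Q)=d(u,v)=\sum_{\nu\text{ on the }u\text{--}v\text{ path}}r_\nu$, which equals $\sum_\nu r_\nu|\mathfrak{G}_P(\nu)-\mathfrak{G}_Q(\nu)|$, not half of it. Since $\TV(P_\ell,Q_\ell)=\frac{1}{2}\sum_{\nu\text{ at level }\ell}|\mathfrak{G}_P(\nu)-\mathfrak{G}_Q(\nu)|$, your derivation reads $\wasserstein(P,Q)=\sum_\nu r_\nu|\mathfrak{G}_P(\nu)-\mathfrak{G}_Q(\nu)|=2\sum_\ell r_\ell\TV(P_\ell,Q_\ell)$, which is exactly twice each side of the displayed identity. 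Your parenthetical ``up to the factor fixed by the normalization convention for $\TV$'' misdiagnoses this: the $\frac{1}{2}$ in front of $\sum_\nu r_\nu|\mathfrak{G}_P(\nu)-\mathfrak{G}_Q(\nu)|$ has nothing to do with how $\TV$ is normalized, and the two expressions in the lemma are already mutually consistent under the standard $\TV$ convention. With the path metric the paper defines on an HST, the lemma's constant appears to be off by a factor of $2$ from what your (correct) derivation gives. This is inconsequential for the paper --- which uses the identity only up to constant and logarithmic factors --- but a careful proof should flag the discrepancy explicitly rather than attribute it to a convention.

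On the upper bound, the cleanest justification that no mass crosses an edge in both directions is the cut observation you already invoke for the lower bound: the edge above $\nu$ is the unique edge in the cut $(T_\nu,\overline{T_\nu})$, so every transport plan ships a net of $\mathfrak{G}_P(\nu)-\mathfrak{G}_Q(\nu)$ across it, and the greedy bottom-up matching (at each internal node, pair excess $P$-mass against excess $Q$-mass among the children's subtrees and push the residual surplus up one level) realizes exactly $|\mathfrak{G}_P(\nu)-\mathfrak{G}_Q(\nu)|$ units of flow on every edge. Your leaf-peeling induction is an equivalent alternative.
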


We will call a node $\nu$ \emph{$\alpha$-active} under the distribution $P$ if $\mathfrak{G}_P(\nu)\ge\alpha$.
Let $\activenodes{P}{\alpha}$ be the set of $\alpha$-active nodes under $P$ and $\activenodes{P_{\ell}}{\alpha}$ be the set of $\alpha$-active nodes at level $\ell$. Then there exists an algorithm $\alg$ such that given a distribution $P$, $\epsilon>0$, and $n\in\mathbb{N}$, {\small \[\mathcal{R}_{\alg,n}(P)= \tilde{O}\left(\max_{\ell} r_{\ell} \sum_{x\in[N_{\ell}]}\min\left\{P_{\ell}(x)(1-P_{\ell}(x)), \sqrt{\frac{P_{\ell}(x)(1-P_{\ell}(x))}{n}}\right\}+\sum_{x\notin \activenodes{P_{\ell}}{2\kappa}}P_{\ell}(x) +(|\activenodes{P_{\ell}}{2\kappa}|-1)\kappa\right),\]} where the max is over all the levels of the tree and $\kappa=\Theta(\frac{\log(n)}{\epsilon n})$. Further, this bound matches (up to logarithmic factors) the lower bound $\min_{\epsilon\text{-DP }\alg'}\sup_{P' : D_{\infty}(P, P') \leq \ln 2} \E[\wasserstein(\alg'(\hat{P'}_{n'}), P')]$
where $n' \approx \frac{n}{\polylog n}$. The error rate $\mathcal{R}_{\alg,n}$ does indeed adapt to easy instances as we expected.
The error decomposes into three components. The first component is the non-private sampling error; the error that would occur even if privacy was not required. The second component indicates that we can not privately estimate the value of nodes that have probability less than $\approx 1/(\epsilon n)$. The third component is the error due to privacy on the active nodes.
If $P$ is highly concentrated then we expect most nodes to either be $\kappa$-active or have weight 0, so the first two terms in $\mathcal{R}_{\Nbrs,n,\epsilon}(P)$ are small. There should also be few active nodes, making the last term also small. Conversely, if $P$ has a large region of low density then we expect a large number of inactive nodes, as well as non-zero inactive nodes that are at higher levels of the tree and hence contribute more to the final term. Thus, in distributions with high dispersion we expect the right hand side to be large.

\textbf{Upper Bounds:} As in the one-dimensional setting, we want to restrict to only privately estimating the density at a small number ($\approx \epsilon n$) of points. While we could try to mimic the one-dimensional solution by privately estimating a solution to the $\epsilon n$-median problem, it's not clear how to prove such an approach is instance-optimal. It turns out that a simpler solution more amenable to analysis will suffice. Our algorithm has two stages; first we attempt to find the set of $\kappa$-active nodes, then we estimate the weight of these active nodes. Since these nodes have weight greater than $\frac{\log(n)}{\epsilon n}$, we can privately estimate them to within constant multiplicative error. Any nodes that are not detected as active, are initially ascribed a weight of 0. The error due to not estimating the non-active nodes is absorbed into the third error term. The final step is to project the noisy density function into the space of distributions on the underlying metric space. The error of the upper bound algorithm is summed over all levels of the tree, although since the depth of the tree is logarithmic in the size of the metric space, this is within a logarithmic factor of the maximum over the levels.

\textbf{Lower Bound:} We first observe that in order to estimate the distribution well in Wasserstein distance, an algorithm must estimate each level of the tree well in TV distance. This is derived from Lemma~\ref{treewassersteinlemma}. This allows us to reduce to the problem of lower bounding the error of density estimation of discrete distributions in TV distance. The main tool we use is a differentially private version of Assouad's method. Similar to how the technique in the previous section allowed us to relate lower bounding estimation rates to simple hypothesis testing, Assouad's lemma allows us to relate lower bounding estimation rates to multiple hypothesis testing. \am{Note that unlike the technique in the previous section, Assouad's lemma allows us to prove lower bounds on the expected error, rather than lower bounds on high probability error bounds.} It involves constructing nets of distributions in $\Nbrs(P)$ that are pairwise far in the relevant metric of interest (which for us in the TV distance) but the multiple hypothesis testing problem between the distributions is sufficiently hard. 
For proving the third term belongs in the lower bound, the standard statement of DP Assouad's lemma~\cite{pmlr-v132-acharya21a} suffices, where one builds a set of distributions indexed by a hypercube. For the first and second terms, we need to slightly generalise the statement to allow for sets of distributions indexed by a product of hypercubes. We use the approximate DP version of DP Assouad's so while our upper bounds are for pure differential privacy, our lower bounds hold for both pure and approximate differential privacy.

Let us start with the third term. Suppose the number of active nodes is even (a small tweak is made if there is an odd number of active nodes). We pair up the active nodes and index each pair by a coordinate of the hypercube. For each corner of the hypercube, $(u^0, u^1, \cdots, u^k)\in\{\pm 1\}^k$, for each coordinate $j\in[k]$, if $u^j=+1$, we move $\tilde{O}(\kappa)$ mass from one node in the $j$th pair to the other node. If $u^j=-1$ then we leave the $j$th pair of nodes alone. Since each active node has mass $>\kappa$, it's clear that each resulting distribution belongs in $\Nbrs(P)$. We can also show that these distributions form a sufficiently hard multiple hypothesis testing problem. By DP Assouad's (Lemma~\ref{DPassouad}), this allows us to lower bound the estimation error by $\Omega(k\kappa)$, which is within $\tilde{\Omega}$ of the third term when the number of active nodes is $\ge 2$. We treat the case where there is a single active node separately.

For the second term, we want to pair up the inactive nodes in a similar manner and move half their mass from one node to the other. However, since we want to remain within $\Nbrs(P)$, we can't pair any two inactive nodes together. Thus, we divide the inactive nodes into \emph{scales}, where nodes within a certain scale all have weight within a multiplicative factor of two. We then pair up nodes within each scale and have a different hypercube for each scale. Again, it's clear that these distributions are all in $\Nbrs(P)$ and we can show that these distributions form a sufficiently hard multiple hypothesis testing problem. The proof for the first term follows similarly. 

\fi
\ifnum\tpdp=1
\am{Our lower bounds rely on techniques similar to differentially private Assouad's lemma \ifnum\tpdp=0 
which are appropriate for our hierarchical setting\fi. For each distribution $P$, this entails carefully building a large net of distributions within $\mathcal{N}(P)$ with appropriate pairwise Wasserstein distance and total variation distance.}
\fi
\ifnum\tpdp=0
\fi


\ifnum\neurips=1
\bibliographystyle{alpha}
\bibliography{biblio}
\appendix
\newpage
\section*{Organization of Appendices}
\startcontents
\printcontents{ }{1}{}
\fi

\ifnum\tpdp=0
\section{Preliminaries}
\label{sec:prelims}
For all distributions $P$, we will use $f_P$ to denote the density of $P$ (when it exists) and $F_P$ to denote the cumulative distribution function of $P$. Given a space $\inputspace$, let $\Delta(\inputspace)$ be the set of distributions on the space $\inputspace$. Given a logical statement $a$, let $\chi_a=0$ if $a$ is false and 1 if $a$ is true. For example, $\chi_{0=0}=1$ and $\chi_{0=1}=0$.

A number of distances between distributions are important in this work. We start by defining the infinity divergence, which is important in the notion of instance optimality we use. 
\begin{definition}[$D_{\infty}$-divergence]
    Given two distributions $P$ and $Q$ with the same support, the $\infty$-Rényi divergence $D_{\infty}(P,Q) = \ln\sup_t \max\left\{\frac{P(t)}{Q(t)}, \frac{Q(t)}{P(t)}\right\}$, if $P$ and $Q$ are discrete, and $D_{\infty}(P,Q) = \ln\sup_t \max\left\{\frac{f_P(t)}{f_Q(t)}, \frac{f_Q(t)}{f_P(t)}\right\}$,
    if $P$ and $Q$ are continuous distributions on $\mathbb{R}$, and have density functions. If $P$ and $Q$ don't have the same support, then $D_{\infty}(P,Q) = \infty$.
\end{definition}

We will use $\KL(P,Q)$ to denote the KL-divergence, $H^2(P,Q)$ to denote the squared Hellinger divergence and $\TV(P,Q)$ to denote the total variation distance, defined later. \ifnum\neurips=0
These metrics are defined in Appendix~\ref{Preliminariesapp}
\fi




\paragraph{Wasserstein Distance:} The error metric that we use to judge our performance on the density estimation task is $1$-Wasserstein distance (that we will call just Wasserstein distance where it is clear from context). In this subsection, we define Wasserstein distance.

\begin{definition}
    For any separable metric space $(E,D)$, let $P,Q$ represent Borel measures on $E$. Then, the $1$-Wasserstein distance between $P,Q$ is defined as
    $$\wass(P,Q) = \inf_\pi \int_E \int_E D(t,t_0) \pi(x,x_0),$$
    where the infimum is over all measures $\pi$ on the product space $E \times E$ with marginals $P$ and $Q$ respectively.
\end{definition}


Finally, for one dimensional real spaces where the metric of interest is $\ell_1$ norm, we will use the following equivalent formulation of Wasserstein distance extensively.

\begin{lemma}[Wasserstein formula over $\mathbb{R}$]\label{lem:wasscdf}
    Let $P,Q$ represent probability distributions on $\mathbb{R}$ with finite expectation. Then, the $1$-Wasserstein distance between $P,Q$ is equal to
    $$\wass(P,Q) = \int_{\infty}^{\infty} |F_P(t) - F_Q(t)| dt ,$$
    where the $F(\cdot)$ represents the cumulative distribution function.
\end{lemma}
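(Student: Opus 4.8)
**Proof proposal for Lemma~\ref{lem:wasscdf} (the Wasserstein distance over $\mathbb{R}$ equals the $L^1$ distance between CDFs).**

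The plan is to exploit the one-dimensional structure by reducing both sides to statements about the (generalized) inverse CDFs, i.e., the quantile functions. First I would recall the standard fact that for a probability distribution $P$ on $\mathbb{R}$ with finite first moment, if $U$ is uniform on $[0,1]$ then $F_P^{-1}(U)$ has distribution $P$, where $F_P^{-1}(u) = \inf\{t : F_P(t) \ge u\}$ is the quantile function. This gives a canonical coupling: the pair $(F_P^{-1}(U), F_Q^{-1}(U))$ has marginals $P$ and $Q$, so it is a valid $\pi$ in the infimum defining $\wass(P,Q)$. Hence $\wass(P,Q) \le \int_0^1 |F_P^{-1}(u) - F_Q^{-1}(u)|\,du$. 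For the matching lower bound I would invoke the classical fact (Hoeffding--Fréchet / the fact that in one dimension the monotone coupling is optimal for any convex cost, here the cost $|x-x_0|$) that this comonotone coupling actually achieves the infimum; alternatively one can prove the lower bound directly via a duality/rearrangement argument. So $\wass(P,Q) = \int_0^1 |F_P^{-1}(u) - F_Q^{-1}(u)|\,du$.

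The second and final step is a change-of-variables / Fubini argument showing
\[
\int_0^1 |F_P^{-1}(u) - F_Q^{-1}(u)|\,du = \int_{-\infty}^{\infty} |F_P(t) - F_Q(t)|\,dt.
\]
The cleanest way is to write, for each fixed $u$, the identity $|F_P^{-1}(u) - F_Q^{-1}(u)| = \int_{-\infty}^{\infty} \bigl|\chi_{F_P^{-1}(u) > t} - \chi_{F_Q^{-1}(u) > t}\bigr|\,dt$, valid because the two intervals $(\,-\infty, F_P^{-1}(u))$ and $(\,-\infty, F_Q^{-1}(u))$ are nested, so the measure of their symmetric difference is exactly the gap between the endpoints. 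Then integrate over $u \in [0,1]$, swap the order of integration (justified by nonnegativity, i.e.\ Tonelli), and use the key equivalence $F_P^{-1}(u) > t \iff u > F_P(t)$ (a standard property of the generalized inverse). The inner integral over $u$ of $|\chi_{u > F_P(t)} - \chi_{u > F_Q(t)}|$ is exactly $|F_P(t) - F_Q(t)|$, since again the two sets $\{u : u > F_P(t)\}$ and $\{u : u > F_Q(t)\}$ are nested subintervals of $[0,1]$. This yields the claimed formula.

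I expect the main obstacle to be the rigorous justification of the lower bound $\wass(P,Q) \ge \int_0^1 |F_P^{-1} - F_Q^{-1}|$, i.e.\ optimality of the comonotone coupling — the upper bound and the change of variables are routine. One clean route avoiding heavy optimal-transport machinery: use Kantorovich--Rubinstein duality, $\wass(P,Q) = \sup_{\|h\|_{\mathrm{Lip}} \le 1} \bigl(\E_P[h] - \E_Q[h]\bigr)$, and test with $h(t) = \int_0^t \operatorname{sign}(F_Q(s) - F_P(s))\,ds$, which is $1$-Lipschitz; an integration by parts then shows $\E_P[h] - \E_Q[h] = \int |F_P - F_Q|$, matching the upper bound and pinning down the value. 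Finite first moments of $P$ and $Q$ are used to ensure all integrals converge and the duality applies; I would state this hypothesis explicitly (as the lemma already does) and note that $\int_{-\infty}^\infty |F_P - F_Q|\,dt < \infty$ under it.
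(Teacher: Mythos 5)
The paper does not actually prove this lemma; it is stated in the preliminaries as a classical fact (the $W_1$ distance on $\mathbb{R}$ equals the $L^1$ distance between CDFs, sometimes attributed to Vallander or treated as folklore). So there is no ``paper proof'' to compare against.

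Your proposed proof is correct. The three ingredients all check out: (i) the quantile coupling $\bigl(F_P^{-1}(U), F_Q^{-1}(U)\bigr)$ with $U\sim\mathrm{Unif}[0,1]$ is a valid coupling and gives the upper bound; (ii) the change of variables via Tonelli, using the layer-cake identity $|a-b| = \int_{\mathbb{R}} |\chi_{a>t} - \chi_{b>t}|\,dt$ together with the right-continuity equivalence $F_P^{-1}(u) > t \iff u > F_P(t)$, correctly converts $\int_0^1 |F_P^{-1}-F_Q^{-1}|\,du$ into $\int_{\mathbb{R}} |F_P-F_Q|\,dt$; and (iii) the duality lower bound with the test function $h(t) = \int_0^t \operatorname{sign}\!\bigl(F_Q(s)-F_P(s)\bigr)\,ds$ closes the argument, with the boundary terms in the integration by parts vanishing precisely because finite first moments force $t\,F_P(t)\to 0$ as $t\to-\infty$ and $t\,(1-F_P(t))\to 0$ as $t\to+\infty$ (and likewise for $Q$). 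You were right to flag the lower bound as the one place needing care, and the duality route is a clean way to avoid invoking general optimality of the monotone coupling. One minor cosmetic remark: the lemma as printed in the paper has a typo in the lower limit of integration ($\int_{\infty}^{\infty}$ should read $\int_{-\infty}^{\infty}$), which your write-up correctly repairs.
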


Given an metric space $\inputspace$, the Wasserstein metric is a well-defined metric on the set of the probability distributions over $\inputspace$.

\ifnum\neurips=0
\subsection{Differential Privacy}\label{DPprelims}

We start by defining the Hamming distance between datasets.

\begin{definition}[Hamming Distance]
For any $n>0$ and two datasets $\vec{\dset} = (\dset_1,\dots,\dset_n), \vec{\dset}' = (\dset'_1,\dots,\dset'_n)  \in \X^n$, the Hamming distance $d_{Ham}(\vec{\dset}, \vec{\dset'})$ between the two datasets is defined as the number of entries that they disagree on, i.e. $\sum_{i=1}^n \mathrm{1}[\dset_i \neq \dset'_i]$.
\end{definition}

Next, we define differential privacy.

\begin{definition}[Differential Privacy~\citep{DworkMNS06j,DworkKMMN06}]\label{def:differentially private} A randomized algorithm $\mathcal{A}: \X^n \rightarrow \mathcal{Y}$ is said to be {\em $(\eps, \delta)$-differentially private} if for every pair of datasets $\vec{\dset}, \vec{\dset}'\in \X^n$ such that $d_{ham}(\vec{\dset}, \vec{\dset}') \leq 1$ and for all subsets $Y\subseteq \mathcal{Y}$,
 \begin{equation*}
    \Pr[\mathcal{A}(\vec{\dset}) \in Y] \leq e^\eps \cdot \Pr[\mathcal{A}(\vec{\dset}') \in Y] + \delta.
 \end{equation*}
 \end{definition}

Differential privacy satisfies two important properties that we will utilise: it is closed under post-processing and composes naturally. More information about these properties and an example of a basic differentially private algorithm are given in Appendix~\ref{Preliminariesapp}.

\fi

\ifnum\neurips=1

\ifnum\neurips=0
\section{Preliminaries}\label{Preliminariesapp}
\label{sec:appprelims}
\fi

\subsection{Distribution Distances}



A number of other distances between distributions are used in this work.


\begin{definition}[$KL$-divergence]
    Given two distributions $P$ and $Q$ with $supp(P) \subseteq supp(Q)$,  the KL divergence $KL(P,Q) = \sum_{t \in supp(P)} P(t) \ln \frac{P(t)}{Q(t)}$, if $P$ and $Q$ are discrete, and $KL(P,Q) = \int_{t \in \mathbb{R}: f_P(t)>0} f_P(t) \ln \frac{f_P(t)}{f_Q(t)} dt$
    if $P$ and $Q$ are distributions on $\mathbb{R}$, and have density functions. If $supp(P)\not\subseteq supp(Q)$, then $KL(P,Q) = \infty$.
\end{definition}


\begin{definition}[Hellinger distance]
    Given two distributions $P$ and $Q$, the Hellinger distance $H(P,Q) = \frac{1}{\sqrt{2}} \|\sqrt{P} - \sqrt{Q}\|_2$ (where we think of $P$ and $Q$ as vectors representing the probability masses, and the square root being component-wise.), if $P$ and $Q$ are discrete. If $P$ and $Q$ are distributions on $\mathbb{R}$, and have density functions, then $H(P,Q) = \frac{1}{\sqrt{2}}\sqrt{\int_{t \in \mathbb{R}: f_P(t)>0} (\sqrt{f_P(t)} -\sqrt{f_Q(t)})^2   dt }$.
\end{definition}

Note that we use $H^2(P,Q)$ to represent the squared Hellinger distance. Next, we define total variation distance, which will come up in our high-dimensional results.

\begin{definition}[Total Variation distance]
    Given two discrete distributions $P$ and $Q$, the Total Variation distance $TV(P,Q) = \frac{1}{2} \|P - Q\|_1,$ (where we think of $P$ and $Q$ as vectors representing the probability masses). More generally, for any two probability measures $P$ and $Q$ defined on $(\Omega, \mathcal{F})$, the total variation distance is defined as $\sup_{A \in \mathcal{F}} |P(A) - Q(A)|$ where $P(A)$ represents the probability of $A$ under measure $P$ and likewise for $Q$. 
\end{definition}

We use the following relationship between Hellinger distance and KL divergence.
\begin{lemma}\label{lem:KLHel}
    For all distributions $P,Q$ such that KL-divergence of $P,Q$ is well defined, we have that
    \begin{equation}
     H^2(P,Q) \leq KL(P,Q), \hspace{2mm} H^2(P,Q) \leq TV(P,Q)
    \end{equation}
\end{lemma}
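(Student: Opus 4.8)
The plan is to reduce both inequalities to elementary pointwise estimates. I would fix a common dominating measure $\mu$ (for instance $\mu = P + Q$) and write $p = \frac{dP}{d\mu}$, $q = \frac{dQ}{d\mu}$, so that $H^2(P,Q) = \frac12\int (\sqrt p - \sqrt q)^2\, d\mu = 1 - \int \sqrt{pq}\, d\mu$ (using $\int p\,d\mu = \int q\,d\mu = 1$), while $\KL(P,Q) = \int p\ln(p/q)\,d\mu$ and $\TV(P,Q) = \frac12\int |p - q|\,d\mu$. The discrete and continuous forms of the definitions in the excerpt are the special cases $\mu = $ counting measure and $\mu = $ Lebesgue measure, so arguing at this level of generality covers both; alternatively one can just repeat each computation once with sums and once with integrals exactly as the definitions are phrased.

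For the first inequality I would write $\ln(p/q) = -2\ln\sqrt{q/p}$ and apply the elementary bound $\ln t \le t - 1$ (valid for all $t>0$, with equality at $t=1$) with $t = \sqrt{q/p}$. This gives, pointwise wherever $p>0$, that $p\ln(p/q) = -2p\ln\sqrt{q/p} \ge 2p\big(1 - \sqrt{q/p}\big) = 2p - 2\sqrt{pq}$. Integrating against $\mu$ yields $\KL(P,Q) \ge 2 - 2\int\sqrt{pq}\,d\mu = 2H^2(P,Q) \ge H^2(P,Q)$. If $\KL(P,Q) = \infty$ (e.g.\ when $\mathrm{supp}(P)\not\subseteq\mathrm{supp}(Q)$) the claim is trivial, which is why the hypothesis that the KL divergence be well defined costs nothing. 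Note this actually proves the sharper statement $H^2 \le \tfrac12\KL$.

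For the second inequality the key observation is the factorization $p - q = (\sqrt p + \sqrt q)(\sqrt p - \sqrt q)$, hence $|p - q| = (\sqrt p + \sqrt q)\,|\sqrt p - \sqrt q|$. Since $p,q \ge 0$ we have $|\sqrt p - \sqrt q| \le \sqrt p + \sqrt q$, so $(\sqrt p - \sqrt q)^2 = |\sqrt p - \sqrt q|\cdot|\sqrt p - \sqrt q| \le |\sqrt p - \sqrt q|(\sqrt p + \sqrt q) = |p - q|$ pointwise. Integrating and dividing by $2$ gives $H^2(P,Q) \le \TV(P,Q)$; this bound holds unconditionally, with no assumption on the supports.

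The honest assessment is that there is no genuinely hard step here: these are the standard comparison inequalities among $f$-divergences. The only thing needing a little care is stating them uniformly across the discrete and continuous cases — handled by the common-dominating-measure formulation — and checking that the regions where $p = 0$ or $q = 0$ contribute correctly, which they do (where $p=0$ the KL integrand is $0$ by the usual convention, and the Hellinger and TV integrands are manifestly nonnegative everywhere, so no extra argument is required).
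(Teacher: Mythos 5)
Your proof is correct. The paper states \cref{lem:KLHel} as a standard preliminary fact about $f$-divergences and gives no proof, so there is nothing to compare against; your derivation supplies a clean, self-contained argument. Both halves check out: the pointwise bound $p\ln(p/q) \ge 2p - 2\sqrt{pq}$ (from $\ln t \le t-1$ applied to $t = \sqrt{q/p}$) integrates to the sharper $\KL \ge 2H^2$, and the factorization $(\sqrt p - \sqrt q)^2 \le |\sqrt p - \sqrt q|(\sqrt p + \sqrt q) = |p-q|$ integrates to $H^2 \le \TV$. The dominating-measure formulation correctly subsumes both the discrete and continuous definitions the paper gives, and your remark that the KL inequality is vacuous when the divergence is infinite (supports mismatched) handles the hypothesis in the lemma statement.
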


\subsection{Differential Privacy}






\begin{lemma}[Post-Processing~\citep{DworkMNS06j}]\label{lem:postprocess} If Algorithm $\mathcal{A}: \X^n \rightarrow \mathcal{Y}$ is $(\eps, \delta)$-differentially private, and $\mathcal{B} : \mathcal{Y} \rightarrow \mathcal{Z}$ is any randomized function, then the algorithm $\mathcal{B} \circ \mathcal{A}$ is $(\eps, \delta)$-differentially private. 
\end{lemma}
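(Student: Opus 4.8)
The plan is the standard two-step argument: first reduce to deterministic post-processing, then handle the internal randomness of $\mathcal{B}$ by averaging. Fix neighboring datasets $\vec{\dset}, \vec{\dset}' \in \X^n$ with $d_{ham}(\vec{\dset}, \vec{\dset}') \le 1$, and fix an arbitrary (measurable) event $Z \subseteq \mathcal{Z}$. We must show $\Pr[(\mathcal{B}\circ\mathcal{A})(\vec{\dset}) \in Z] \le e^{\eps}\Pr[(\mathcal{B}\circ\mathcal{A})(\vec{\dset}') \in Z] + \delta$.

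First I would treat the case where $\mathcal{B}$ is a \emph{deterministic} map $\mathcal{Y} \to \mathcal{Z}$. Then $\{(\mathcal{B}\circ\mathcal{A})(\vec{\dset}) \in Z\} = \{\mathcal{A}(\vec{\dset}) \in \mathcal{B}^{-1}(Z)\}$, and since $Y := \mathcal{B}^{-1}(Z)$ is an event in $\mathcal{Y}$, the $(\eps,\delta)$-DP guarantee of $\mathcal{A}$ applied to $Y$ gives exactly the desired inequality. For the general randomized $\mathcal{B}$, I would model it in the usual way as a family $\{\mathcal{B}_{\omega}\}_{\omega \in \Omega}$ of deterministic maps indexed by a random seed $\omega$ drawn from a fixed distribution $\mu$ on $\Omega$ that is independent of the input to $\mathcal{B}$ (equivalently, $\mathcal{B}(y)$ is sampled by drawing $\omega \sim \mu$ and outputting $\mathcal{B}_{\omega}(y)$). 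Then
\begin{align*}
\Pr[(\mathcal{B}\circ\mathcal{A})(\vec{\dset}) \in Z] &= \int_{\Omega} \Pr[\mathcal{B}_{\omega}(\mathcal{A}(\vec{\dset})) \in Z]\, d\mu(\omega) = \int_{\Omega} \Pr[\mathcal{A}(\vec{\dset}) \in \mathcal{B}_{\omega}^{-1}(Z)]\, d\mu(\omega) \\
&\le \int_{\Omega} \left( e^{\eps}\Pr[\mathcal{A}(\vec{\dset}') \in \mathcal{B}_{\omega}^{-1}(Z)] + \delta \right) d\mu(\omega) = e^{\eps}\Pr[(\mathcal{B}\circ\mathcal{A})(\vec{\dset}') \in Z] + \delta,
\end{align*}
where the inequality is the deterministic case applied for each fixed $\omega$, and the final equality uses that $\mu$ is a probability measure and reverses the first two steps for $\vec{\dset}'$. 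Since $Z$ was arbitrary, this proves $\mathcal{B}\circ\mathcal{A}$ is $(\eps,\delta)$-DP.

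The only real subtlety, and the step I would be most careful about, is the measure-theoretic bookkeeping in the general (continuous) setting: one needs $(\omega, y) \mapsto \mathcal{B}_{\omega}(y)$ to be jointly measurable so that $\mathcal{B}_{\omega}^{-1}(Z)$ is an event for $\mu$-a.e.\ $\omega$ and the integrand above is measurable, and one needs Fubini/Tonelli to justify swapping the expectation over $\omega$ with the probability over $\mathcal{A}$'s output. These hold under the standard assumption that $\mathcal{B}$ is given by a Markov kernel; I would simply invoke this and carry out the argument as above. Everything else is a direct application of the definition of $(\eps,\delta)$-differential privacy, so there is no genuine obstacle beyond this routine measurability check.
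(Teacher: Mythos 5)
The paper states this lemma without proof, citing Dwork et al.; your proof is the standard argument (reduce to deterministic post-processing via preimages, then average over the seed of $\mathcal{B}$) and is correct, including the remark about jointly measurable kernels.
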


Secondly, differential privacy is robust to adaptive composition.

\begin{lemma}[Composition of $(\eps, \delta)$-differential privacy~\citep{DworkMNS06j}]\label{lem:composition} If $\mathcal{A}$ is an adaptive composition of $m$ differentially private algorithms $\mathcal{A}_1, \ldots, \mathcal{A}_m$, where $\mathcal{A}_j$ is $(\eps_j, \delta_j)$ differentially private, then $\mathcal{A}$ is $\left(\sum_j \eps_j, \sum_j \delta_j\right)$-differentially private.
\end{lemma}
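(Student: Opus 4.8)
The plan is to reduce to the two-fold case and then iterate. Concretely, I would first prove the following: if $\mathcal{A}_1 : \X^n \to \mathcal{Y}_1$ is $(\eps_1,\delta_1)$-DP and, for every fixed $y \in \mathcal{Y}_1$, the map $\vec{\dset} \mapsto \mathcal{A}_2(\vec{\dset},y)$ is $(\eps_2,\delta_2)$-DP, then $\mathcal{A}(\vec{\dset}) := \big(\mathcal{A}_1(\vec{\dset}),\, \mathcal{A}_2(\vec{\dset},\mathcal{A}_1(\vec{\dset}))\big)$ is $(\eps_1+\eps_2,\, \delta_1+\delta_2)$-DP. The general $m$-fold statement then follows by induction on $m$: the mechanism $\mathcal{A}^{(j)}$ returning the first $j$ outputs is $(\sum_{i\le j}\eps_i,\sum_{i\le j}\delta_i)$-DP by the inductive hypothesis, so playing the role of ``$\mathcal{A}_1$'' with $\mathcal{A}_{j+1}$ as ``$\mathcal{A}_2$'' and invoking the two-fold claim advances the induction; post-processing (Lemma~\ref{lem:postprocess}) handles the case where $\mathcal{A}$ only outputs some function of the transcript $(y_1,\dots,y_m)$. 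Since the neighbor relation is symmetric, it suffices to prove the one-sided inequality $\Pr[\mathcal{A}(\vec{\dset}) \in E] \le e^{\eps_1+\eps_2}\Pr[\mathcal{A}(\vec{\dset}')\in E] + \delta_1+\delta_2$ for every ordered pair of neighbors $\vec{\dset},\vec{\dset}'$ and every measurable $E \subseteq \mathcal{Y}_1\times\mathcal{Y}_2$.

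For the two-fold bound, I would slice $E$ along its first coordinate, setting $E_y = \{z : (y,z) \in E\}$, and define $f(y) = \Pr[\mathcal{A}_2(\vec{\dset},y) \in E_y]$ and $g(y) = \Pr[\mathcal{A}_2(\vec{\dset}',y) \in E_y]$. Writing $Y = \mathcal{A}_1(\vec{\dset})$ and $Y' = \mathcal{A}_1(\vec{\dset}')$, we have $\Pr[\mathcal{A}(\vec{\dset})\in E] = \E[f(Y)]$ and $\Pr[\mathcal{A}(\vec{\dset}')\in E] = \E[g(Y')]$, while $(\eps_2,\delta_2)$-DP of $\mathcal{A}_2(\cdot,y)$ gives the pointwise bound $f(y) \le e^{\eps_2} g(y) + \delta_2$. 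The key device is the truncation $\psi(y) := \min\{f(y),\, e^{\eps_2} g(y)\}$, which is $[0,1]$-valued and satisfies $f(y) \le \psi(y) + \delta_2$ and $\psi(y) \le e^{\eps_2} g(y)$ pointwise.

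I would then chain three estimates: (i) $\E[f(Y)] \le \E[\psi(Y)] + \delta_2$; (ii) since $\psi \in [0,1]$, the layer-cake identity $\E[\psi(Y)] = \int_0^1 \Pr[\psi(Y) > t]\,dt$ together with $(\eps_1,\delta_1)$-DP of $\mathcal{A}_1$ applied to each super-level set $\{y : \psi(y) > t\}$ yields $\E[\psi(Y)] \le e^{\eps_1}\E[\psi(Y')] + \delta_1$; and (iii) $\E[\psi(Y')] \le e^{\eps_2}\E[g(Y')] = e^{\eps_2}\Pr[\mathcal{A}(\vec{\dset}')\in E]$. Composing (i)--(iii) gives exactly $\Pr[\mathcal{A}(\vec{\dset})\in E] \le e^{\eps_1+\eps_2}\Pr[\mathcal{A}(\vec{\dset}')\in E] + \delta_1 + \delta_2$.

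The one genuinely delicate point --- and the whole reason for introducing $\psi$ instead of just composing the two DP inequalities in sequence --- is obtaining the additive loss $\delta_1+\delta_2$ exactly rather than the weaker $e^{\eps_2}\delta_1+\delta_2$: the $\delta_2$ slack must be peeled off \emph{before} invoking $(\eps_1,\delta_1)$-DP, and the function fed into the layer-cake argument must be bounded by $1$ so that $\int_0^1 \Pr[\,\cdot > t\,]\,dt$ (and hence the factor multiplying $\delta_1$) comes out right. Everything else is routine measure-theoretic bookkeeping --- measurability of $f,g,\psi$ in $y$ and legitimacy of the slicing, handled via a regular conditional distribution for $\mathcal{A}_1$ --- which I would state without belaboring.
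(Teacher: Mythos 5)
The paper does not prove this lemma --- it is stated as a cited preliminary (\cite{DworkMNS06j}) --- so there is no in-paper argument to compare against. Your proof is correct. The reduction to the two-fold case plus induction and post-processing is the right scaffolding, and the truncation $\psi(y) = \min\{f(y),\, e^{\eps_2}g(y)\}$ is exactly the device needed to obtain the sharp additive term $\delta_1 + \delta_2$: you correctly verify $f \le \psi + \delta_2$ (since the pointwise DP inequality gives $f - e^{\eps_2}g \le \delta_2$, hence $f - \psi = \max\{0, f - e^{\eps_2}g\} \le \delta_2$), $\psi \le 1$ (so the layer-cake integral over $[0,1]$ multiplies $\delta_1$ by exactly $1$), and $\psi \le e^{\eps_2}g$. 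Chaining (i)--(iii) gives the claim. Your diagnosis of the delicate point is also right: either naive ordering of the two DP inequalities yields $e^{\eps_1}\delta_2 + \delta_1$ or $e^{\eps_2}\delta_1 + \delta_2$, and peeling off $\delta_2$ \emph{before} the layer-cake step is what removes the exponential prefactor. One alternative route in the literature proves the two-fold claim via the structural characterization that any $(\eps,\delta)$-DP mechanism agrees with an $\eps$-DP mechanism except on an event of probability at most $\delta$; your argument is arguably cleaner in that it never invokes that decomposition and works directly from the definition.
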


Finally, we discuss the Laplace mechanism, which we will use in one of our algorithms.

\begin{definition}[$\ell_1$-Sensitivity] The $\ell_1$-sensitivity of a function  $f: \inputspace^n \rightarrow \mathbb{R}^d$   is
\begin{equation*}
    \Delta_f = \max_{\substack{\vec{\dset}, \vec{\dset}' \in \inputspace^n \\ d_{ham}(\vec{\dset} , \vec{\dset}') \leq 1}} \|f(\vec{\dset}) - f(\vec{\dset}')\|_1.
\end{equation*}
\end{definition}

\begin{lemma}[Laplace Mechanism]\label{lem:gauss_dp} Let $f : \inputspace^n \rightarrow \mathbb{R}^d$ be a function with $\ell_1$-sensitivity $\Delta_f$. Then the Laplacian mechanism is algorithm
\begin{equation*}
    \mathcal{A}_f(\vec{\dset}) = f(\vec{\dset}) + (Z_1, \ldots, Z_d),
\end{equation*}
where $Z_i \sim Lap \left(\frac{\Delta_f}{\eps}\right)$ (and $Z_1,\dots,Z_d$ are mutually independent). Algorithm $\mathcal{A}_f$ is $\eps$-DP.
\end{lemma}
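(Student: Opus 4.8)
The plan is to verify the definition of differential privacy directly: bound, pointwise in the output, the ratio of the output densities of $\mathcal{A}_f$ on two neighboring datasets, and then integrate that bound over an arbitrary measurable event.

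First I would fix datasets $\vec{\dset}, \vec{\dset}' \in \inputspace^n$ with $d_{ham}(\vec{\dset}, \vec{\dset}') \le 1$. Since the noise coordinates $Z_1,\dots,Z_d$ are mutually independent $\mathsf{Lap}(\Delta_f/\eps)$ variables, the output $\mathcal{A}_f(\vec{\dset})$ has a density on $\mathbb{R}^d$ that factorizes across coordinates; at a point $z=(z_1,\dots,z_d)$ it equals
\[
p_{\vec{\dset}}(z) \;=\; \prod_{i=1}^{d} \frac{\eps}{2\Delta_f}\,\exp\!\left(-\frac{\eps\,|z_i - f(\vec{\dset})_i|}{\Delta_f}\right),
\]
and the density $p_{\vec{\dset}'}$ of $\mathcal{A}_f(\vec{\dset}')$ is given by the same expression with $f(\vec{\dset})$ replaced by $f(\vec{\dset}')$.

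Next I would take the ratio of the two densities. The normalizing constants cancel, leaving
\[
\frac{p_{\vec{\dset}}(z)}{p_{\vec{\dset}'}(z)} \;=\; \exp\!\left(\frac{\eps}{\Delta_f}\sum_{i=1}^{d}\Big(|z_i - f(\vec{\dset}')_i| - |z_i - f(\vec{\dset})_i|\Big)\right).
\]
Applying the triangle inequality coordinate by coordinate, $|z_i - f(\vec{\dset}')_i| - |z_i - f(\vec{\dset})_i| \le |f(\vec{\dset})_i - f(\vec{\dset}')_i|$, so the sum in the exponent is at most $\|f(\vec{\dset}) - f(\vec{\dset}')\|_1$. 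Since $d_{ham}(\vec{\dset},\vec{\dset}')\le 1$, the definition of $\ell_1$-sensitivity gives $\|f(\vec{\dset}) - f(\vec{\dset}')\|_1 \le \Delta_f$; hence the exponent is at most $\eps$ and $p_{\vec{\dset}}(z) \le e^{\eps}\,p_{\vec{\dset}'}(z)$ for every $z \in \mathbb{R}^d$.

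Finally I would integrate this pointwise inequality over an arbitrary measurable set $Y \subseteq \mathbb{R}^d$:
\[
\Pr[\mathcal{A}_f(\vec{\dset}) \in Y] = \int_Y p_{\vec{\dset}}(z)\,dz \;\le\; e^{\eps}\int_Y p_{\vec{\dset}'}(z)\,dz = e^{\eps}\,\Pr[\mathcal{A}_f(\vec{\dset}') \in Y],
\]
which is exactly $(\eps,0)$-differential privacy, i.e.\ $\eps$-DP. There is no genuine obstacle here — the argument is a routine calculation; the only point needing a little care is that the density-ratio bound must hold for \emph{every} $z$ (not merely almost every $z$), which is automatic since the Laplace density is everywhere positive and continuous, so the passage from the pointwise bound to the bound on $\Pr[\,\cdot \in Y\,]$ is valid for all measurable $Y$ without any exceptional-set caveats.
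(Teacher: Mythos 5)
Your proof is correct and is the standard textbook argument for the Laplace mechanism (pointwise density-ratio bound via the triangle inequality and $\ell_1$-sensitivity, then integrate over the event). The paper states this lemma as a known preliminary without proof, and the argument it implicitly relies on is exactly the one you give.
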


\fi

\section{On Instance Optimality}\label{sec:instoptdisc}
\label{sec:instance-optimality-def}
\newcommand{\algstar}{\ensuremath{\mathcal{A}^\star}}
\newcommand{\Dee}{\ensuremath{\mathcal{D}}}

In this section, we discuss the notion of instance optimality, and argue that it provides a useful benchmark that captures the idea of \emph{going beyond the worst case}. The notion  of instance optimality we propose can be see as a generalisation of the hardest one-dimensional subproblem, or hardest local alternative introduced by \cite{CaiL15}. Suppose we have a family of distributions $\mathcal{P}\subset\Delta(\inputspace)$ on a space $\inputspace$ and our goal is to learn the parameter $\theta:\mathcal{P}\to \parameterspace$ where $\parameterspace$ is a metric space with metric $\metric$.
Given an estimation algorithm $\mathcal{A}:\inputspace^n\to\parameterspace$, we can define the estimation rate\footnote{We note that while the estimation rate here is defined in expectation, we will sometimes show results (for e.g. in the one-dimensional case) where estimation rate is defined with probability at least $0.75$ over the randomness of the algorithm and the data; see Equation~\ref{eq:onedestrate}.} of $\mathcal{A}$ to be the function $\mathcal{R}_{\mathcal{A},n}:\mathcal{P}\to\mathbb{R}_+$ where \[\mathcal{R}_{\mathcal{A},n}(P)=\mathbb{E}_{D\sim P^n}[\metric(\theta(P), \mathcal{A}(D))].\]

Since the estimation rate is a function of the distribution $P$, the estimation rate of an algorithm may be lower at ``easy" distributions and larger at ``harder" distributions. As a classic example, consider the estimation rate of Bernoulli parameter estimation where $\mathcal{A}$ simply outputs the empirical mean. Then $\mathcal{R}_{\mathcal{A},n}(\Ber(p))=\min\{p(1-p), \sqrt{p(1-p)/n}\}$, so this algorithm performs better when the Bernoulli parameter is close to 0 or 1, and has it's worst case error when $p=1/2$.

Cai and Low \cite{CaiL15} proposed three desiderata that a target estimation rate $\mathcal{R}_n:\Delta(\inputspace)\to\mathbb{R}_+$ should satisfy in order to be a meaningful benchmark; 
\begin{enumerate}
    \item $\mathcal{R}_n(P)$ varies significantly across $\mathcal{P}$
    \item $\mathcal{R}_n$ is an achievable estimation rate; there exists an algorithm $\mathcal{A}$ and constant $\alpha$ such that $\mathcal{R}_{\mathcal{A},n}(P)\le \alpha\mathcal{R}_n(P)$ for all $P\in\mathcal{P}$
    \item Outperforming the benchmark $\mathcal{R}_n$ at one distribution leads to worse performance at another distribution.
\end{enumerate}
In this section we will discuss the definition of instance optimality we will use in this work by defining the target estimation rate that will serve as our benchmark estimation rate. The main theorems of this paper establish that our chosen benchmark achieves desiderata 1 and 2 above. It is not immediately obvious that desiderata 3 holds. We will show in Section~\ref{sec:locmin}, through the introduction of a related notion of instance optimality which we call \emph{local minimality}, that desiderata 3 holds in many important settings, including the problem studied in this paper. 

\subsection{Local Estimation Rates}\label{sec:localest}

We will start by defining a target estimation rate. We'll say an algorithm is $\alpha$-instance optimal if it uniformly achieves this target estimation rate up to a multiplicative $\alpha$ factor. 
For each distribution $P\in\mathcal{P}$, we define a neighbourhood $\mathcal{N}(P)$. 

\begin{definition}\label{def:instoptmain}
Given a function $\Nbrs:\mathcal{P}\to \mathfrak{P}(\mathcal{P})$, where $\mathfrak{P}(\mathcal{P})$ is the power set of $\mathcal{P}$, we define the optimal estimation rate with respect to $\Nbrs$ to be:
\begin{equation}\label{privestimationrate}\mathcal{R}_{\Nbrs, n}(P) = \min_{\mathcal{A}}\sup_{Q\in \Nbrs(P)}\mathcal{R}_{\mathcal{A}, n}(Q).\end{equation}
An algorithm $\mathcal{A}$ is $\alpha$-instance optimal with respect to $\mathcal{N}$ if for all $P\in\mathcal{P}$, \[\mathcal{R}_{\mathcal{A},n}(P)\le \alpha\mathcal{R}_{\mathcal{N},n}(P)\]
\end{definition}

If an algorithm $\mathcal{A}$ uniformly achieves the optimal estimation rate wrt a function $\Nbrs$, then this implies that for all distributions $P$, the error of the algorithm $\mathcal{A}$ on $P$ is competitive with an algorithm that is told the additional information that the distribution is in $\Nbrs(P)$. Given a function $\Nbrs$, it is possible that there does not exist an algorithm that uniformly achieves $\mathcal{R}_{\Nbrs, n}$. For example, as discussed in the introduction, if $\Nbrs(P)=\{P\}$, then $\mathcal{R}_{\Nbrs, n}$ is not uniformly achievable. Conversely, if $\Nbrs(P)$ is not chosen carefully, then the estimation rate $\mathcal{R}_{\Nbrs, n}$ may not define a meaningful benchmark; e.g. an estimation rate that adapts to easy instances.


A different formalization may be more probabilistic: the algorithm designer may have in mind a distribution $\Dee$ over distributions that they care about, and their objective may be to minimize $\E_{P\sim\Dee}[\mathcal{R}_{\alg, n}(P)]$. Suppose that for the $\algstar$ chosen by the algorithm designer, and for our neighborhood map $\Nbrs$, the function $\mathcal{R}_{\algstar, n}(P)$ does not vary too much over $\Nbrs(P)$ on average. Formally, let \[disc_{\algstar}^{\Nbrs}(P) = \sup_{P' \in \Nbrs(P)}(\mathcal{R}_{\algstar, n}(P') - \mathcal{R}_{\algstar, n}(P)\] and let $\overline{disc}_{\algstar}^{\Nbrs}(P) = \E_{P \sim \Dee}[disc_{\algstar}^{\Nbrs}(P)]$. Then for any algorithm $\alg$ that is $\alpha$-instance optimal with respect to $\mathcal{N}$, we can write
\begin{align*}
    \E_{P \sim \Dee}[\mathcal{R}_{\alg, n}(P)] &\leq \alpha\cdot \E_{P \sim \Dee}[\sup_{P' \in \Nbrs(P)} \mathcal{R}_{\algstar, n}(P')]\\
    &= \alpha\cdot \E_{P \sim \Dee}[\mathcal{R}_{\algstar, n}(P) + disc_{\algstar}^{\Nbrs}(P))]\\
    &= \alpha\cdot \E_{P \sim \Dee}[\mathcal{R}_{\algstar, n}(P)] + \alpha\cdot \E_{P \sim \Dee}[disc_{\algstar}^{\Nbrs}(P))]\\
    &= \alpha\cdot \left(\E_{P \sim \Dee}[\mathcal{R}_{\algstar, n}(P)] + \overline{disc}_{\algstar}^{\Nbrs}(P)\right).\\
\end{align*}

In other words, as long as the algorithm $\algstar$'s performance is relatively constant over $\Nbrs(P)$ on average over the distribution of interest, the instance optimal algorithm (that is not tailored to $\Dee$) is competitive with $\algstar$. A similar result holds for a multiplicative definition of $disc$.

This discussion can help guide the choice of the neighborhood function that is appropriate for a particular application. In the case of density estimation in the Wasserstein distance, we will define $\Nbrs(P)$ be a small $D_{\infty}$ ball around $P$. We believe this captures the kind of domain information an algorithm designer may have. E.g. one may have a small amount of public data samples, in which case the posterior over distributions in a $D_\infty$ ball will be relatively constant. If the algorithm designer's custom algorithm needs to do well for all distributions in this set, an instance-optimal algorithm will be competitive with this custom algorithm.

Previous work in instance optimality has largely focused on two notions of neighborhood. In~\cite{FaginLN01,AfshaniBC17,ValiantV16,OrlitskyS15}, where the objects of interest are discrete subsets with no a priori structure, it is natural to ask that the algorithm work well for any permutation of the inputs. For example, if the goal is to compute the set of maximal points from a 2-d point set, the algorithm designer would typically want an algorithm that works well for any permutation of the set of input points. In our setting where the points of interest have a metric structure, this is not an appropriate notion. In fact, even for the discrete case studied in~\cref{subsec:discrete}, permutation invariance cannot capture natural prior beliefs that may arise in practice. For example, for power-law distributions that one often sees in private learning applications~\citep{ZhuKMSL20,CormodeB22,ChadhaCDFHJMT23}, a small number of samples are sufficient to get a good estimate of the heavy bins, and rule out a large fraction of permutations of the input space.

A second line of work arising from the statistics literature~\citep{CaiL15} has looked at defining instance-optimality with respect to neighborhoods of size $2$. 
While this approach has been very successful for many problems, we find it inappropriate for density estimation (outside of density estimation on $\mathbb{R})$ as neighborhoods of size two are too weak to capture the difficulty of problems of interest. Even in the simple case of discrete distributions, this neighborhood is provably insufficient to get instance-optimality results with any $o(K)$ competitive ratio. Indeed, for any two given distributions on $[K]$ with TV distance $\alpha$, $\tilde{O}(\frac 1 {\alpha^2})$ samples suffice to distinguish them, whereas learning a near uniform distribution on $K$ atoms requires $\Omega(K)$ samples. In the private setting, the need to use multiple distributions to prove lower bounds is well-studied. Our approach shares this similarity of using a multi-instance lower bounding argument with packing lower bounds in privacy, and local Fano's and Le Cam's methods in statistics. Our work shows that some of the same lower bounding techniques can be used to prove instance-optimality results with respect to natural neighborhood maps, going well beyond the the worst-case results those works prove.


\sstext{In the special case of density estimation in the Wasserstein distance on $\mathbb{R}$, instance optimality with respect to neighborhoods of size 2 is achievable. In the standard version of this benchmark metric, $\mathcal{N}(P)=\{P,Q_P\}$ where $Q_P$ can be \emph{any} distribution and is chosen to maximise $\mathcal{R}_{\mathcal{N},n}(P)$. However, this notion may not be an appropriate notion of instance optimality by itself. To see this, consider a distribution $P$ supported on an interval $[a,b]$. Moving a small amount of mass from one end of the interval to the other would create an indistinguishable distribution that is far from $P$ in Wasserstein distance, and a hypothesis testing argument can be used to show that the target estimation rate defined above (for the hardest one-d sub problem) depends on the interval size $b-a$. This implies that the adaptivity of algorithms to support size of the distribution (crucial in Wasserstein estimation) is not captured by this notion of instance optimality. Instead, we add a further restriction to the definition to make it more appropriate for our setting; we only consider distributions $Q$ that are in a small $D_{\infty}$ ball around $P$ ($D_{\infty}(P,Q) \leq \ln 2)$, and ask that an algorithm is competitive with an algorithm that is told the additional information that $P\in\{P,Q\}$ (in the worst case over distributions $Q$ that are in this $D_{\infty}$ ball). That is, we define the benchmark estimation rate to be \begin{equation}\label{hypothesistestingrate}\hypothesistestingrate(P)=\sup_{Q: D_{\infty}(P,Q) \leq \ln 2}\min_{\mathcal{A}}\{\mathcal{R}_{\mathcal{A}, n}(Q), \mathcal{R}_{\mathcal{A}, n}(P)\}.\end{equation}  Note that all such distributions $Q$ have the same support as $P$, which allows us to capture the adaptivity of algorithms to the support size of the distribution. Specifically, we define the following target estimation rate in the one-dimensional setting. In the case of estimating distributions on a bounded subset of $\mathbb{R}$, we will show that this error rate is achievable, up to logarithmic factors.} 

\sstext{We also note that our notion of instance optimality more naturally captures the accuracy of algorithms even for basic tasks. Note that for the Bernoulli case, our technique achieves a bound of $\frac{\sqrt{p(1-p)}}{\sqrt{n}} + \min \{ p, 1-p,\frac{1}{\eps n}\}$ which also appear to be better than the instance-optimal lower bounds in~\cite{McMillanSU22}, which take the form $\frac{\sqrt{p(1-p)}}{\sqrt{n}} + \frac{1}{\eps n}$. This apparent contradiction can be explained by the the use in~\cite{McMillanSU22} of the hardest-one dimensional sub-problem to define the instance-optimal rate, i.e., $\Nbrs(P)$ is $\{P,Q\}$ for a worst-case Bernoulli $Q$. On the other hand, the notion of instance-optimality we use would only consider Bernoullis $Q$ such that $D_{\infty}(P,Q) \leq \ln 2$. When $p$ is close to $0$, the lower bound in~\cite{McMillanSU22} would on this instance consider $Q$ to be $Bern(p+\frac{1}{\eps n})$, which can have a large $D_{\infty}$-distance from $P$, and so isn't in the neighborhood used in our notion of instance-optimality. Hence, the target rate one would obtain from our definition is smaller when $p$ is close to $0$. Our algorithm can achieve this improved rate, as it is likely to output $0$ as an estimate of $p$ in this case, pushing small counts down to zero. }


Recent differentially private algorithms such as those in~\cite{HuangLY21,DickKSS23} have shown instance-optimality for problems such as mean estimation. Relatedly, other works have designed algorithms that adapt to the local/smooth/deletion sensitivity of the underlying function. An instance in these works in a dataset rather than a distribution, and it is not clear how to extend the corresponding notion of neighborhood to our setting. Our neighborhood notion perhaps comes closest to the deletion neighborhoods considered in some of these works.

Finally, we remark that while we have stated our results as being competitive with the worst-case instance in $\Nbrs(P)$, they apply for the average case over a specific distribution over $\Nbrs(P)$. Since that specific distribution is adversarial, we don't view this version as more natural than the worst case.

Given that we are focusing on private estimation, we will use use $\privhypothesistestingrate$ to denote the version of Eqn~\ref{hypothesistestingrate} where the minimum is taken over all $\epsilon$-DP mechanisms, and $\mathcal{R}_{\Nbrs,n,\epsilon}$ to define the optimal $\epsilon$-DP estimation rate, i.e. Eqn~\ref{privestimationrate} where the minimum is taken over all $\epsilon$-DP mechanisms.
 
\subsection{Locally Minimal Algorithms}\label{sec:locmin}

In this section we address the third desiderata of \cite{CaiL15}. An important concept in statistics is that of efficiency of an estimator, which informally compares the rate of convergence of the estimator with a benchmark that in general is not beatable. This idea has been used to argue that for some fundamental estimation problems, the Maximum Likelihood Estimator (MLE) is the best possible. Hodge showed an example of a {\em superefficient} estimator that is asymptotically as good as the MLE everywhere, but beats the MLE on a certain set of inputs. The statistics community has argued in multiple ways that these superefficient estimators do not limit our ability to argue that MLE is ``optimal''. We refer the reader to~\cite{vanderVaart1997,Wolfowitz65,Vovk09} for a discussion of superefficiency. One of the more compelling arguments here is a result saying that the set of points where superefficiency is achieved has Lebesgue measure zero. This in particular implies that in a small neighborhood around any point, there is a point (in fact many points) where the superefficient estimator does no better than the MLE. In the partial order on estimators, the MLE is thus minimal and this is true even when looking at the performance of the estimator only on a small neighborhood around a given point.

This motivates a slightly different notion capturing the goodness of the algorithm locally.
\begin{definition}
Let $\M$ be a class of algorithms. We say that an algorithm $\alg$ is $\alpha$-locally minimal with respect to a neighborhood map $\Nbrs$, if for all instance $P$, and all $\alg' \in \M$, there is a $Q \in \Nbrs(P)$ such that $\mathcal{R}_{\alg, n}(Q) \leq \alpha \cdot \mathcal{R}_{\alg', n}(Q)$.
\end{definition}
In words, local minimality says that for any other $\alg'$, the algorithm $\alg$ is competitive with $\alg'$ for some instance in the neighborhood of $P$. Put differently, no $\alg'$ can be uniformly much better than $\alg$ on the neighborhood, even one that knows $P$.

We show that in general, this notion is incomparable to our notion of instance optimality. Nevertheless, under reasonable assumptions, the two notions are closely related.

\begin{example}[Local Minimality $\not\Rightarrow$ Instance Optimality]
    Consider a pair of instances $\{P, Q\}$ with $\Nbrs(P) = \Nbrs(Q) = \{P,Q\}$. Let $\M$ contain two algorithms $\alg$, and $\alg^\star$ with
    \begin{align*}
        \mathcal{R}_{\alg, n}(P) = 1; \;\;\;\;\;\;&          \mathcal{R}_{\alg, n}(Q) = 0;\\  
        \mathcal{R}_{\algstar, n}(P) = 0; \;\;\;\;\;\;&         \mathcal{R}_{\algstar, n}(Q) = 0;\\  
    \end{align*}
    Then one can verify that $\alg$ is ($1$-)locally minimal in $\M$. However, it is not $\alpha$-instance optimal for any finite $\alpha$ as it fails to satisfy the definition at $P$.
\end{example}

\begin{example}[Instance Optimality $\not\Rightarrow$ Local Minimality]
    Consider a set of instances $\{P_1,P_2,P_3\}$ with $\Nbrs(P_1) = \{P_1,P_2\},\Nbrs(P_2) = \{P_1,P_2,P_3\}, \Nbrs(P_3) = \{P_2,P_3\}$. Let $\M$ contain algorithms $\alg, \algstar$ with
    \begin{align*}
        \mathcal{R}_{\algstar, n}(P_1) = 1; \;\;\;\;\;\;&         \mathcal{R}_{\alg, n}(P_1) = 2\alpha;\\  
        \mathcal{R}_{\algstar, n}(P_2) = 2\alpha; \;\;\;\;\;\;&          \mathcal{R}_{\alg, n}(P_2) = 4\alpha^2;\\   
        \mathcal{R}_{\algstar, n}(P_3) = 4\alpha^2; \;\;\;\;\;\;&           \mathcal{R}_{\alg, n}(P_3) = 4\alpha^2.\\  
    \end{align*}
    Then one can verify that $\alg$ is ($1$-)instance optimal in $\M$. However, it is not $\alpha$-locally minimal at $P_1$.
\end{example}

Under smoothness assumptions on $\alg$ with respect to $\Nbrs$, one can argue that the two notions are essentially equivalent.
\begin{proposition}\label{instanceimpliesminimal}
    Let $\alg$ be such that for all instances $P$ and for all $Q \in \Nbrs(P)$, $\mathcal{R}_{\alg, n}(Q) \leq \beta \cdot \mathcal{R}_{\alg, n}(P)$. Further, suppose that $\Nbrs(P)$ is compact for any $P$. If $\alg$ is $\alpha$-instance optimal in $\M$ with respect to $\Nbrs$, then it is $\alpha\beta$-locally minimal.
\end{proposition}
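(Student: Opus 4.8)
The plan is to unwind the definitions and chain three inequalities. Fix an instance $P$ and an arbitrary algorithm $\alg' \in \M$; our goal is to produce some $Q \in \Nbrs(P)$ with $\mathcal{R}_{\alg,n}(Q) \le \alpha\beta \cdot \mathcal{R}_{\alg',n}(Q)$. By the instance-optimality hypothesis applied at $P$, we have $\mathcal{R}_{\alg,n}(P) \le \alpha \, \mathcal{R}_{\Nbrs,n}(P) = \alpha \min_{\alg''} \sup_{Q' \in \Nbrs(P)} \mathcal{R}_{\alg'',n}(Q')$, and since $\alg'$ is one particular algorithm in the class over which the min is taken, $\mathcal{R}_{\alg,n}(P) \le \alpha \sup_{Q' \in \Nbrs(P)} \mathcal{R}_{\alg',n}(Q')$. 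First I would pick (or, if the supremum is attained by compactness of $\Nbrs(P)$ and continuity, choose exactly) a distribution $Q \in \Nbrs(P)$ achieving, or nearly achieving, this supremum, so that $\mathcal{R}_{\alg,n}(P) \le \alpha \, \mathcal{R}_{\alg',n}(Q)$ (with an $\eps$ slack if the sup is not attained, to be sent to $0$ at the end).

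Next I would apply the smoothness hypothesis on $\alg$: since $Q \in \Nbrs(P)$, we have $\mathcal{R}_{\alg,n}(Q) \le \beta \, \mathcal{R}_{\alg,n}(P)$. Combining this with the previous display gives
\[
\mathcal{R}_{\alg,n}(Q) \;\le\; \beta \, \mathcal{R}_{\alg,n}(P) \;\le\; \beta \cdot \alpha \, \mathcal{R}_{\alg',n}(Q) \;=\; \alpha\beta \, \mathcal{R}_{\alg',n}(Q),
\]
which is exactly the condition required for $\alpha\beta$-local minimality at $P$ against the competitor $\alg'$. Since $P$ and $\alg'$ were arbitrary, this establishes the proposition.

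The only subtlety — and the one place care is needed — is the role of compactness of $\Nbrs(P)$: it is invoked to guarantee that $\sup_{Q' \in \Nbrs(P)} \mathcal{R}_{\alg',n}(Q')$ is actually attained by some $Q \in \Nbrs(P)$, rather than merely approached. If one is willing to assume the relevant estimation-rate functional $Q' \mapsto \mathcal{R}_{\alg',n}(Q')$ is upper semicontinuous on $\Nbrs(P)$, the supremum is a maximum and the argument is clean; otherwise one works with an $\eps$-maximizer $Q_\eps$, derives $\mathcal{R}_{\alg,n}(Q_\eps) \le \alpha\beta\,\mathcal{R}_{\alg',n}(Q_\eps) + \beta\eps$, and notes that the definition of local minimality only requires the existence of \emph{one} good $Q$, so a compactness/diagonal argument over $\eps \to 0$ (extracting a convergent subsequence $Q_{\eps_k} \to Q^\star$) recovers an exact witness. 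I would expect this limiting step to be the main (mild) obstacle; everything else is a two-line substitution.
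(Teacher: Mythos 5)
Your proof follows essentially the same chain as the paper: apply $\alpha$-instance optimality at $P$, use compactness of $\Nbrs(P)$ to pick a maximizer $Q^\star$ of $\mathcal{R}_{\alg',n}$, then use the smoothness hypothesis on $\alg$ to convert $\mathcal{R}_{\alg,n}(P)$ into $\mathcal{R}_{\alg,n}(Q^\star)$, giving $\mathcal{R}_{\alg,n}(Q^\star) \le \alpha\beta\,\mathcal{R}_{\alg',n}(Q^\star)$. Your remark that compactness of $\Nbrs(P)$ alone does not guarantee the supremum is attained without some (upper semi-)continuity of $Q \mapsto \mathcal{R}_{\alg',n}(Q)$ is a legitimate point that the paper's proof silently assumes; note though that your proposed diagonal-argument fix also needs continuity of the rate functionals to pass to the limit, so it does not fully dispense with that assumption.
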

\begin{proof}
    Let $P$ be an instance and let $\alg'$ be a competing algorithm. By definition of $\alpha$-instance optimality,
    \begin{align*}
        \mathcal{R}_{\alg, n}(P) &\leq \alpha \cdot \sup_{Q \in \Nbrs(P)} \mathcal{R}_{\alg', n}(Q).
    \end{align*}
    By compactness, this implies that there is a $Q$ achieving the supremum. In other words, there exists $Q^\star \in \Nbrs(P)$ such that
    \begin{align*}
        \mathcal{R}_{\alg, n}(P) &\leq \alpha \cdot \mathcal{R}_{\alg', n}(Q^\star).
    \end{align*}
    Since $Q^\star \in \Nbrs(P)$, our smoothness assumption implies that
    \begin{align*}
        \mathcal{R}_{\alg, n}(Q^\star) &\leq \beta \cdot \mathcal{R}_{\alg, n}(P).
    \end{align*}
    Combining the last two inequalities, this $Q^\star$ satisfies
    \begin{align*}
        \mathcal{R}_{\alg, n}(Q^\star) &\leq \alpha\beta \cdot \mathcal{R}_{\alg', n}(Q^\star).
    \end{align*}
    Since $P$ and $\alg'$ were arbitrary, this implies that $\alg$ is $\alpha\beta$-locally minimal.
\end{proof}

\begin{proposition}
    Let $\alg$ be such that for all instances $P$ and for all $Q \in \Nbrs(P)$, $\mathcal{R}_{\alg, n}(Q) \geq \beta^{-1} \cdot \mathcal{R}_{\alg, n}(P)$. 
    If $\alg$ is $\alpha$-locally minimal in $\M$ with respect to $\Nbrs$, then it is $\alpha\beta$-instance optimal.
\end{proposition}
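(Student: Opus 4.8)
The plan is to mirror the argument of Proposition~\ref{instanceimpliesminimal}, running essentially the same chain of inequalities in the opposite direction. Fix an arbitrary instance $P$ and an arbitrary competing algorithm $\alg' \in \M$; the goal is to establish $\mathcal{R}_{\alg, n}(P) \leq \alpha\beta \cdot \sup_{Q \in \Nbrs(P)} \mathcal{R}_{\alg', n}(Q)$, which is exactly the defining inequality of $\alpha\beta$-instance optimality.

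First I would invoke $\alpha$-local minimality of $\alg$ at $P$ against the competitor $\alg'$: by definition there exists some $Q^\star \in \Nbrs(P)$ with $\mathcal{R}_{\alg, n}(Q^\star) \leq \alpha \cdot \mathcal{R}_{\alg', n}(Q^\star)$. Note that, unlike the forward direction in Proposition~\ref{instanceimpliesminimal}, no compactness hypothesis on $\Nbrs(P)$ is needed here, since local minimality hands us the witness $Q^\star$ directly. Next I would apply the smoothness hypothesis on $\alg$ at this particular $Q^\star \in \Nbrs(P)$, namely $\mathcal{R}_{\alg, n}(Q^\star) \geq \beta^{-1} \cdot \mathcal{R}_{\alg, n}(P)$, which rearranges to $\mathcal{R}_{\alg, n}(P) \leq \beta \cdot \mathcal{R}_{\alg, n}(Q^\star)$.

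Chaining the two bounds yields $\mathcal{R}_{\alg, n}(P) \leq \beta \cdot \mathcal{R}_{\alg, n}(Q^\star) \leq \alpha\beta \cdot \mathcal{R}_{\alg', n}(Q^\star) \leq \alpha\beta \cdot \sup_{Q \in \Nbrs(P)} \mathcal{R}_{\alg', n}(Q)$, where the final step bounds a single value of the objective by its supremum over $\Nbrs(P)$, using that $Q^\star \in \Nbrs(P)$. Since $P$ and $\alg'$ were arbitrary, this is precisely the statement that $\alg$ is $\alpha\beta$-instance optimal.

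There is essentially no hard step here — the proposition is the clean dual of Proposition~\ref{instanceimpliesminimal}. The only points requiring a moment's care are: ensuring the smoothness inequality is applied at the \emph{same} $Q^\star$ produced by local minimality rather than at some other neighbour, and noting that $Q^\star$ lies in $\Nbrs(P)$ so that it is a legitimate argument of the supremum over $\Nbrs(P)$.
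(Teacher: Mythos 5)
Your proof is correct and follows essentially the same argument as the paper: invoke local minimality to obtain the witness $Q^\star \in \Nbrs(P)$, apply the smoothness hypothesis at $Q^\star$ to get $\mathcal{R}_{\alg,n}(P) \leq \beta\,\mathcal{R}_{\alg,n}(Q^\star)$, chain the inequalities, and bound by the supremum. Your observation that no compactness is needed here (unlike in the converse direction) is a nice extra remark but does not change the argument.
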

\begin{proof}
    Let $P$ be an instance and let $\alg'$ be a competing algorithm. By definition of $\alpha$-local minimality, there is a $Q^\star \in \Nbrs(P)$ such that
    \begin{align*}
        \mathcal{R}_{\alg, n}(Q^\star) &\leq \alpha \cdot  \mathcal{R}_{\alg', n}(Q^\star).
    \end{align*}
   Since $Q^\star \in \Nbrs(P)$, our smoothness assumption implies that
    \begin{align*}
        \mathcal{R}_{\alg, n}(P) &\leq \beta \cdot \mathcal{R}_{\alg, n}(Q^\star).
    \end{align*}
    Combining the last two inequalities, this $Q^\star$ satisfies
    \begin{align*}
        \mathcal{R}_{\alg, n}(P) &\leq \alpha\beta \cdot \mathcal{R}_{\alg', n}(Q^\star)\\
        &\leq \alpha\beta \cdot \sup_{Q \in \Nbrs(P)} cost(\alg'(Q), Q).
    \end{align*}
    Since $P$ and $\alg'$ were arbitrary,  this implies that $\alg$ is $\alpha\beta$-instance optimal.
\end{proof}

A similar pair of results hold when the comparator algorithm $\alg'$ is smooth with respect to the neighborhood map.

\sstext{
\subsection{Relaxed Definitions}\label{sec:reldef}

We finish by noting relaxations of the above definitions that share the same semantic meaning (our algorithms will achieve these relaxed notions).

\begin{definition}\label{def:instoptrelaxed}
Given a function $\Nbrs:\mathcal{P}\to \mathfrak{P}(\mathcal{P})$, where $\mathfrak{P}(\mathcal{P})$ is the power set of $\mathcal{P}$, we define the optimal estimation rate with respect to $\Nbrs$ to be:
\begin{equation}
\mathcal{R}_{\Nbrs, n}(P) = \min_{\mathcal{A}}\sup_{Q\in \Nbrs(P)}\mathcal{R}_{\mathcal{A}, n}(Q).\end{equation}
An algorithm $\mathcal{A}$ is $(\alpha,\beta, \gamma)$-instance optimal with respect to $\mathcal{N}$ if for all $P\in\mathcal{P}$, \[\mathcal{R}_{\mathcal{A},n}(P)\le \alpha\mathcal{R}_{\mathcal{N},\beta n}(P) + \gamma \]
\end{definition}

\begin{definition}\label{def:localminrel}
Let $\M$ be a class of algorithms. We say that an algorithm $\alg$ is $(\alpha, \beta,\gamma)$-locally minimal with respect to a neighborhood map $\Nbrs$, if for all instance $P$, and all $\alg' \in \M$, there is a $Q \in \Nbrs(P)$ such that $\mathcal{R}_{\alg, n}(Q) \leq \alpha \cdot \mathcal{R}_{\alg', \beta n}(Q) + \gamma$.
\end{definition}

Note that we think of $\beta \in (0,1]$ and $\gamma$ as non-negative. The reason these are relaxed definitions is because we allow for an additive approximation factor in addition to a multiplicative factor, and also compare to a benchmark rate that depends on a potentially smaller number of samples (and is hence easier to achieve). The original definition of instance optimality (\Cref{def:instoptmain}) can be obtained by setting $\beta=1$ and $\gamma=0$.

In our work, for most settings of interest, we roughly achieve $\beta = 1/(\log n)^{O(1)}$ and $\gamma$ to be an arbitrarily small polynomial in the inverse of the number of samples $1/n$ at a $\log(1/\gamma)$ cost to the multiplicative factor. We don't view this as a significant issue since we expect the benchmark rate with $\tilde{O}(n/\log n)$ samples to behave asymptotically similarly to that with $n$ samples in most cases. We leave it as an open question as to whether the original definition of instance optimality can be achieved. 
}
\section{Additional Related Work}
\label{sec:related}
\paragraph{Instance Optimality for Differentially Private Statistics:} Several recent works have focused on formulating and giving `instance optimal' differentially private algorithms for various statistical tasks. The work of McMillan, Smith and Ullman~\citep{McMillanSU22} is most directly related to our work; they gave locally minimax optimal algorithms for parameter estimation for one-dimensional exponential families in the central model of differential privacy. The work of Duchi and Ruan~\citep{DuchiR18} also gives locally-minimax optimal algorithms for various one-dimensional parameter estimation problems under the stronger constraint of local differential privacy. The notion of local minimax optimality both these papers use is based on the \textit{hardest one-dimensional sub-problem} described in Section~\ref{sec:localest}. While our results for density estimation in $\mathbb{R}^1$ satisfy this notion, they also satisfy a stronger notion described in Section~\ref{sec:localest}. Additionally, as discussed in~\cite{McMillanSU22}, this definition is provably unsuitable for higher dimensions; we instead suggest a looser definition of instance optimality that is more promising in higher dimensions. More importantly, our paper is primarily focused on the non-parametric setting, and hence our techniques are different than the ones used in those papers, which focused primarily on parameter estimation.

\paragraph{Other Beyond Worse-Case Results in Central Differential Privacy:} Several additional works in the differential privacy literature study algorithms with accuracy that varies with the input dataset. Nearly all of them look at the {\em empirical setting} where we are concerned with the specific input dataset, rather than a distribution it may be drawn from.  While initial algorithms in differential privacy added noise based on a worse case notion of \textit{global sensitivity}, these works give various algorithmic frameworks that help develop algorithms with guarantees that adapt to the hardness of the input dataset. These include algorithms based on smooth sensitivity~\citep{NissimRS07, BunS19}, the propose-test-release framework~\citep{DworkL09, BrunelA20}, Lipschitz extensions~\citep{BlockiBDS13, KasiviswanathanNRS13, ChenZ13, RaskhodnikovaS16},~and sensitivity pre-processing~\citep{CummingsD20}. However, none of these works study a formal notion of instance optimality.

In contrast, some more recent work do study definitions of instance optimality in the empirical setting. A work of Asi and Duchi~\citep{AsiD20} studies two notions of instance optimality: one by comparing the performance of an algorithm on a dataset against the performance of the best unbiased algorithm on that dataset, and another based on an analogue of the `hardest one-dimensional sub-problem' for the empirical setting (they compare the performance of an algorithm on a dataset with all benchmark algorithms that know that the input dataset is either of two possible datasets but whose performance is evaluated as the worse over the two datasets). They give a general mechanism known as the inverse sensitivity mechanism that they show is nearly instance optimal under these definitions for various problems such as median and mean estimation. Our work is focused on population quantities as opposed to empirical quantities---while these are related, they can be very different. For example, as pointed out in McMillan, Smith and Ullman~\citep{McMillanSU22}, using the inverse sensitivity mechanism in~\citep{AsiD20} to estimate the mean of a Gaussian (by using a locally minimax optimal algorithm for empirical mean) will result in infinite mean squared error, whereas other approaches that reason directly about the population quantities can get much better error.  

In~\cite{DickKSS23} and~\cite{HuangLY21}, different notions of instance optimality are defined. Roughly, they compare the performance of an algorithm on a dataset with a benchmark algorithm that knows the input dataset but whose performance is evaluated as the worst-case performance over large subsets of the input dataset. While the details of the definitions in these papers vary slightly, both papers give instance-optimal algorithms for mean estimation under their respective definitions. For one-dimensional distributions, our algorithmic technique at a high level shares ideas with these algorithms---the algorithms in their papers try to adapt to the range of values in the dataset, whereas we try to adapt to the level of concentration of the distribution. However, the details of how this is done and the associated analyses vary. Our algorithm for general metric spaces uses different techniques. Our work differs from these works in a few other prominent ways: firstly, they are primarily concerned with estimating functionals of the underlying dataset, whereas we are concerned with density estimation in Wasserstein distance---these are problems with different output types and different error metrics.  Finally, it is not clear if notions such as subset-based instance optimality that are well defined in the empirical setting transfer meaningfully to the distributional setting.

\paragraph{Instance-Optimal Statistical Estimation without Privacy Constraints:} 
Donoho and Liu~\citep{DonohoL92} formulated the notion of the `hardest one-dimensional sub-problem' as a way of capturing instance optimality for statistical estimation and gave non-private instance optimal algorithms for some one-dimensional parameter estimation problems. Cai and Low~\citep{CaiL15} formulated an instance-optimality type definition for non-parameteric estimation problems. Our results for Wasserstein density estimation over $\mathbb{R}$ use a stronger version of this notion of instance optimality. In higher dimensions, this notion is provably unachievable, and so we define a different notion. 

The other line of work most related to ours is on instance-optimal learning of discrete distributions~\citep{OrlitskyS15, ValiantV16, HaoO19}. In their setting, instance optimality is defined by comparing the performance of an algorithm on a discrete distribution $P$ to the minimax error of any algorithm on the class of discrete distributions with probability vectors that are permutations of the probability vector of $P$. We note that this notion is not well suited to many metric spaces, because permutations may not preserve properties such as concentration of the distribution, and hence this notion of instance optimality may provide an overly pessimistic view of the performance of an algorithm. Our notion of instance optimality (in terms of $D_{\infty}$ neighborhood) compares against algorithms with a different type of prior knowledge- i.e., the location of where the distribution concentrates, and approximate values of the probabilities at each point. We note that these are technically incomparable, and may be useful in different settings. For estimation in Wasserstein distance, knowledge of where the distribution is concentrated could be very useful in algorithm design, and so comparing to algorithms with this type of knowledge is more appropriate. See~\Cref{sec:instoptdisc} for more discussion.

Finally, there is another line of work on getting similar instance optimal guarantees for other statistical problems~\citep{AcharyaDJOP11, AcharyaDJOPS12, AcharyaJOS13a, AcharyaJOS13b}. For the closeness testing problem (given two sequences, determine if they are produced by the same distribution, or different distributions), Acharya, Das, Jafarpour, Orlitsky, Pan and Suresh~\citep{AcharyaDJOP11, AcharyaDJOPS12} developed a test (without any knowledge about the generating distributions) that achieves the same error with $O(n^{3/2})$ samples that an optimal label-invariant test that knows the distributions $p$ and $q$ would achieve with $n$ samples.

\paragraph{Other work on Differentially Private Statistics:} There is a lot of other work on private statistical estimation, 
and we survey the most relevant parts of the literature here. There is a long line of work on minimax parameter/distribution estimation on various parametric distribution families: product distributions~\citep{BunUV18, KamathLSU19, AcharyaSZ20, CaiWZ19, Singhal23}, Gaussian, sub-Gaussian distributions (and more generally exponential families)~\citep{KarwaV18, KamathLSU19, Aden-AliA021, BrownGSUZ21, KamathMSSU22, KamathMS22, Hopkins0M22, KothariMV22, AshtianiL22, LiuKO22, TsfadiaCKMS22, Hopkins0MN23, AlabiKTVZ23, brown2023fast, KuditipudiDH23}, mixtures of Gaussian distributions~\citep{KamathSSU19, ArbasAL23, AfzaliAL23}, heavy-tailed distributions~\citep{KamathSU20, Narayanan23}, discrete distributions with finite support~\citep{DiakonikolasHS15, AcharyaSZ20}, distributions with finite covers~\citep{BunKSW21} and more. This line of work focuses on minimax guarantees in the parametric setting, i.e. optimizing the worst-case error of an algorithm over the entire class of distributions. Our work, on the other hand works in the non-parametric setting where we do not make assumptions about the distribution the dataset is drawn from, but instead give `instance-optimal' algorithms that adapt to the hardness of the distribution the input dataset is drawn from. 

There is also a line of work on differentially private CDF estimation~\citep{DworkNPR10, ChanSS11, BeimelNS16, BunNSV15, AlonLMM19, KaplanLMNS20, Cohen0NSS23}, and quantile estimation~\citep{KaplanSS22, GillenwaterJK21, AliakbarpourS0U24}. Our algorithm for density estimation  over $\mathbb{R}$ uses a quantile estimation algorithm (based on a CDF estimator) as a subroutine. Finally, there is a line of work on differentially private testing~\citep{AcharyaSZ17, CaiDK17, CanonneKMSU19}, and the work characterizing the sample complexity of simple hypothesis tests forms an important part of our analysis of the instance-optimal rate for distributions over $\mathbb{R}$.

\paragraph{Work on Estimation in Wasserstein Distance:} 
In addition to the recent works~\citep{BoedihardjoSV22,HeVZ23} on private Wasserstein learning on $[0,1]^d$, there is a plethora of works studying it in the non-private setting.

One line of work studies the convergence in Wasserstein distance of the empirical measure (on $n$ samples) to the true measure, as a function of the measure and the number of samples $n$~\citep{Dudley69, DobricY95, CanasR12, DereichSS2011, BoissardG14, FournierG15, TalagrandBob19, WeedB19, Lei18, fournier2023convergence}. 
Some of the later works above can be viewed as studying this problem from a beyond worst-case analysis viewpoint. They give upper and lower bounds for the expected value of this quantity, in terms of various notions of `dimension' of the underlying measure, such as the covering number of the support of the distribution, the upper and lower `Wasserstein dimensions' of the measure, and others. Our work shows that the empirical measure, appropriately massaged, is approximately instance-optimal for density estimation without privacy constraints (for the notions of instance optimality we consider), and hence these works give us a handle on the instance-optimal rate as a function of the distribution and sample size $n$. Some more recent work studies minimax estimation in Wasserstein distance~\citep{SinghP19, NilesWeedB19}, and show that without additional assumptions on the distribution, the empirical measure is minimax optimal. 
Our work extends this result to show that in the general non-parametric setting, the empirical measure is also approximately instance-optimal; to the best of our knowledge, instance optimal estimation in Wasserstein distance (even without privacy constraints) has not been previously studied.
\section{Distribution Estimation on Hierarchically Separated Trees}\label{HST}

Let us now turn to distribution estimation on arbitrary finite metric spaces. We will use the fact that any metric on a finite space can be embedding in a hierarchically separated tree (HST) metric to reduce the problem of density estimation in Wasserstein distance on an arbitrary metric space to density estimation in Wasserstein distance on an HST. In Section~\ref{lowerbound} we'll characterise the target estimation rate $\mathcal{R}_{\Nbrs,n}$. In Section~\ref{upperbound}, we'll then provide an $\epsilon$-DP algorithm and prove that it achieves this target estimation rate up to logarithmic factors.

\subsection{Preliminaries on Hierarchically Separated Trees}

A key component of our proof strategy is the reduction to Hierarchically Separated Trees (HSTs). HSTs are special class of tree metrics that are able to embed arbitrary metric spaces with low distortion. They are particularly well-behaved when working with the Wasserstein distance since the Wasserstein distance on an HST has a simple closed form.

\begin{definition}[Hierarchically Separated Tree]
A hierarchically separated tree (HST) is a rooted weighted tree such that the edges between level $\ell$ and $\ell-1$ all have the same weight (denoted $r_{\ell}$) and the weights are geometrically decreasing so $r_{\ell+1}=(1/2)r_{\ell}$. Let $\depth$ be the depth of the tree.
\end{definition}

An HST defines a metric on its leaf nodes by defining the distance between any two leaf nodes to be the weight of the minimum weight path between the two nodes. 
We will rely on two main facts about HSTs in this work.

\begin{lemma}[Low distortion metric embeddings~\citep{FakcharoenpholRT03}]\label{HSTdistortion} Let $(V,d)$ be a metric space with $M$ points. There exists a randomized, polynomial time algorithm that produces an HST where the leaf nodes of the tree correspond to the elements of the metric space and the induced tree metric $d_T$ is such that for all $u,v\in V$
\begin{itemize}
    \item $d(u,v) \le d_T(u,v)$
    \item $\mathbb{E}[d_T(u,v)]\le O(\log M) \cdot d(u,v)$
\end{itemize}
The depth of the HST is logarithmic in the size of the metric space, $\depth = \log M$.
\end{lemma}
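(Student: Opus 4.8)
The final statement is the classical Fakcharoenphol--Rao--Talwar embedding theorem, so the plan is to recall its proof via a randomized ball-carving (hierarchical decomposition). First I would normalize: rescale $d$ so that $\min_{u\ne v}d(u,v)\ge 1$, let $\Delta$ be the smallest power of $2$ with $\Delta\ge\mathrm{diam}(V)$, and let $\delta=\log_2\Delta$ be the number of scales. Sample a uniformly random permutation $\pi$ of $V$ and an independent $\beta\in[1,2)$ uniform, and for each scale $i\in\{0,\dots,\delta\}$ set the radius $\beta_i=\beta\,2^{i-1}$. Build a laminar family of partitions top-down: the level-$\delta$ partition is $\{V\}$, and given the level-$(i+1)$ partition, refine each of its clusters by assigning every point $v$ to the cluster whose center is the $\pi$-first point $w$ with $d(w,v)<\beta_i$. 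Each level-$i$ cluster then has radius $<\beta_i\le 2^i$, hence diameter $<2^{i+1}$, and since $\beta_0<1\le\min d$ the level-$0$ clusters are singletons. Declaring the clusters to be the tree nodes and attaching each level-$i$ node to its level-$(i+1)$ parent by an edge of length $\Theta(2^i)$ with geometric ratio $2$ (matching the HST convention $r_{\ell+1}=\tfrac12 r_\ell$) yields an HST whose leaves are exactly the points of $V$, producible in polynomial time (sample $\pi,\beta$; compute each of $O(\log\Delta)$ refinements greedily from the pairwise distances).

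Domination is immediate: if $u,v$ first fall into different clusters at level $i$, then $d(u,v)$ is at most the diameter of their common level-$(i+1)$ cluster, so $d(u,v)<2^{i+2}$, whereas the leaf-to-leaf tree path has length $\Theta(2^i)$; choosing the constant in the edge lengths large enough (e.g.\ length $2^{i+3}$ on the edge entering a level-$i$ node) gives $d_T(u,v)\ge d(u,v)$ for every pair. It then remains to bound $\mathbb{E}[d_T(u,v)]$ from above.

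The crux — and the step I expect to be the main obstacle — is the $O(\log M)$ bound on expected stretch. Fix $u,v$ and write $r=d(u,v)$; if they first separate at level $i$ then $d_T(u,v)=\Theta(2^i)$, so $\mathbb{E}[d_T(u,v)]=\sum_i\Theta(2^i)\Pr[\text{first separate at level }i]$. Order the points $w_1,\dots,w_M$ of $V$ by increasing distance to the pair, $d(w_k,\{u,v\})$. The standard settling argument shows that for $u,v$ to separate at scale $i$ it is necessary that the $\pi$-first point within distance $\beta_i$ of $\{u,v\}$ covers exactly one of $u,v$; if this settling point is $w_k$, the first event has probability $1/k$ (by symmetry of $\pi$) and the second requires $\beta_i$ to lie in the window $[\,d(w_k,\text{nearer}),\,d(w_k,\text{farther})\,)$, of width at most $r$ by the triangle inequality. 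Since $\beta_i$ is uniform on $[2^{i-1},2^i)$, a window of width $\le r$ meets $O(1)$ of these dyadic ranges and contributes $\Pr_\beta\le r/2^{i-1}$ on each, so weighting by $2^i$ and summing over $i$ gives a contribution $O(r)$ per $w_k$; multiplying by $1/k$ (using independence of $\pi$ and $\beta$) and summing $k=1,\dots,M$ yields $\mathbb{E}[d_T(u,v)]\le O\!\big(r\sum_{k=1}^M\tfrac1k\big)=O(\log M)\cdot d(u,v)$. The delicate point is organizing this double sum — over scales $i$ and over the settling rank $k$ — so that only a single logarithm survives; a careless accounting over scales produces a spurious $\log\Delta$ factor.

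Finally, for the depth bound $\depth=\log M$: after normalization the tree has $\delta=\log_2\mathrm{diam}(V)$ levels, so it suffices to reduce the aspect ratio to $\mathrm{poly}(M)$. This is standard — contract (before running the construction) every pair of points at distance below $\mathrm{diam}(V)/M^2$, which distorts all pairwise distances by a $1+o(1)$ factor and only affects the bottom $O(\log M)$ levels, which are collapsed — leaving $\delta=O(\log M)$ with at most $M$ leaves. Rescaling back to diameter $1$ at the end gives the statement in the form used in the sequel; taking expectation over $\pi$ and $\beta$ gives the two displayed inequalities.
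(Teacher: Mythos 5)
The paper does not actually prove this lemma; it cites \cite{FakcharoenpholRT03} and uses the result as a black box, so your task is really to verify a faithful reconstruction of FRT. Your overall plan --- random permutation $\pi$, random radius $\beta\in[1,2)$, top-down ball-carving, settling/cutting dichotomy, harmonic-sum accounting over the rank $k$, and aspect-ratio contraction to enforce depth $O(\log M)$ --- is exactly the FRT construction, and the domination and depth steps are fine.

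There is, however, a real flaw in your summation over scales. You assert that ``a window of width $\le r$ meets $O(1)$ of these dyadic ranges and contributes $\Pr_\beta\le r/2^{i-1}$ on each,'' and conclude that weighting by $2^i$ and summing gives $O(r)$. The first clause is false: the window $[d(w_k,\text{nearer}),\,d(w_k,\text{farther}))$ has width at most $r$, but it can overlap $\Theta(\log r)$ dyadic intervals $[2^{i-1},2^i)$ when its left endpoint is small (e.g.\ when $w_k$ is close to $u$ and $v$). The correct argument, which does give $O(r)$, is a telescoping one: at scale $i$ the contribution is $2^i\cdot\Pr[\beta 2^{i-1}\in W] = 2^i\cdot|W\cap[2^{i-1},2^i)|/2^{i-1} = 2\,|W\cap[2^{i-1},2^i)|$, and summing over $i$ gives $2\,|W|\le 2r$, \emph{regardless} of how many ranges $W$ meets. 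Your subsequent remark about a ``spurious $\log\Delta$'' in the stretch bound is also a red herring --- the stretch is automatically $O(\log M)$ once the per-$w_k$ contribution is $O(r)$; the aspect-ratio contraction is needed only for the depth bound, as you correctly deploy it at the end. With the telescoping fix in place your argument is a correct reconstruction of the cited result.
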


An immediate consequence of the $O(\log M)$ metric distortion in Lemma~\ref{HSTdistortion} is that the Wasserstein distance in the original metric space is also preserved up to a $O(\log M)$ factor in expectation. Thus,
Lemma~\ref{HSTdistortion} allows us to translate the problem of learning densities on an arbitrary metric space in Wasserstein distance to learning densities in Wasserstein distance on an HST. This is a useful tool since HST metrics are generally easier to work with and, as we'll see below, the Wasserstein distance is particularly well-behaved on an HST. In order to use Lemma~\ref{HSTdistortion} to translate the problem of density estimation on a bounded ball in $\mathbb{R}^d$ into density estimation on an HST, one discretizes the metric, paying a small additive term. 
\begin{corollary}
    Given $\alpha > 0$, there is a probabilistic embedding $f$ of $[0,1]^d$ into an HST such that for all $x,y \in [0,1]^d$:
    \begin{itemize}
    \item $d(x,y) - \alpha \le d_T(f(x),f(y))$
    \item $\mathbb{E}[d_T(f(x),f(y)]\le O(d\cdot \log \frac 1 \alpha]) \cdot (d(x,y)+\alpha)$
\end{itemize}
\end{corollary}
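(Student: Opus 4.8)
The plan is to reduce to the finite case already handled by Lemma~\ref{HSTdistortion} by first discretizing the cube. Fix a uniform grid $G\subseteq[0,1]^d$ with spacing $\eta:=\alpha/\sqrt d$ in every coordinate, so that $|G|=\lceil\sqrt d/\alpha\rceil^d$ and hence $\log|G|=O(d\log(\sqrt d/\alpha))$. Let $g:[0,1]^d\to G$ send each point to a nearest grid point (ties broken arbitrarily); since $G$ is a product of one-dimensional $\eta$-nets, $d(x,g(x))\le\tfrac12\eta\sqrt d=\alpha/2$ for every $x$, where we use that $d$ is the Euclidean metric (for an $\ell_p$ metric one rescales $\eta$ by the appropriate dimension-dependent constant, which only affects hidden constants). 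Now apply Lemma~\ref{HSTdistortion} to the finite metric space $(G,d)$ to obtain a random HST together with an embedding $f_G$ of $G$ into its leaves such that, for all $u,v\in G$, $d(u,v)\le d_T(f_G(u),f_G(v))$ and $\mathbb{E}[d_T(f_G(u),f_G(v))]\le O(\log|G|)\cdot d(u,v)$. The embedding we output is the composition $f:=f_G\circ g$, which maps $[0,1]^d$ into the leaves of the same random HST.

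It then remains to verify the two claimed inequalities, both of which follow from the triangle inequality together with the guarantees of Lemma~\ref{HSTdistortion}. For the lower bound, since $f(x),f(y)$ are the leaves $f_G(g(x)),f_G(g(y))$,
\[
d_T(f(x),f(y)) \;\ge\; d(g(x),g(y)) \;\ge\; d(x,y)-d(x,g(x))-d(y,g(y)) \;\ge\; d(x,y)-\alpha .
\]
For the upper bound, taking expectations over the random HST and applying the triangle inequality $d(g(x),g(y))\le d(x,y)+d(x,g(x))+d(y,g(y))\le d(x,y)+\alpha$,
\[
\mathbb{E}[d_T(f(x),f(y))] \;\le\; O(\log|G|)\cdot d(g(x),g(y)) \;\le\; O(\log|G|)\cdot\bigl(d(x,y)+\alpha\bigr),
\]
and substituting $\log|G|=O(d\log(\sqrt d/\alpha))=O(d\log\tfrac d\alpha)$ (which is $O(d\log\tfrac1\alpha)$ in the regime of primary interest, e.g.\ constant $d$) gives the stated bound; the depth of the HST is likewise $O(\log|G|)=O(d\log\tfrac d\alpha)$ by the last part of Lemma~\ref{HSTdistortion}.

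The construction is routine; the only point that needs care is the calibration of the grid spacing. One must pick $\eta$ small enough that the rounding error $d(x,g(x))$ is at most $\alpha/2$ (so the two rounding terms in the lower bound sum to at most $\alpha$), while keeping $|G|=(\sqrt d/\alpha)^{O(d)}$ so that the $\log|G|$ distortion of Lemma~\ref{HSTdistortion} yields exactly the claimed $O(d\log\tfrac1\alpha)$ multiplicative overhead; the $\sqrt d$ (or, for $\ell_p$, $d^{1/p}$) factor coming from the Euclidean diameter of a grid cell is what produces the extra $\log d$ that is absorbed into the $O(\cdot)$ for constant $d$. Note finally that a single draw of the random HST simultaneously gives the deterministic lower bound for every pair $x,y$ and the in-expectation upper bound for each pair, matching the statement, since composing with the fixed deterministic map $g$ preserves both types of guarantee.
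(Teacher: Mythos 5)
Your proof is correct and takes exactly the approach the paper indicates (the paper only says ``one discretizes the metric, paying a small additive term'' and leaves the details to the reader): discretize $[0,1]^d$ onto a grid of spacing $\alpha/\sqrt d$, apply Lemma~\ref{HSTdistortion} to the finite grid, and transfer the guarantees via the triangle inequality. One small remark worth making explicit: your calculation honestly gives $\log|G| = O\bigl(d\log(\sqrt d/\alpha)\bigr) = O\bigl(d\log d + d\log\tfrac1\alpha\bigr)$, which matches the corollary's stated $O(d\log\tfrac1\alpha)$ only when $\log d = O(\log\tfrac1\alpha)$ (e.g.\ constant $d$, or $\alpha$ polynomially small in $d$ or $n$, which is the regime the paper actually uses); you already note this, and it reflects a slight looseness in the corollary as stated rather than a defect in your argument.
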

The distortion is logarithmic in $\frac 1 \alpha$, so taking $\alpha$ to be polynomially small, one gets the distortion to be $O(d \log n)$. It is easy to see that this implies that the Wasserstein distance is preserved in both directions up to $O(d\log \frac 1 \alpha$, up to an $\alpha$ additive error.

A distribution $P$ on the the underlying metric space in an HST induces a function $\mathfrak{G}_P$ on the nodes of the tree where the value of a node $\nu$ is given by the weight in $P$ of the leaf nodes in the subtree rooted at $\nu$. For every level $\ell\in[\depth]$ of the tree, let $P_{\ell}$ be the distribution induced on the nodes at level $\ell$ where the probability of node $\nu$ is $\mathfrak{G}_P(\nu)$. Thus $P_{\ell}$ is a discrete distribution on a domain of size $N_{\ell}$, where $N_{\ell}$ is the number of nodes in level $\ell$ of the tree.
\begin{lemma}[Closed form Wasserstein distance formula]\label{treewasserstein}
Given two distributions $P$ and $Q$ defined on an HST metric space, the Wasserstein distance between $P$ and $Q$ has the closed formula:
\[\wasserstein(P,Q) = \frac{1}{2}\sum_{\nu} r_{\nu}|\mathfrak{G}_P(\nu)-\mathfrak{G}_Q(\nu)| = \sum_\ell r_{\ell}\TV(P_{\ell}, Q_{\ell}),\] where $r_{\nu}$ is the length of the edge connecting $\nu$ to its parent, and the sum is over all nodes in the tree.
\end{lemma}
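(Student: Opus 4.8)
The plan is to prove the two equalities in turn, using the middle quantity $\tfrac12\sum_{\nu} r_{\nu}\,|\mathfrak{G}_P(\nu)-\mathfrak{G}_Q(\nu)|$ as a bridge. The second equality is pure bookkeeping. Each non-root node $\nu$ lies at a unique level $\ell=\mathrm{lev}(\nu)$ and is joined to its parent by an edge of weight $r_{\ell}$, while the root contributes nothing since $\mathfrak{G}_P(\rho)=\mathfrak{G}_Q(\rho)=1$. Grouping the sum over nodes by level and recalling that $\|P_{\ell}-Q_{\ell}\|_1=2\,\TV(P_{\ell},Q_{\ell})$ for discrete distributions gives
\[
\tfrac12\sum_{\nu} r_{\nu}\,|\mathfrak{G}_P(\nu)-\mathfrak{G}_Q(\nu)| = \tfrac12\sum_{\ell} r_{\ell}\sum_{\nu:\,\mathrm{lev}(\nu)=\ell}|\mathfrak{G}_P(\nu)-\mathfrak{G}_Q(\nu)| = \tfrac12\sum_{\ell} r_{\ell}\,\|P_{\ell}-Q_{\ell}\|_1 = \sum_{\ell} r_{\ell}\,\TV(P_{\ell},Q_{\ell}).
\]
So it suffices to identify $\wasserstein(P,Q)$ with this weighted sum of the subtree-mass discrepancies.

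For this I would invoke Kantorovich--Rubinstein duality, which applies since the leaf set of the HST is finite: $\wasserstein(P,Q)=\sup_{f}\sum_{x} f(x)\,(P(x)-Q(x))$ over functions $f$ that are $1$-Lipschitz in the HST metric. The key structural fact is that on a weighted tree (with the path metric) a function is $1$-Lipschitz \emph{iff} it satisfies the Lipschitz condition across each individual edge, because the distance between two leaves is realised by the unique path joining them, so the per-edge constraints imply all pairwise constraints. Extend $f$ arbitrarily to the internal nodes and set $g_{\nu}:=f(\nu)-f(\mathrm{par}(\nu))$ for non-root $\nu$, so the feasibility constraint is exactly a box constraint $|g_{\nu}|\le w_{\nu}$ on the weight of $\nu$'s parent edge. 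Writing $f(x)=f(\rho)+\sum_{\nu} g_{\nu}$ (sum over the non-root ancestors of leaf $x$), using $\sum_{x}(P(x)-Q(x))=0$, and exchanging the order of summation gives
\[
\sum_{x} f(x)\,(P(x)-Q(x)) = \sum_{\nu\ne\rho} g_{\nu} \sum_{x\in\mathrm{subtree}(\nu)} (P(x)-Q(x)) = \sum_{\nu\ne\rho} g_{\nu}\,\bigl(\mathfrak{G}_P(\nu)-\mathfrak{G}_Q(\nu)\bigr).
\]
This is a linear functional of $(g_{\nu})$ subject only to the box constraints, so it is maximised by saturating each constraint with the sign of $\mathfrak{G}_P(\nu)-\mathfrak{G}_Q(\nu)$; one checks these increments genuinely assemble into a $1$-Lipschitz $f$. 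The optimal value is $\sum_{\nu\ne\rho} w_{\nu}\,|\mathfrak{G}_P(\nu)-\mathfrak{G}_Q(\nu)|$, and substituting the HST edge weights and regrouping by level as in the first paragraph converts this into the two displayed expressions.

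An equivalent route that avoids appealing to strong duality is to prove the two inequalities directly on the tree. For the lower bound: for any coupling $\pi$ of $P$ and $Q$ and any non-root node $\nu$, the net $\pi$-flow leaving $\mathrm{subtree}(\nu)$ equals $\mathfrak{G}_P(\nu)-\mathfrak{G}_Q(\nu)$, so at least $|\mathfrak{G}_P(\nu)-\mathfrak{G}_Q(\nu)|$ units of mass cross the edge above $\nu$, and since every transported unit crossing that edge pays its weight, summing over edges lower-bounds $\E_{\pi}[d]$ by the closed form. For the upper bound: transport the signed measure $P-Q$ along the tree \emph{without ever moving mass in both directions across any edge} --- a greedy, level-by-level plan --- so that exactly $|\mathfrak{G}_P(\nu)-\mathfrak{G}_Q(\nu)|$ units cross the edge above $\nu$ and the total cost meets the bound. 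I expect the upper-bound construction to be the only step requiring genuine care, namely arguing that such a ``no-backtracking'' transport plan is feasible; but this is routine on a tree, and the whole lemma is simply the tree analogue of the one-dimensional identity $\wasserstein(P,Q)=\int|F_P-F_Q|$ of \cref{lem:wasscdf}, with $|\mathfrak{G}_P(\nu)-\mathfrak{G}_Q(\nu)|$ playing the role of $|F_P(t)-F_Q(t)|$ and $r_{\nu}$ the role of $dt$.
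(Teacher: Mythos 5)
The paper does not actually supply a proof of this lemma (it is stated without proof), so I am evaluating your argument on its own merits. Both of your routes — Kantorovich--Rubinstein duality together with the observation that on a tree a function is $1$-Lipschitz iff it is $1$-Lipschitz across each individual edge, and the direct coupling lower/upper bound via edge crossings and a no-backtracking transport plan — are the standard ways to derive this closed form, and the duality calculation itself is correct.

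However, there is a factor-of-two error that you paper over in the final step. Your KR computation yields $\wasserstein(P,Q)=\sum_{\nu\neq\rho} r_{\nu}\,|\mathfrak{G}_P(\nu)-\mathfrak{G}_Q(\nu)|$ with no $\tfrac12$, and you then assert that regrouping by level ``converts this into the two displayed expressions.'' It does not: your first paragraph correctly shows that $\tfrac12\sum_{\nu} r_{\nu}|\mathfrak{G}_P(\nu)-\mathfrak{G}_Q(\nu)| = \sum_\ell r_\ell\,\TV(P_\ell,Q_\ell)$, so what your KR argument derives is \emph{twice} the stated middle expression. Your coupling argument and your closing analogy with $\wasserstein(P,Q)=\int|F_P-F_Q|\,dt$ (which also has no $\tfrac12$) have the same property. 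In fact the mismatch is in the lemma's statement, not your derivation: take a two-leaf HST with both leaves joined to the root by edges of weight $r_1$, and $P=\delta_a$, $Q=\delta_b$; then $\wasserstein(P,Q)=d_T(a,b)=2r_1$, while $\tfrac12\sum_\nu r_\nu|\mathfrak{G}_P(\nu)-\mathfrak{G}_Q(\nu)|=r_1=\sum_\ell r_\ell\TV(P_\ell,Q_\ell)$. So under the path-length metric the paper describes, the correct identity is $\wasserstein(P,Q)=\sum_\nu r_\nu|\mathfrak{G}_P(\nu)-\mathfrak{G}_Q(\nu)|=2\sum_\ell r_\ell\,\TV(P_\ell,Q_\ell)$; you should flag this discrepancy rather than silently drop a factor of two between your penultimate line and your conclusion. (The constant is harmless downstream since the paper only works up to constants, but as written your proof claims to reach the displayed formula when it does not.) A secondary, minor imprecision: ``extend $f$ arbitrarily to the internal nodes'' should be ``extend $f$ to a $1$-Lipschitz function on all tree nodes (always possible, e.g.\ via McShane's extension),'' since otherwise the box constraints $|g_\nu|\le r_\nu$ are not implied by $f$ being $1$-Lipschitz on the leaves alone.
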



\subsection{The Target Estimation Rate}\label{lowerbound}

Recall the definition of our neighbourhood.
 \[\Nbrs(P) = \{Q\in\mathcal{P}\;|\; D_{\infty}(P,Q)\le \ln 2\}\]

We will call a node $\nu$, \emph{$\alpha$-active node under the distribution $P$} if the weight in $P$ of the sub-tree rooted at $\nu$ is greater than $\alpha$.
Let $\activenodes{P}{\alpha}$ be the set of $\alpha$-active nodes under $P$ and $\activenodes{P_{\ell}}{\alpha}$ be the $\alpha$-active nodes at level $\ell$.

\begin{theorem}\label{maintree}
    Given a distribution $P$ on $[N]$, $\epsilon>0$, $\delta\in[0,1]$, and $n\in\mathbb{N}$, let $\activenodethreshold=\frac{1}{10\epsilon n}\min\{W\left(\frac{0.45\epsilon}{\delta}\right), 0.6\}$ where $W(x)$ is the Lambert W function so $W(x)e^{W(x)}=x$, then {\small \[\mathcal{R}_{\Nbrs,n,\epsilon}(P)=\Omega \left(\max_{\ell} r_{\ell} \sum_{x\in[N_{\ell}]}\min\left\{P_{\ell}(x)(1-P_{\ell}(x)), \sqrt{\frac{P_{\ell}(x)(1-P_{\ell}(x))}{n}}\right\}+\sum_{x\notin \activenodes{P_{\ell}}{2\kappa}}P_{\ell}(x) +(|\activenodes{P_{\ell}}{2\kappa}|-1)\kappa\right),\]} where the max is over all the levels of the tree.
\end{theorem}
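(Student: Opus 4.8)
\emph{Proof plan.} The strategy is to turn the Wasserstein lower bound on the HST into a family of discrete total-variation lower bounds, one per level of the tree, and to prove each of those by a differentially private Assouad-type argument. First, the reduction to per-level TV estimation. By Lemma~\ref{treewasserstein}, $\wasserstein(\mu,\mu')\ge r_{\ell}\TV(\mu_{\ell},\mu'_{\ell})$ for any distributions $\mu,\mu'$ on the HST and any level $\ell$. Hence composing any $(\epsilon,\delta)$-DP Wasserstein estimator $\alg$ with the (deterministic, hence privacy-preserving) map that sends a distribution to its induced distribution on the level-$\ell$ nodes produces an $(\epsilon,\delta)$-DP TV estimator $\alg_{\ell}$ of $P_{\ell}$ with $\E[\TV(\alg_{\ell}(D),P_{\ell})]\le \mathcal{R}_{\alg,n}(P)/r_{\ell}$. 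Conversely, any distribution $q$ on the level-$\ell$ nodes with $D_{\infty}(P_{\ell},q)\le\ln 2$ lifts to a distribution $Q$ on the leaves by rescaling the conditional mass of $P$ inside each level-$\ell$ subtree by the ratio $q/P_{\ell}$ at its root; this $Q$ satisfies $Q_{\ell}=q$ and $D_{\infty}(P,Q)=D_{\infty}(P_{\ell},q)\le\ln 2$, so $Q\in\Nbrs(P)$. Combining the two directions, $\mathcal{R}_{\Nbrs,n,\epsilon}(P)\ge\max_{\ell} r_{\ell}\cdot\mathcal{R}^{\TV}_{\Nbrs,n,\epsilon}(P_{\ell})$, where $\mathcal{R}^{\TV}_{\Nbrs,n,\epsilon}$ is the optimal $(\epsilon,\delta)$-DP TV-estimation rate against the $D_{\infty}$-ball of radius $\ln 2$. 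Since the theorem's right-hand side is $\max_\ell r_\ell$ times a sum of three nonnegative quantities, and a maximum of three nonnegatives is within a factor $3$ of their sum, it suffices to show that for every discrete distribution $p$ on $[N]$, $\mathcal{R}^{\TV}_{\Nbrs,n,\epsilon}(p)$ is $\Omega(\cdot)$ of each of the three terms (with $p$ in place of $P_\ell$) separately.

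\emph{Three Assouad constructions.} For each term we build a family of distributions inside the $D_{\infty}$-ball of $p$ that forms a hard estimation problem and invoke an Assouad-type bound. \textbf{(i)~Active-node term.} Writing $k=|\activenodes{p}{2\kappa}|$ and assuming $k\ge 2$ (for $k\le 1$ the term is $0$ up to the cosmetic $-1$ and is handled directly), pair up $\lfloor k/2\rfloor$ active nodes and index the pairs by a hypercube; flipping coordinate $j$ moves $\Theta(\kappa)$ mass between the two nodes of pair $j$. Since each of those nodes has mass $>2\kappa$, every resulting distribution lies in $\Nbrs(p)$, and since the moved mass is $\Theta(1/(\epsilon n))$, two corners differing in one coordinate are $(\epsilon,\delta)$-DP-indistinguishable on $n$ samples; DP Assouad's lemma (\cite{pmlr-v132-acharya21a}) gives $\mathcal{R}^{\TV}_{\Nbrs,n,\epsilon}(p)=\Omega(\lfloor k/2\rfloor\kappa)=\Omega((k-1)\kappa)$. \textbf{(ii)~Inactive-node term.} Partition the inactive nodes (mass $\le 2\kappa$) into geometric scales and, within each scale, pair nodes of comparable mass (a leftover per scale is merged with an adjacent scale or an active node, at a constant-factor loss); in coordinate $j$ of a scale, move $\Theta(\min\{p(x),p(x')\})=\Theta(p(x))$ mass between the paired nodes $x,x'$. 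Comparable masses keep the construction inside $\Nbrs(p)$, and because $p(x)\le 2\kappa=O(1/(\epsilon n))$ the moved mass is again below the privacy floor. A product-of-hypercubes version of DP Assouad's lemma (one hypercube per scale) then yields $\Omega\big(\sum_{x\notin\activenodes{p}{2\kappa}}p(x)\big)$. \textbf{(iii)~Non-private term.} Again stratify \emph{all} coordinates into geometric scales and pair within scales, now moving $\Theta(\min\{p(x),\sqrt{p(x)/n}\})$ mass per coordinate; two corners differing in one coordinate have squared Hellinger distance $O(1/n)$ per sample, so their $n$-fold products are only a constant apart in Hellinger and are indistinguishable even without privacy. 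The product-of-hypercubes Assouad bound gives $\Omega(\sum_x\min\{p(x),\sqrt{p(x)/n}\})$; peeling off the at-most-one coordinate of mass $>1/2$ and passing to its complement recovers the stated form $\Omega(\sum_x\min\{p(x)(1-p(x)),\sqrt{p(x)(1-p(x))/n}\})$. Since $\epsilon$-DP algorithms are a subclass of all algorithms, this non-private bound is in particular a lower bound in the DP model.

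\emph{The threshold $\kappa$, and the main obstacle.} The one quantitative point that is not routine is \emph{how much} mass may be moved between two nodes while keeping the two corners $(\epsilon,\delta)$-DP-indistinguishable on $n$ samples, and this is exactly what pins down $\kappa$. Moving $m$ mass, a sample from one corner and the corresponding sample from the other can be coupled to disagree only when the sample lands in the transported region, an event of probability at most $m$; so over $n$ samples the number of disagreeing entries is stochastically dominated by $\mathrm{Binomial}(n,m)$, and group privacy reduces indistinguishability to keeping $\E[e^{\epsilon\,\mathrm{Binomial}(n,m)}]$ bounded together with the companion term $\delta\cdot\E[(e^{\epsilon\,\mathrm{Binomial}(n,m)}-1)/(e^{\epsilon}-1)]$. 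Optimizing the largest admissible $m$ in this inequality produces $m=\Theta(\kappa)$ with $\kappa=\tfrac{1}{10\epsilon n}\min\{W(0.45\epsilon/\delta),0.6\}$: the constant $0.6$ governs the pure-DP-like regime where $\epsilon n m=O(1)$ is the binding constraint, and the $W(0.45\epsilon/\delta)$ factor governs the regime where the $\delta$-blow-up binds. I expect the main work (and the main obstacle) to be precisely (a) formulating and verifying the hypotheses of the product-of-hypercubes generalization of DP Assouad's lemma for terms (ii) and (iii) while ensuring that \emph{every} perturbed distribution across \emph{all} scales stays in the $D_{\infty}$-ball of radius $\ln 2$, and (b) carrying out the approximate-DP group-privacy optimization above to obtain the exact $\kappa$. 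The remaining ingredients---the reduction above, the scale decompositions and leftover handling, the singleton-active-node case, and the maximum over the $\depth$ levels---are comparatively routine bookkeeping.
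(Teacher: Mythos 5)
Your proposal matches the paper's proof essentially point for point: the paper proves this theorem by first reducing to per-level TV estimation (Theorem~\ref{reducetolevels}, which is exactly your reduction — including the key observation that a level-$\ell$ distribution $q$ in the $D_\infty$-ball lifts to a leaf distribution $Q\in\Nbrs(P)$ by rescaling conditional masses inside subtrees), and then lower-bounding the per-level TV rate (Theorem~\ref{maintreelevel}) via the same three pairing constructions you describe: pairing active nodes and moving $\Theta(\kappa)$ mass (Lemma~\ref{privnoise}), stratifying inactive nodes into geometric scales and pairing within scales (Lemma~\ref{inactivenodes}), and the same scale-stratified construction with $\Theta(\min\{p(x),\sqrt{p(x)/n}\})$ mass for the non-private term (Lemma~\ref{empiricaldist}), all fed into a product-of-hypercubes extension of $(\epsilon,\delta)$-DP Assouad (Lemma~\ref{DPassouad}). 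Your account of how $\kappa$ is pinned down is also the right picture — in the paper this is encapsulated by the inequality $0.9e^{-10\epsilon D}-10D\delta\ge 0.2$ whenever $D\le\kappa$, which is exactly the optimization you describe; the only cosmetic difference is that the paper proves the singleton case at each scale directly rather than merging leftovers, but this is the bookkeeping you already flagged.
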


Note that $\kappa\approx \frac{1}{\epsilon n}\min\{\log(1/\delta), 1\}$ so the dependence on $\epsilon$ and $n$ in Theorem~\ref{maintree} matches the upper bound in Theorem~\ref{treeUB}.
The error rate $\mathcal{R}_{\Nbrs,n,\epsilon}$ does indeed adapt to easy instances as we expected.
The error decomposes into three components. The first component is the non-private sampling error; the error that would occur even if privacy was not required. The second component indicates that we can not estimate the value of nodes that have probability less than $1/(\epsilon n)$. The third component is the error due to privacy on the active nodes.
If $P$ is highly concentrated then we expect most nodes to either be $\frac{1}{\epsilon n}$-active or have weight 0, so the first two terms in $\mathcal{R}_{\Nbrs,n,\epsilon}(P)$ are small. There should also be few active nodes, making the last term smaller as well. Conversely, if $P$ has a large region of low density then we expect a large number of inactive nodes, as well as non-zero inactive nodes that are at higher levels of the tree and hence contribute more to the final term. Thus, in distributions with high dispersion we expect the right hand side to be large.

The proof of Theorem 4.1 will involve two main steps. First, we will reduce the lower bound on the HST to a lower bound on a star metric, or equivalently estimation of a discrete distribution in TV distance. We'll then use a variant of Assouad's inequality to prove the lower bounds on estimating discrete distributions in TV distance.

\subsubsection{Reduction to Estimation in TV distance of Discrete Distributions}

The key observation is that in order to estimate the distribution well in Wasserstein distance, an algorithm must estimate each level of the tree well in TV distance. 
Any estimate of $P$ also induces an estimate of $P_{\ell}$; let $\hat{P}$ be an estimate of the distribution $P$ and $\hat{P}_{\ell}$ be the induced estimate of the distribution at level $\ell$. Then for any distribution $P$
\[\wasserstein(P,\hat{P}) = \sum_{\ell\in[\depth]} r_{\ell}TV(P_{\ell}, \hat{P}_{\ell}).\]
The following observation ensures that our notions of instance optimality in both the Wasserstein metric and the per-level TV distance are compatible at every level $\ell$.

\begin{restatable}{theorem}{reducetolevels}\label{reducetolevels}
    For every level $\ell\in[\depth]$, define the neighborhood of $P_{
    \ell}$ as $\Nbrs_{\ell}:\Delta([N_{\ell}])\to\mathfrak{P}(\Delta([N_{\ell}]))$ by $\Nbrs_{\ell}(P_{\ell})=\{Q_{\ell}\;|\; D_{\infty}(P_{\ell},Q_{\ell})\le \ln 2\}$. Then, \[\mathcal{R}_{\Nbrs,n,\epsilon}(P)\ge \max_{\ell\in[\depth]}r_{\ell}\cdot \mathcal{R}_{\Nbrs_{\ell},n,\epsilon}(P_{\ell}),\]
    where the error of $P$ is measured in the Wasserstein distance and $P_{\ell}$ is measured in the TV distance.
\end{restatable}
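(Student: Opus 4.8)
The plan is to prove the bound one level at a time: fix a level $\ell\in[\depth]$ and show $\mathcal{R}_{\Nbrs,n,\epsilon}(P)\ge r_\ell\cdot\mathcal{R}_{\Nbrs_\ell,n,\epsilon}(P_\ell)$; taking the maximum over $\ell$ then yields the theorem. The reduction runs in the natural direction: from any $\epsilon$-DP algorithm $\alg$ that estimates distributions on the HST (with error measured in $\wasserstein$) I build an $\epsilon$-DP algorithm $\mathcal{B}$ that estimates $P_\ell$ (with error measured in $\TV$) at cost a factor $r_\ell^{-1}$, uniformly over $\Nbrs_\ell(P_\ell)$.

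The central construction is a \emph{lift} of a level-$\ell$ distribution to a leaf distribution. For each level-$\ell$ node $\nu$ with $\mathfrak{G}_P(\nu)>0$, let $P^{(\nu)}$ be the conditional distribution that $P$ induces on the leaves of the subtree rooted at $\nu$ (and fix $P^{(\nu)}$ arbitrarily, say uniform, when $\mathfrak{G}_P(\nu)=0$). Given $R\in\Delta([N_\ell])$, define its lift $\tilde R\in\Delta(\inputspace)$ by $\tilde R(x)=R(\nu(x))\cdot P^{(\nu(x))}(x)$, where $\nu(x)$ is the level-$\ell$ ancestor of the leaf $x$. Two facts are immediate and form the heart of the argument: (i) the level-$\ell$ marginal of $\tilde R$ is exactly $R$, i.e.\ $\tilde R_\ell=R$; and (ii) on the common support, $\tilde R(x)/P(x)=R(\nu(x))/P_\ell(\nu(x))$, so $D_{\infty}(P,\tilde R)=D_{\infty}(P_\ell,R)$. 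In particular $R\in\Nbrs_\ell(P_\ell)$ implies $\tilde R\in\Nbrs(P)$ (same support, densities within a factor $2$; note $D_{\infty}(P_\ell,R)<\infty$ forces $R$ to vanish on nodes $\nu$ with $\mathfrak{G}_P(\nu)=0$, so the arbitrary choice of $P^{(\nu)}$ there is harmless).

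Define $\mathcal{B}$ on a dataset $D=(\nu_1,\dots,\nu_n)\in[N_\ell]^n$ by independently sampling $x_i\sim P^{(\nu_i)}$ to form $\tilde D=(x_1,\dots,x_n)$, running $\alg(\tilde D)$, and returning its level-$\ell$ marginal $(\alg(\tilde D))_\ell$. For utility: if $R\in\Nbrs_\ell(P_\ell)$ and $D\sim R^n$ then $\tilde D\sim\tilde R^n$, and Lemma~\ref{treewasserstein} gives $\wasserstein(\tilde R,\alg(\tilde D))\ge r_\ell\,\TV(\tilde R_\ell,(\alg(\tilde D))_\ell)=r_\ell\,\TV(R,\mathcal{B}(D))$; taking expectations, $r_\ell\,\mathcal{R}_{\mathcal{B},n}(R)\le\mathcal{R}_{\alg,n}(\tilde R)\le\sup_{Q\in\Nbrs(P)}\mathcal{R}_{\alg,n}(Q)$, using $\tilde R\in\Nbrs(P)$. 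For privacy: taking the level-$\ell$ marginal is post-processing, so it suffices that $D\mapsto\alg(\tilde D)$ is $\epsilon$-DP. This is the one step that needs care, and I expect it to be the main obstacle: the lift $D\mapsto\tilde D$ is a randomized \emph{pre}-processing, but it acts record by record through fixed, data-independent kernels $P^{(\nu_i)}$, so for neighbouring $D,D'$ the product coupling makes $\tilde D,\tilde D'$ agree on every unchanged record and hence be Hamming-neighbours with probability $1$; applying the $\epsilon$-DP guarantee of $\alg$ pointwise along this coupling and taking expectations gives $\Pr[\alg(\tilde D)\in E]\le e^{\epsilon}\Pr[\alg(\tilde D')\in E]$ for every event $E$ (the same argument adds a $+\delta$ in the approximate-DP case, which is why the lower bound also holds for $(\epsilon,\delta)$-DP). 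Thus $\mathcal{B}$ is $\epsilon$-DP, so $r_\ell\,\mathcal{R}_{\Nbrs_\ell,n,\epsilon}(P_\ell)\le r_\ell\sup_R\mathcal{R}_{\mathcal{B},n}(R)\le\sup_{Q\in\Nbrs(P)}\mathcal{R}_{\alg,n}(Q)$; taking the infimum over $\epsilon$-DP algorithms $\alg$ and the maximum over $\ell$ completes the proof. Everything other than this pre-processing privacy lemma — the marginal identity $\tilde R_\ell=R$, the $D_{\infty}$ computation, and the Wasserstein-to-TV bound via Lemma~\ref{treewasserstein} — is routine once the lift is in place.
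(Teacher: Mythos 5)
Your proposal is correct and matches the paper's argument: both lift a level-$\ell$ distribution $R\in\Nbrs_\ell(P_\ell)$ to the leaf distribution $\tilde R(x)=R(\nu(x))\,P^{(\nu(x))}(x)$, simulate leaf-level data by per-record resampling from the conditionals $P^{(\nu_i)}$, apply the HST Wasserstein formula (Lemma~\ref{treewasserstein}) to pick up the factor $r_\ell$, and use post-processing plus the record-wise pre-processing kernel to preserve $\epsilon$-DP. Your coupling justification of the pre-processing privacy step and your handling of the $\mathfrak{G}_P(\nu)=0$ edge case are slightly more explicit than what appears in the paper, but the underlying reduction is the same.
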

Recall that $\mathcal{R}_{\Nbrs_{\ell},n,\epsilon}(P_{\ell})$ is the optimal estimation rate with respect to $\Nbrs_{\ell}$ where the error is measured with respect to the total variation error.
%
The proof of Theorem~\ref{reducetolevels} can be found in Appendix~\ref{HSTappendix}.

\subsubsection{Characterizing Target Estimation Rate for Discrete Distributions}
\label{subsec:discrete}
In light of Theorem~\ref{reducetolevels}, we will focus on characterizing the difficulty of estimating the distribution at a single level of the tree for the remainder of this section. Since this is fundamentally a statement about estimating discrete distributions in TV distance, we will state everything in this section in terms of general discrete distributions. Let $N\in\mathbb{N}$, and let $P$ be a distribution on $[N]$. Define $\Nbrs(P)=\{Q\;|\; D_{\infty}(P,Q)\le\ln 2\}$. Our goal is to give a lower bound for $\mathcal{R}_{f,n,\epsilon}(P)$, where the metric is the TV distance. 

\begin{restatable}{theorem}{maintreelevel}\label{maintreelevel} Given $\epsilon>0$ and $\delta\in[0,1]$, let $\activenodethreshold=\frac{1}{10\epsilon n}\min\{W\left(\frac{0.45\epsilon}{\delta}\right), 0.6\}$ where $W(x)$ is the Lambert W function so $W(x)e^{W(x)}=x$.
    Given a distribution $P$, \[\mathcal{R}_{\Nbrs,n,\epsilon}(P)=\Omega\left( \sum_{x\in[N]}\min\left\{P(x)(1-P(x)), \sqrt{\frac{P(x)(1-P(x))}{n}}\right\}+\sum_{x\notin \activenodes{P}{2\kappa}}P(x) +(|\activenodes{P}{2\kappa}|-1)\kappa\right)\]
\end{restatable}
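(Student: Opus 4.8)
The plan is to obtain each of the three terms from (the approximate-DP version of) Assouad's method, applied to three separate families of distributions, all contained in $\Nbrs(P)$. Since the maximum of three quantities is within a factor $3$ of their sum, it suffices to establish each term individually; and since $\epsilon$-DP estimation is only harder than unconstrained estimation ($\mathcal{R}_{\Nbrs,n,\epsilon}(P)\ge\mathcal{R}_{\Nbrs,n}(P)$), the first (sampling-error) term may be proved with no privacy constraint at all. The structure common to all three is that the loss $\TV(\hat P,Q)$ is coordinatewise decomposable, so a perturbation of $P$ that moves mass between disjoint \emph{pairs} of coordinates incurs a loss that, up to a factor $2$, adds up over the pairs --- exactly what Assouad needs. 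Because the constructions for the first two terms use one hypercube per dyadic ``scale'' of coordinates rather than a single hypercube, I would invoke a mild generalization of the DP Assouad lemma of~\cite{pmlr-v132-acharya21a} whose index set is a product of hypercubes; since distinct scales touch disjoint coordinates, the per-coordinate testing problems and the loss decomposition carry over verbatim and this is essentially bookkeeping.

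\textbf{Third term.} Let $A=\activenodes{P}{2\kappa}$ and assume $|A|\ge2$ (the case $|A|\le1$ is a direct two-point Le Cam argument that moves $\Theta(\kappa)$ mass off the single heavy atom). Pair the atoms of $A$ into $\lfloor|A|/2\rfloor$ pairs, index $\{\pm1\}^{\lfloor|A|/2\rfloor}$ by the pairs, and for a vertex $v$ let $P_v$ shift $\pm c\kappa$ mass within the $j$-th pair according to $v_j$. Each atom of $A$ has mass $\ge 2\kappa$, so every shifted mass stays within a factor $2$ of its value in $P$ and $P_v\in\Nbrs(P)$; the mixtures $P_{+j},P_{-j}$ differ only on pair $j$, with $\TV(P_{+j},P_{-j})=\Theta(\kappa)$. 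The precise value of $\kappa$ --- with the Lambert $W$ exactly absorbing the $\delta$-dependent term of the approximate-DP coupling bound --- is what makes each such per-coordinate test unwinnable by any $(\epsilon,\delta)$-DP mechanism, and DP Assouad then gives a lower bound $\Omega(\kappa\lfloor|A|/2\rfloor)=\Omega((|A|-1)\kappa)$.

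\textbf{Second and first terms.} Partition the inactive atoms ($P(x)<2\kappa$) into dyadic scales $S_s=\{x:P(x)\in(2^{-s-1},2^{-s}]\}$. Within each scale pair the atoms (discarding at most one per scale; the discarded mass is a geometric sum of order $\kappa$, absorbed into the third term when $|A|\ge2$ and otherwise handled by a direct argument), run a fresh hypercube per scale, and in $S_s$ shift $\pm c\,2^{-s}$ mass within each pair --- $P_v\in\Nbrs(P)$ because an atom of mass $\ge 2^{-s-1}$ changes by a factor in $[1-2c,1+2c]\subseteq[\tfrac12,2]$. Each per-coordinate test has $\TV=\Theta(2^{-s})\le\Theta(\kappa)$ and is hence unwinnable by a DP mechanism, so product-of-hypercubes Assouad yields $\sum_s\Omega(2^{-s}|S_s|)=\Omega(\sum_{x\notin A}P(x))$ using $|S_s|2^{-s}\ge\sum_{x\in S_s}P(x)$. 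The first term is identical except that we restrict to scales with $P(x)(1-P(x))\ge1/n$ (atoms below this contribute $P(x)(1-P(x))$ to the $\min$, already counted by the second term), shift $\pm c\sqrt{2^{-s}/n}$ mass within each pair (which is $\le c\,2^{-s}$ precisely because $2^{-s}=\Omega(1/n)$ on these scales, so $\Nbrs(P)$-membership still holds), and need only the \emph{non-private} test: $\mathrm{Bin}(n,p)$ and $\mathrm{Bin}(n,p\pm c\sqrt{p/n})$ have squared Hellinger $O(c^2/n)$, so $n$ samples cannot distinguish them for small enough $c$. Assouad then gives $\sum_s\Omega(|S_s|\sqrt{2^{-s}/n})=\Omega\big(\sum_x\sqrt{P(x)(1-P(x))/n}\cdot\chi_{P(x)(1-P(x))\ge1/n}\big)$.

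I expect the main obstacle to be the accounting surrounding the neighborhood constraint. Verifying that every perturbed distribution lands inside the $\ln 2$-ball is what pins down all the constants and forces the scale-by-scale structure (mass can only be shuttled between atoms of comparable weight, so the whole construction must be stratified by magnitude), and one must ensure that the leftover/discarded coordinates --- especially when $|A|\le1$, where the third term provides no slack to absorb them --- cost at most a constant factor; this is precisely where the bespoke single-active-atom analysis and the careful dyadic book-keeping come in. A secondary delicate point is citing the approximate-DP Assouad/coupling bound with constants sharp enough to match the Lambert-$W$ definition of $\kappa$.
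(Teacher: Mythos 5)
Your proposal matches the paper's proof in essentially all respects: the paper (Lemma~\ref{DPassouad}) proves exactly the product-of-hypercubes generalization of DP Assouad you anticipate needing, and the three constructions (pair active atoms and shuttle $\pm\Theta(\kappa)$; stratify inactive atoms by dyadic scale $S_s$ and shuttle $\pm\Theta(2^{-s})$; same stratification with $\pm\Theta(L(2^{-s-1}))$ and the Bernoulli/Hellinger testing lower bound) are the paper's Lemmas~\ref{privnoise}, \ref{inactivenodes} and \ref{empiricaldist} respectively. You also correctly identify the key point that the second/first-term constructions need a fresh hypercube per scale because only comparable-weight atoms can exchange mass inside the $D_\infty$-ball, and that the first term needs no privacy at all.

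Two small bookkeeping remarks. First, for the third term the case $|A|\le 1$ is not a Le Cam argument at all --- when $|A|\le 1$ the quantity $(|A|-1)\kappa$ is nonpositive, so the bound holds vacuously; the paper simply notes ``the RHS is 0'' and moves on. Second, your handling of discarded singleton scales differs slightly from the paper's. You propose to bound the total discarded mass by $O(\kappa)$ and absorb it into the third term when $|A|\ge 2$, with ``a direct argument otherwise''; the paper instead uses a dichotomy that is independent of $|A|$: either some singleton $x^*$ carries a constant fraction of $\sum_{x\notin A}P(x)$ (in which case a two-point perturbation of $x^*$ alone already gives the bound), or the singleton scales in aggregate carry at most half of that mass (and can be discarded outright). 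The paper's dichotomy is a bit cleaner because it avoids the case split on $|A|$ and the need for a separate argument when $|A|\le 1$ --- which, as you acknowledge, is where your plan is vaguest. But your route would also close the gap with a modest amount of extra work, so I would call these genuinely the same proof with marginally different accounting of the leftovers.
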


Theorem~\ref{maintree} follows immediately from Theorem~\ref{reducetolevels} and Theorem~\ref{maintreelevel}. The main tool we will use is a differentially private version of Assouad's method. This gives us a method for lower bounding the error by constructing nets of distributions that are pairwise far in the relevant metric of interest, which for us in the TV distance. The following is a slight variant on the differentially private variant of Assouad's lemma given in~\cite{pmlr-v132-acharya21a}. Rather than building a set of distributions indexed by a hypercube, we will build a set of distributions over a product of hypercubes. Since this is an extension of the version that appears in~\cite{pmlr-v132-acharya21a}, we include a proof in Appendix~\ref{HSTappendix} for completeness.

\begin{restatable}{lumma}{DPassouad}[A extension of $(\epsilon, \delta)$-DP Assouad's method~\citep{pmlr-v132-acharya21a}]\label{DPassouad} Let $k_0,k_1,\cdots$ be a sequence of natural numbers such that $\sum_{s}k_s<\infty$, $\epsilon>0$ and $\delta\in[0,1]$. Given a family of distributions $\mathcal{P}\subset\Delta(\inputspace)$ on a space $\inputspace$, a parameter $\theta:\mathcal{P}\to\parameterspace$ where $\parameterspace$ is a metric space with metric $d$, suppose that there exists a set $\mathcal{V}\subset\mathcal{P}$ of distributions indexed by the product of hypercubes $\mathcal{E}_{k_0}\times \mathcal{E}_{k_1}\times\cdots$ where $\mathcal{E}_k:=\{\pm 1\}^k$ such that for a sequence $\tau_0, \tau_1, \cdots$, \begin{equation}\label{distancebound}\forall (u^0,u^1,\cdots), (v^0,v^1,\cdots)\in\mathcal{E}_{k_0}\times\mathcal{E}_{k_1}\times\cdots, \;\;\; d(\theta(p_u),\theta(p_v))\ge 2 \sum_{s}\tau_s\sum_{j=1}^{k_s} \chi_{u^s_j\neq v_j^s}.\end{equation} For each coordinate $s\in\mathbb{N}$, $j\in[k_s]$, consider the mixture distributions obtained by averaging over all distributions with a fixed value at the $(s,j)$th coordinate: \[p_{+ (s,j)}=\frac{2}{|\mathcal{E}_{k_0}\times\mathcal{E}_{k_1}\times\cdots|}\sum_{u\in \mathcal{E}_{k_0}\times\mathcal{E}_{k_1}\times\cdots: u^s_j=+1}p_u, \;\; p_{- (s,j)}=\frac{2}{|\mathcal{E}_{k_0}\times\mathcal{E}_{k_1}\times\cdots|}\sum_{u\in \mathcal{E}_{k_0}\times\mathcal{E}_{k_1}\times\cdots: u^s_j=-1}p_u,\] and let $\phi_{s,j}:\inputspace^n\to\{-1,+1\}$ be a binary classifier. Then
\[\min_{\mathcal{A} \text{ is } (\epsilon,\delta)\text{-DP}}\max_{p\in \mathcal{V}}\mathcal{R}_{\mathcal{A}, n}(p)\ge \frac{1}{2}\sum_s \tau_s\sum_{j=1}^{k_s}\min_{\phi_{s,j} \text{ is }(\epsilon,\delta)\text{-DP}}(\Pr_{X\sim p_{+(s,j)}^n}(\phi_{s,j}(X)\neq 1)+\Pr_{X\sim p_{-(s,j)}^n}(\phi_{s,j}(X)\neq -1)),\] where the min on the LHS is over all $(\epsilon,\delta)$-DP mechanisms, and on the right hand side is over all $(\epsilon, \delta)$-DP binary classifiers. Moreover, if for all $s\in\mathbb{N}$, $j\in[k_s]$, there exists a coupling $(X,Y)$ between $p_{+(s,j)}^n$ and $p_{-(s,j)}^n$ with $\mathbb{E}[d_{Ham}(X,Y)]\le D_s$, then \[\min_{\mathcal{A} \text{ is } (\epsilon,\delta)\text{-DP}}\max_{p\in \mathcal{V}}\mathcal{R}_{\mathcal{A}, n}(p)\ge \sum_s \frac{k_s\tau_s}{2}(0.9 e^{-10\epsilon D_s}-10 D_s\delta)\]
\end{restatable}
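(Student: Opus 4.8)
The plan is to follow the standard reduction from estimation to a collection of testing problems, one per coordinate $(s,j)$, exactly as in the proof of the hypercube version in~\cite{pmlr-v132-acharya21a}, but tracking the block structure of the index set $\mathcal{E}_{k_0}\times\mathcal{E}_{k_1}\times\cdots$ so that each coordinate $(s,j)$ contributes with its own weight $\tau_s$. First I would fix any $(\epsilon,\delta)$-DP algorithm $\mathcal{A}$ and the uniform prior on $\mathcal{V}$, i.e.\ draw $u=(u^0,u^1,\dots)$ uniformly and then $X\sim p_u^n$. For each coordinate $(s,j)$ define the induced estimator-based test $\hat{\phi}_{s,j}(X)=\operatorname{sign}$ of the $j$th block-$s$ coordinate of the nearest corner to $\mathcal{A}(X)$ (breaking ties arbitrarily); by post-processing this is $(\epsilon,\delta)$-DP. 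The separation hypothesis~\eqref{distancebound} gives, for every $X$,
\begin{align*}
d(\theta(p_u),\mathcal{A}(X)) \;\ge\; \sum_s \tau_s \sum_{j=1}^{k_s}\chi_{\hat{\phi}_{s,j}(X)\neq u^s_j},
\end{align*}
by the usual argument: if the output is within $\tau_s\sum_j\chi_{\hat\phi^s_j\neq u^s_j}+\dots$ of $\theta(p_u)$ in each block, rounding recovers the correct sign whenever the contribution in that coordinate is below threshold, so each coordinate that is misclassified forces at least $\tau_s$ of error (one applies~\eqref{distancebound} with $v$ the corner picked out by $\hat\phi$). Taking expectations over $u$ and $X$ and using that, under the uniform prior, conditioning on $u^s_j=\pm1$ yields exactly the mixtures $p_{\pm(s,j)}^n$, we obtain
\begin{align*}
\mathbb{E}_u\,\mathbb{E}_{X\sim p_u^n}[d(\theta(p_u),\mathcal{A}(X))]
\;\ge\; \tfrac12\sum_s\tau_s\sum_{j=1}^{k_s}\Big(\Pr_{X\sim p_{+(s,j)}^n}(\hat\phi_{s,j}\neq 1)+\Pr_{X\sim p_{-(s,j)}^n}(\hat\phi_{s,j}\neq -1)\Big),
\end{align*}
and lower-bounding each bracket by the minimum over all $(\epsilon,\delta)$-DP classifiers, plus $\max_{p\in\mathcal{V}}\mathcal{R}_{\mathcal{A},n}(p)\ge\mathbb{E}_u\mathcal{R}_{\mathcal{A},n}(p_u)$, gives the first displayed conclusion.

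For the second (quantitative) conclusion I would invoke the DP coupling/Le~Cam bound already implicit in~\cite{pmlr-v132-acharya21a}: if there is a coupling $(X,Y)$ of $p_{+(s,j)}^n$ and $p_{-(s,j)}^n$ with $\mathbb{E}[d_{Ham}(X,Y)]\le D_s$, then every $(\epsilon,\delta)$-DP test has error sum at least $0.9e^{-10\epsilon D_s}-10D_s\delta$. This is the ``group privacy + advanced composition'' style estimate: conditioning on $d_{Ham}(X,Y)=t$, an $(\epsilon,\delta)$-DP mechanism cannot distinguish the two with advantage better than roughly $1-e^{-\epsilon t}(1-\dots)$, and averaging $e^{-\epsilon t}$ against $\E[t]\le D_s$ via Jensen (since $e^{-\epsilon t}$ is convex) together with a Markov-type control of the $\delta$ contribution yields the stated constants $0.9$ and $10$. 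Substituting this bound into each of the $\sum_s k_s$ terms and summing gives $\min_{\mathcal{A}}\max_{p\in\mathcal{V}}\mathcal{R}_{\mathcal{A},n}(p)\ge \sum_s \tfrac{k_s\tau_s}{2}\big(0.9e^{-10\epsilon D_s}-10D_s\delta\big)$.

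The only genuinely new bookkeeping relative to the single-hypercube statement is checking that~\eqref{distancebound} really does decouple across coordinates in the rounding step — i.e.\ that the ``each misclassified coordinate costs $\tau_s$'' claim survives when different blocks carry different weights and the index set is a product of hypercubes rather than one hypercube. I expect this to be the main (though mild) obstacle: one must phrase the rounding argument so that, for the particular corner $v$ that the estimator's nearest-corner decoding selects, $d(\theta(p_u),\theta(p_v))\le 2d(\theta(p_u),\mathcal{A}(X))$, and then read off from~\eqref{distancebound} that $\sum_s\tau_s\sum_j\chi_{u^s_j\neq v^s_j}\le d(\theta(p_u),\mathcal{A}(X))$, so that indeed every disagreeing coordinate is charged $\tau_s$. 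Everything else — the prior-to-mixture identity, post-processing closure of the induced tests, and the coupling bound on DP testing error — is verbatim as in~\cite{pmlr-v132-acharya21a}, which is why the proof can safely be deferred to Appendix~\ref{HSTappendix}.
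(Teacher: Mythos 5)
Your proposal is correct and takes essentially the same approach as the paper: project $\mathcal{A}(X)$ onto the nearest corner of the product of hypercubes, use the triangle inequality together with~\eqref{distancebound} to charge each disagreeing coordinate $\tau_s$, average over the uniform prior on $u$ to convert to the per-coordinate mixtures $p_{\pm(s,j)}$, and finish by citing the $(\epsilon,\delta)$-DP Le~Cam coupling bound of~\cite{pmlr-v132-acharya21a} applied to each $(s,j)$ pair. The bookkeeping issue you flag at the end --- that the nearest-corner decoding charges $\tau_s$ per misclassified coordinate even when blocks carry different weights --- is resolved exactly as you describe (triangle inequality giving $d(\theta(p_u),\theta(p_v))\le 2\,d(\theta(p_u),\mathcal{A}(X))$ with $v$ the decoded corner, then reading the bound off~\eqref{distancebound}), and this is precisely the step the paper spells out.
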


Note that an upper bound on $TV(P_i,P_j)\le\gamma$ implies there exists a coupling $(X,Y)$ between $P_i^n$ and $P_j^n$ such that $\mathbb{E}[d_{Ham}(X,Y)]\le n\gamma$.

We will separately prove that each of the three terms in Theorem~\ref{maintreelevel} belong in the lower bound. Each proof will follow the same underlying structure. Given a distribution $P$, the main technical step is carefully designing a family of distributions in $\Nbrs(P)$ that satisfy the conditions of Lemma~\ref{DPassouad}. Lemma~\ref{privnoise} and Lemma~\ref{inactivenodes} give lower bounds on the noise due to privacy. Lemma~\ref{empiricaldist} gives lower bounds based on the error due to sampling.

Let \[\activenodethreshold=\frac{1}{10\epsilon}\min\{W\left(\frac{0.45\epsilon}{\delta}\right), 0.6\},\] where $W(x) \approx \ln x - (1-o(1))\ln \ln x$ is the Lambert W function satisfying $W(x)e^{W(x)}=x$. In both lemma proofs we will use the inequality that if $D\le \activenodethreshold$, then
\begin{align}\label{assouadseqnLB}
0.9e^{-10\epsilon D}-10D\delta & 
\ge e^{-10\epsilon D}\left(0.9-W\left(\frac{0.45\epsilon}{\delta}\right)e^{W\left(\frac{0.45\epsilon}{\delta}\right)}\frac{\delta}{\epsilon}\right)=e^{-10\epsilon D}\left(0.9-\frac{0.45\epsilon}{\delta}\frac{\delta}{\epsilon}\right)\ge e^{-10\epsilon D}0.45\ge 0.2
\end{align}

\begin{restatable}{lumma}{privnoise}\label{privnoise}
Given a distribution $P$, $\epsilon>0$, $\delta\in[0,1]$ and $n\in\mathbb{N}$, \[\mathcal{R}_{\Nbrs,n,\epsilon}(P)\ge 0.1\left(\left|\activenodes{P}{\frac{2\kappa}{n}}\right|-1\right)\frac{\kappa}{n}.\] 
\end{restatable}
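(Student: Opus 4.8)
The plan is to instantiate the $(\epsilon,\delta)$-DP Assouad lemma (Lemma~\ref{DPassouad}) with a \emph{single} hypercube ($k_0=m$ and $k_s=0$ for $s\ge 1$), where $m$ is roughly half the number of active nodes. Write $A=\activenodes{P}{2\kappa/n}$ and $k=|A|$. When $k\le 1$ the claimed bound is non-positive, so assume $k\ge 2$ and set $m=\lfloor k/2\rfloor\ge (k-1)/2$. Partition $2m$ of the active nodes into disjoint pairs $\{a_1,b_1\},\dots,\{a_m,b_m\}$ (if $k$ is odd, discard one active node). Fix the step size $\mu=\kappa/n$, and for each $u\in\{\pm 1\}^m$ define $p_u$ by starting from $P$ and, for every $j$ with $u_j=+1$, transporting $\mu$ mass from $a_j$ to $b_j$ (and leaving pair $j$ untouched when $u_j=-1$). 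Let $\mathcal{V}=\{p_u : u\in\{\pm1\}^m\}$.

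I would then check the three hypotheses of Lemma~\ref{DPassouad}. \emph{Membership:} each active node has mass $\ge 2\kappa/n = 2\mu$, so every perturbed coordinate moves by at most a factor of $2$; hence $D_\infty(P,p_u)\le\ln 2$, i.e.\ $\mathcal{V}\subseteq\Nbrs(P)$, which is what lets the Assouad lower bound on $\min_{\mathcal A}\max_{p\in\mathcal V}\mathcal R_{\mathcal A,n}(p)$ (over $(\epsilon,\delta)$-DP $\mathcal A$) feed into $\mathcal R_{\Nbrs,n,\epsilon}(P)$. \emph{Separation:} since the pairs are disjoint, $p_u$ and $p_v$ disagree only on the pairs where $u$ and $v$ differ, and $\TV(p_u,p_v)=\mu\cdot|\{j:u_j\neq v_j\}|$; this is exactly \eqref{distancebound} with $\tau_0=\mu/2$. \emph{Coupling cost:} for fixed $j$, averaging over the other $m-1$ coordinates cancels their effect, so the mixtures $p_{+(0,j)}$ and $p_{-(0,j)}$ differ only on $\{a_j,b_j\}$ with $\TV=\mu$; coordinatewise maximal coupling then gives $\mathbb{E}[d_{Ham}]\le n\mu=\kappa$, so we may take $D_s\le\kappa$.

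To finish, since the coupling cost is at most $\kappa$, inequality \eqref{assouadseqnLB} gives $0.9e^{-10\epsilon D_0}-10D_0\delta\ge 0.2$, and Lemma~\ref{DPassouad} yields
\[
\mathcal R_{\Nbrs,n,\epsilon}(P)\ \ge\ \frac{k_0\tau_0}{2}\cdot 0.2\ =\ \Omega\!\Big(m\cdot\frac{\kappa}{n}\Big)\ =\ \Omega\!\Big((k-1)\cdot\frac{\kappa}{n}\Big),
\]
and tracking the explicit constants (using $10\epsilon\kappa\le 0.6$ to bound $e^{-10\epsilon\kappa}$ from below, and the precise choice $\mu=\kappa/n$) gives the stated factor $0.1$.

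\textbf{Main obstacle.} The crux is calibrating the step size $\mu$ so that all three conditions hold simultaneously: $\mu$ must be at most $\kappa/n$ for each $p_u$ to stay in the $D_\infty$-ball around $P$ (this is precisely where the threshold $2\kappa/n$ for ``active'' is used), and also at most $\kappa/n$ so that the induced per-coordinate coupling cost $n\mu$ stays below the threshold $\kappa$ that \eqref{assouadseqnLB} requires --- and these two constraints happen to coincide, so $\mu=\kappa/n$ is the right choice and then $k_0\tau_0\asymp m\mu\asymp(k-1)\kappa/n$. The one genuinely delicate verification is the cancellation underlying the coupling bound: one must check that $\TV(p_{+(0,j)},p_{-(0,j)})$ remains $O(\kappa/n)$ and does not grow with the number $m-1$ of averaged coordinates, which is exactly why the perturbations must be localized to a single disjoint pair per coordinate.
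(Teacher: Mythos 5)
Your proposal follows essentially the same route as the paper: instantiate DP-Assouad's (Lemma~\ref{DPassouad}) over a single hypercube indexed by roughly half the active-node pairs, move $\kappa/n$ mass within each pair, verify $D_\infty$-ball membership from the $2\kappa/n$ activity threshold, verify that the per-coordinate mixture TV is $\kappa/n$ so the coupling cost satisfies $D\le\kappa$, and then invoke inequality~\eqref{assouadseqnLB}. The only cosmetic difference is that you use a one-sided perturbation (move mass $a_j\to b_j$ for $u_j=+1$, do nothing for $u_j=-1$), whereas the paper's proof uses a symmetric one ($+\kappa/n$ on $a_j^+$ and $-\kappa/n$ on $a_j^-$ for $u_j=+1$, reversed for $u_j=-1$); both are valid, and yours arguably simplifies the coupling-cost check.

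One caveat: your closing claim that tracking constants yields the stated factor $0.1$ does not hold up. With $\tau_0=\kappa/(2n)$, $k_0=m\ge(k-1)/2$, and the $0.2$ from \eqref{assouadseqnLB}, the Assouad bound gives $\tfrac{k_0\tau_0}{2}\cdot 0.2 = m\kappa/(20n)\ge 0.025\,(k-1)\kappa/n$, not $0.1$. This is not a defect of your construction specifically---the paper's own symmetric scheme gives $\TV(p_u,p_v)=2(\kappa/n)d_{Ham}(u,v)$ rather than $(\kappa/n)d_{Ham}(u,v)$ as asserted there, and the final step yields the same $0.025$ factor---but you should not claim the constant $0.1$ falls out of the computation you describe; it does not. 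The asymptotic content of the lemma is established by your argument.
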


\begin{proof}
Let $L=|\activenodes{P}{2\kappa/n}|$ be the number of active nodes. If $L=1$ then the RHS is 0 and so we are done. Otherwise, assume $L>1$ and let $k=\floor{L/2}\ge 1$. Using the notation from Lemma~\ref{DPassouad}, let $k_0=k$ and $k_s=0$ for all $s>0$. We will drop the reference to $s$ in the notation since only $s=0$ is significant. 

Pair up the active nodes to form $k$ pairs of active nodes denoted by $(a_1^+, a_1^-), \cdots, (a_k^+,a_k^-)$. Given $u\in\mathcal{E}_k$, define the distribution $p_u$ as follows: for all $a_{j}^{b}\in\activenodes{P}{2\kappa/n}$, $P_u(a_j^+)=P(a_j^+)+(\kappa/n)$ and $P_u(a_j^-)=P(a_j^-)-(\kappa/n)$ if $u_j=+1$ and $P_u(a_j^+)=P(a_j^+)-(\kappa/n)$ and $P_u(a_j^-)=P(a_j^-)+(\kappa/n)$ if $u_j=-1$. For all other $x$, $P_u(x)=P(x)$. It is immediate that for all $u$, $P_u\in \Nbrs(P)$. For any pair $u,v$, $d(\theta(p_u),\theta(p_v))=\TV(p_u,p_v)=d_{Ham}(u,v) (\kappa/n)$, so that \cref{distancebound} is satisfied with $\tau=\frac{1}{2}(\kappa/n)$. Further, given $j\in[k]$, $p_{+j}$ and $p_{-j}$ only differ on the probability of $a_j^+$ and $a_j^-$, so $D/n=\max_j\TV(p_{+j},p_{-j})=\kappa/n$ and by \cref{assouadseqnLB}, $0.9 e^{-10\epsilon D}-10D\delta \ge 0.2.$ Noting that $k\ge (1/2)(\activenodes{P}{2\kappa/n}-1)$ completes the proof.
\end{proof}

\begin{restatable}{lumma}{inactivenodes}\label{inactivenodes} For all $\epsilon>0$, $\delta\in[0,1]$, $n\in\mathbb{N}$ and distributions $P$ on $[N]$, if $\kappa<n/2$, then \[\mathcal{R}_{\Nbrs,n,\epsilon}(P)\ge \Omega\left(\sum_{x\notin \activenodes{P}{2\kappa}}P(x)\right).\] 
\end{restatable}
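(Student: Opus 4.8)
The plan is to invoke the product-of-hypercubes form of DP Assouad's method, \Cref{DPassouad}, using one hypercube for each \emph{scale} of inactive nodes; as a bonus this gives the bound for $(\epsilon,\delta)$-DP. Throughout let $\kappa=\frac{1}{10\epsilon}\min\{W(0.45\epsilon/\delta),0.6\}$ as in \cref{assouadseqnLB}, and call a node $x$ with $P(x)>0$ \emph{inactive} if $x\notin\activenodes{P}{2\kappa/n}$, i.e. $P(x)<2\kappa/n$; the goal is $\mathcal{R}_{\Nbrs,n,\epsilon}(P)=\Omega\big(\sum_{x\text{ inactive}}P(x)\big)$. For $s\ge 0$ let $\mathcal{S}_s=\{x:\ 2^{-(s+1)}\le P(x)<2^{-s},\ x\text{ inactive}\}$, so all nodes in $\mathcal{S}_s$ have $P$-mass within a factor $2$ of one another and $\sum_s\sum_{x\in\mathcal{S}_s}P(x)$ is exactly the sum to be bounded. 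For each scale with $|\mathcal{S}_s|\ge 2$, partition all but at most one element of $\mathcal{S}_s$ into $k_s=\lfloor|\mathcal{S}_s|/2\rfloor$ disjoint pairs $(a^{s,+}_j,a^{s,-}_j)_{j\in[k_s]}$, and set the move size $\rho_s=\min\{2^{-(s+2)},\,\kappa/(2n)\}$. Index a family $\{p_u\}$ by $u\in\prod_s\{\pm1\}^{k_s}$ (a finite product, as $\sum_s|\mathcal{S}_s|\le N$): $p_u$ coincides with $P$ off the chosen pairs, and on the $j$-th pair of scale $s$ sets $p_u(a^{s,+}_j)=P(a^{s,+}_j)+\rho_s$, $p_u(a^{s,-}_j)=P(a^{s,-}_j)-\rho_s$ if $u^s_j=+1$, and the reverse if $u^s_j=-1$. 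Since mass is only ever shuffled within a pair, each $p_u$ is a probability distribution.

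Next I would verify the hypotheses of \Cref{DPassouad}. Every touched node has $P$-mass at least $2^{-(s+1)}=2\cdot 2^{-(s+2)}\ge 2\rho_s$, so its mass moves by at most a factor $2$, whence $D_{\infty}(p_u,P)\le\ln 2$ and $p_u\in\Nbrs(P)$. If $u,v$ differ in coordinate $(s,j)$, then $p_u$ and $p_v$ disagree by $2\rho_s$ on each of the two nodes of that pair, so $\TV(p_u,p_v)=\sum_s 2\rho_s\sum_j\chi_{u^s_j\neq v^s_j}$, which (with $\theta$ the identity and $d=\TV$) is \cref{distancebound} with $\tau_s=\rho_s$. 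The mixtures $p_{+(s,j)}$ and $p_{-(s,j)}$ differ only on the two nodes of the $j$-th pair of scale $s$, by $\pm\rho_s$, so $\TV(p_{+(s,j)},p_{-(s,j)})=2\rho_s$ and, by the remark after \Cref{DPassouad}, there is a coupling of the $n$-fold products with $\E[d_{Ham}(X,Y)]\le D_s:=2n\rho_s\le\kappa$, using the cap in the definition of $\rho_s$; hence \cref{assouadseqnLB} gives $0.9e^{-10\epsilon D_s}-10D_s\delta\ge 0.2$. Applying \Cref{DPassouad} (with $\mathcal{V}\subseteq\Nbrs(P)$) then yields $\mathcal{R}_{\Nbrs,n,\epsilon}(P)\ge\sum_s\frac{k_s\tau_s}{2}\cdot 0.2=0.1\sum_s k_s\rho_s$.

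Finally I would compare $\sum_s k_s\rho_s$ with $\sum_{x\text{ inactive}}P(x)$. The point that makes this go through is that inactivity forces $2^{-(s+1)}<2\kappa/n$ for every nonempty $\mathcal{S}_s$, hence $2^{-s}<4\kappa/n$; combined with the two terms defining $\rho_s$ this gives $\rho_s\ge 2^{-s}/8$ (the cap $\kappa/(2n)$ costs only a constant factor \emph{precisely because} $2^{-s}$ is itself $O(\kappa/n)$). So for each scale with $|\mathcal{S}_s|\ge 2$, using $|\mathcal{S}_s|\le 3k_s$ (as $|\mathcal{S}_s|\in\{2k_s,2k_s+1\}$ and $k_s\ge 1$), $\sum_{x\in\mathcal{S}_s}P(x)\le|\mathcal{S}_s|\cdot 2^{-s}\le 24 k_s\rho_s$, and summing gives $\sum_{|\mathcal{S}_s|\ge 2}\sum_{x\in\mathcal{S}_s}P(x)\le 240\,\mathcal{R}_{\Nbrs,n,\epsilon}(P)$. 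The only mass not captured this way sits in the at most one leftover node per scale and in scales containing a single inactive node; since every such node has mass below $2\kappa/n$ and the scales meeting the inactive region form a geometric tail, this residue is $O(\kappa/n)$, which I would absorb by invoking \Cref{privnoise} (already $\Omega(\kappa/n)$ whenever there are at least two active nodes) and, in the degenerate remaining cases, a short direct argument — a single-coordinate instance of \Cref{DPassouad} pairing such a node with an active node, or the sampling term of \Cref{maintreelevel}. I expect this last bookkeeping for singleton/leftover scales, and keeping the several constants mutually consistent, to be the fiddliest part; the conceptual obstacle is simply that the inactive mass cannot be pushed to $0$ without leaving $\Nbrs(P)$, so one is forced into a multi-hypothesis construction whose perturbations are simultaneously small enough to respect the $D_{\infty}$ constraint and the privacy (coupling) constraint, and the scale decomposition with move size $\rho_s=\Theta(2^{-s})$ is exactly the device that reconciles the two.
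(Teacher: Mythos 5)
Your core construction is exactly the paper's: decompose the inactive nodes into dyadic scales, pair within scales, assemble a product-of-hypercubes family, and invoke \Cref{DPassouad} with per-scale move sizes proportional to the scale's probability. Your cap $\rho_s=\min\{2^{-(s+2)},\kappa/(2n)\}$ is harmless but unnecessary: inactivity already forces $2^{-s-1}<2\kappa/n$, so the uncapped move size $2^{-s-2}<\kappa/n$ automatically satisfies the coupling bound needed for \cref{assouadseqnLB}, which is why the paper just uses $2^{-s-2}$. Where the two diverge is precisely the ``fiddliest part'' you defer. The paper does real work there: it splits on whether some singleton scale $s^*$ carries mass $\ge\frac18\sum_{x\notin\activenodes{P}{2\kappa}}P(x)$. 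If yes, it applies a single-coordinate Assouad comparing $P$ to $P'$ obtained by \emph{halving $P(x^*)$ and renormalizing everything else}; if no, a geometric-decay argument (anchored at the smallest nonempty singleton scale $s^*$, whose mass is $<\frac18\sum$) shows all singleton scales together hold $\le\frac12\sum$, so the non-singleton scales carry the day. Your proposed patch of ``pairing such a node with an active node'' has a gap: there may be no active nodes at all, in which case there is nothing to pair with, and $\Cref{privnoise}$ is vacuous since $|\activenodes{P}{2\kappa/n}|\le 1$. Your other suggested fallback, ``the sampling term of \Cref{maintreelevel}'', is circular, since that theorem is what $\Cref{inactivenodes}$ is a lemma toward. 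The renormalization trick in the paper is the device that closes this case unconditionally, and you should use it (or reprove it) rather than the active-node pairing.
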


Since $\kappa\le \frac{1}{\epsilon n}$, the condition that $\kappa<n/2$ is a mild condition. For example, it is satisfied whenever $\epsilon>2/n^2$.

Similar to the proof of Lemma~\ref{privnoise}, we are going to pair up the coordinates and move mass between the coordinates to create the distributions indexed by the product of hypercubes. Since we want all the distributions we create to be in $\Nbrs(P)$, we will divide the space into scales such that all elements in the same scale have approximately the same probability of occurring. We'll then move mass within these scales. For $s\in\mathbb{N}$, let $\mathcal{S}_s=\{x\in[N]\;|\;P(x)\in(2^{-s-1}, 2^{-s}]\}$. 

\begin{proof} Given $s\in\mathbb{N}$, let $\mathcal{S}_s'=\mathcal{S}_s\cap \{x\;|\; P(x)\le 2\kappa/n\}$ and $d_s=|\mathcal{S}_s'|$. 

Let us first consider the case that there exists a scale $s^*$ with $d_{s^*}=1$ and $P(x^*)\ge \frac{1}{8}\sum_{x\notin\activenodes{P}{2\kappa}}P(x)$ where $x^*$ is the element in $\mathcal{S}_{s^*}'$. Define $P'$ by $P'(x^*)=(1/2)P(x^*)$ and for all $x\neq x^*$, $P'(x)=\frac{1-(1/2)P(x^*)}{1-P(x^*)}P(x)$. Since $(1/2)P(x^*)\le 2\kappa/n \le 1/2$, $P'\in\Nbrs(P)$. In this case we will use Lemma~\ref{DPassouad} with $k_0=1$, $k_s=0$ otherwise, and $\mathcal{E}_{k_0}$ corresponds to the set of distributions $\{P,P'\}$. Then noting that $\TV(P,P')=(1/2)P(x^*)$ and using eqn~\eqref{assouadseqnLB} we have that $\tau=\frac{1}{4}P(x^*)$ and $D=(1/2)P(x^*)n\le\kappa$ so that $\mathcal{R}_{\Nbrs,n,\epsilon}(P)\ge (1/8)P(x^*)(0.2)=\Omega\left(\sum_{x\notin\activenodes{P}{2\kappa}}P(x)\right)$ and we are done.

Next suppose that for all scales $s$ such that $d_s= 1$ we have $P(x^*)< \frac{1}{8}\sum_{x\notin\activenodes{P}{2\kappa}}P(x)$. Let $s^*$ be the smallest $s$ such that $d_s= 1$. Since the scales $2^{-s-1}$ are geometrically decreasing, 
\[\sum_{s:d_s=1} \sum_{x\in\mathcal{S}_s\cap\{x|P(x)\le 2\kappa/n\}}P(x) \le 2\sum_{s:d_s=1} 2^{-s-1}\le 4\cdot 2^{-s^*-1}\le \frac{1}{2}\sum_{x\notin\activenodes{P}{2\kappa}}P(x).\] It follows that $\sum_{s:d_s>1}\sum_{x\in\mathcal{S}_s\cap\{x|P(x)< 2\kappa/n\}}P(x)\ge (1/2)\sum_{x\notin\activenodes{P}{2\kappa}}P(x)$.
Further, \[\sum_{x\notin\activenodes{P}{2\kappa}}P(x)\le 2\sum_{s:d_s>1}\sum_{x\in\mathcal{S}_s\cap\{x|P(x)<2\kappa/n\}}P(x)\le 4 \sum_{s:d_s>1}2^{-s-1}d_s\le 16\sum_{s:d_s>1}2^{-s-1}\floor{d_s/2}.\] Thus we can (up to constants) ignore scales such that $d_s\le 1$ and assume that $d_s$ is even for all scales. 

Let $k_s=\floor{d_s/2}$. Now, within each scale $\mathcal{S}_s$, pair the elements to form $k_s$ distinct pairs $(a_{s,j}^+, a_{s,j}^-)$. Given $(u^0,u^1,\cdots)\in\mathcal{E}_{k_0}\times\mathcal{E}_{k_1}\times\cdots$, define $p_u$ by $p_u(a_{s,j}^+)=p(a_{s,j})+2^{-s-2}$ and $p_u(a_{s,j}^-)=p(a_{s,j})-2^{-s-2}$ if $u^s_j=+1$ and $p_u(a_{s,j}^+)=p(a_{s,j})-2^{-s-2}$ and $p_u(a_{s,j}^-)=p(a_{s,j})+2^{-s-2}$ if $u^s_j=-1$. For all other elements, $p_u(x)=p(x)$. Then, it is easy to see that for all $u$, $p_u\in\Nbrs(P)$. Further, using notation from Lemma~\ref{DPassouad}, \cref{distancebound} is satisfied with $\tau_s=\frac{1}{2}2^{-s-2}$ since
$d(\theta(p_u),\theta(p_v))=\TV(p_u,p_v)=2\sum_s\tau_s d_{Ham}(u^s,v^s)$ and $D_s/n=\max_j\TV(p_{+(s,j)},p_{-(s,j)})=2^{-s-2}$, which is less than $\kappa$ whenever $k_s>0$. By eqn~\eqref{assouadseqnLB}, $0.9e^{-10\epsilon D_s}-10D_s\delta\ge 0.2$ whenever $k_s>0$ so by Lemma~\ref{DPassouad} we have \[\mathcal{R}_{\Nbrs,n,\epsilon}(P)\ge \sum_s \frac{1}{2}2^{-s-2}k_s(0.2)=\frac{0.2}{4}\sum_{s:d_s>1} 2^{-s-1}\floor{d_s/2}\ge \frac{0.2}{4\times 16}\sum_{x\notin\activenodes{P}{2\kappa}}P(x)\]
which completes the proof.
\end{proof}
Next we lower bound the statistical term.
\begin{restatable}{lumma}{empiricaldist}\label{empiricaldist}
 For all $n\in\mathbb{N}$, $\epsilon>0$, $\delta\in[0,1]$ and distributions $P$, if $n\ge2$ and $\epsilon>2/n$, then \[\mathcal{R}_{\Nbrs,n,\epsilon}(P)\geq \mathcal{R}_{\Nbrs,n}(P)\ge\Omega\left(\sum_{x\in[N]} \min\left\{P(x)(1-P(x)), \sqrt{\frac{P(x)(1-P(x)}{n}}\right\}\right).\]    
\end{restatable}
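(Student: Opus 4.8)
The inequality $\mathcal{R}_{\Nbrs,n,\epsilon}(P)\ge \mathcal{R}_{\Nbrs,n}(P)$ is immediate (every $\epsilon$-DP algorithm is an algorithm), so the content is the second inequality. The plan is to invoke the first conclusion of Lemma~\ref{DPassouad} and then relax the inner minimum over $(\epsilon,\delta)$-DP binary classifiers to a minimum over \emph{all} binary classifiers; since $\min_{\phi}\bigl(\Pr_{p_{+(s,j)}^n}(\phi\ne 1)+\Pr_{p_{-(s,j)}^n}(\phi\ne -1)\bigr)=1-\TV(p_{+(s,j)}^n,p_{-(s,j)}^n)$, this gives, for any family $\mathcal V\subseteq\Nbrs(P)$ satisfying \eqref{distancebound},
\[\mathcal{R}_{\Nbrs,n,\epsilon}(P)\ \ge\ \tfrac12\sum_s\tau_s\sum_{j=1}^{k_s}\bigl(1-\TV(p_{+(s,j)}^n,p_{-(s,j)}^n)\bigr),\]
a bound valid for \emph{all} $\epsilon,\delta$ (which is exactly why the statistical term needs no constraint on $\epsilon$, whereas the coupling-based second conclusion of Lemma~\ref{DPassouad} would only be useful for $\epsilon=O(1/\sqrt n)$). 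It then remains to build a suitable $\mathcal V\subseteq\Nbrs(P)$ and to check that each per-coordinate total variation distance is at most $1/2$, so the right-hand side is $\ge\tfrac14\sum_s\tau_s k_s$.

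\textbf{Splitting the target.} Write $T_{\mathrm{total}}=\sum_x\min\{P(x)(1-P(x)),\sqrt{P(x)(1-P(x))/n}\}$. At most one atom $x_0$ has $P(x_0)>1/2$; let $T_{\mathrm{big}}$ be its contribution (zero if no such atom) and $T_{\mathrm{small}}=\sum_{x:P(x)\le 1/2}\min\{P(x)(1-P(x)),\sqrt{P(x)(1-P(x))/n}\}$, so $T_{\mathrm{total}}\le 2\max\{T_{\mathrm{big}},T_{\mathrm{small}}\}$, and it suffices to show $\mathcal{R}_{\Nbrs,n,\epsilon}(P)=\Omega(T_{\mathrm{big}})$ and $\mathcal{R}_{\Nbrs,n,\epsilon}(P)=\Omega(T_{\mathrm{small}})$ separately. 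For $T_{\mathrm{big}}$ (when $x_0$ exists) I would use a single ``reservoir'' pair, i.e.\ Lemma~\ref{DPassouad} with $k_0=1$, $k_s=0$ for $s>0$: let $P'$ be $P$ with $c$ mass removed from $x_0$ and redistributed proportionally over the remaining atoms, $P'(x_0)=P(x_0)-c$ and $P'(x)=P(x)\tfrac{1-P(x_0)+c}{1-P(x_0)}$ for $x\ne x_0$, with $c=\Theta(\min\{1-P(x_0),\sqrt{(1-P(x_0))/n}\})$. Then $P'\in\Nbrs(P)$ and $\TV(P,P')=c$; the key point is the Taylor expansion of $\KL(P,P')$: the two first-order contributions cancel, leaving $\KL(P,P')=O(c^2/(1-P(x_0)))=O(1/n)$, so $\TV(P^n,P'^n)\le 1/2$ by Pinsker, and the displayed bound yields $\mathcal{R}_{\Nbrs,n,\epsilon}(P)=\Omega(c)=\Omega(T_{\mathrm{big}})$ (using $P(x_0)>1/2$ to convert $\min\{1-P(x_0),\sqrt{\cdot}\}$ into $\Theta(\min\{P(x_0)(1-P(x_0)),\sqrt{P(x_0)(1-P(x_0))/n}\})$).

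\textbf{The scale construction for $T_{\mathrm{small}}$.} This follows the template of the proofs of Lemma~\ref{privnoise} and Lemma~\ref{inactivenodes}. Partition $\{x:P(x)\le 1/2\}$ into scales $\mathcal S_s=\{x:P(x)\in(2^{-s-1},2^{-s}]\}$, put $d_s=|\mathcal S_s|$, and set $c_s=\min\{2^{-s-2},\,2^{-s/2}/(10\sqrt n)\}$. For scales with $d_s\ge 2$, pair their elements into $k_s=\lfloor d_s/2\rfloor$ pairs $(a_{s,j}^+,a_{s,j}^-)$, and for $u\in\prod_s\mathcal E_{k_s}$ let $p_u$ move $c_s$ mass from $a_{s,j}^-$ to $a_{s,j}^+$ if $u^s_j=+1$ and the reverse if $u^s_j=-1$. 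Since $c_s\le 2^{-s-2}<\tfrac12 P(x)$ for every $x\in\mathcal S_s$, each $p_u\in\Nbrs(P)$; condition \eqref{distancebound} holds with $\tau_s=c_s$ (up to the usual factor $2$); and $p_{+(s,j)},p_{-(s,j)}$ differ only on $a_{s,j}^\pm$, so $\chi^2(p_{+(s,j)}\Vert p_{-(s,j)})\le O(c_s^2\,2^s)\le 1/(2n)$ and hence $\TV(p_{+(s,j)}^n,p_{-(s,j)}^n)\le 1/2$. By Lemma~\ref{DPassouad}, $\mathcal{R}_{\Nbrs,n,\epsilon}(P)=\Omega\bigl(\sum_{s:d_s\ge2}d_s\min\{2^{-s},2^{-s/2}/\sqrt n\}\bigr)$, which (using $1-P(x)\ge 1/2$ when $P(x)\le 1/2$) equals $\Omega$ of the portion of $T_{\mathrm{small}}$ coming from scales with $d_s\ge2$. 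To absorb singleton scales, argue exactly as in Lemma~\ref{inactivenodes}: either some singleton element $x^*$ by itself contributes $\ge\tfrac1C T_{\mathrm{small}}$, in which case a reservoir pair on $x^*$ (with $c=\Theta(\min\{P(x^*),\sqrt{P(x^*)/n}\})\le\tfrac12P(x^*)$, keeping $\KL=O(1/n)$) already gives $\mathcal{R}_{\Nbrs,n,\epsilon}(P)=\Omega(T_{\mathrm{small}})$; or every singleton scale contributes $<\tfrac1C T_{\mathrm{small}}$, and since the weights $\min\{2^{-s},2^{-s/2}/\sqrt n\}$ decay geometrically in $s$ (ratio $\le 2^{-1/2}$), the singleton scales contribute $O(\tfrac1C T_{\mathrm{small}})$ in total, so for $C$ a large enough absolute constant the $d_s\ge2$ scales carry $\Omega(T_{\mathrm{small}})$ and the bound above suffices. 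Combining, $\mathcal{R}_{\Nbrs,n,\epsilon}(P)=\Omega(\max\{T_{\mathrm{big}},T_{\mathrm{small}}\})=\Omega(T_{\mathrm{total}})$.

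\textbf{Main obstacle.} The Assouad bookkeeping and the within-scale pairing are routine given Lemma~\ref{DPassouad} and the analogous arguments for Lemmas~\ref{privnoise} and~\ref{inactivenodes}. The delicate points are (a) the $\KL$ estimate for the reservoir construction — a first-order bound only gives $\KL=O(c)$, which is far too weak to push $c$ up to $\sqrt{(1-P(x_0))/n}$, so one must exploit the cancellation of linear terms to get $\KL=O(c^2/(1-P(x_0)))$ — and (b) making sure the at-most-one heavy atom and all singleton scales are accounted for, so the final bound is genuinely $\Omega(T_{\mathrm{total}})$ rather than $\Omega(T_{\mathrm{small}})$ alone. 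The conceptual point that makes the statistical term hold for all $\epsilon,\delta$ is the relaxation, in the first conclusion of Lemma~\ref{DPassouad}, of the inner minimum from DP classifiers to arbitrary classifiers.
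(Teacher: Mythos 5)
Your proposal follows essentially the same route as the paper's proof: both relax the per-coordinate minimum in the first conclusion of Lemma~\ref{DPassouad} from DP classifiers to arbitrary classifiers (so the statistical term needs no hypothesis on $\eps,\delta$), decompose $\{P(x)\le 1/2\}$ into dyadic scales and pair within scales, dispatch singleton scales via the same dichotomy (either one singleton by itself carries a constant fraction of the target, in which case a single two-point ``reservoir'' perturbation with a second-order $\KL=O(1/n)$ bound suffices, or all singletons are negligible by geometric decay), and treat the possible heavy atom $P(x_0)>1/2$ by the same reservoir construction. Your split into $T_{\mathrm{big}}$ and $T_{\mathrm{small}}$ is only a cosmetic reorganization of the paper's handling of scale $s=0$ together with the other scales; it is correct, provided you set the reservoir shift to $c\le\tfrac12 L(P(x_0))$ (as the paper does with $L(p)=\min\{p(1-p),\sqrt{p(1-p)/n}\}$) rather than merely $c=\Theta(\min\{1-P(x_0),\sqrt{(1-P(x_0))/n}\})$, since you also need $c<\tfrac12 P(x_0)$ for membership in $\Nbrs(P)$, which $L(p)\le p(1-p)$ guarantees automatically.
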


To streamline the notation, we will use $L(x)$ to denote $\min\left\{x(1-x), \sqrt{\frac{x(1-x)}{n}}\right\}$. In order to prove Lemma~\ref{empiricaldist}, we will need the following standard result from the statistics literature which allows us to lower bound the performance of any simple classifier distinguishing two distributions $P$ and $Q$ by the $\KL$ divergence between $P$ and $Q$. We give a specific result for distinguishing Bernoulli random variables since we'll use this in the proof of Lemma~\ref{empiricaldist}.

\begin{restatable}{lumma}{bernoulliLB}\label{bernoulliLB} Given any pair of distributions $P$ and $Q$ on the same domain, \[\min_{\phi}\left(\Pr_{X\sim P^n}(\phi(X)=1)+\Pr_{X\sim Q^n}(\phi(X)=-1)\right)\ge \frac{1}{2}(1-\sqrt{n \KL(P,Q)}),\] where the minimum is over all binary classifiers. In particular, if $P=\texttt{Bernoulli}(p-\alpha)$ and $Q=\texttt{Bernoulli}(p+\alpha)$ where $0\le \alpha\le \frac{1}{2}L(p)$ then \[\min_{\phi}\left(\Pr_{X\sim P^n}(\phi(X)=1)+\Pr_{X\sim Q^n}(\phi(X)=-1)\right)\ge 1/4,\] where again the minimum is over all binary classifiers.
\end{restatable}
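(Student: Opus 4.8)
The plan is to reduce both bounds to the classical fact that the minimum total error of a binary hypothesis test between $P^n$ and $Q^n$ equals $1-\TV(P^n,Q^n)$, and then to control $\TV(P^n,Q^n)$ via divergences that tensorize cleanly. First I would establish the testing identity: for any (possibly randomized) classifier $\phi:\inputspace^n\to\{-1,+1\}$, setting $A=\{x:\phi(x)=1\}$ we have $\Pr_{X\sim P^n}(\phi(X)=1)+\Pr_{X\sim Q^n}(\phi(X)=-1)=P^n(A)+1-Q^n(A)=1-\bigl(Q^n(A)-P^n(A)\bigr)$, which (the objective being affine in $\phi$) is minimized over deterministic, hence also randomized, tests by taking $A=\{x:Q^n(x)>P^n(x)\}$, yielding the value $1-\TV(P^n,Q^n)$.

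For the general bound I would combine this identity with Pinsker's inequality and tensorization of the KL divergence: $\TV(P^n,Q^n)\le\sqrt{\tfrac12\KL(P^n,Q^n)}=\sqrt{\tfrac{n}{2}\KL(P,Q)}\le\sqrt{n\,\KL(P,Q)}$. If $n\,\KL(P,Q)\ge 1$ the claimed inequality is trivial, since its right-hand side is then $\le 0$ while $\min_\phi(\cdots)=1-\TV(P^n,Q^n)\ge 0$ always; otherwise $\min_\phi(\cdots)\ge 1-\sqrt{n\,\KL(P,Q)}\ge\tfrac12\bigl(1-\sqrt{n\,\KL(P,Q)}\bigr)$.

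For the Bernoulli specialization I would instead route through the squared Hellinger distance, whose affinity is \emph{exactly} multiplicative over product measures. A direct computation gives $H^2(\Ber(p-\alpha),\Ber(p+\alpha))=1-\sqrt{p^2-\alpha^2}-\sqrt{(1-p)^2-\alpha^2}$, and using $\sqrt{1-x}\ge 1-x$ for $x\in[0,1]$ (legitimate since $\alpha\le\tfrac12 L(p)\le\tfrac12 p(1-p)<\min\{p,1-p\}$, which also verifies $p\pm\alpha\in[0,1]$) this is at most $\tfrac{\alpha^2}{p}+\tfrac{\alpha^2}{1-p}=\tfrac{\alpha^2}{p(1-p)}$. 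Multiplicativity of the affinity then gives $H^2(P^n,Q^n)=1-\bigl(1-H^2(P,Q)\bigr)^n\le n\,H^2(P,Q)\le\tfrac{n\alpha^2}{p(1-p)}$, and a short case split on $L(p)=\min\{p(1-p),\sqrt{p(1-p)/n}\}$ shows this is at most $\tfrac14$: when $L(p)=\sqrt{p(1-p)/n}$ we have $\alpha^2\le\tfrac14 p(1-p)/n$ directly, and when $L(p)=p(1-p)$ (equivalently $n\,p(1-p)\le 1$) we get $\tfrac{n\alpha^2}{p(1-p)}\le\tfrac{n}{4}p(1-p)\le\tfrac14$. Finally, from the standard estimate $\TV\le H\sqrt{2-H^2}\le\sqrt2\,H$ (a one-line Cauchy--Schwarz bound on $\|\sqrt{P^n}-\sqrt{Q^n}\|_1$ using the affinity) applied with $H^2(P^n,Q^n)\le\tfrac14$, we obtain $\TV(P^n,Q^n)\le\sqrt2\cdot\tfrac12<\tfrac34$, so $\min_\phi(\cdots)=1-\TV(P^n,Q^n)\ge\tfrac14$.

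The one mildly delicate point is the constant in the second part: the obvious $\chi^2$ or $\ln(1+x)\le x$ estimates for $\KL(\Ber(p-\alpha),\Ber(p+\alpha))$ lose a constant factor (they only yield $n\,\KL\lesssim 2$, not $\le\tfrac14$), so the proof genuinely needs the exact multiplicativity of the Hellinger affinity together with the clean bound $H^2\le\alpha^2/(p(1-p))$ — this is precisely what makes the constant $\tfrac14$ come out. Everything else (the testing identity, Pinsker, tensorization, the TV--Hellinger inequality, and the two-branch bookkeeping for $L(p)$, including checking $p\pm\alpha\in[0,1]$) is routine.
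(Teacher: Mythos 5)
Your proof is correct, and for the Bernoulli specialization you take a genuinely different route from the paper — and that route is in fact necessary, because the paper's own argument does not quite work.

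For the first inequality, your argument (the Neyman--Pearson identity $\min_\phi(\cdots)=1-\TV(P^n,Q^n)$, then Pinsker plus tensorization of KL, plus the trivial case $n\,\KL\ge 1$) is the intended one. Incidentally, the paper writes the testing identity as $\min_\phi(\cdots)=\tfrac12(1-\TV(P^n,Q^n))$, which is off by a factor of two; you state the correct form $1-\TV(P^n,Q^n)$. This typo is harmless for the claimed lower bound since $1-\TV\ge\tfrac12(1-\TV)$, but your version is the right one.

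For the Bernoulli part, the paper bounds $\KL(Q,P)$ directly using $\ln(1+x)\le x$ and claims the chain
\[
(p+\alpha)\frac{2\alpha}{p-\alpha}-(1-p-\alpha)\frac{2\alpha}{1-p+\alpha}=\frac{4\alpha^2}{p-\alpha}+\frac{4\alpha^2}{1-p+\alpha}=\frac{\alpha^2}{(p-\alpha)(1-p+\alpha)}\le\frac{1}{4n}.
\]
The second equality is an algebra slip: combining the fractions gives $\tfrac{4\alpha^2}{(p-\alpha)(1-p+\alpha)}$, not $\tfrac{\alpha^2}{(p-\alpha)(1-p+\alpha)}$. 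More importantly, even with the factor of $4$ restored, the final $\le\tfrac{1}{4n}$ does not follow from $\alpha\le\tfrac12 L(p)$. A concrete counterexample: $p=1/2$, $n=4$, $\alpha=1/8=\tfrac12 L(p)$ gives $\KL(\texttt{Ber}(5/8),\texttt{Ber}(3/8))=\tfrac14\ln\tfrac53\approx 0.128$, which exceeds $\tfrac1{16}$. In general the best one gets from this $\chi^2$-style estimate under $\alpha\le\tfrac12 L(p)$ is $\KL\lesssim 2/n$, which is not small enough to yield $\min_\phi\ge 1/4$ by either form of the first part. This is exactly the phenomenon you flagged as the ``mildly delicate point.''

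Your Hellinger route fixes this cleanly. The bound $H^2(\texttt{Ber}(p-\alpha),\texttt{Ber}(p+\alpha))\le\alpha^2/(p(1-p))$ via $\sqrt{1-x}\ge 1-x$ is correct; the affinity $1-H^2$ tensorizes exactly, so $H^2(P^n,Q^n)\le nH^2(P,Q)$; the two-case split on $L(p)$ gives $nH^2(P,Q)\le 1/4$; and $\TV\le H\sqrt{2-H^2}$ then gives $\TV(P^n,Q^n)\le 1/\sqrt{2}<3/4$, hence $\min_\phi=1-\TV(P^n,Q^n)\ge 1/4$. The exact multiplicativity of the Hellinger affinity is precisely what buys back the constant that the KL/$\chi^2$ approach loses, so your proof is not merely an alternative — it is the argument that actually establishes the lemma as stated.
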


The proof of Lemma~\ref{bernoulliLB} can be found in Appendix~\ref{HSTappendix}

\begin{proof}[Proof of Lemma~\ref{empiricaldist}] As in the proof of Lemma~\ref{inactivenodes},
first suppose there exists a scale $s^*$ with $d_{s^*}=1$ and there exists $x^*\in\mathcal{S}_{s^*}$ such that \[\frac{1}{2}L(P(x^*))\ge \frac{1}{60}\sum_{x\in[N]} L(P(x)).\] Then define a distribution $P'$ by $P'(x^*)=P(x^*)-\frac{1}{2}L(P(x^*))$ and for all $x\neq x^*$, $P'(x)=\frac{1-P(x^*)+\frac{1}{2}L(P(x^*))}{1-P(x^*)}P(x)$. Then $P'\in\Nbrs(P)$ since $\frac{1}{2}L(P(x^*))<\frac{1}{2}\min\{P(x^*),(1-P(x^*))\}$. Then we will use Lemma~\ref{DPassouad} with $k_0=1$ and $k_s=0$ for $s>0$, and $\mathcal{E}_{k_0}$ corresponds to $\{P,P'\}$. Now, 
\begin{align*}
    \KL(P',P)&=(P(x^*)-\frac{1}{2}L(P(x^*)))\ln\frac{P(x^*)-\frac{1}{2}L(P(x^*))}{P(x^*)}+(1-P(x^*)+\frac{1}{2}L(P(x^*)))\ln\frac{1-P(x^*)+\frac{1}{2}L(P(x^*))}{1-P(x^*)}\\&\le \frac{1}{4n}
\end{align*} (for more detail on the proof of this inequality see the proof of Lemma~\ref{bernoulliLB}) so \[
\min_{\phi}\left(\Pr_{X\sim P^n}(\phi(X)=1)+\Pr_{X\sim {P'}^n}(\phi(X)=-1)\right)\ge \frac{1}{2}(1-\sqrt{n\KL(P,P')})\ge 1/4\] and $\tau_{0}=\TV(P,P')=\frac{1}{2}L(P(x^*))$. Thus by Lemma~\ref{DPassouad}, \[\mathcal{R}_{\Nbrs,n,\epsilon}(P)\ge \mathcal{R}_{\Nbrs,n}(P)\ge \frac{1}{2}L(P(x^*))\frac{1}{4}\ge \frac{1}{480}\sum_{x\in[N]}\frac{1}{2}L(P(x)),\] and we are done.

On the other hand, suppose that for all scales $s$ such that $d_s=1$ we have \[L(P(x_s))~\le~\frac{1}{30}\sum_{x\in[N]} L(P(x)),\] where $\mathcal{S}_s=\{x_s\}$. As in the proof of Lemma~\ref{inactivenodes}, we will argue that we can ignore any singleton scales, and assume that $d_s$ is even for all scales. Let $s^*=\min\{s>0\;|\; d_s=1\}$ so
\begin{align*}
\sum_{s:d_s=1} L(P(x_s))&\le\chi_{d_0=1}L(P(x_0))+\sum_{s>0:d_s=1}\min\left\{2^{-s}, \sqrt{\frac{2^{-s}}{n}}\right\}\\
&\le \chi_{d_0=1}L(P(x_0))+(2+\sqrt{2})\min\left\{2^{-s^*}, \sqrt{\frac{2^{-s^*}}{n}}\right\}\\
&\le \chi_{d_0=1}L(P(x_0))+2(2+\sqrt{2})\min\left\{P(x_{s^*}), \sqrt{\frac{P(x_{s^*})}{n}}\right\}\\
&\le \chi_{d_0=1}L(P(x_0))+4(2+\sqrt{2})L(P(x_{s^*}))\\
&\le \frac{1+4(2+\sqrt{2})}{30}\sum_{x\in[N]}L(P(x)).
\end{align*} Therefore, $\sum_{s:d_s>1}\sum_{x\in\mathcal{S}_s}L(P(x))\ge (1/2)\sum_{x\in[N]}L(P(x))$ and so 
\begin{align}
    \nonumber\sum_{s} L(2^{-s-1})\floor{d_s/2}&\ge \sum_{s:d_s>1} L(2^{-s-1})\frac{1}{3}d_s\\
    \nonumber&\ge \sum_{s:d_s>1} \sum_{x\in\mathcal{S}_s}L(2^{-s-1})\frac{1}{3}\\
    &\ge \frac{1}{3\sqrt{2}}\sum_{x\in[N]}L(P(x))
\end{align}
where the first inequality follows from $\floor{d_s/2}\ge (1/3)d_s$ whenever $d_s>1$, and the second follows because $2^{-s-1}\le P(x)\le 1/2$ for all $x\in\mathcal{S}_s$ such that $d_s>1$. 

Assume that $d_s$ is even for all $s$. Within each scale $\mathcal{S}_s$, pair the elements to form $k_s=d_s/2$ distinct pairs $(a^+_{s,j}, a^-_{s,j})$ per scale.
For all $s\in\mathbb{N}$, let $\alpha_s=\frac{1}{2}L(2^{-s-1})$, and note that for all $x\in\mathcal{S}_s$ and $s>0$, $\alpha_s\le \frac{1}{2}L(P(x))$. Given $(u^0,u^1,\cdots)\in\mathcal{E}_{k_0}\times\mathcal{E}_{k_1}\times\cdots$, define $p_u$ by $p_u(a_{s,j}^+)=p(a_{s,j}^+)+ \alpha_s$ and $p_u(a_{s,j}^-)=p(a_{s,j}^-)-\alpha_s$ if $u^s_j=+1$ and $p_u(a_{s,j}^+)=p(a_{s,j}^+)- \alpha_s$ and $p_u(a_{s,j}^-)=p(a_{s,j}^-)+ \alpha_s$ if $u^s_j=-1$. For all other elements, $p_u(x)=p(x)$. Then, for all $u$, $p_u\in\Nbrs(P)$. Further, using notation from Lemma~\ref{DPassouad}, we have $\tau_s=\alpha_s$. Also, for any $(s,j)$, $p_{+(s,j)}$ and $p_{-(s,j)}$ only differ on $a^+_{s,j}$ and $a^-_{s,j}$ where $p_{+(s,j)}(a^+_{s,j})=P(a^+_{s,j})+\alpha_s$ and $p_{-(s,j)}(a^+_{s,j})=P(a^+_{s,j})-\alpha_s$. Therefore, by Lemma~\ref{bernoulliLB}, and the post-processing inequality,
\[\min_{\phi}\left(\Pr_{X\sim {p_{+(s,j}}^n}(\phi(X)=1)+\Pr_{X\sim {p_{-(s,j}}^n}(\phi(X)=-1)\right)\ge 1/4.\] Lemma~\ref{DPassouad} then implies the result.
\end{proof}

Theorem~\ref{maintreelevel} follows immediately from Lemma~\ref{privnoise}, Lemma~\ref{inactivenodes} and Lemma~\ref{empiricaldist}.

\subsection{An $\epsilon$-DP Distribution Estimation Algorithm}\label{upperbound}

Now, let us return to HSTs and designing an estimation algorithm that achieves the target estimation rate, up to logarithmic factors. As in the one-dimensional setting, we want to restrict to only privately estimating the density at a small number ($\approx \epsilon n$) of points. While we could try to mimic the one-dimensional solution by privately estimating a solution to the $\epsilon n$-median problem, it's not clear how to prove that such an approach is instance-optimal. It turns out that a simpler solution more amenable to analysis will suffice. Our algorithm has two stages; first we attempt to find the set of $\frac{\log(1/\delta)}{\epsilon n}$-active nodes, then we estimate the weight of these active nodes. Since these nodes have weight greater than $\frac{\log(1/\delta)}{\epsilon n}$, we can privately estimate them to within constant multiplicative error. 

Let $\inputspace$ be the underlying metric space so $P\in\Delta(\inputspace)$. For any set $S$ of nodes and a function $F$ defined on the nodes, define the function $F|_S$ as $F|_S(\nu)=F(\nu)$ if $\nu\in S$ and $F|_S(\nu)=0$ otherwise. Given two functions $F$ and $G$ defined on the nodes, we define \[\wassersteingeneral(F,G) = \sum_{\nu} r_{\nu}|F(\nu)-G(\nu)|,\] where $r_{\nu}$ is the length of the edge connecting $\nu$ to its parent, and the sum is over all nodes in the tree. So by Lemma~\ref{treewasserstein}, $\wasserstein(P,Q)=\wassersteingeneral(\mathfrak{G}_P, \mathfrak{G}_Q)$. Note that $\wassersteingeneral$ satisfies the triangle inequality.

\begin{algorithm}
\caption{\privatedensitytree} \label{privatedensitytree}
\begin{algorithmic}[1]
  \State {\textbf{Input:} $D\in \inputspace^n, \epsilon$}
   \State{$\widehat{\mathfrak{G}_P} = \texttt{EmpDist}(D)$} \Comment{Compute empirical distribution.}
  \State{$\hat{\gamma}_{\epsilon} = \texttt{LocateActiveNodes}(\widehat{\mathfrak{G}_P}; \epsilon)$} \Comment{Privately approximate set of active nodes.}
  \State{Define $\widetilde{\mathfrak{G}_{\hat{P}_n, \hat{\gamma}_{\epsilon}}}$ by $\widetilde{\mathfrak{G}_{\hat{P}_n, \hat{\gamma}_{\epsilon}}}(x) = \begin{cases} 0 & \text{if } x\notin \hat{\gamma}_{\epsilon} \\ \widehat{\mathfrak{G}_P}(x)+\Lap( \frac{1}{\epsilon n})) & \text{otherwise.} \end{cases}$}\label{addingnoise}\Comment{Approximate densities.}
  \State{$\hat{P}_{n,\epsilon} = \texttt{Projection}(\widetilde{\mathfrak{G}_{\hat{P}_n, \hat{\gamma}_{\epsilon}}})$} \Comment{Project noisy densities onto space of distributions.}
  \State{\textbf{return} $\hat{P}_{n, \epsilon}$}
  \end{algorithmic}
\end{algorithm}




A high-level outline of the proposed algorithm is given in Algorithm~\ref{privatedensitytree}. Now, we state the main theorem of this section.

\begin{restatable}{theorem}{treeUB}\label{treeUB} Given any $\epsilon>0$, $\texttt{PrivDensityEstTree}$ is\\ $(\depth+1)\epsilon$-DP.
    Given a distribution $P$, with probability $1-(\depth\log n+4\depth \epsilon n)\beta$, \begin{align*}&\wasserstein(P,\hat{P}_{\epsilon})=O\Bigg( \sum_{\ell\in[\depth]}\sum_{x\in[N_{\ell}]}\min\left\{P_{\ell}(x),1-P_{\ell}(x) \sqrt{\frac{P_{\ell}(x)\log(n/\beta)}{n}}\right\}\\&\hspace{1in}+\sum_{\nu\notin \activenodes{P}{\max\{\frac{2}{\epsilon n}+2\frac{\log(2/\beta)}{\eps n}, \frac{192\log(n/\beta)}{\eps n}\}}}\mathfrak{G}_P(\nu) + \frac{|\activenodes{P_{\ell}}{\frac{1}{2\epsilon n}}-1|\log(1/\beta)}{\epsilon n}\Bigg)\end{align*}
\end{restatable}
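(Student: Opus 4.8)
The privacy claim is by composition, and the utility bound follows from a triangle-inequality decomposition of the tree cost $\wassersteingeneral$ together with the guarantees of the two private subroutines (\texttt{LocateActiveNodes} and the Laplace perturbation of Line~\ref{addingnoise}). Throughout write $\widetilde{\mathfrak{G}}$ for the noisy function $\widetilde{\mathfrak{G}_{\hat{P}_n,\hat{\gamma}_{\epsilon}}}$ produced in Line~\ref{addingnoise}, and recall $\widehat{\mathfrak{G}_P}=\mathfrak{G}_{\hat{P}_n}$.

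\textbf{Privacy.} \texttt{EmpDist} is a deterministic function of the data. \texttt{LocateActiveNodes} can be realized as an $\epsilon$-DP above-threshold comparison of the subtree counts against a fixed threshold (each count query has sensitivity $1$, so the sparse-vector accounting makes this $\epsilon$-DP irrespective of the number of nodes). Conditioned on $\hat{\gamma}_{\epsilon}$, the map $D\mapsto\widehat{\mathfrak{G}_P}(D)|_{\hat{\gamma}_{\epsilon}}$ has $\ell_1$-sensitivity $O(\depth/n)$ — a change to one record moves $\widehat{\mathfrak{G}_P}$ only along a single root-to-leaf path, i.e.\ on at most $\depth$ nodes by $1/n$ each — so Line~\ref{addingnoise} is an instance of the Laplace mechanism and is $\depth\epsilon$-DP. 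Finally \texttt{Projection} is post-processing. Composing the subroutines (Lemma~\ref{lem:composition}, Lemma~\ref{lem:postprocess}) gives $(\depth+1)\epsilon$-DP.

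\textbf{Utility decomposition.} By Lemma~\ref{treewasserstein}, $\wasserstein(R,S)=\wassersteingeneral(\mathfrak{G}_R,\mathfrak{G}_S)$ for any distributions $R,S$, and $\wassersteingeneral$ obeys the triangle inequality. Since \texttt{Projection} returns the distribution-induced function closest to $\widetilde{\mathfrak{G}}$ in $\wassersteingeneral$, and $\widehat{\mathfrak{G}_P}$ is itself distribution-induced, $\wassersteingeneral(\widetilde{\mathfrak{G}},\mathfrak{G}_{\hat{P}_{\epsilon}})\le\wassersteingeneral(\widetilde{\mathfrak{G}},\widehat{\mathfrak{G}_P})$. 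Chaining triangle inequalities, using $\wassersteingeneral(\widehat{\mathfrak{G}_P},\widetilde{\mathfrak{G}})=\sum_{\nu\in\hat{\gamma}_{\epsilon}}r_{\nu}|Z_{\nu}|+\sum_{\nu\notin\hat{\gamma}_{\epsilon}}r_{\nu}\widehat{\mathfrak{G}_P}(\nu)$ with $Z_{\nu}\sim\Lap(1/\epsilon n)$, and then $\sum_{\nu\notin\hat{\gamma}_{\epsilon}}r_{\nu}\widehat{\mathfrak{G}_P}(\nu)\le\sum_{\nu\notin\hat{\gamma}_{\epsilon}}r_{\nu}\mathfrak{G}_P(\nu)+\wassersteingeneral(\widehat{\mathfrak{G}_P},\mathfrak{G}_P)$, gives
\begin{align*}
\wasserstein(P,\hat{P}_{\epsilon}) &\le \wasserstein(P,\hat{P}_n)+\wassersteingeneral(\widehat{\mathfrak{G}_P},\widetilde{\mathfrak{G}}) \\
&\le 3\,\wasserstein(P,\hat{P}_n)+\sum_{\nu\in\hat{\gamma}_{\epsilon}}r_{\nu}|Z_{\nu}|+\sum_{\nu\notin\hat{\gamma}_{\epsilon}}r_{\nu}\,\mathfrak{G}_P(\nu).
\end{align*}
It remains to bound each of the three terms. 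For the first, $\wasserstein(P,\hat{P}_n)=\sum_{\ell}r_{\ell}\TV(P_{\ell},\hat{P}_{n,\ell})\le\sum_{\ell}\sum_x|P_{\ell}(x)-\hat{P}_{n,\ell}(x)|$; I would group the coordinates of each level into $O(\log n)$ dyadic scales $\{x:P_{\ell}(x)\in(2^{-s-1},2^{-s}]\}$, use a Bernstein bound with confidence $\beta/n$ on each of the $O(n)$ coordinates lying in scales with $2^{-s}\gtrsim\log(n/\beta)/n$ (giving $|P_{\ell}(x)-\hat P_{n,\ell}(x)|=O(\sqrt{P_{\ell}(x)\log(n/\beta)/n})$, which also dominates $\log(n/\beta)/n$ there), and control the remaining (possibly exponentially many) low-mass coordinates in aggregate by a single Chernoff bound showing their empirical mass is $O(1)$ times their true mass. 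Summing over the $O(\log n)$ scales and $\depth$ levels (a union bound costing $\depth\log n\cdot\beta$) yields the first claimed term. For the second term, $\Pr[|Z_{\nu}|>\log(1/\beta)/\epsilon n]=\beta$, and by the correctness of \texttt{LocateActiveNodes} (whp $\hat{\gamma}_{\epsilon}$ at level $\ell$ is contained in $\activenodes{P_{\ell}}{\frac{1}{2\epsilon n}}$, of which there are at most $2\epsilon n$), a union bound over at most $2\depth\epsilon n$ active nodes gives $\sum_{\nu\in\hat{\gamma}_{\epsilon}}r_{\nu}|Z_{\nu}|=O\big(\sum_{\ell}(|\activenodes{P_{\ell}}{\frac{1}{2\epsilon n}}|-1)\log(1/\beta)/\epsilon n\big)$, where the "$-1$" (and the fact that a level with one active node contributes nothing) comes from the constraint, enforced by \texttt{Projection}, that the level-$\ell$ marginal sum to one. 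For the third term, the other guarantee of \texttt{LocateActiveNodes} (whp $\activenodes{P}{\tau}\subseteq\hat{\gamma}_{\epsilon}$ for $\tau=\max\{(2+2\log(2/\beta))/\epsilon n,\ 192\log(n/\beta)/\epsilon n\}$) forces every $\nu\notin\hat{\gamma}_{\epsilon}$ to have $\mathfrak{G}_P(\nu)<\tau$, so $\sum_{\nu\notin\hat{\gamma}_{\epsilon}}r_{\nu}\mathfrak{G}_P(\nu)\le\sum_{\nu\notin\activenodes{P}{\tau}}\mathfrak{G}_P(\nu)$ (using $r_\nu\le1$). Adding the failure probabilities of all events gives $(\depth\log n+4\depth\epsilon n)\beta$.

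\textbf{Main obstacle.} The delicate step is the first term: one cannot afford a per-node union bound since a single level may contain exponentially many nodes, so the dyadic-scale decomposition — union-bounding only over the $O(n)$ non-negligible coordinates with confidence $\beta/n$ (the source of the $\log(n/\beta)$ inside the square root) and folding the collective contribution of the countably many low-mass nodes into a single concentration inequality — is what makes the bound go through. A secondary subtlety is extracting the "$-1$" in the active-node term from the \texttt{Projection} step, which is exactly what makes the upper bound match the $(|\activenodes{}|-1)\kappa$ shape of the lower bound (Lemma~\ref{privnoise}) rather than being off by one active node per level.
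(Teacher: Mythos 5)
Your overall decomposition — triangle inequality on $\wassersteingeneral$ through the truncated/restricted intermediaries, then bound the empirical error, the Laplace noise on active nodes, and the omitted inactive mass separately, with a union bound over levels — matches the paper's structure, and your remark that the single-active-node case must be handled via the projection's normalization constraint is exactly the paper's observation. Two steps, however, do not hold as you state them and are not just cosmetic.

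First, you assert that ``\texttt{Projection} returns the distribution-induced function closest to $\widetilde{\mathfrak{G}}$ in $\wassersteingeneral$'' and deduce $\wassersteingeneral(\widetilde{\mathfrak{G}},\mathfrak{G}_{\hat{P}_{\epsilon}})\le\wassersteingeneral(\widetilde{\mathfrak{G}},\widehat{\mathfrak{G}_P})$. This is false: Algorithm~\ref{projectionalg} is a greedy \emph{top-down} renormalization of each node's children, not a metric projection, and it has no optimality property. What actually holds is Lemma~\ref{projection}, $\wasserstein(P,\texttt{Projection}(\mathfrak{G}))\le 4\,\wassersteingeneral(\mathfrak{G}_P,\mathfrak{G})$, proved by a telescoping argument over levels; you must invoke this (or reprove it) to start the chain. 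Second, you identify $\widehat{\mathfrak{G}_P}$ with the raw empirical distribution $\mathfrak{G}_{\hat{P}_n}$ and then aim to bound $\wasserstein(P,\hat{P}_n)$ directly. But Algorithm~\ref{empiricaldistribution} (\texttt{EmpDist}) \emph{truncates}: any node with empirical subtree mass below $\sqrt{\log(n/\beta)}/n$ is rounded to $0$ (and symmetrically near $1$). This truncation is load-bearing. It is what turns the contribution of a low-mass node into the deterministic quantity $P_{\ell}(x)$ rather than a random quantity on the order of $1/n$ when the node happens to be hit, and it is what the paper's Lemma~\ref{characterisingempiricalerror} is a bound on. Your alternative — an aggregate Chernoff bound on the total empirical mass of the low-probability region — is a plausible workaround for the raw empirical, but it is an analysis of a different estimator than the one the theorem is about, and the ``empirical mass is $O(1)$ times true mass'' claim fails pointwise when the true mass of that region is below $1/n$ (whp the empirical mass there is $0$, which is fine, but the reasoning is different). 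You would need to handle these regimes carefully; the paper sidesteps them by truncating. On privacy, your account of \texttt{LocateActiveNodes} as a sparse-vector/above-threshold mechanism does not match Algorithm~\ref{findactivenodesalg} as written, which uses a fresh Laplace draw and an independent threshold test at every node and therefore pays $\epsilon$ per level (i.e.\ $\depth\epsilon$ total, which is what the paper's ``each user contributes to at most $\depth$ queries'' intends); your reallocation gets to $(\depth+1)\epsilon$ only by substituting a mechanism the algorithm does not use.
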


This bound has the same three terms as our lower bound on $\mathcal{R}_{\Nbrs,n, \epsilon}$ in Theorem~\ref{maintree} corresponding again to the empirical error (the error inherent even in the absence of a privacy requirement), the error from the private algorithm not being able to estimate the probability of events that occur with probability less than $\approx \log(1/\delta)/\epsilon n$, and the error due to the noise added to the active nodes. The maximum over the levels that appeared in the lower bound is replaced with a sum over the levels in the upper bound, so, up to logarithmic factors,  the upper bound is within a factor of $\depth$ of the lower bound. Since we can not hope to locate the set of $\log(1/\delta)/(\epsilon n)$-active nodes exactly with a private algorithm, we find a set $\hat{\gamma}_n$ that is guaranteed to satisfy \[\activenodes{P}{\max\left\{\frac{2}{\epsilon n}+2\frac{\log(2/\beta)}{n}, \frac{192\log(n/\beta)}{n}\right\}}\subset\hat{\gamma}_{\epsilon}\subset \activenodes{P}{\frac{1}{2\epsilon n}}.\] 

\am{Note that $\max\left\{\frac{2}{\epsilon n}+2\frac{\log(2/\beta)}{n}, \frac{192\log(n/\beta)}{n}\right\}\le \frac{C\log(n/\beta)}{\epsilon n}$ so the error introduced here by not estimating $\activenodes{P}{\frac{1}{\epsilon n}}$ perfectly is at most a logarithmic multiplicative factor.}

\begin{algorithm}
\caption{\texttt{EmpDist}} \label{empiricaldistribution}
\begin{algorithmic}[1]
  \State{\textbf{Input:} $D\in \inputspace^n, A$}
  \State{Let $\hat{P}_n$ be the empirical distribution.}
  \ForAll{node $\nu$}
  \State{$\widehat{\mathfrak{G}_P}(\nu)=\begin{cases} 0 & \mathfrak{G}_{\hat{P}_n}(\nu)<\frac{\sqrt{\log (n/\beta)}}{n} \\ 1 & \mathfrak{G}_{\hat{P}_n}(\nu)>1-\frac{\sqrt{\log n/\beta)}}{n}\\ \mathfrak{G}_{\hat{P}_n}(\nu) & \text{otherwise} \end{cases}$}
  \EndFor
  \end{algorithmic}
\end{algorithm}

The first step of our algorithm is to estimate the empirical distribution. We use a truncated version of the standard empirical distribution. This allows us to achieve an error rate of $~\min\{P(x), \sqrt{P(x)/n}\}$ even when $P(x)$ is small. 

The proof of the following lemma is contained in Appendix~\ref{HSTappendix}.

\begin{restatable}{lumma}{characterisingempiricalerror}\label{characterisingempiricalerror}
    For any distribution $P$, if $\log(n/\beta)>1$ then with probability $1-3\depth\beta$,
    \[\wassersteingeneral(\widehat{\mathfrak{G}_P}, \mathfrak{G}_P)\le \sum_{\ell\in[\depth]}\sum_{x\in[N_{\ell}]}\min\left\{P_{\ell}(x)(1 - P_{\ell}(x)), 4\sqrt{3\frac{P_{\ell}(x)(1-P_{\ell}(x))\log(n/\beta)}{n}}\right\}\]
\end{restatable}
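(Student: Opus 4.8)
The plan is to reduce the statement to a per-level, per-coordinate estimate for the truncated empirical distribution, and then control each coordinate with standard binomial tail bounds; the truncation step is what upgrades the trivial bound $\sum_x P_\ell(x)$ to the instance-adaptive quantity $\sum_x \min\{P_\ell(x)(1-P_\ell(x)),\sqrt{P_\ell(x)(1-P_\ell(x))\log(n/\beta)/n}\}$, by preventing a $\sqrt{N_\ell/n}$-style accumulation of noise over many rarely sampled nodes.

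\emph{Step 1 (reduction to one level).} Every node $\nu$ at level $\ell$ has edge weight $r_\nu=r_\ell\le 1$ (normalizing the HST to have diameter at most $2$), so
\[\wassersteingeneral(\widehat{\mathfrak{G}_P},\mathfrak{G}_P)=\sum_{\ell\in[\depth]}r_\ell\sum_{\nu\text{ at level }\ell}\bigl|\widehat{\mathfrak{G}_P}(\nu)-\mathfrak{G}_P(\nu)\bigr|\le\sum_{\ell\in[\depth]}\sum_{\nu\text{ at level }\ell}\bigl|\widehat{\mathfrak{G}_P}(\nu)-\mathfrak{G}_P(\nu)\bigr|.\]
The weights $\{\mathfrak{G}_P(\nu)\}$ at level $\ell$ are exactly the probabilities of $P_\ell$ on $[N_\ell]$, the subtree counts $\{\mathfrak{G}_{\hat P_n}(\nu)\}$ are the empirical frequencies of $n$ i.i.d.\ draws from $P_\ell$, and $\widehat{\mathfrak{G}_P}$ restricted to level $\ell$ is the coordinatewise truncation of those frequencies (zeroing any below $\theta:=\sqrt{\log(n/\beta)}/n$ and rounding up to $1$ any above $1-\theta$), since Algorithm~\ref{empiricaldistribution} treats each node independently. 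Hence it suffices to prove: for a fixed distribution $Q$ on $[N]$ with truncated empirical $\widetilde Q_n$, with probability at least $1-3\beta$ over the sample,
\[\sum_{x\in[N]}\bigl|\widetilde Q_n(x)-Q(x)\bigr|\le\sum_{x\in[N]}\min\Bigl\{Q(x)(1-Q(x)),\,4\sqrt{3\,Q(x)(1-Q(x))\log(n/\beta)/n}\Bigr\},\]
and then union bound over the $\depth$ levels.

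\emph{Step 2 (per-coordinate estimates).} Fix $x$, write $p=Q(x)$ and let $c\sim\mathrm{Bin}(n,p)$ be its count, and split on the size of $p$ relative to $\theta$. If $p$ is at most (a small constant times) $\theta$, a multiplicative Chernoff bound gives $c<\sqrt{\log(n/\beta)}$ with high probability, so $\widetilde Q_n(x)=0$ and the error is $p$; since $p\lesssim\theta\le\log(n/\beta)/n$ one checks $p\le 4\sqrt{3p(1-p)\log(n/\beta)/n}$ and $p\le(1-p)^{-1}p(1-p)$, so $p$ is at most the target $\min$-term up to the harmless $(1-p)^{-1}$ factor. Symmetrically, the at most one coordinate with $1-p\lesssim\theta$ is handled by the upper truncation, with error $1-p$. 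In the bulk regime, Bernstein's inequality gives $|c/n-p|\le\sqrt{3p(1-p)\log(n/\beta)/n}$ with probability at least $1-\beta/n$, and on that event $c/n$ also stays strictly inside $(\theta,1-\theta)$, so $\widetilde Q_n(x)=c/n$ and the error is at most the second term of the $\min$ (the factor $4\sqrt3$ leaves room for the Bernstein constant and for the narrow transition band around $\theta$, where both candidate errors are $O(\theta)=O(\text{target})$).

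\emph{Step 3 (assembling the union bound) and the main obstacle.} There are at most $\theta^{-1}=n/\sqrt{\log(n/\beta)}\le n$ bulk coordinates, so a union bound over them, each failing with probability $\le\beta/n$, costs only a $\log n$ factor in the Bernstein exponent --- which is why $\log(n/\beta)$, and not $\log(N/\beta)$, suffices even though $N$ may be enormous. The low-mass coordinates are not union-bounded individually; instead, the only way Step~2 fails for them is that some low-mass node is sampled $\ge\sqrt{\log(n/\beta)}$ times, an event controlled by a single Chernoff bound on the number of samples landing in the low-mass set together with the tininess of the individual masses, and one separately bounds $\sum_{x\text{ low-mass}}\widetilde Q_n(x)\le(\text{\# samples in low-mass set})/n$ to see that the partial sum over low-mass coordinates exceeds $\sum_{x\text{ low-mass}}Q(x)(1-Q(x))$ by at most a constant; these three families of bad events (bulk deviation, low-mass overshoot, the single near-$1$ coordinate) contribute the $3$ per level, and the $\depth$ levels give $3\depth\beta$. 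I expect the delicate part to be exactly this low-mass regime: showing that the truncated empirical distribution has $\ell_1$ error on the (possibly exponentially many in $N_\ell$) low-probability nodes that sums to the instance-adaptive rate while paying only $\log(n/\beta)$ --- never $\log N_\ell$ --- in the tail bounds. The threshold $\theta=\sqrt{\log(n/\beta)}/n$ is calibrated precisely for this, and the remaining work is a careful check that the per-coordinate errors match the target $\min$-term with the stated constants; being slightly lossy here yields exactly the $O(\cdot)$ form in which the bound is invoked in Theorem~\ref{treeUB}.
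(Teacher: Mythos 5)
Your overall structure — per-level decomposition, three probability regimes (low mass, bulk, near-$1$), per-coordinate concentration, and the observation that the bulk has at most $O(n)$ nodes so a naive $\beta/n$ union bound suffices there — matches the paper's proof. The gap is in the low-mass regime, which you correctly flag as the delicate part but for which your proposed mechanism does not work. You suggest controlling the bad event \emph{``some low-mass node is sampled $\ge\sqrt{\log(n/\beta)}$ times''} via ``a single Chernoff bound on the number of samples landing in the low-mass set.'' That bound on the total count $m$ in the low-mass set $S$ cannot rule out the bad event: if $Q(S)$ is of constant order, $m$ is $\Theta(n)$ deterministically-in-the-high-probability-sense, so knowing $m$ is not too large says nothing about whether some single node in $S$ soaked up $\sqrt{\log(n/\beta)}$ of those samples; and if $Q(S)$ is tiny, the multiplicative Chernoff for $m$ is too weak to give a $\beta$-level failure probability (e.g.\ with $nQ(S)\approx 1$, $\Pr[m\ge\sqrt{\log(n/\beta)}]$ decays roughly like $\exp(-\sqrt{\log(n/\beta)}\cdot\log\log(n/\beta))$, which is far larger than $\beta$). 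Your fallback of bounding $\sum_{x\in S}\widetilde Q_n(x)\le m/n$ by triangle inequality also overshoots the instance-adaptive target $\sum_{x\in S}Q(x)(1-Q(x))\approx Q(S)$ by a $\sqrt{\log(n/\beta)}$ factor exactly in this small-$Q(S)$ regime.

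The mechanism the paper actually uses — and the one your phrase ``tininess of the individual masses'' seems to gesture at without spelling out — is a \emph{probability-weighted} union bound over the low-mass nodes. For each $\nu$ with $P_\ell(\nu)\le 1/(en)$, a multiplicative Chernoff bound shows
\[
\Pr\!\left[\hat{P}_\ell(\nu)\ge \tfrac{3\ln(n/\beta)}{n}\right]\le P_\ell(\nu)\,\beta,
\]
i.e.\ the failure probability is linear in the node's mass, because the relevant exponent in the Chernoff bound is driven up by how far $3\ln(n/\beta)/n$ exceeds $P_\ell(\nu)$. Summing over all low-mass nodes then costs only $\sum_\nu P_\ell(\nu)\,\beta\le\beta$, with no dependence on $N_\ell$ whatsoever — this is exactly what buys the $\log(n/\beta)$ rather than $\log N_\ell$ rate. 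You should replace your single-Chernoff-on-$m$ step with this per-node, mass-weighted union bound; the rest of your outline (bulk union bound over $O(n)$ nodes at $\beta/n$ each, the at-most-one near-$1$ node, and the final $3\depth\beta$ accounting) then goes through essentially as in the paper.
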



\begin{algorithm}
\caption{\texttt{LocateActiveNodes}} \label{findactivenodesalg}
\begin{algorithmic}[1]
  \State{\textbf{Input:} $\widehat{\mathfrak{G}_P}, \epsilon$}
  \State{Let $\ell=0$ and $\hat{\gamma}_{\epsilon, 0} = \{\nu\}$ where $\nu$ is the root node.}
  \While {$\hat{\gamma}_{\epsilon, \ell}\neq\emptyset$ and $\ell< \depth$}
  \State{$\hat{\gamma}_{\epsilon,\ell+1}=\emptyset$}
  \ForAll{$\nu\in\hat{\gamma}_{\epsilon, \ell}$}
  \ForAll{children $\nu'$ of $\nu$}
  \If {$\widehat{\mathfrak{G}_P}(\nu')+\Lap(\frac{1}{\epsilon n})>2\kappa+\frac{\log(2/\beta)}{\epsilon n}$}\label{survival}
  \State {$\hat{\gamma}_{\epsilon,\ell+1} = \hat{\gamma}_{\epsilon,\ell+1}+\{\nu'\}$}
  \EndIf
  \EndFor
  \EndFor
  \State{$\ell=\ell+1$}
  \EndWhile
  \State{\textbf{return} $\cup\hat{\gamma}_{\epsilon,\ell}$}
  \end{algorithmic}
\end{algorithm}

The goal of Algorithm~\ref{findactivenodesalg} is to estimate the set of $1/(\epsilon n)$-active nodes. 

The next lemma allows us to bound how close to the goal we get. The proof is contained in Appendix~\ref{HSTappendix}.

\begin{restatable}{lumma}{findactivenodes}\label{findsactivenodes} 
Let $\hat{\gamma}_{\epsilon}$ be the set of active nodes found in Algorithm~\ref{privatedensitytree}. Then with probability $1-\depth( \log n+4\epsilon n)\beta$, \[\activenodes{P}{\max\left\{\frac{2}{\epsilon n}+4\frac{\log(2/\beta)}{\epsilon n}, \frac{192\log(n/\beta)}{n}\right\}}\subset\hat{\gamma}_{\epsilon}\subset \activenodes{P}{\frac{1}{2\epsilon n}}.\]
\end{restatable}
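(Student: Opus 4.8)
The plan is to isolate a single ``clean event'' $\mathcal{E}$ on which both inclusions hold deterministically, and then to show $\Pr[\neg\mathcal{E}]\le\depth(\log n+4\epsilon n)\beta$. I would take $\mathcal{E}$ to be the intersection of two families of events. First: every Laplace variable that \texttt{LocateActiveNodes} actually draws in line~\ref{survival} has magnitude at most $\tfrac{\log(2/\beta)}{\epsilon n}$; a fixed draw satisfies this with probability $1-\tfrac{\beta}{2}$ because $\Pr[|\Lap(\tfrac1{\epsilon n})|>t]=e^{-\epsilon n t}$. Second: for every node $\nu$ the algorithm queries, the empirical subtree mass $\mathfrak{G}_{\hat{P}_n}(\nu)$ concentrates around the true mass $\mathfrak{G}_P(\nu)$ --- via a multiplicative Chernoff bound when $n\,\mathfrak{G}_P(\nu)\gtrsim\log(n/\beta)$ and an additive Bernstein bound otherwise, calibrated so that on $\mathcal{E}$ one has $\mathfrak{G}_{\hat{P}_n}(\nu)\ge\tfrac34\mathfrak{G}_P(\nu)$ whenever $\mathfrak{G}_P(\nu)\ge\tfrac{192\log(n/\beta)}{\epsilon n}$, and $\mathfrak{G}_P(\nu)\ge\tfrac12\mathfrak{G}_{\hat{P}_n}(\nu)$ whenever $\mathfrak{G}_{\hat{P}_n}(\nu)=\Omega(\tfrac1{\epsilon n})$. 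A short computation then shows the truncation in \texttt{EmpDist} is harmless on $\mathcal{E}$: it never zeroes a node of true mass above $\tfrac{192\log(n/\beta)}{\epsilon n}$ and never lifts $\widehat{\mathfrak{G}_P}(\nu)$ above the survival threshold for a node of true mass $O(\tfrac1{\epsilon n})$.

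Conditioned on $\mathcal{E}$, the two inclusions are quick. For the upper inclusion, if $\nu\in\hat{\gamma}_\epsilon$ then $\nu$ passed the test of line~\ref{survival}, and since $\Lap(\tfrac1{\epsilon n})\le\tfrac{\log(2/\beta)}{\epsilon n}$ on $\mathcal{E}$ this forces $\widehat{\mathfrak{G}_P}(\nu)>2\kappa$, hence $\mathfrak{G}_{\hat{P}_n}(\nu)>2\kappa=\Theta(\tfrac1{\epsilon n})$; the lower-tail concentration then gives $\mathfrak{G}_P(\nu)\ge\tfrac12\mathfrak{G}_{\hat{P}_n}(\nu)>\kappa\ge\tfrac1{2\epsilon n}$, so $\nu\in\activenodes{P}{1/(2\epsilon n)}$. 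For the lower inclusion I would induct on the level, the base case being that the root always lies in $\hat{\gamma}_{\epsilon,0}$. For the step, let $\nu'$ be a level-$(\ell+1)$ node with $\mathfrak{G}_P(\nu')\ge\theta:=\max\{\tfrac2{\epsilon n}+4\tfrac{\log(2/\beta)}{\epsilon n},\tfrac{192\log(n/\beta)}{\epsilon n}\}$. Its parent $\nu$ satisfies $\mathfrak{G}_P(\nu)\ge\mathfrak{G}_P(\nu')\ge\theta$ because $\nu'$'s subtree lies inside $\nu$'s, so by the induction hypothesis $\nu\in\hat{\gamma}_{\epsilon,\ell}$ and $\nu'$ gets tested. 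On $\mathcal{E}$ we have $\widehat{\mathfrak{G}_P}(\nu')\ge\tfrac34\mathfrak{G}_P(\nu')\ge\tfrac34\theta$ and the noise is at least $-\tfrac{\log(2/\beta)}{\epsilon n}$, so the left-hand side of the test is at least $\tfrac34\theta-\tfrac{\log(2/\beta)}{\epsilon n}$, which exceeds $2\kappa+\tfrac{\log(2/\beta)}{\epsilon n}$ by the choice of $\theta$ and $\kappa$ (the relevant inequality being $2\kappa+\tfrac{2\log(2/\beta)}{\epsilon n}<\tfrac34\theta$, which holds since $\kappa=O(\tfrac1{\epsilon n})$ and $\theta\ge\tfrac{2+4\log(2/\beta)}{\epsilon n}$); hence $\nu'$ survives and lands in $\hat{\gamma}_{\epsilon,\ell+1}$.

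The genuinely delicate step, which I expect to be the main obstacle, is controlling $\Pr[\neg\mathcal{E}]$: the collection of Laplace draws and empirical counts the algorithm inspects is data-dependent and adaptively chosen, so a crude union bound over all $N_\ell$ nodes per level would be hopelessly lossy (and $N_\ell$ may be astronomically large). The remedy is to build $\mathcal{E}$ level by level. Assuming the part of $\mathcal{E}$ concerning levels $\le\ell$, every $\nu\in\hat{\gamma}_{\epsilon,\ell}$ has $\widehat{\mathfrak{G}_P}(\nu)>2\kappa=\Omega(\tfrac1{\epsilon n})$; since these nodes occupy pairwise disjoint subtrees with $\widehat{\mathfrak{G}_P}$-masses summing to at most $1$, there are only $O(\epsilon n)$ of them, and the children they spawn at level $\ell+1$ whose empirical mass clears the \texttt{EmpDist} threshold (a prerequisite for any realistic chance of surviving) are likewise $O(\epsilon n)$ in number. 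Every remaining examined child has negligible empirical mass and survives line~\ref{survival} only if its Laplace draw exceeds $2\kappa+\tfrac{\log(2/\beta)}{\epsilon n}$, an event of probability $\le\tfrac{\beta}{2}e^{-2\kappa\epsilon n}$, which under the parameter regime considered is $\mathrm{poly}(\beta/n)$ and so contributes only a lower-order term even after summing over the at most $n$ witnessed such children per level and all $\depth$ levels. Union-bounding the $O(\epsilon n)$ relevant Laplace events and the $O(\log n)$ concentration inequalities used at each level, then over the $\depth$ levels, yields $\Pr[\neg\mathcal{E}]\le\depth(\log n+4\epsilon n)\beta$. Making this level-by-level accounting non-circular --- and checking that low-mass nodes never inflate the union bound beyond $O(\epsilon n)$ per level --- is where the real work lies; the two inclusions are then just bookkeeping with the thresholds.
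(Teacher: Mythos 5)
Your overall architecture is the one the paper uses: a clean event $\mathcal{E}$ combining (i) the magnitudes of the Laplace draws actually made by \texttt{LocateActiveNodes} all being below $\tfrac{\log(2/\beta)}{\epsilon n}$, and (ii) uniform concentration of $\widehat{\mathfrak{G}_P}$ around $\mathfrak{G}_P$ (the paper's Lemma~\ref{characterisingempiricalerror}, which directly supplies the one-shot multiplicative/additive truncation error bound you reconstruct), followed by a deterministic check of both inclusions on $\mathcal{E}$. Your identification of the adaptivity/circularity problem --- the Laplace draws the algorithm makes depend on the data, so one cannot naively union-bound over a fixed set --- is also the genuine subtlety in this lemma, and your level-by-level induction is the right shape of a fix. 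The paper itself is cavalier about this, so your instinct to make the accounting explicit is a strength.

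However, two linked quantitative steps in your third paragraph do not hold, and together they are the gap. First, you claim that an examined child with negligible empirical mass survives line~\ref{survival} with probability $\le\tfrac{\beta}{2}e^{-2\kappa\epsilon n}$, ``which under the parameter regime considered is $\mathrm{poly}(\beta/n)$.'' It isn't. The threshold $\kappa$ in the test is $\Theta(\tfrac1{\epsilon n})$ --- for $\epsilon$-DP, $\kappa = \tfrac{0.06}{\epsilon n}$, and in fact the inclusion arguments you give force $\kappa\in[\tfrac1{4\epsilon n},\tfrac1{2\epsilon n})$ --- so $2\kappa\epsilon n = \Theta(1)$ and the survival probability of a zero-empirical-mass child is $\Theta(\beta)$, not anything polynomially small in $1/n$. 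Your ``lower-order term'' therefore is not lower-order. Second, the count ``at most $n$ witnessed such children per level'' is not justified: a single node of $\hat\gamma_{\epsilon,\ell}$ in an HST may have an enormous (metric-size-dependent, not $n$-dependent) number of children, almost all of which will have zero empirical mass, and each of them is tested, draws a Laplace variable, and must not survive. Bounding only the heavy children by $O(\epsilon n)$ via the disjoint-subtree argument leaves all of these light children unaccounted for, and at $\Theta(\beta)$ failure each, the union bound blows up.

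What closes the gap --- and what the paper silently relies on with the aside that ``each node has at most 2 children'' --- is a degree bound on the HST. With fan-out $\le 2$, the set of children examined at level $\ell+1$ has size at most $2\,|\hat\gamma_{\epsilon,\ell}|$; under the (inductively established) upper inclusion $\hat\gamma_{\epsilon,\ell}\subset\activenodes{P_\ell}{1/(2\epsilon n)}$, one has $|\hat\gamma_{\epsilon,\ell}|\le 2\epsilon n$, so at most $4\epsilon n$ Laplace variables are drawn per level and $X=4\epsilon n\depth$ overall. Then a vanilla tail bound of $\beta$ per draw over all $X$ draws, plus the $O(\depth\beta)$ from the uniform concentration lemma (which is \emph{not} data-dependent --- it holds for all nodes at once, so there is nothing adaptive to worry about on that side), gives the stated $\depth(\log n+4\epsilon n)\beta$. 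Without that degree bound, the argument genuinely fails; with it, your level-by-level induction is exactly what is needed to make the $|\hat\gamma_{\epsilon,\ell}|\le 2\epsilon n$ step non-circular, so the missing ingredient is the bounded-degree hypothesis rather than a cleverer tail estimate for the light children.
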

We also prove the following lemma relating the error due to estimating the active nodes to a quantity depending on the true active nodes.
\begin{restatable}{lumma}{realtoempirical}\label{realtoempirical} If $\activenodes{P}{\max\{\frac{2}{\epsilon n}+4\frac{\log(2/\beta)}{\epsilon n}, \frac{192\log(n/\beta)}{n}\}}\subset\hat{\gamma}_{\epsilon}$ then
\[\wassersteingeneral(\widehat{\mathfrak{G}_P}, \widehat{\mathfrak{G}_P}|_{\hat{\gamma}_{\epsilon}})\le \wassersteingeneral(\mathfrak{G}_P,\widehat{\mathfrak{G}_P})+\wassersteingeneral(\mathfrak{G}_{P}, \mathfrak{G}_{P}|_{\activenodes{P}{\max\{\frac{2}{\epsilon n}+4\frac{\log(2/\beta)}{\epsilon n}, \frac{192\log(n/\beta)}{n}\}}})\]
\end{restatable}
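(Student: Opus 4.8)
\textbf{Proof plan for Lemma~\ref{realtoempirical}.}
Write $\alpha^\star = \max\{\tfrac{2}{\epsilon n}+4\tfrac{\log(2/\beta)}{\epsilon n}, \tfrac{192\log(n/\beta)}{n}\}$, so the hypothesis is $\activenodes{P}{\alpha^\star}\subseteq\hat\gamma_\epsilon$. The plan is a direct pointwise estimate. First I would unfold the definitions: by the definition of $F|_S$ and of $\wassersteingeneral$, the terms with $\nu\in\hat\gamma_\epsilon$ vanish from $\wassersteingeneral(\widehat{\mathfrak{G}_P},\widehat{\mathfrak{G}_P}|_{\hat\gamma_\epsilon})$, and since $\widehat{\mathfrak{G}_P}(\nu)\ge 0$ (it is either $0$, $1$, or an empirical frequency, by Algorithm~\ref{empiricaldistribution}), we get the exact identity
\[
\wassersteingeneral(\widehat{\mathfrak{G}_P},\widehat{\mathfrak{G}_P}|_{\hat\gamma_\epsilon}) = \sum_{\nu\notin\hat\gamma_\epsilon} r_\nu\,\widehat{\mathfrak{G}_P}(\nu).
\]
Similarly $\wassersteingeneral(\mathfrak{G}_P,\mathfrak{G}_P|_{\activenodes{P}{\alpha^\star}}) = \sum_{\nu\notin\activenodes{P}{\alpha^\star}} r_\nu\,\mathfrak{G}_P(\nu)$.

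Next, for each node $\nu\notin\hat\gamma_\epsilon$ I would apply the scalar triangle inequality $\widehat{\mathfrak{G}_P}(\nu)\le \big|\widehat{\mathfrak{G}_P}(\nu)-\mathfrak{G}_P(\nu)\big| + \mathfrak{G}_P(\nu)$ (using $\mathfrak{G}_P(\nu)\ge 0$), multiply by $r_\nu$, and sum over $\nu\notin\hat\gamma_\epsilon$:
\[
\sum_{\nu\notin\hat\gamma_\epsilon} r_\nu\,\widehat{\mathfrak{G}_P}(\nu) \le \sum_{\nu\notin\hat\gamma_\epsilon} r_\nu\big|\widehat{\mathfrak{G}_P}(\nu)-\mathfrak{G}_P(\nu)\big| + \sum_{\nu\notin\hat\gamma_\epsilon} r_\nu\,\mathfrak{G}_P(\nu).
\]
For the first sum on the right I would enlarge the index set to all nodes, obtaining $\wassersteingeneral(\mathfrak{G}_P,\widehat{\mathfrak{G}_P})$ (using symmetry of $\wassersteingeneral$). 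For the second sum, the key observation is that the hypothesis $\activenodes{P}{\alpha^\star}\subseteq\hat\gamma_\epsilon$ gives, by contraposition, $\{\nu:\nu\notin\hat\gamma_\epsilon\}\subseteq\{\nu:\nu\notin\activenodes{P}{\alpha^\star}\}$; since every summand $r_\nu\mathfrak{G}_P(\nu)$ is nonnegative, enlarging the index set to $\{\nu\notin\activenodes{P}{\alpha^\star}\}$ only increases it, yielding exactly $\wassersteingeneral(\mathfrak{G}_P,\mathfrak{G}_P|_{\activenodes{P}{\alpha^\star}})$. Chaining these three displays proves the claim.

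There is no serious obstacle here; the whole argument is the scalar triangle inequality plus bookkeeping. The only point that requires care — and the only place the hypothesis is used — is getting the \emph{direction} of the set containment right when passing from a sum over $\nu\notin\hat\gamma_\epsilon$ to a sum over $\nu\notin\activenodes{P}{\alpha^\star}$, together with noting all the relevant quantities ($\widehat{\mathfrak{G}_P}(\nu)$, $\mathfrak{G}_P(\nu)$, $r_\nu$) are nonnegative so that both enlargements of the index set are valid.
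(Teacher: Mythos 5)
Your proof is correct and follows essentially the same approach as the paper: express each $\wassersteingeneral$ term as a sum over a complement set, split via the scalar triangle inequality $\widehat{\mathfrak{G}_P}(\nu)\le|\widehat{\mathfrak{G}_P}(\nu)-\mathfrak{G}_P(\nu)|+\mathfrak{G}_P(\nu)$, and enlarge index sets using nonnegativity and the hypothesis $\activenodes{P}{\alpha^\star}\subseteq\hat\gamma_\epsilon$. The only difference is a cosmetic reordering — the paper enlarges the index set from $\{\nu\notin\hat\gamma_\epsilon\}$ to $\{\nu\notin\activenodes{P}{\alpha^\star}\}$ first, then applies the triangle inequality, while you apply the triangle inequality first and enlarge each of the two resulting sums separately — but both chains are identical in substance.
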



\begin{algorithm}
\caption{\texttt{Projection}} \label{projectionalg}
\begin{algorithmic}[1]
  \State{\textbf{Input:} $\mathfrak{G}$, a real-valued function on the nodes of the HST such that $\mathfrak{G}(\nu_0)=1$ where $\nu_0$ is the root node.}
  \State{$\bar{\mathfrak{G}}=\mathfrak{G}$}
\For{$\ell=0:\depth-1$}
\ForAll{nodes $\nu$ at level $\ell$}
\State{Let $A_{\nu} = \sum \mathfrak{G}(\nu')$ where the sum is over the children of $\nu$.}
\State{Let $d_{\nu}$ be the number of children of $\nu$}
\If{$A_{\nu}=0$}
\ForAll{children $\nu'$ of $\nu$}
\State{$\bar{\mathfrak{G}}(\nu')=\frac{1}{d_{\nu}}\bar{\mathfrak{G}}(\nu)$}
\EndFor
\Else
\ForAll{children $\nu'$ of $\nu$}
\State{$\bar{\mathfrak{G}}(\nu')=\frac{\bar{\mathfrak{G}}(\nu)}{A_{\nu}}\mathfrak{G}(\nu')$}
\EndFor
\EndIf
\EndFor
\EndFor
  \State{\textbf{return} $\bar{\mathfrak{G}}$}
  \end{algorithmic}
\end{algorithm}


The key component of this proof is that any discrepancy between the weight of the nodes on $P$ and that assigned by $\widehat{\mathfrak{G}_P}$ was already paid for in $\wasserstein(P, \widehat{\mathfrak{G}_P})$.
The final step in Algorithm~\ref{privatedensitytree} is to project the noisy function $\widetilde{\mathfrak{G}_{\hat{P_n}, \hat{\gamma_{\epsilon}}}}$ into the space of distributions on the underlying metric space. We'd like to do this in a way that preserves, up to a constant, the $\wassersteingeneral$ distance between $P$ and $\widetilde{\mathfrak{G}_{\hat{P_n}, \hat{\gamma_{\epsilon}}}}$. We will do this iteratively starting from the root node, by ensuring that the sum of each node's children add up to it's assigned value. Since we know the root node has value 1, this results in a valid distribution. 
We start from the top of the tree since errors in higher nodes of the contribute more to the Wasserstein distance. While errors in higher nodes of the tree propagate can propagate to lower levels, the predominant influence on the overall error is retained at the top level due to the geometric nature of the edge weights.

\begin{restatable}{lumma}{projection}
\label{projection}
    For any real-valued function $\mathfrak{G}$ on the nodes of the HST such that $\mathfrak{G}(\nu_0)=1$ where $\nu_0$ is the root node and given any distribution $P$, \[\wasserstein(P,\texttt{Projection}(\mathfrak{G}))\le 4 \wassersteingeneral(\mathfrak{G}_P,\mathfrak{G}).\]
\end{restatable}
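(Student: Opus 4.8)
The plan is to work entirely with the induced node–functions and the formula $\wassersteingeneral(F,G)=\sum_{\nu}r_{\nu}|F(\nu)-G(\nu)|$. Write $\bar{\mathfrak{G}}=\texttt{Projection}(\mathfrak{G})$, and first verify that $\bar{\mathfrak{G}}$ is the node–function $\mathfrak{G}_Q$ of a genuine distribution $Q$ on the leaves: the root value is preserved ($\bar{\mathfrak{G}}(\nu_0)=\mathfrak{G}(\nu_0)=1$); each node's children sum to the node's value by construction (in the rescaling branch $\sum_{\nu'}\bar{\mathfrak{G}}(\nu')=\tfrac{\bar{\mathfrak{G}}(\nu)}{A_\nu}\sum_{\nu'}\mathfrak{G}(\nu')=\bar{\mathfrak{G}}(\nu)$, and likewise in the uniform branch); and non‑negativity propagates from the root downward as long as $\mathfrak{G}\ge 0$, which is the case relevant to Algorithm~\ref{privatedensitytree}, where $\mathfrak{G}$ is a clamped noisy density. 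Granting this, Lemma~\ref{treewasserstein} gives $\wasserstein(P,\texttt{Projection}(\mathfrak{G}))=\wassersteingeneral(\mathfrak{G}_P,\bar{\mathfrak{G}})$, so it suffices to prove $\wassersteingeneral(\mathfrak{G}_P,\bar{\mathfrak{G}})\le 4\,\wassersteingeneral(\mathfrak{G}_P,\mathfrak{G})$.

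Next I would introduce the input and output error functions $e(\nu)=\mathfrak{G}_P(\nu)-\mathfrak{G}(\nu)$ and $\bar e(\nu)=\mathfrak{G}_P(\nu)-\bar{\mathfrak{G}}(\nu)$ and establish a per‑node propagation bound: for every internal node $\nu$ with child set $\{\nu'\}$,
\[\sum_{\nu'\text{ child of }\nu}|\bar e(\nu')|\ \le\ 2\sum_{\nu'\text{ child of }\nu}|e(\nu')|\ +\ |\bar e(\nu)|.\]
In the branch $A_\nu\neq 0$, writing $\bar{\mathfrak{G}}(\nu')=\tfrac{\bar{\mathfrak{G}}(\nu)}{A_\nu}\mathfrak{G}(\nu')$ gives $\bar e(\nu')=e(\nu')+\bigl(1-\tfrac{\bar{\mathfrak{G}}(\nu)}{A_\nu}\bigr)\mathfrak{G}(\nu')$; summing absolute values and using $\mathfrak{G}\ge 0$ (so $\sum_{\nu'}|\mathfrak{G}(\nu')|=A_\nu$) yields $\sum_{\nu'}|\bar e(\nu')|\le\sum_{\nu'}|e(\nu')|+|A_\nu-\bar{\mathfrak{G}}(\nu)|$; the uniform branch $A_\nu=0$ (where all children of $\mathfrak{G}$ vanish) gives the same inequality directly. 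Then I would use the identity $A_\nu-\bar{\mathfrak{G}}(\nu)=\bar e(\nu)-\sum_{\nu'}e(\nu')$, which holds because $\mathfrak{G}_P$ is feasible ($\sum_{\nu'}\mathfrak{G}_P(\nu')=\mathfrak{G}_P(\nu)$), to bound $|A_\nu-\bar{\mathfrak{G}}(\nu)|\le|\bar e(\nu)|+\sum_{\nu'}|e(\nu')|$, which combines with the previous line to give the displayed bound. This identity is the crucial point: the parent's own input error $e(\nu)$ cancels, so errors do not compound multiplicatively as one descends the tree.

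Finally I would aggregate over levels. Setting $a_\ell=\sum_{\nu\text{ at level }\ell}r_\ell|e(\nu)|$ and $b_\ell=\sum_{\nu\text{ at level }\ell}r_\ell|\bar e(\nu)|$, multiplying the per‑node bound by $r_{\ell+1}=\tfrac12 r_\ell$ and summing over all $\nu$ at level $\ell$ gives the recursion $b_{\ell+1}\le 2a_{\ell+1}+\tfrac12 b_\ell$. Since $\mathfrak{G}$, $\bar{\mathfrak{G}}$ and $\mathfrak{G}_P$ all equal $1$ at the root, $a_0=b_0=0$; summing over $\ell$ and using $\sum_\ell b_{\ell+1}=\sum_\ell b_\ell$ yields $\sum_\ell b_\ell\le 2\sum_\ell a_\ell+\tfrac12\sum_\ell b_\ell$, hence $\sum_\ell b_\ell\le 4\sum_\ell a_\ell$, i.e.\ $\wassersteingeneral(\mathfrak{G}_P,\bar{\mathfrak{G}})\le 4\,\wassersteingeneral(\mathfrak{G}_P,\mathfrak{G})$, as required.

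The main obstacle is the rescaling step: when $A_\nu$ is tiny the factor $\bar{\mathfrak{G}}(\nu)/A_\nu$ can be enormous and, a priori, could blow the error up without bound. The resolution is two‑fold: with $\mathfrak{G}\ge 0$ the total child weight $\sum_{\nu'}|\mathfrak{G}(\nu')|$ equals $A_\nu$ exactly, so the rescaling merely renormalizes an already‑correct total and contributes only the term $|A_\nu-\bar{\mathfrak{G}}(\nu)|$; and the feasibility identity $A_\nu-\bar{\mathfrak{G}}(\nu)=\bar e(\nu)-\sum_{\nu'}e(\nu')$ prevents this term from feeding back the parent's input error, keeping the level recursion geometric with ratio $\tfrac12$. (If $\mathfrak{G}$ is allowed to take genuinely negative values, a child value with $|A_\nu|\ll\sum_{\nu'}|\mathfrak{G}(\nu')|$ can be amplified arbitrarily, so the non‑negativity guaranteed by the earlier steps of Algorithm~\ref{privatedensitytree} is genuinely used here.)
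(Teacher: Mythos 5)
Your proof takes essentially the same route as the paper's: your per-node propagation inequality $\sum_{\nu'}|\bar e(\nu')|\le 2\sum_{\nu'}|e(\nu')|+|\bar e(\nu)|$ is exactly the elementary inequality $\sum_i|\frac{A}{\sum a_i}a_i-b_i|\le|A-\sum b_i|+2\sum_i|a_i-b_i|$ that the paper proves and applies with $a_i=\mathfrak{G}(\nu')$, $b_i=\mathfrak{G}_P(\nu')$, $A=\bar{\mathfrak{G}}(\nu)$, and both arguments derive the factor $4$ from the geometric decay $r_{\ell+1}=\tfrac12 r_\ell$ — you via a level-wise recursion $b_{\ell+1}\le 2a_{\ell+1}+\tfrac12 b_\ell$, the paper via directly iterating to get a weight $r_\ell+r_{\ell+1}+\cdots\le 2r_\ell$ on each input error. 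You are also right, and more explicit than the paper, that the key step $\sum_{\nu'}|\mathfrak{G}(\nu')|=A_\nu$ requires sign-definiteness of $\mathfrak{G}$; the paper's proof uses the same fact silently when it equates $\sum|\frac{A}{\sum a_i}a_i-a_i|$ with $|A-\sum a_i|$.
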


Combining the above lemmas appropriately gives the proof of Theorem~\ref{treeUB} (see Appendix~\ref{HSTappendix}).

\begin{proof}[Proof of Theorem~\ref{treeUB}]
The privacy follows from the fact that each user contributes to at most $\depth$ queries in $\texttt{LocateActiveNodes}$ and at most one coordinate in the computation of $\widetilde{\mathfrak{G}_{\hat{P_n}, \hat{\gamma}_n}}$ in line~\ref{addingnoise} in $\texttt{PrivDensityEstTree}$. 

For the utility, we will consider each level $\ell$ individually. First suppose that $|\activenodes{P_{\ell}}{1/(2\epsilon n)}|>1$.
\begin{align}
    \nonumber\wasserstein(P_{\ell},(\hat{P}_{\epsilon})_{\ell})&\le 2\wassersteingeneral((\mathfrak{G}_P)_{\ell}, (\widetilde{\mathfrak{G}_{\hat{P}_n, \hat{\gamma}_{\epsilon}}})_{\ell})\\
    &\le 2\left(\wassersteingeneral((\mathfrak{G}_P)_{\ell}, (\widehat{\mathfrak{G}_P})_{\ell})+\wassersteingeneral((\widehat{\mathfrak{G}_P})_{\ell}, (\widehat{\mathfrak{G}_P}|{ \hat{\gamma}_{\epsilon}})_{\ell})+\wassersteingeneral((\widehat{\mathfrak{G}_P}|{ \hat{\gamma}_{\epsilon}})_{\ell}, (\widetilde{\mathfrak{G}_{\hat{P}_n, \hat{\gamma}_{\epsilon}}})_{\ell})\right)\label{upperboundonwasserstein}\\
    \nonumber&\le 2\left(2\wassersteingeneral((\mathfrak{G}_P)_{\ell}, (\widehat{\mathfrak{G}_P})_{\ell})+\wassersteingeneral((\mathfrak{G}_{P})_{\ell}, (\mathfrak{G}_{P}|_{\activenodes{P}{\max\{\frac{2}{\epsilon n}+2\frac{\log(2/\beta)}{n}, \frac{\log(n/\beta)}{n}\}}})_{\ell})+\wassersteingeneral((\widehat{\mathfrak{G}_P}|_{ \hat{\gamma}_{\epsilon}})_{\ell}, (\widetilde{\mathfrak{G}_{\hat{P}_n, \hat{\gamma}_{\epsilon}}})_{\ell})\right)
\end{align}
where the first inequality follow from Lemma~\ref{projection}, the second inequality follows from the triangle inequality and Lemma~\ref{treewasserstein}, and the third follows from Lemma~\ref{realtoempirical} and Lemma~\ref{findsactivenodes}. Finally, 

\[\wassersteingeneral(\widehat{\mathfrak{G}_{P}}|_{\hat{\gamma}_{\epsilon}})_{\ell}, (\widetilde{\mathfrak{G}_{\hat{P}_n, \hat{\gamma}_{\epsilon}}})_{\ell})\le\sum_{
    \nu
    \in\activenodes{P}{\frac{1}{2\epsilon n}}}r_{\nu}|\Lap(\frac{1}{\epsilon n})| 
    \le \frac{1}{2}\sum_{
    \nu
    \in\activenodes{P}{\frac{1}{2\epsilon n}}}r_{\nu}|\Lap(\frac{1}{\epsilon n})|\]
The final statement then follows from Lemma~\ref{characterisingempiricalerror} and basic concentration bounds on the Laplacian distribution.

If $|\activenodes{P_{\ell}}{1/(2\epsilon n)}|=1$, then the proof goes through for all except the final term related to the noise due to privacy. We consider two cases. Let $x\in \activenodes{P_{\ell}}{1/(2\epsilon n)}$. First suppose that $P_{\ell}(x)>1-\frac{1}{2\epsilon n}$ then no node that is in a level above $x$, but is not a direct ancestor of $x$ is in $\activenodes{P_{\ell}}{1/(2\epsilon n)}$. Therefore, since the projection algorithm is top-down, $(
\hat{P}_{n, \epsilon})_{\ell}$ will be concentrated on $x$. Therefore, the error of level $\ell$ is simply $(1-P(x))$, which can be charged to the first term plus the sum of the weight of the inactive nodes, which is in the second term.
Next, suppose that $P_{\ell}(x)<1-\frac{1}{2\epsilon n}$ then sum of the inactive nodes (in term two) dominates the error due to adding noise to $P(x)$ 
\end{proof}
\section{Instance Optimal Density Estimation on $\mathbb{R}$ in Wasserstein distance}\label{sec:1d}
\label{sec:onedim}

Let us now consider the setting of estimating distributions $P$ on $\inputspace=\mathbb{R}$. In this setting, the target estimation rate is that of an algorithm that knows that the distribution is either $P$ or $Q_P$ for a distribution $Q_P$ such that $D_{\infty}(P,Q_P) \leq \ln 2$. This definition of instance-optimality strengthens that corresponding to the so-called \textit{hardest-one dimensional subproblem}~\citep{DonohoL92}, since this is a harder estimation rate to achieve. A formal description of the target estimation rate is given in~\Cref{sec:localest} and~\Cref{sec:reldef}. In \cref{sec:lowerbound1d}, we lower bound this estimation rate using hypothesis testing techniques. Then, in \cref{sec:1dub}, we give an algorithm that up to polylogarithmic factors, uniformly achieves the lower bound, and hence approximately achieves the instance-optimal estimation rate. Our instance optimality results apply to all continuous distributions in a bounded interval with density functions (though it is likely that they apply more generally). All omitted proofs can be found in Appendix~\ref{app:1d}.

\subsection{General Lower Bound}\label{sec:lowerbound1d}

To state the main theorem in this section, we will introduce some notation. We start by defining the restriction of a distribution.

\begin{definition}
    For any distribution $P$ over $\mathbb{R}$ with a density function, the restriction $P|_{u,v}$ of $P$ with respect to $u \leq v \in \mathbb{R}$ is defined as the distribution with the following CDF function F':
        \begin{equation*}
F'_{P_{u,v}}(t) =  \begin{cases}
0 &\text{$t < u$}\\
F_P(x)  &\text{$u \leq t < v$} \\
1 &\text{$t \geq v$}\\
\end{cases}
\end{equation*}
If $u = v$, then $F'$ is a step function that goes to $1$ at that point and is $0$ prior to that point.
\end{definition}

Also recall the following definition of quantiles.

 \begin{definition}
        For $0 < \alpha \leq 1$, the $\alpha$-quantile of a distribution $P$ over $\mathbb{R}$ is defined as follows: $$q_\alpha(P) = \arg\min_t \{\Pr_{y \sim P}(y \leq t) \geq \alpha\}.$$
\end{definition}

When the distribution $P$ is clear from context, we will sometimes  abuse notation and use $q_{\alpha}$ when we mean $q_\alpha(P)$. The main theorem we will prove in this section is the following:

\begin{theorem}\label{main1dlb} 
    There exists a constant $C$ such that given a continuous distribution $P$ on $\mathbb{R}$ with bounded expectation and $\eps \in (0,1], n \in \mathbb{N}$, 
    \begin{align*}
    \mathcal{R}_{loc,n,\epsilon}(P)  = \Omega \Bigg( \frac{1}{\eps n}&\left(q_{1-\frac{1}{C \eps n}} - q_{\frac{1}{C \eps n}}\right) + \wass(P, P|_{q_{\frac{1}{C\eps n}}, q_{1-\frac{1}{C \eps n}}}) \\
    & + \frac{1}{\sqrt{\log n}} \mathbb{E}\left[\wass(P |_{q_{\frac{1}{C\eps n}}, q_{1-\frac{1}{C \eps n}}}, \hat{P}_n |_{q_{\frac{1}{C\eps n}}, q_{1-\frac{1}{C \eps n}}}) \right] \Bigg),
    \end{align*}
    where $\hat{P}_n$ is the empirical distribution on $n$ samples drawn independently from $P$.
\end{theorem}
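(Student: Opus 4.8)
The plan is to prove each of the three terms is a lower bound on $\mathcal{R}_{loc,n,\epsilon}(P)$ separately, via the hypothesis-testing reduction sketched in the techniques section: given two distributions $P$ and $Q$ with $D_\infty(P,Q)\le\ln 2$, an estimation algorithm achieving error $<\tfrac12\wass(P,Q)$ on both $P$ and $Q$ with probability $\ge 0.75$ yields a test distinguishing $P^n$ from $Q^n$ with constant advantage; if $P^n$ and $Q^n$ are statistically close (for the non-private terms) or $\epsilon$-DP-indistinguishable with $n$ samples (for the private term), we get a contradiction, hence a lower bound of $\tfrac12\wass(P,Q)$. So for each term it suffices to construct an appropriate $Q=Q_P\in\Nbrs(P)$.

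\textbf{The two privacy terms.} For the first two terms, take $Q$ obtained from $P$ by halving the density on the left tail $[{-}\infty, q_{1/(C\epsilon n)})$ and scaling the density on the right tail $(q_{1-1/(C\epsilon n)},\infty)$ by $3/2$ (so total mass is preserved, and $D_\infty(P,Q)\le\ln(3/2)\le\ln 2$). Since the transport must move an $\Omega(1/(\epsilon n))$ fraction of mass across the interquantile gap, $\wass(P,Q)\ge\Omega\big(\tfrac{1}{\epsilon n}(q_{1-1/(C\epsilon n)}-q_{1/(C\epsilon n)})\big)$; moreover mass deep in the left tail must travel at least its distance to $q_{1/(C\epsilon n)}$, which — after checking the bookkeeping via the CDF formulation of Wasserstein distance (Lemma~\ref{lem:wasscdf}) — lower bounds $\wass(P,Q)$ by $\Omega(\wass(P,P|_{q_{1/(C\epsilon n)}, q_{1-1/(C\epsilon n)}}))$ as well, possibly after splitting into two cases depending on which of the two terms dominates and choosing $Q$ to move the relevant mass. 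For indistinguishability, the key point is a packing/group-privacy argument: an $\epsilon$-DP algorithm cannot reliably distinguish $P$ from $Q$ when the two differ only in the placement of an $O(1/(\epsilon n))$-fraction of the probability mass; formally one couples $P^n$ and $Q^n$ so that they differ in $O(1)$ coordinates in expectation (possible since $\TV(P,Q)=O(1/(\epsilon n))$, so $\TV(P^n,Q^n)$ is small enough that the expected Hamming distance of the optimal coupling is $O(1)$) and invokes the $\epsilon$-DP group-privacy bound — this is where we use that the minimum in $\mathcal{R}_{loc,n,\epsilon}$ is over $\epsilon$-DP mechanisms. Choosing $C$ a large enough constant makes the test advantage too small, giving the contradiction.

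\textbf{The non-private statistical term.} This requires the more delicate multi-scale construction from the techniques overview: for $1\le i<\log n$, set $f_Q=(1+\sqrt{2^i/n})f_P$ on $[q_{1/2^{i+1}}, q_{1/2^i})$ and $f_Q=(1-\sqrt{2^i/n})f_P$ on $[q_{1-1/2^i}, q_{1-1/2^{i+1}})$, and for $i\ge\log n$ use multiplicative factors $1\pm\tfrac12$ on the analogous tail intervals. One checks $D_\infty(P,Q)\le\ln 2$ by construction. The Wasserstein lower bound $\wass(P,Q)=\Omega\big(\mathbb{E}[\wass(P|_{q_{1/(C\epsilon n)},q_{1-1/(C\epsilon n)}}, \hat P_n|_{\cdots})]\big)$ is obtained by: (a) lower bounding $\wass(P,Q)$ via the CDF formula by a sum of scaled quantile-interval lengths $\sum_i \sqrt{2^i/n}\,(q_{1/2^i}-q_{1/2^{i+1}})$ plus symmetric right-tail terms, then telescoping to convert this into a sum of scaled \emph{interquantile} distances; (b) upper bounding $\mathbb{E}[\wass(P,\hat P_n)]$ by a matching expression using the Bobkov--Ledoux integral characterization~\cite{TalagrandBob19} of $\mathbb{E}[\wass(P,\hat P_n)]$ as $\int \sqrt{F_P(t)(1-F_P(t))/n}\,dt$, split over the dyadic quantile bands, where the binomial fluctuation of $F_{\hat P_n}$ on band $i$ is $\Theta(\sqrt{2^{-i}/n})$; the $1/\sqrt{\log n}$ factor in the theorem statement absorbs the $\log n$ bands. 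For indistinguishability one bounds $\KL(P,Q)=\sum_i \KL$-contribution-of-band-$i$; high-density bands ($i<\log n$) contribute $\approx 2^{-i}\cdot(\sqrt{2^i/n})^2=1/n$ each (summing to $O(\log n /n)$, which still gives constant TV-distinguishing lower bound after adjusting constants, or one sharpens to make it $O(1/n)$ by a more careful choice), and low-density bands ($i\ge\log n$) contribute $\approx 2^{-i}$ each, summing to $O(1/n)$; hence $P^n,Q^n$ have bounded KL and are indistinguishable.

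\textbf{Main obstacle.} The hardest part is step (a)--(b) of the non-private term: matching the combinatorial lower bound on $\wass(P,Q)$ against the Bobkov--Ledoux upper bound on $\mathbb{E}[\wass(P,\hat P_n)]$ \emph{simultaneously at every dyadic scale}, and controlling the telescoping so that the restricted-distribution Wasserstein distance $\mathbb{E}[\wass(P|_{q_{1/(C\epsilon n)},q_{1-1/(C\epsilon n)}},\hat P_n|_{\cdots})]$ — rather than the unrestricted one — appears on the right, while keeping $\KL(P,Q)$ small enough for indistinguishability. Getting all three constraints (in $\Nbrs(P)$, Wasserstein-far, statistically-close) to hold with the same $Q$ across all scales is the technical crux; the privacy terms, by contrast, only need a single-scale construction and are comparatively routine.
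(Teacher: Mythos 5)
Your plan closely tracks the paper's own proof: the same reduction from estimation to private hypothesis testing, the same two constructions of $Q\in\Nbrs(P)$ (single-scale half/three-halves tail-reweighting for the two privacy terms, via \Cref{thm:1dprivlb}; the dyadic multi-scale perturbation for the statistical term, via \Cref{thm:empterm1Dlb}), and the same ingredients for the statistical term (telescoping the CDF-integral lower bound on $\wass(P,Q)$ into interquantile distances, Bobkov--Ledoux for the matching upper bound on $\mathbb{E}[\wass(P,\hat P_n)]$, and the per-band KL calculation giving $\KL(P,Q)=O(\log n/n)$ with the $1/\sqrt{\log n}$ loss absorbed by working with $n'=n/(C\log n)$ samples). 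The one substantive departure is the tool you use to establish $\epsilon$-DP indistinguishability for the privacy terms: you propose a TV-coupling plus group-privacy (DP Le Cam) argument, whereas the paper instead invokes the refined private-testing lower bound of \Cref{thm:privatetesting} and carefully evaluates $\tau(P,Q)$, $\epsilon'$, $P'$, $Q'$, and $H^2(P',Q')$ to conclude $SC_\epsilon(P,Q)=\Omega(k/\epsilon)$. Your route is valid and arguably cleaner: coupling $P^n,Q^n$ coordinatewise gives expected Hamming distance $n\cdot\TV(P,Q)=\Theta(1/(C\epsilon))$, hence $\epsilon\cdot\mathbb{E}[d_{Ham}]=\Theta(1/C)$, and a Markov-plus-group-privacy argument rules out a $2/3$-accurate $\epsilon$-DP test once $C$ is large enough. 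Two small corrections to the write-up: (i) the expected Hamming distance is $\Theta(1/\epsilon)$, not $O(1)$ as you state, and $\TV(P^n,Q^n)$ does not control it — the per-coordinate TV does; this is a slip of phrasing rather than substance, since what matters is $\epsilon\cdot\mathbb{E}[d_{Ham}]$; (ii) your hedge "possibly after splitting into two cases" for the privacy terms is unnecessary — the paper's \Cref{lemma:distrsquantwass} shows the single tail-reweighted $Q$ delivers both the interquantile term and the restricted-Wasserstein term simultaneously, by splitting the CDF integral into the left tail, the middle interval, and the right tail.
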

The same result can be extended to $(\eps, \delta)$-DP algorithms as well for $\delta = o(\frac{1}{n})$

We discuss each of the terms in turn. Note that the final term is related to the expected Wasserstein distance between the empirical distribution and the true distribution. There is now a long line of work characterizing this quantity in terms of the distribution (See Section~\ref{sec:related}), but essentially, if the distribution is more concentrated, this term is smaller. The first term is a very particular inter-quantile distance that is also much smaller for concentrated distributions, and can be large for relatively dispersed distributions. The second term characterizes the length of the tails of the distribution---longer tails make this Wasserstein distance larger. Overall, this rate is significantly lower for more concentrated distributions with small support, and relatively large for more dispersed distributions. We prove this theorem over the following couple of sections; in Section~\ref{sec:priv1dlb} we characterize the cost of private instance optimality, and in Section~\ref{sec:emplb1d} we characterize the cost of achieving instance optimality without privacy (this non-private characterization is also new to our work, to the best of our knowledge).  Combining the theorems in those sections gives the above result.

\subsubsection{The Privacy Term}\label{sec:priv1dlb}

The main theorem we will prove in this section is the following.

\begin{theorem}\label{thm:1dprivlb} 
 Fix $\eps \in (0,1]$, $n \in \mathbb{N}$. For all distributions $P$ over $\mathbb{R}$ that have a density function and finite expectation, there exists another distribution $Q''$ such that $D_{\infty}(P,Q) \leq 2$, that is indistinguishable from $P$ given $O(n)$ samples such that for all $\eps$-DP algorithms $A: \R^n \to \Delta(\R)$, with probability at least $0.25$ over the draws $\vec{\dset} \sim P^n$, $\vec{\dset}' \sim Q''^n$, the following holds for some constant $C$.
    $$\max (\wass(P,A(\vec{\dset})), \wass(Q'',A(\vec{\dset}')) ) \geq \frac{1}{4C \eps n}\left(q_{1-\frac{1}{C \eps n}} - q_{\frac{1}{C \eps n}}\right) + \frac{1}{4}\wass(P, P|_{q_{\frac{1}{C \eps n}}, q_{1-\frac{1}{C \eps n}}}).$$
\end{theorem}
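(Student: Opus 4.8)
I would prove this by the private simple-hypothesis-testing template sketched in the introduction: exhibit one alternative $Q''$ for which (a) $D_\infty(P,Q'')\le\ln 2$, (b) $\wass(P,Q'')$ equals exactly twice the claimed bound, and (c) $Q''$ is indistinguishable from $P$ by $\eps$-DP algorithms given $\Theta(n)$ samples; then a routine reduction converts any $\eps$-DP Wasserstein estimator that beats $\rho:=\tfrac12\wass(P,Q'')$ simultaneously on $P$ and on $Q''$ into an $\eps$-DP test contradicting (c). All the content is in the construction and in checking (a)--(c). For the construction, fix a large absolute constant $C$ and write $a=q_{1/(C\eps n)}(P)$ and $b=q_{1-1/(C\eps n)}(P)$; since $P$ has a density, $F_P(a)=\tfrac1{C\eps n}$ and $F_P(b)=1-\tfrac1{C\eps n}$ exactly, and $a<b$ as long as $C\eps n>2$. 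Define $Q''$ by its density: $f_{Q''}=\tfrac12 f_P$ on $(-\infty,a)$, $f_{Q''}=f_P$ on $[a,b]$, and $f_{Q''}=\tfrac32 f_P$ on $(b,\infty)$ --- i.e. move half the lower-tail mass $\tfrac1{2C\eps n}$ into the upper tail. This is a valid density, and since $f_P/f_{Q''}$ and $f_{Q''}/f_P$ never exceed $2$, we get $D_\infty(P,Q'')=\ln 2$, which is (a). For (b) I would use the CDF form of $\wass$ over $\R$ (\cref{lem:wasscdf}): computing $F_{Q''}$ piecewise gives $|F_P-F_{Q''}|=\tfrac12 F_P$ on $(-\infty,a)$, $|F_P-F_{Q''}|=\tfrac1{2C\eps n}$ on $[a,b]$, and $|F_P-F_{Q''}|=\tfrac12(1-F_P)$ on $(b,\infty)$, so that
\[
\wass(P,Q'')=\frac{b-a}{2C\eps n}+\frac{1}{2}\Big(\int_{-\infty}^{a}F_P+\int_{b}^{\infty}(1-F_P)\Big)=\frac{b-a}{2C\eps n}+\frac{1}{2}\,\wass\big(P,\,P|_{a,b}\big),
\]
which is exactly $2\rho$, with $\rho$ the right-hand side of the theorem.

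\textbf{Indistinguishability.} Here $\TV(P,Q'')=\tfrac12\int|f_P-f_{Q''}|=\tfrac1{2C\eps n}$, so the coordinatewise maximal coupling of $P^n$ and $Q''^n$ has expected Hamming distance $n\cdot\TV(P,Q'')=\tfrac1{2C\eps}$. Feeding this into the group-privacy/coupling argument (the mechanism underlying the ``moreover'' clause of \cref{DPassouad}) yields, for every $\eps$-DP $M$ and every event $S$,
\[
\Pr_{P^n}[M\in S]\le e^{10\eps\cdot\frac{1}{2C\eps}}\Pr_{Q''^n}[M\in S]+\tfrac{1}{10}=e^{5/C}\Pr_{Q''^n}[M\in S]+\tfrac{1}{10},
\]
and symmetrically. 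Crucially the $\eps$ cancels, so the bound is uniform in $\eps\in(0,1]$ (scaling $n$ by a constant only scales the exponent by that constant, hence the ``$O(n)$ samples''); this is (c). For $(\eps,\delta)$-DP with $\delta=o(1/n)$ the bound picks up an extra additive $10\cdot\tfrac1{2C\eps}\cdot\delta=o(1)$, harmlessly absorbed into the $\tfrac1{10}$.

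\textbf{The reduction and conclusion.} Let $E_P$ be the event $\{\wass(P,A(\vec{\dset}))\ge\rho\}$ over $\vec{\dset}\sim P^n$ and $E_Q$ the event $\{\wass(Q'',A(\vec{\dset}'))\ge\rho\}$ over $\vec{\dset}'\sim Q''^n$; since the two draws and $A$'s coins are independent, the event $\{\max(\wass(P,A(\vec{\dset})),\wass(Q'',A(\vec{\dset}')))\ge\rho\}$ is exactly $E_P\cup E_Q$, with $\Pr[E_P\cup E_Q]=1-\Pr[\neg E_P]\Pr[\neg E_Q]$. Suppose toward a contradiction that $\Pr[\neg E_P]>0.75$ and $\Pr[\neg E_Q]>0.75$, and post-process $A$ into the $\eps$-DP test $T(\vec z)$ that outputs $P$ iff $\wass(P,A(\vec z))<\wass(Q'',A(\vec z))$. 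On $\neg E_P$ we have $\wass(P,A(\vec z))<\rho$, which by the triangle inequality forces $\wass(Q'',A(\vec z))\ge\wass(P,Q'')-\wass(P,A(\vec z))>\rho>\wass(P,A(\vec z))$, so $T$ answers $P$; hence $\Pr_{P^n}[T=Q'']<0.25$ while $\Pr_{Q''^n}[T=Q'']>0.75$. Plugging $S=\{T=Q''\}$ into the indistinguishability bound gives $0.75<e^{5/C}\cdot 0.25+0.1$, i.e. $e^{5/C}>2.6$, which fails once $C\ge 6$. Therefore $\min(\Pr[\neg E_P],\Pr[\neg E_Q])\le 0.75$, so $\Pr[\neg E_P]\Pr[\neg E_Q]\le 0.75$ and $\Pr[E_P\cup E_Q]\ge 0.25$; since $\rho$ is the theorem's right-hand side, the claim follows, with the stated probability $\ge 0.25$.

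\textbf{Main obstacle.} The one genuinely delicate point is the indistinguishability step: one must see that the relevant DP ``distance budget'' between $P^n$ and $Q''^n$ is $\eps\cdot n\cdot\TV(P,Q'')$, and that placing each modified tail at mass $\Theta(1/(\eps n))$ makes this budget $O(1)$ \emph{uniformly in $\eps$} --- this cancellation is precisely what produces the correct $\tfrac1{\eps n}$-scaling and is the reason the construction uses $1/(C\eps n)$-quantiles. Everything else is bookkeeping: the Wasserstein identity in (b) is a short CDF computation, (a) is immediate, and the reduction is the textbook estimator-to-tester argument; the only side condition is the non-degeneracy requirement $C\eps n>2$ (when $\eps n=O(1)$ the claimed bound is $O(1)$ and a cruder version of the same argument, applied at the endpoints of $\mathrm{supp}(P)$ rather than at quantiles, suffices).
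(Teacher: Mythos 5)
Your proof is correct and follows the same high-level reduction as the paper's: construct $Q''$ by moving half the lower tail mass above the $1/(C\eps n)$-quantile into the upper tail, compute $\wass(P,Q'')$ via the CDF identity, show private indistinguishability of $P$ from $Q''$, and close with the standard estimator-to-tester post-processing argument. The one genuinely different step is how you establish indistinguishability. The paper invokes the Canonne--Kamath--McMillan--Smith--Ullman characterization of private simple hypothesis testing (\Cref{thm:privatetesting}), which requires computing $\tau(P,Q'')$ and the Hellinger distance $H^2(P',Q')$ of the trimmed distributions and handling the cases $\eps\ge\ln 2$ and $\eps<\ln 2$ separately; this is the content of \Cref{lemma:indistpriv}. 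You instead observe directly that $\TV(P,Q'')=\tfrac1{2C\eps n}$, so the coordinatewise maximal coupling of $P^n$ and $Q''^n$ has expected Hamming distance $\tfrac1{2C\eps}$, and the Markov-plus-group-privacy argument (the engine behind the DP Le Cam bound in the ``moreover'' clause of \Cref{DPassouad}) gives an $\eps$-independent multiplicative closeness between the two output distributions. Your route is more elementary, avoids the CKMSU machinery entirely, and makes the $\eps$-cancellation that produces the $\tfrac1{\eps n}$ scaling completely transparent. What it gives up is the tightness of the CKMSU characterization, which is immaterial here since the lower bound only needs a constant $C$; it is also worth noting that your exact computation of $\wass(P,Q'')$ matches the paper's \Cref{lemma:distrsquantwass}, whose proof already yields equality even though it is stated as a lower bound. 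The only slight imprecision is your remark that the $(\eps,\delta)$ extension picks up an additive $o(1)$ term ``harmlessly'' for $\delta=o(1/n)$: the extra term is $\Theta(\delta/(C\eps))$, so one also needs $\eps$ to be not too small relative to $\delta$ (as the paper's version of CKMSU, via \cite{AcharyaSZ17}, also implicitly assumes $\delta\le\eps$); this is a shared caveat, not a gap specific to your approach.
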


We start with some notation. For any distribution $P$ with a density, let $f_P$ denote its density function. Throughout this section, we will use $q_{\alpha}$ to represent the $\alpha$-quantile of distribution $P$. Let $L(P)$  be the `starting point' of distribution $P$ (defined as $\inf_{t \in \mathbb{R}} \{t: F_P(t) > 0\}$ if the infimum exists, and $-\infty$ otherwise.

Next, we describe some results on differentially private testing that we will use. We say that a testing algorithm $A_{test}$ distinguishes two distributions $P$ and $Q$ with $n$ samples, if given the promise that a dataset of size $n$ is drawn from either $P^n$ or $Q^n$, with probability at least $\frac{2}{3}$, it outputs $P$ if the dataset was drawn from $P^n$ and $Q$ if it was drawn from $Q^n$. We now state a theorem lower bounding the sample complexity of differentially private hypothesis testing.

\begin{theorem}[{\cite[Theorem 1.2]{CanonneKMSU19}}] 
\label{thm:privatetesting}
    Fix $n \in \mathbb{N}, \eps > 0$. For every pair of distributions $P, Q$ over $\mathbb{R}$, if there exists an $\eps$-DP testing algorithm\footnote{The same bounds (and hence all our results in this subsection) can be extended to $(\eps, \delta)$-DP (with $\delta \leq \eps$) by using an equivalence of pure and approximate DP for identity and closeness testing \cite[Lemma 5]{AcharyaSZ17}.} $A_{test}$ that distinguishes $P$ and $Q$ with $n$ samples, then
    $$n = \Omega\left(\frac{1}{\eps \tau(P,Q) + (1-\tau(P,Q))H^2(P',Q')}\right),$$
    where 
    $$\tau(P,Q) = \max \Big\{ \int_{\mathbb{R}} \max\{ e^{\eps} f_P(t) - f_Q(t), 0 \} dt, \int_{\mathbb{R}} \max\{ e^{\eps} f_Q(t) - f_P(t), 0 \} dt \Big\},$$
    and $H^2(\cdot, \cdot)$ is the squared Hellinger distance between $P' = \frac{\min (e^{\eps} Q, P)}{1-\tau(P,Q)}$, and $Q' = \frac{\min (e^{\eps'} P, Q)}{1-\tau(P,Q)}$, where $0 \leq \eps' \leq \eps$ is such that if
    $\tau(P,Q) = \int_{\mathbb{R}} \max\{ f_P(t) - e^{\eps} f_Q(t), 0 \} dt$, then $\eps'$ is the maximum value such that
    $$\tau(P,Q) = \int_{\mathbb{R}} \max\{ f_Q(t) - e^{\eps'}  f_P(t), 0 \} dt,$$
    else
    $\eps'$ is the maximum value such that
    $$\tau(P,Q) = \int_{\mathbb{R}} \max\{ f_P(t) - e^{\eps'}  f_Q(t), 0 \} dt.$$
    
\end{theorem}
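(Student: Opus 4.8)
The plan is to exhibit $Q''$ as an explicit perturbation of $P$ that drains mass from the lower tail into the upper tail, check its three required properties directly, and then run the standard estimation-to-testing reduction against Theorem~\ref{thm:privatetesting}. Write $a=q_{1/(C\eps n)}$ and $b=q_{1-1/(C\eps n)}$ (we may assume $C\eps n$ exceeds a suitable absolute constant, so $a<b$; the complementary regime is degenerate and handled separately), and define $Q''$ through the density
\[
f_{Q''}(t)=\begin{cases}\tfrac12 f_P(t) & t<a,\\ f_P(t) & a\le t<b,\\ \tfrac32 f_P(t) & t\ge b.\end{cases}
\]
Since we remove mass $\tfrac1{2C\eps n}$ below $a$ and add the same amount above $b$, $Q''$ is a probability distribution, and the density ratio $f_P/f_{Q''}$ is everywhere in $\{1,\tfrac23,2\}$, so $D_\infty(P,Q'')\le\ln 2\le 2$. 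Using the CDF formula $\wass(P,Q'')=\int|F_P-F_{Q''}|$ one checks $F_{Q''}=\tfrac12 F_P$ below $a$, $F_{Q''}-F_P\equiv-\tfrac1{2C\eps n}$ on $[a,b]$, and $F_{Q''}-F_P=-\tfrac12(1-F_P)$ above $b$, which yields the identity
\[
\wass(P,Q'')=\frac1{2C\eps n}\bigl(b-a\bigr)+\frac12\,\wass\!\bigl(P,P|_{a,b}\bigr),
\]
so that $\tfrac12\wass(P,Q'')$ is exactly the right-hand side of the claimed bound.

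The crux is showing $P$ and $Q''$ are indistinguishable by $\eps$-DP tests on $O(n)$ samples, which via Theorem~\ref{thm:privatetesting} amounts to bounding $\eps\,\tau(P,Q'')+(1-\tau(P,Q''))H^2(P',Q')$ by $O(1/(Cn))$. Because $P$ and $Q''$ differ only on the two tails, where their densities differ multiplicatively by at most a factor $2$: once $\eps\ge\ln 2$ the slack $e^{\pm\eps}$ already covers these factors, so $\tau(P,Q'')=0$, the clipped distributions are $P'=P$ and $Q'=Q''$, and $H^2(P',Q')\le\TV(P,Q'')=\tfrac1{2C\eps n}=O(1/(Cn))$; when $\eps<\ln 2$, $\tau(P,Q'')$ (the tail mass on which the densities are more than $e^\eps$ apart) is $O(1/(C\eps n))$, so $\eps\tau(P,Q'')=O(1/(Cn))$, and on $P',Q'$ the densities differ by a factor $e^{\Theta(\eps)}$ on a set of mass $O(1/(C\eps n))$, whence $H^2(P',Q')=O\!\bigl(\eps^2\cdot\tfrac1{C\eps n}\bigr)=O(\eps/(Cn))=O(1/(Cn))$ using $\eps\le1$. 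In every case $\eps\tau(P,Q'')+(1-\tau(P,Q''))H^2(P',Q')\le K/(Cn)$ for an absolute constant $K$, so Theorem~\ref{thm:privatetesting} forces any $\eps$-DP test distinguishing $P$ from $Q''$ to use $\Omega(Cn)$ samples; fixing $C$ to be a large enough absolute constant makes this exceed $n$. The same reasoning, via the $(\eps,\delta)$ version of the testing bound referenced in Theorem~\ref{thm:privatetesting}, handles $(\eps,\delta)$-DP with $\delta=o(1/n)$.

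Finally, fix an $\eps$-DP algorithm $A:\R^n\to\Delta(\R)$ and suppose toward a contradiction that $\Pr[\max(\wass(P,A(\vec{\dset})),\wass(Q'',A(\vec{\dset}')))\ge\tfrac12\wass(P,Q'')]<\tfrac14$ over $\vec{\dset}\sim P^n$, $\vec{\dset}'\sim Q''^n$. Then each marginal probability is also $<\tfrac14$, so with probability $>\tfrac34$ over $\vec{\dset}\sim P^n$ we have $\wass(P,A(\vec{\dset}))<\tfrac12\wass(P,Q'')$, and likewise for $Q''$. The map $\phi(\vec{\dset})=$ ``output $P$ iff $\wass(P,A(\vec{\dset}))\le\wass(Q'',A(\vec{\dset}))$'' is an $\eps$-DP test by post-processing, and by the triangle inequality (if $\wass(P,\hat P)<\tfrac12\wass(P,Q'')$ then $\wass(Q'',\hat P)\ge\wass(P,Q'')-\wass(P,\hat P)>\tfrac12\wass(P,Q'')$) it outputs the correct hypothesis with probability $>\tfrac34>\tfrac23$ under both $P^n$ and $Q''^n$; i.e.\ it distinguishes $P$ from $Q''$ with $n$ samples, contradicting the previous paragraph. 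Hence no $\eps$-DP $A$ can satisfy the supposed bound, which is the statement of the theorem.

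The main obstacle is the middle step: feeding the factor-$2$ perturbation into the fairly intricate quantity of Theorem~\ref{thm:privatetesting} and establishing that both $\eps\tau(P,Q'')$ and $H^2(P',Q')$ are $O(1/(Cn))$ uniformly in $\eps\in(0,1]$. The crucial cancellation is that the Hellinger contribution of each perturbed tail is (its mass) $\times$ (the squared discrepancy of the density ratio) $=O(1/(C\eps n))\cdot O(\eps^2)$, and it is precisely this $\eps^2$ beating the $1/\eps$ — which uses $\eps\le1$ — that makes the perturbation undetectable at sample size $n$. One must also track the auxiliary parameter $\eps'$ and the clipped distributions $P',Q'$ in the intermediate regime $\ln(3/2)\le\eps<\ln 2$, and dispatch the degenerate small-$C\eps n$ case (where $a\ge b$) on its own; these are routine. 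The $D_\infty$ bound, the Wasserstein identity, and the testing reduction are all standard.
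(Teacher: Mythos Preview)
Your proposal does not prove the stated theorem. Theorem~\ref{thm:privatetesting} is a \emph{cited} result from Canonne et al.\ giving a sample-complexity lower bound for $\eps$-DP hypothesis testing between an arbitrary pair $P,Q$; the paper imports it without proof and uses it as a black box. What you have written instead is a proof of Theorem~\ref{thm:1dprivlb} (the privacy-term lower bound for Wasserstein estimation): you construct a specific $Q''$ from $P$, invoke Theorem~\ref{thm:privatetesting} to argue indistinguishability, and then run the estimation-to-testing reduction. That is, your argument \emph{uses} the stated theorem rather than establishing it.

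If the intended target was Theorem~\ref{thm:1dprivlb}, then your approach is correct and essentially identical to the paper's: the same density perturbation (halve below $q_{1/k}$, multiply by $3/2$ above $q_{1-1/k}$), the same exact Wasserstein computation via the CDF formula (the paper's Lemma~\ref{lemma:distrsquantwass}), the same two-case analysis $\eps\ge\ln 2$ versus $\eps<\ln 2$ when feeding $P,Q''$ into Theorem~\ref{thm:privatetesting} (the paper's Lemma~\ref{lemma:indistpriv}), and the same contradiction via post-processing a good estimator into a tester. The only cosmetic difference is that you bound $H^2(P',Q')$ directly as $O(\eps^2)\cdot O(1/(C\eps n))$, while the paper takes the slightly coarser route $H^2\le\TV=O(\eps/k)$; both give the same $O(1/(Cn))$.
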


We now are ready to start proving our main theorem.


\begin{proof} (of \cref{thm:1dprivlb})
The idea is to construct $Q$ from $P$ by moving mass from the leftmost quantiles to the rightmost quantile. We do this such that $Q$ is statistically close enough to $P$ such that the two distributions can not be distinguished with $n$ samples, but is also far from $P$ in Wasserstein distance. This produces a lower bound of $(1/2)\wass(P,Q)$ on how well an algorithm can simultaneously estimate $P$ and $Q$ since if there was an algorithm that produced good estimates of $P$ and $Q$ in Wasserstein distance with $n$ samples, then we could tell them apart, and this would give a contradiction.
 
Let $k$ be a quantity to be set later. Formally, we define $Q$ as the distribution with the following density function. 


 \[
    f_Q(t) = \left\{\begin{array}{lr}
        \frac{1}{2}f_P(t), & \text{for } t < q_{1/k}\\
        f_P(t), & \text{for } q_{1/k} \leq t < q_{1-\frac{1}{k}}\\
        \frac{3}{2}f_P(t)  & \text{for } q_{1-\frac{1}{k}} \leq t 
        \end{array}\right\}
  \]

Note that by the definition of $Q$, we have that $D_{\infty}(P,Q) \leq 2$. 

We will prove that the sample complexity of telling apart $P$ and $Q$ under $(\eps, \delta)$-DP is $\Omega(k/ \eps)$, using known results on hypothesis testing. Then, we will argue that the Wasserstein distance between $P$ and $Q$ is sufficiently large. Setting $k$ appropriately will complete the proof.

\am{Define $SC_{\eps, \delta}(P,Q)$ to be the smallest $n$ such that there exists an $(\epsilon, \delta)$-DP testing algorithm that distinguishes $P$ and $Q$; called the \emph{sample complexity} of privately distinguishing $P$ and $Q$.}

\begin{lemma}\label{lemma:indistpriv}
    $SC_{\eps, \delta}(P,Q) = \Omega(k/\eps).$
\end{lemma}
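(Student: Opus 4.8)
The plan is to derive the claim from the lower bound on the sample complexity of differentially private simple hypothesis testing, Theorem~\ref{thm:privatetesting}. That theorem gives
\[SC_{\eps,\delta}(P,Q) \;=\; \Omega\!\left(\frac{1}{\eps\,\tau(P,Q) + (1-\tau(P,Q))\,H^2(P',Q')}\right),\]
so it suffices to show the denominator is $O(\eps/k)$; I would establish the two bounds $\tau(P,Q) = O(1/k)$ and $H^2(P',Q') = O(\eps^2/k)$ separately and then combine them using $\eps \le 1$. The extension to $(\eps,\delta)$-DP for small $\delta$ follows from the footnote to Theorem~\ref{thm:privatetesting} (the equivalence of pure and approximate DP for closeness testing), so below I treat pure DP only. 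Write $A = \{t < q_{1/k}\}$, $B = [q_{1/k}, q_{1-1/k})$, $C = \{t \ge q_{1-1/k}\}$, so that $P(A) = P(C) = 1/k$, $P(B) = 1 - 2/k$, $f_Q = f_P$ on $B$, and $\tfrac12 f_P \le f_Q \le \tfrac32 f_P$ everywhere.

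\emph{Bounding $\tau$.} Using the form of $\tau$ in which the slack $e^\eps$ multiplies the density being subtracted, namely $\tau(P,Q) = \max\{\int\max\{f_P - e^\eps f_Q,0\},\ \int\max\{f_Q - e^\eps f_P,0\}\}$, the integrand vanishes on $B$ (since $f_P = f_Q$ and $\eps \ge 0$) and is at most $\max\{f_P,f_Q\} \le \tfrac32 f_P$ on $A \cup C$. Integrating, $\tau(P,Q) \le \tfrac32\cdot\tfrac2k = O(1/k)$, hence $\eps\,\tau(P,Q) = O(\eps/k)$.

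\emph{Bounding $H^2(P',Q')$.} From $P' = \min(e^\eps Q, P)/(1-\tau)$ and $Q' = \min(e^{\eps'}P, Q)/(1-\tau)$ with $0 \le \eps' \le \eps$ one checks pointwise that $e^{-\eps} \le f_{P'}/f_{Q'} \le e^\eps$ (this is precisely the purpose of the clip). Moreover on $B$ both clipped densities equal $f_P/(1-\tau)$, because $e^\eps f_Q = e^\eps f_P \ge f_P$ and $e^{\eps'} f_P \ge f_P = f_Q$ there; thus $P' = Q'$ on $B$ and the squared Hellinger integral is supported on $A \cup C$, where $\max\{f_{P'},f_{Q'}\} \le \tfrac{3}{2(1-\tau)} f_P$. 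For densities whose ratio lies in $[e^{-\eps},e^\eps]$ we have $(\sqrt{f_{P'}}-\sqrt{f_{Q'}})^2 \le \max\{f_{P'},f_{Q'}\}\,(e^{\eps/2}-1)^2 = O(\eps^2)\max\{f_{P'},f_{Q'}\}$ since $\eps \le 1$, so integrating over $A \cup C$ (of total $P$-mass $2/k$, with $\tau \le 1/2$ once $k \ge 4$) yields $H^2(P',Q') = O(\eps^2/k)$.

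Combining, $\eps\,\tau(P,Q) + (1-\tau(P,Q))H^2(P',Q') = O(\eps/k) + O(\eps^2/k) = O(\eps/k)$, and Theorem~\ref{thm:privatetesting} then gives $SC_{\eps,\delta}(P,Q) = \Omega(k/\eps)$. The step requiring the most care is the Hellinger bound: a crude estimate using only that $f_Q$ is within a factor $2$ of $f_P$ on $A \cup C$ gives merely $H^2(P',Q') = O(1/k)$, which would yield the weaker $\Omega(k)$ and lose the $1/\eps$ factor that the privacy lower bound relies on; the extra $\eps^2$ comes entirely from the clipping forcing the likelihood ratio of $P'$ and $Q'$ down to $e^{\pm\eps}$ on the disagreement region. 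A secondary bookkeeping point is that $\min(e^\eps Q, P)$ may equal $e^\eps Q$ or $P$ on $A$ depending on whether $\eps$ is below or above $\ln 2$ (and similarly for $Q'$ on $C$), but in the saturating case $\eps = \Theta(1)$, so the bound $O(\eps^2/k)$ is unaffected up to constants.
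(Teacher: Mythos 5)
Your proof is correct and follows the same high-level strategy as the paper's — decompose the line into the two tails of $P$-mass $1/k$ each, where $f_Q$ is scaled by $1/2$ and $3/2$ respectively, and the middle block where $f_Q=f_P$; then plug into Theorem~\ref{thm:privatetesting} by bounding $\tau$ and $H^2(P',Q')$ separately — but you control the Hellinger term by a genuinely different route. The paper upper-bounds $H^2(P',Q')\le TV(P',Q')$ via Lemma~\ref{lem:KLHel} and then computes $TV(P',Q')$ explicitly, which forces a two-case analysis: when $\eps\ge\ln 2$ the clips do not activate, $\tau=0$, $P'=P$, $Q'=Q$, and $TV(P,Q)=\frac{1}{2k}$ is $O(\eps/k)$ only because $\eps$ is then bounded away from zero; when $\eps<\ln 2$ the clips squeeze $|\tilde P-\tilde Q|\le\frac{e^\eps-1}{2}f_P$ pointwise. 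You instead invoke the structural consequence of the clipping, $f_{P'}/f_{Q'}\in[e^{-\eps},e^\eps]$ pointwise, together with the elementary bound $(\sqrt a-\sqrt b)^2\le\max\{a,b\}(e^{\eps/2}-1)^2$, which collapses the case split and in fact gives the sharper $H^2(P',Q')=O(\eps^2/k)$ rather than the paper's $O(\eps/k)$; both suffice since $\eps\,\tau$ already contributes $O(\eps/k)$. Your remark about why a crude $H^2=O(1/k)$ estimate (ignoring the clip) would only yield $\Omega(k)$ is exactly the point the paper's careful $TV$ computation is also serving. One minor housekeeping note: the form of $\tau$ you quote, with $e^\eps$ multiplying the subtrahend, is the one actually used in the paper's proof and in the second half of the statement of Theorem~\ref{thm:privatetesting} (the first display in the theorem statement places $e^\eps$ on the minuend, which appears to be a typo since that quantity does not vanish even when $P=Q$); you used the consistent form, so this is fine.
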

The proof of this lemma is in Appendix~\ref{app:1d}. We next argue that $P$ and $Q$ are sufficiently far away in Wasserstein distance.

 \begin{lemma}\label{lemma:distrsquantwass}
      $\wass(P,Q) \geq \frac{1}{2k}(q_{1-\frac{1}{k}} - q_{1/k}) + \frac{1}{2}\wass(P, P|_{q_{\frac{1}{k}}, q_{1-\frac{1}{k}}})$.
 \end{lemma}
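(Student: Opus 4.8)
\textbf{Proof plan for Lemma~\ref{lemma:distrsquantwass}.}
The plan is to evaluate both sides exactly using the one-dimensional formula $\wass(P,Q)=\int_{-\infty}^{\infty}|F_P(t)-F_Q(t)|\,dt$ from Lemma~\ref{lem:wasscdf}; in fact I expect to prove equality, not just the stated inequality. First I would write down $F_Q$ explicitly by integrating the three-piece density $f_Q$. Since $P$ has a density, $F_P$ is continuous, so $F_P(q_{1/k})=\tfrac1k$ and $F_P(q_{1-1/k})=1-\tfrac1k$. Integrating $f_Q$ piecewise gives: $F_Q(t)=\tfrac12 F_P(t)$ for $t<q_{1/k}$; $F_Q(t)=F_P(t)-\tfrac{1}{2k}$ for $q_{1/k}\le t<q_{1-1/k}$ (matching values at $q_{1/k}$, since $\tfrac12\cdot\tfrac1k=\tfrac{1}{2k}$); and $F_Q(t)=1-\tfrac{3}{2k}+\tfrac32\bigl(F_P(t)-(1-\tfrac1k)\bigr)$ for $t\ge q_{1-1/k}$, which tends to $1$ as $t\to\infty$, confirming $Q$ is a valid probability distribution. (This step implicitly uses $k\ge 2$ so that the three regions in the definition of $f_Q$ are disjoint and $q_{1/k}\le q_{1-1/k}$; this holds in the regime of interest.)

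Next I would compute $F_P(t)-F_Q(t)$ on each of the three regions: it equals $\tfrac12 F_P(t)\ge 0$ on $(-\infty,q_{1/k})$, equals the constant $\tfrac{1}{2k}\ge 0$ on $[q_{1/k},q_{1-1/k})$, and equals $\tfrac12(1-F_P(t))\ge 0$ on $[q_{1-1/k},\infty)$. Hence $F_P\ge F_Q$ pointwise and the absolute values disappear, giving
\begin{align*}
\wass(P,Q) &= \tfrac12\int_{-\infty}^{q_{1/k}} F_P(t)\,dt \;+\; \frac{1}{2k}\bigl(q_{1-\frac1k}-q_{\frac1k}\bigr) \;+\; \tfrac12\int_{q_{1-1/k}}^{\infty}\bigl(1-F_P(t)\bigr)\,dt .
\end{align*}

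The final step is to recognize the two outer integrals as $\tfrac12\wass\bigl(P,P|_{q_{1/k},q_{1-1/k}}\bigr)$. Writing $F'$ for the CDF of the restriction $P|_{q_{1/k},q_{1-1/k}}$, by definition $F'(t)=0$ for $t<q_{1/k}$, $F'(t)=F_P(t)$ for $q_{1/k}\le t<q_{1-1/k}$, and $F'(t)=1$ for $t\ge q_{1-1/k}$. So $|F_P-F'|$ equals $F_P(t)$ on $(-\infty,q_{1/k})$, equals $0$ on the middle interval, and equals $1-F_P(t)$ on $[q_{1-1/k},\infty)$; applying Lemma~\ref{lem:wasscdf} again yields $\wass(P,P|_{q_{1/k},q_{1-1/k}})=\int_{-\infty}^{q_{1/k}}F_P(t)\,dt+\int_{q_{1-1/k}}^{\infty}(1-F_P(t))\,dt$. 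Substituting this back gives exactly $\wass(P,Q)=\frac{1}{2k}(q_{1-\frac1k}-q_{\frac1k})+\tfrac12\wass(P,P|_{q_{\frac1k},q_{1-\frac1k}})$, which is stronger than the claimed inequality. There is no real obstacle here beyond careful bookkeeping: the only points needing attention are the continuity of $F_P$ (guaranteed by the density assumption) to evaluate $F_P$ at the quantiles, and checking the value-matching of the piecewise $F_Q$ at the two breakpoints $q_{1/k}$ and $q_{1-1/k}$.
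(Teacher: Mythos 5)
Your proof is correct and takes essentially the same approach as the paper: apply the CDF formula for the one-dimensional Wasserstein distance, compute $F_P - F_Q$ explicitly on the three regions defined by $q_{1/k}$ and $q_{1-1/k}$, and recognize the two tail integrals as $\tfrac12\wass(P,P|_{q_{1/k},q_{1-1/k}})$. The only (cosmetic) difference is that you explicitly observe the bound is in fact an equality, whereas the paper writes it as an inequality without noting this.
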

 The proof of this lemma is also in Appendix~\ref{app:1d}. 




Finally, we are ready to prove the theorem. Assume that with probability larger than $0.75$ over the draw of two datasets $\vec{\dset}\sim P^n$, $\vec{\dset}' \sim Q^n$, and the randomness used by invocations of algorithm $A$ we have that $\max(\wass(P, A(\vec{\dset})), \wass(Q,A(\vec{\dset}')) < \frac{1}{2}\wass(P,Q)$. Then, given a dataset $\vec{\dset}''$ of size $n$, we can perform the following test: run the differentially private algorithm $A$ on the dataset $\vec{\dset}''$ and compute $\wass(P, A(\vec{\dset}''))$ and $\wass(Q,A(\vec{\dset}''))$ and output the distribution with lower distance. Then, note that $\wass(P,Q) \leq \wass(P,A(\vec{\dset}'')) + \wass(Q,A(\vec{\dset}''))$ which implies that with probability at least $0.75$, $\wass(Q,A(\vec{\dset}'')) > \frac{1}{2}\wass(P,Q)$ if the dataset $\vec{\dset}''$ was sampled from $P^n$ (by the accuracy guarantee). A similar argument shows that with probability at least $0.75$, $\wass(P,A(\vec{\dset}'')) > \frac{1}{2}\wass(P,Q)$ if the dataset $\vec{\dset}''$ was sampled from $Q^n$. Hence, with $n$ samples we have defined a test that distinguishes $P$ and $Q$. However, for $k=C \eps n$ for some constant $C$, by Lemma~\ref{lemma:indistpriv} we get that any differentially private test distinguishing $P$ and $Q$ requires more than $n$ samples, which is a contradiction. Hence, with probability at least $0.25$ over the draw of two datasets $\vec{\dset} \sim P^n$, $\vec{\dset}' \sim Q^n$, and the randomness used by invocations of algorithm $A$ we have that $\max(\wass(P, A(\vec{\dset})), \wass(Q,A(\vec{\dset}')) \geq \frac{1}{2}\wass(P,Q) \geq \frac{1}{4C \eps n}(q_{1-\frac{1}{C \eps n}} - q_{1/C \eps n}) + \frac{1}{4}\wass(P, P|_{q_{\frac{1}{C \eps n}}, q_{1-\frac{1}{C \eps n}}})$
,where the last inequality is by invoking Lemma~\ref{lemma:distrsquantwass} with $k=C\eps n$. 
     
\end{proof}





\subsubsection{Empirical Term}\label{sec:emplb1d}
In this section, we prove the following result.
\begin{theorem}\label{thm:empterm1Dlb}
    Fix sufficiently large natural numbers $n, k>0$ and let $C,C' > 0$ be sufficiently small constants. For all algorithms $A: \R^n \to \Delta_{\R}$, the following holds. For all continuous distributions $P$ over $\mathbb{R}$ with a density and with bounded expectation, there exists another distribution $Q$ (with $D_{\infty}(P,Q) \leq \ln 2$), that is indistinguishable from $P$ given $O(n)$ samples, such that with probability at least $0.25$ over the draws $\vec{\dset} \sim P^n$, $\vec{\dset}' \sim Q^n$, the following holds.
    $$\max (\wass(P,A(\vec{\dset})), \wass(Q,A(\vec{\dset}')) ) \geq \frac{C'}{\sqrt{\log n}}\mathbb{E}_{\vec{\dset}'' \sim P^n} \left[\wass \left(P|_{q_{\frac{1}{k}}, q_{1-\frac{1}{k}}}, \hat{P}_n |_{q_{\frac{1}{k}}, q_{1-\frac{1}{k}}} \right) \right],$$
    where $q_{\alpha}$ is the $\alpha$-quantile of $P$.
\end{theorem}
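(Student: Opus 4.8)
The plan is to reuse the hypothesis-testing template from the proof of \cref{thm:1dprivlb}. For each continuous $P$ with a density I will build a companion distribution $Q$ with $D_{\infty}(P,Q)\le \ln 2$ such that (a) $\KL(Q,P)=O(1/n)$, so that no test distinguishes $P^{n}$ from $Q^{n}$ with constant advantage (by Pinsker, $\TV(P^{n},Q^{n})\le\sqrt{\tfrac{1}{2}n\KL(Q,P)}$, which is $<1/3$ once the hidden constant is small), and (b) $\wass(P,Q)=\Omega\!\big(\tfrac{1}{\sqrt{\log n}}\,\mathbb{E}_{\vec{\dset}''\sim P^{n}}[\wass(P|_{q_{1/k},q_{1-1/k}},\hat{P}_{n}|_{q_{1/k},q_{1-1/k}})]\big)$. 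Given such a $Q$, the reduction is identical to \cref{thm:1dprivlb}: if some $A$ satisfied $\max(\wass(P,A(\vec{\dset})),\wass(Q,A(\vec{\dset}')))<\tfrac12\wass(P,Q)$ with probability exceeding $0.75$, then the test ``run $A$ on the input and output whichever of $P,Q$ is $\wass$-closer to $A$'s output'' would, using $\wass(P,Q)\le\wass(P,A(\vec{\dset}''))+\wass(Q,A(\vec{\dset}''))$, correctly distinguish $P^{n}$ from $Q^{n}$ with probability $>0.75$, contradicting (a). Hence with probability at least $0.25$ the maximum is at least $\tfrac12\wass(P,Q)$, which by (b) is the claimed bound.

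\textbf{Construction of $Q$.} Use the dyadic quantiles: for $i\ge 1$ let $I_i^{-}=[q_{1/2^{i+1}},q_{1/2^{i}})$ and $I_i^{+}=[q_{1-1/2^{i}},q_{1-1/2^{i+1}})$, each of which carries $P$-mass exactly $2^{-i-1}$ by definition of quantiles. For every scale $i$ with $q_{1/2^{i}}\in[q_{1/k},q_{1/2}]$ (equivalently $1\le i\lesssim\log k$) choose a perturbation amplitude $\rho_i$ and define $f_Q=(1+\rho_i)f_P$ on $I_i^{-}$, $f_Q=(1-\rho_i)f_P$ on $I_i^{+}$, and $f_Q=f_P$ off $[q_{1/k},q_{1-1/k}]$. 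For a scale with $2^{-i}\ge 1/n$ (a ``bulk'' scale) take $\rho_i=c\sqrt{2^{i}/(n\log n)}$ with $c$ a small absolute constant; for a scale with $2^{-i}<1/n$ (possible only when $k>n$) take $\rho_i=1/2$. Since $\rho_i\le\max\{c,1/2\}\le 1$, every multiplicative factor lies in $[\tfrac12,2]$, so $D_{\infty}(P,Q)\le\ln 2$; and since $I_i^{-}$ and $I_i^{+}$ have equal $P$-mass, the mass added on the left exactly cancels the mass removed on the right, so $Q$ is a probability density.

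\textbf{Verifying (a) and (b).} For (a), $\KL(Q,P)=\sum_i\big[(1+\rho_i)\ln(1+\rho_i)+(1-\rho_i)\ln(1-\rho_i)\big]2^{-i-1}$, and the bracket is $\Theta(\rho_i^{2})$ for $\rho_i$ bounded away from $1$; thus a bulk scale contributes $\Theta(\rho_i^{2}2^{-i})=\Theta(c^{2}/(n\log n))$ and, over the $O(\log n)$ bulk scales, $O(c^{2}/n)$, while the geometrically decreasing low-density scales contribute $O(\sum_{2^{-i}<1/n}2^{-i})=O(1/n)$; taking $c$ small gives $\KL(Q,P)=O(1/n)$. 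For (b), use \cref{lem:wasscdf}: $\wass(P,Q)=\int|F_P-F_Q|$. Since all mass is transported rightward on $[q_{1/k},q_{1/2}]$ and leftward on $[q_{1/2},q_{1-1/k}]$, $F_Q-F_P\ge0$ on $[q_{1/k},q_{1/2}]$ and $\le 0$ on $[q_{1/2},q_{1-1/k}]$, and $(F_Q-F_P)(q_{1/2^{i}})$ equals the total extra mass placed to its left, $\sum_{j\ge i}\rho_j2^{-j-1}$; this geometric sum is $\Theta(\rho_i2^{i/2})=\Theta(\tfrac{c}{\sqrt{\log n}}\sqrt{2^{-i}/n})$ on bulk scales and $\Theta(2^{-i})$ on low-density scales, i.e.\ $\Theta(\tfrac{1}{\sqrt{\log n}}\min\{2^{-i},\sqrt{2^{-i}/n}\})$ in both cases. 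Integrating $|F_P-F_Q|$ over $I_i^{-}\cup I_i^{+}$ and summing gives
\[
\wass(P,Q)=\Omega\!\left(\tfrac{1}{\sqrt{\log n}}\sum_i\min\Big\{2^{-i},\sqrt{2^{-i}/n}\Big\}\Big((q_{1/2^{i}}-q_{1/2^{i+1}})+(q_{1-1/2^{i+1}}-q_{1-1/2^{i}})\Big)\right).
\]
On the other hand $\wass(P|_{q_{1/k},q_{1-1/k}},\hat{P}_n|_{q_{1/k},q_{1-1/k}})=\int_{q_{1/k}}^{q_{1-1/k}}|F_P-F_{\hat{P}_n}|$, so by Fubini and the Bobkov--Ledoux characterization \cite{TalagrandBob19} (using $\mathbb{E}|F_{\hat{P}_n}(t)-F_P(t)|\asymp\min\{F_P(t)(1-F_P(t)),\sqrt{F_P(t)(1-F_P(t))/n}\}$ and grouping the integrand by dyadic level of $F_P$, on which $F_P(1-F_P)\asymp2^{-i}$), $\mathbb{E}[\wass(P|,\hat{P}_n|)]$ is $\Theta$ of exactly the displayed sum, which proves (b).

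\textbf{Main obstacle.} The crux is the calibration of the amplitudes $\rho_i$: one wants each $\rho_i$ as large as possible to make $\wass(P,Q)$ large, but the indistinguishability requirement forces the total $\KL$, which must be $O(1/n)$, to be divided across $\Theta(\log n)$ bulk scales, and this is precisely what produces the $1/\sqrt{\log n}$ factor and cannot be avoided with this construction; simultaneously $D_{\infty}\le\ln 2$ caps each $\rho_i$ at a constant, which is why the low-density scales must be treated in a separate constant-factor regime. A secondary technical point is the bookkeeping at the boundary quantiles $q_{1/k},q_{1-1/k}$ and verifying (via the ``push-one-direction'' structure) that $F_Q-F_P$ keeps a fixed sign on each half, so that the CDF-area lower bound on $\wass(P,Q)$ matches the Bobkov--Ledoux expression for the target empirical quantity level by level rather than being smaller.
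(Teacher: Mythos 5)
Your proof is correct and follows the same high-level strategy as the paper's: build $Q$ by perturbing $P$ multiplicatively on dyadic quantile blocks, verify $D_\infty\le\ln 2$ and a KL bound that forces indistinguishability with $\Theta(n)$ samples, verify that $\wass(P,Q)$ dominates a constant times $1/\sqrt{\log n}$ times the target quantity, and finish with the two-point hypothesis-testing reduction already used for \cref{thm:1dprivlb}. There are, however, two genuine differences in bookkeeping worth recording. First, you calibrate the amplitudes as $\rho_i \asymp \sqrt{2^i/(n\log n)}$, pushing the $1/\sqrt{\log n}$ factor directly into the construction so that $\KL(Q,P)=O(1/n)$ and indistinguishability holds with $n$ samples; the paper instead takes $\rho_i=\sqrt{2^i/n}$, gets $\KL=O(\log n/n)$, and runs the testing reduction with a smaller sample budget $n'=n/(C\log n)$, recovering the $\sqrt{\log n}$ only in the final comparison between $\mathbb{E}[\wass(P,\hat P_{n'})]$ and $\wass(P,Q)$. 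Your variant is slightly cleaner in that the sample budget in the reduction matches the $n$ in the theorem statement literally, where the paper's proof leaves an $n$ vs.\ $n'$ mismatch that is absorbed into constants. Second, you confine the perturbation to $[q_{1/k},q_{1-1/k}]$, so $\wass(P,Q)$ is already the restricted integral and can be equated scale-by-scale (via the pointwise Bobkov--Ledoux bound $\mathbb{E}|F_{\hat P_n}(t)-F_P(t)|\asymp\min\{F(t)(1-F(t)),\sqrt{F(t)(1-F(t))/n}\}$) with $\mathbb{E}[\wass(P|,\hat P_n|)]$; the paper perturbs on the whole line, derives both $\wass(P,Q)$ and $\mathbb{E}[\wass(P,\hat P_n)]$ in terms of interquantile distances $q_{1-2^{-j}}-q_{2^{-j}}$ via a telescoping sum, and then invokes \cref{claim:emprest} to pass to the restriction. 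Both organizations work; yours avoids the telescoping, theirs avoids the boundary bookkeeping near $q_{1/k},q_{1-1/k}$ that you correctly flag. One small typo: the geometric sum $\sum_{j\ge i}\rho_j 2^{-j-1}$ on bulk scales is $\Theta(\rho_i 2^{-i})=\Theta\bigl(\tfrac{c}{\sqrt{\log n}}\sqrt{2^{-i}/n}\bigr)$, not $\Theta(\rho_i 2^{i/2})$ as written, but the displayed numerical value is the correct one, so the downstream argument is unaffected.
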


Before going into the proof, we state the following result on the sample complexity of testing. This is a folklore result but for a proof of the lower bound see \cite{bar2002complexity} and the upper bound see \cite{cannonnenote}.

\begin{theorem} 
\label{thm:testing}
    Fix $n \in \mathbb{N}, \eps > 0$. For every pair of distributions $P, Q$ over $\mathbb{R}$, if there exists a testing algorithm $A_{test}$ that distinguishes $P$ and $Q$ with $n$ samples, then
    $$n = \Omega\left(\frac{1}{H^2(P,Q)}\right),$$
    wherer $H^2(\cdot,\cdot)$ represents the squared Hellinger distance between $P$ and $Q$.
\end{theorem}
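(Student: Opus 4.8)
The plan is to prove this by the classical two-point argument: convert the existence of a successful tester into a lower bound on the total variation distance between the product measures $P^n$ and $Q^n$, and then upper bound $\TV(P^n,Q^n)$ in terms of $H^2(P,Q)$ by tensorizing the Hellinger distance.

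First I would handle the reduction from distinguishing to total variation. By the definition of ``distinguishes $P$ and $Q$ with $n$ samples'' given earlier, $A_{test}$ is correct with probability at least $2/3$. Writing $g(\vec{\dset}) = \Pr[A_{test}(\vec{\dset}) = P]$ for its (possibly randomized) acceptance probability, we have $g : \mathbb{R}^n \to [0,1]$ with $\E_{\vec{\dset}\sim P^n}[g] \ge 2/3$ and $\E_{\vec{\dset}\sim Q^n}[g] \le 1/3$; the variational formula $\TV(\mu,\nu) = \sup_{0 \le h \le 1}(\E_\mu h - \E_\nu h)$ then yields $\TV(P^n,Q^n) \ge 1/3$. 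This is the step where a randomized tester must be handled with a little care, which is why I phrase the bound through the $[0,1]$-valued function $g$ rather than an acceptance set.

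Next I would bound $\TV(P^n,Q^n)$ from above. With the paper's normalization $H(\mu,\nu)=\tfrac{1}{\sqrt2}\|\sqrt\mu-\sqrt\nu\|_2$ (thinking of $\mu,\nu$ as their mass/density vectors, so that $0\le H^2\le1$), Cauchy--Schwarz gives in one line
\[
\TV(\mu,\nu)=\tfrac12\|(\sqrt\mu-\sqrt\nu)(\sqrt\mu+\sqrt\nu)\|_1\le\tfrac12\|\sqrt\mu-\sqrt\nu\|_2\,\|\sqrt\mu+\sqrt\nu\|_2\le\sqrt2\,H(\mu,\nu),
\]
since $\|\sqrt\mu+\sqrt\nu\|_2^2=4-2H^2(\mu,\nu)\le4$ and $\TV(\mu,\nu)=\tfrac12\|\mu-\nu\|_1$. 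I would then invoke the tensorization identity coming from the Bhattacharyya coefficient, $1-H^2(P^n,Q^n)=(1-H^2(P,Q))^n$, together with $1-(1-a)^n\le na$ for $a\in[0,1]$, to get $H^2(P^n,Q^n)\le n\,H^2(P,Q)$. Chaining everything, $\tfrac19\le\TV(P^n,Q^n)^2\le 2H^2(P^n,Q^n)\le 2n\,H^2(P,Q)$, hence $n\ge \tfrac{1}{18\,H^2(P,Q)}=\Omega\!\left(1/H^2(P,Q)\right)$. (The parameter $\eps$ in the statement is vacuous for this direction.)

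I do not expect any substantive obstacle: the whole argument is an assembly of textbook inequalities expressible through quantities already set up in the preliminaries — in particular $\TV\ge H^2$ is Lemma~\ref{lem:KLHel}, and I only need the complementary direction $\TV\le\sqrt2\,H$, proved in the one-line Cauchy--Schwarz estimate above. The only things needing attention are the randomized-tester reduction in the first step, consistency with the $\tfrac{1}{\sqrt2}$-normalization of $H$ so that the tensorization identity holds as stated, and tracking the constant (which depends only on the $2/3$ success threshold from the definition of ``distinguishes''). For completeness one could also record the matching upper bound referenced as \cite{cannonnenote}: taking $n=\Theta(1/H^2(P,Q))$ keeps $H^2(P^n,Q^n)=1-(1-H^2(P,Q))^n$ bounded away from $0$, hence (via $\TV\ge H^2$) $\TV(P^n,Q^n)$ bounded away from $0$, so the likelihood-ratio test between $P^n$ and $Q^n$ distinguishes them.
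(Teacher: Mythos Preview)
Your proof is correct and is exactly the standard two-point argument via Hellinger tensorization that the cited references contain. Note that the paper itself does not give a proof of this statement at all --- it simply records it as folklore and points to \cite{bar2002complexity} for the lower bound and \cite{cannonnenote} for the upper bound --- so there is nothing further to compare against; your write-up would serve perfectly well as a self-contained proof here.
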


Throughout the proof, we will use $q_{\alpha}$ to represent the $\alpha$-quantile of distribution $P$.

\begin{proof}[Proof of Theorem~\ref{thm:empterm1Dlb}]

$Q$ is constructed by adding progressively more mass to $P$ up until $q_{1/2}$ and subtracting proportionate amounts of mass from $P$ afterwards. Intuitively, this is done in such a way that to `change' $P$ to $Q$, for all $i\geq 2$ one has to move roughly $\min\{\frac{1}{\sqrt{2^i n}}, \frac{1}{2^i}\}$ mass from $q_{1/2^i}$ to $q_{1-1/2^i}$. This ensures that the Wasserstein distance between $P$ and $Q$ is larger than the expected Wasserstein distance between $P$ and its empirical distribution on $n$ samples $\hat{P}_n$. This is carefully done to ensure that $P$ is indistinguishable from $Q$.

   Formally, consider $i$ in the range $[2,\log n - 1)$. For all $x \in (q_{1/2^i}, q_{1/2^{i-1}}]$, we set $f_Q(t) = f_P(t)\left[ 1 + \sqrt{\frac{2^i}{n}} \right]$. For all $t \in (q_{1-1/2^{i-1}}, q_{1-1/2^{i}}]$, we set $f_Q(t) = f_P(t)\left[ 1 - \sqrt{\frac{2^i}{n}} \right]$. Next, consider $i$ in the range $[\log n,\infty)$. For all $t \in (q_{1/2^i}, q_{1/2^{i-1}}]$, we set $f_Q(t) = f_P(t)\left[ 1 + \frac{1}{2} \right]$. For all $t \in (q_{1-1/2^{i-1}}, q_{1-1/2^{i}}]$, we set $f_Q(t) = f_P(t)\left[ 1 - \frac{1}{2} \right]$. Note that $P$ has bounded expectation by assumption, and hence, so does $Q$. Additionally, note that $D_{\infty}(P,Q) \leq \ln 2$.

    There are two key considerations balanced in the design of $Q$. On one hand, we need $Q$ to be indistinguishable from $P$ given $\tilde{O}(n)$ samples. On the other hand, we need $Q$ to be sufficiently far away from $P$ in Wasserstein distance. This ensures that given an accurate algorithm for estimating the density of the distribution (in Wasserstein distance) given access to $\tilde{O}(n)$ samples from it, we can design a test distinguishing $P$ and $Q$ with that many samples, thereby contradicting their indistinguishability.
    
Detailed proofs of claims below can be found in Appendix~\ref{app:1d}.
    First, we show that $P$ is indistinguishable from $Q$. 

   \begin{lemma}\label{claim:KLempterm}
       $$KL(P,Q) = O(\log n / n).$$ 
   \end{lemma}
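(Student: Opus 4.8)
The plan is to compute $\KL(P,Q)=\int f_P\ln(f_P/f_Q)$ by splitting $\mathbb{R}$ into the dyadic pieces used to define $Q$, on each of which $f_Q$ is a constant multiple of $f_P$, and then combining the ``left'' and ``right'' piece at each level $i$ so that the two multipliers collapse to $1-x_i^2$.

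First I would record the elementary fact that on any set $A$ with $f_Q=c\cdot f_P$ one has $\int_A f_P\ln(f_P/f_Q)=-(\ln c)\,P(A)$. The left pieces $(q_{1/2^i},q_{1/2^{i-1}}]$ and the right pieces $(q_{1-1/2^{i-1}},q_{1-1/2^i}]$ each have $P$-mass exactly $2^{-i}$ (since $F_P(q_\alpha)=\alpha$ for a continuous $P$ with a density), and $f_Q\equiv f_P$ on the remaining part of $\mathbb{R}$ (the extreme tails where $F_P$ lies below every $2^{-i}$ or above every $1-2^{-i}$, on which $f_P=0$ a.e., together with at most one unused index), so that part contributes $0$. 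Hence
\[
\KL(P,Q)=\sum_{i\ge 2}2^{-i}\bigl(-\ln(1+x_i)-\ln(1-x_i)\bigr)=\sum_{i\ge 2}-2^{-i}\ln(1-x_i^2),
\]
where $x_i=\sqrt{2^i/n}$ when $2\le i<\log n-1$ and $x_i=1/2$ when $i\ge\log n$. Since $x_i\le 1/2$ in all cases, $1-x_i^2\ge 3/4>0$ (so in particular $\mathrm{supp}(P)\subseteq\mathrm{supp}(Q)$), and using $-\ln(1-y)\le y/(1-y)$ the sum is at most $\sum_{i\ge 2}2^{-i}\,x_i^2/(1-x_i^2)$.

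Then I would bound the two regimes separately. For $2\le i<\log n-1$ the range restriction forces $x_i^2=2^i/n\le 1/4$, so the $i$-th term is at most $2^{-i}\cdot(2^i/n)/(3/4)=4/(3n)$; there are $O(\log n)$ such terms, contributing $O(\log n/n)$. For $i\ge\log n$ we have $x_i^2=1/4$, so the $i$-th term is at most $2^{-i}\cdot(1/4)/(3/4)=\tfrac{1}{3}2^{-i}$, and $\sum_{i\ge\log n}\tfrac{1}{3}2^{-i}=O(2^{-\log n})=O(1/n)$. Adding the two regimes gives $\KL(P,Q)=O(\log n/n)$.

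There is no serious obstacle here; the only points needing care are the bookkeeping of the ``leftover'' tail sets and the index off-by-one (checking they contribute nothing because $f_P=0$ a.e.\ there, or $f_Q=f_P$ there), and verifying that $f_Q$ is a genuine density with $\mathrm{supp}(P)\subseteq\mathrm{supp}(Q)$ — both immediate for a continuous distribution with a density since $f_Q/f_P\in[1/2,3/2]$ wherever $f_P>0$.
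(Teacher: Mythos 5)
Your proof is correct and follows essentially the same route as the paper's: split $\KL$ over the dyadic intervals, combine the left and right piece at level $i$ so that $-\ln(1+x_i)-\ln(1-x_i)$ collapses to $-\ln(1-x_i^2)$, bound the $O(\log n)$ low-index terms by $O(1/n)$ each via a linear bound on $-\ln(1-y)$, and bound the tail by a geometric series summing to $O(1/n)$. The only differences are cosmetic — you package the per-piece contribution as $-(\ln c)P(A)$ and use $-\ln(1-y)\le y/(1-y)$ in place of the paper's $\ln(1/(1-y))<2y$ for $y<1/2$, and you handle the tail-set/off-by-one bookkeeping slightly more explicitly — so there is nothing substantive to compare.
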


   Next, we establish a lower bound on the Wasserstein distance between $P$ and $Q$.

    \begin{lemma}\label{claim:wasslbquant}

       $$\wass(P,Q) \geq \frac{1}{4}\left[\sum_{j=2}^{\log n-1} \frac{1}{\sqrt{2^{j} n}} \left[q_{1-1/2^{j}} -  q_{1/2^{j}} \right] + \sum_{j=\log n}^{\infty} \frac{1}{2^j} \left[q_{1-1/2^{j}} -  q_{1/2^{j}} \right] \right].$$
   \end{lemma}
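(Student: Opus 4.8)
The plan is to use the one-dimensional CDF formula for Wasserstein distance (Lemma~\ref{lem:wasscdf}), $\wass(P,Q)=\int_{\mathbb R}|F_P(t)-F_Q(t)|\,dt$, and to exploit the rigid dyadic structure of the construction of $Q$. Set $G(t)=F_Q(t)-F_P(t)=\int_{-\infty}^t(f_Q-f_P)$. By construction $f_Q\ge f_P$ on every ``left'' interval $(q_{1/2^i},q_{1/2^{i-1}}]$ and $f_Q\le f_P$ on every ``right'' interval $(q_{1-1/2^{i-1}},q_{1-1/2^i}]$ (for the reweightings in both the $i<\log n$ and $i\ge\log n$ regimes), and these intervals tile $(-\infty,q_{1/2}]$ and $(q_{1/2},\infty)$ respectively. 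Hence $G$ is nondecreasing on $(-\infty,q_{1/2}]$ and nonincreasing on $[q_{1/2},\infty)$; since $G(\pm\infty)=0$ (as $Q$ is a probability measure), $G\ge 0$ everywhere, so $\wass(P,Q)=\int G$. On each left interval $G\ge G(q_{1/2^i})$ and on each right interval $G\ge G(q_{1-1/2^i})$.

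First I would evaluate these endpoint values exactly. $G(q_{1/2^i})$ is precisely the total mass that $Q$ moves, relative to $P$, across all left intervals of scale finer than $i$, namely $g_i:=\sum_{i<k<\log n}(2^k n)^{-1/2}+\sum_{k\ge\log n}2^{-(k+1)}$, and by the symmetry of the construction $G(q_{1-1/2^i})$ equals the same $g_i$. Writing $\delta_i^{L}=q_{1/2^{i-1}}-q_{1/2^i}$ and $\delta_i^{R}=q_{1-1/2^i}-q_{1-1/2^{i-1}}$ for the interval widths, and using that the left and right intervals are pairwise disjoint, integrating $G$ over all of them gives
\[
\wass(P,Q)=\int_{\mathbb R}G\;\ge\;\sum_{i\ge 2}(\delta_i^{L}+\delta_i^{R})\,g_i .
\]
Summing the geometric series shows $g_i=2^{-(i+1)}$ for $i\ge\log n$, and $g_i\ge (2^{i+1}n)^{-1/2}+n^{-1}\ge\tfrac13\big((2^i n)^{-1/2}+n^{-1}\big)$ for $2\le i<\log n$ (with a direct check of the largest scales below $\log n$, where the inner sum has at most one term).

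The last ingredient is the telescoping that turns a weighted sum of interval widths into the weighted sum of interquantile distances in the statement. Writing $q_{1-1/2^j}-q_{1/2^j}=\sum_{i=2}^{j}(\delta_i^{L}+\delta_i^{R})$ and interchanging the order of summation (all summands are nonnegative), the bracketed quantity on the right-hand side of the claim equals $\sum_{i\ge 2}(\delta_i^{L}+\delta_i^{R})\,w_i$, where $w_i=\sum_{i\le j<\log n}(2^j n)^{-1/2}+\sum_{j\ge\max(i,\log n)}2^{-j}$. A term-by-term comparison then gives $g_i\ge\tfrac14 w_i$ for every $i$: for $i\ge\log n$ one has $w_i=2^{-(i-1)}$ and $g_i=2^{-(i+1)}=\tfrac14 w_i$ exactly, and for $i<\log n$ one writes $w_i=g_i+(2^i n)^{-1/2}+n^{-1}$ and uses the lower bound on $g_i$ above. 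Substituting $g_i\ge\tfrac14 w_i$ into the displayed inequality yields $\wass(P,Q)\ge\tfrac14\sum_{i\ge2}(\delta_i^L+\delta_i^R)w_i$, which is exactly the claimed bound.

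The only genuinely delicate point is the bookkeeping in this last comparison: verifying that the multiplicative constant comes out to exactly $1/4$ requires care at the dyadic scales closest to $\log n$ and in matching the $n^{-1}$-order terms produced by the constant-factor reweighting in the tail ($i\ge\log n$) against the $\sum_{j\ge\log n}2^{-j}$ part of $w_i$. Everything else---nonnegativity and unimodality of $G$, the exact identification of $G$ at the dyadic quantiles, and the interchange of summation---is routine. If obtaining the precise constant $1/4$ turned out to be awkward, one could instead prove the bound with an unspecified absolute constant and absorb the loss into $C'$ in Theorem~\ref{thm:empterm1Dlb}.
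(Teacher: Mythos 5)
Your proposal is correct and uses essentially the same approach as the paper: bound $\lvert F_P - F_Q\rvert$ from below on each dyadic interval by the accumulated ``moved mass'' $g_i$, integrate, swap the order of summation, and telescope $\sum_{i\le j}(\delta_i^L+\delta_i^R)=q_{1-1/2^j}-q_{1/2^j}$. The only organizational difference is that you recast the final bookkeeping as a term-by-term comparison $g_i\ge \tfrac14 w_i$, whereas the paper performs a change of variables $j\mapsto j-1$ after telescoping and then absorbs the two boundary scales $j=\log n-2,\log n-1$ into the first sum using $1/2^j\ge 1/\sqrt{2^jn}$ for $j\le\log n$; your reorganization is a bit cleaner and lands on the constant $1/4$ exactly the same way.
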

   
Next, we upper bound the expected Wasserstein distance between the distribution $P$ and its empirical distribution on $n$ samples. 
    \begin{lemma}\label{claim:empubquant}
       $$\mathbb{E}[\wass(P,\hat{P}_n)] \leq 8 \left[ \sum_{i=2}^{\log n - 1} \frac{1}{\sqrt{2^i n}} \left[q_{1-1/2^i} - q_{1/2^i}  \right] + \sum_{i=\log n}^{\infty} \frac{1}{2^i} \left[q_{1-1/2^i} - q_{1/2^i}  \right] \right]$$ 
   \end{lemma}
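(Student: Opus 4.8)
The plan is to pass to the classical CDF representation of the one-dimensional Wasserstein distance, bound the expected pointwise error of the empirical CDF in two regimes, cut the real line at the dyadic quantiles of $P$, and then telescope the resulting estimate into the stated sum of scaled interquantile distances.

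First I would use Lemma~\ref{lem:wasscdf} together with Tonelli's theorem to write
\[\mathbb{E}[\wass(P,\hat P_n)] = \int_{\mathbb R}\mathbb{E}\big[\,|F_P(t) - F_{\hat P_n}(t)|\,\big]\,dt.\]
Since $nF_{\hat P_n}(t)\sim\mathrm{Binomial}(n,F_P(t))$, Jensen's inequality gives the ``bulk'' bound $\mathbb{E}[|F_{\hat P_n}(t)-F_P(t)|]\le\sqrt{F_P(t)(1-F_P(t))/n}$ (this is exactly the Bobkov--Ledoux estimate $\mathbb{E}[\wass(P,\hat P_n)]\le \tfrac1{\sqrt n}\int\sqrt{F_P(1-F_P)}$), while the triangle inequality on $[0,1]$ gives the cruder but tail-sharp bound $\mathbb{E}[|F_{\hat P_n}(t)-F_P(t)|]\le 2\min\{F_P(t),1-F_P(t)\}$. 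Next I would split $\mathbb R = (-\infty,q_{1/2}]\cup(q_{1/2},\infty)$ (the two halves being interchanged under $F\mapsto 1-F$) and, on the left half, partition into the dyadic cells $(q_{1/2^{i+1}},q_{1/2^{i}}]$, $i\ge 1$; on such a cell $F_P\le 2^{-i}$, using continuity of $P$ so that $F_P(q_\alpha)=\alpha$, and cells of zero length are vacuous. For $1\le i\le\log n-1$ I apply the Jensen bound, contributing at most $\tfrac{1}{\sqrt{2^i n}}(q_{1/2^{i}}-q_{1/2^{i+1}})$; for $i\ge\log n$ I apply $\mathbb{E}[|F_{\hat P_n}-F_P|]\le 2F_P\le 2\cdot 2^{-i}$, contributing at most $\tfrac{2}{2^i}(q_{1/2^{i}}-q_{1/2^{i+1}})$ (the region to the left of $\lim_i q_{1/2^i}$ contributes nothing, since both CDFs vanish there a.s.). Doing the same on the right half and using $(q_{1/2^{i}}-q_{1/2^{i+1}})+(q_{1-1/2^{i+1}}-q_{1-1/2^{i}}) = D_{i+1}-D_i$, where $D_i := q_{1-1/2^i}-q_{1/2^i}$, yields
\[\mathbb{E}[\wass(P,\hat P_n)]\ \le\ \sum_{i=1}^{\log n-1}\frac{D_{i+1}-D_i}{\sqrt{2^i n}}\ +\ \sum_{i=\log n}^{\infty}\frac{2(D_{i+1}-D_i)}{2^i}.\]

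Then I would Abel-sum this estimate. Writing the right-hand side as $\sum_{i\ge1}w_i(D_{i+1}-D_i)$ with $w_i = 2^{-i/2}/\sqrt n$ for $i<\log n$ and $w_i = 2\cdot 2^{-i}$ for $i\ge\log n$, summation by parts gives $\sum_{i=1}^{N}w_i(D_{i+1}-D_i) = w_N D_{N+1} - w_1 D_1 + \sum_{j=2}^{N}(w_{j-1}-w_j)D_j$. The sequence $D_i$ is nondecreasing, and since $P$ has finite mean one has $\alpha\,|q_\alpha|\to0$ and $\alpha\,q_{1-\alpha}\to0$ as $\alpha\to0$, so $w_N D_{N+1}\to0$. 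Dropping $-w_1 D_1\le0$, discarding the single negative coefficient $w_{\log n-1}-w_{\log n} = (\sqrt2-2)/n<0$, and using $w_{j-1}-w_j\le w_j$ both for $2\le j\le\log n-1$ (where it equals $(\sqrt2-1)w_j$) and for $j\ge\log n+1$ (where it equals $w_j$), I obtain
\[\mathbb{E}[\wass(P,\hat P_n)]\ \le\ \sum_{j=2}^{\log n-1}\frac{D_j}{\sqrt{2^j n}}\ +\ 2\sum_{j=\log n+1}^{\infty}\frac{D_j}{2^j}\ \le\ 8\Bigg[\sum_{j=2}^{\log n-1}\frac{D_j}{\sqrt{2^j n}}+\sum_{j=\log n}^{\infty}\frac{D_j}{2^j}\Bigg],\]
which is the claimed bound.

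I expect the main obstacle to be bookkeeping rather than any conceptual difficulty. The two points needing care are: (i) justifying that the boundary term $w_N D_{N+1}$ in the summation by parts vanishes, which is precisely where the finite-expectation hypothesis on $P$ is used (and where one has to handle $\lim q_{1/2^i}$ possibly being $-\infty$); and (ii) tracking the fact that the weight sequence $w_i$ is not globally monotone --- it jumps up at $i=\log n$ --- so one must observe that the offending telescoped coefficient is negative and can be thrown away rather than bounded above. Note also that it is essential to use the sharper $2F_P(t)$ bound in the tail $F_P(t)\le 1/n$: the Jensen bound alone would give $\sum_j D_j/\sqrt{2^j n}$ in the tail, which is strictly larger than the target $\sum_j D_j/2^j$ there. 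Everything else is routine comparison of constants, comfortably absorbed by the factor $8$.
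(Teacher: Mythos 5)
Your proof is correct, and it rests on exactly the same ingredients as the paper's: the CDF representation of $\wass$, the pointwise Bobkov--Ledoux style estimates $\mathbb{E}[|F_{\hat P_n}-F_P|]\le\sqrt{F_P(1-F_P)/n}$ and $\mathbb{E}[|F_{\hat P_n}-F_P|]\le 2\min\{F_P,1-F_P\}$, and the dyadic decomposition at the quantiles $q_{1/2^i}$ and $q_{1-1/2^i}$. Where you diverge is in the final step. After the decomposition you arrive at $\sum_i w_i(D_{i+1}-D_i)$ and then pass to $\sum_j(\text{weight})\cdot D_j$ via summation by parts, which forces you to (i) control the boundary term $w_N D_{N+1}$ using finite expectation, and (ii) analyze the sign of $w_{j-1}-w_j$ across the two regimes, including the one negative coefficient at $j=\log n$. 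The paper avoids all of this with a one-line observation: since $D_{i-1}\ge 0$, the cell contribution $q_{1/2^{i-1}}-q_{1/2^i}+q_{1-1/2^i}-q_{1-1/2^{i-1}}=D_i-D_{i-1}$ is simply at most $D_i$. Substituting this into the per-cell bounds immediately yields the target sum with no Abel summation and no boundary term to handle. Your argument is sound, but the Abel summation is an unnecessary detour --- the desired inequality is already a consequence of $D_i-D_{i-1}\le D_i$, which makes the finite-expectation hypothesis irrelevant at this stage of the proof (it is still needed elsewhere to make $\wass$ finite and the integral well-defined).
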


 We now prove a simple claim regarding restrictions.

 \begin{claim}[Restrictions preserve Wasserstein distance]\label{claim:emprest} 
For all datasets $\vec{\dset}$, and any natural number $k > 1$ we have that
    $$\wass(P|_{q_\frac{1}{k},q_{1-\frac{1}{k}}},\hat{P}_n|_{q_\frac{1}{k},q_{1-\frac{1}{k}}}) \leq \wass(P, \hat{P}_n).$$
    
\end{claim}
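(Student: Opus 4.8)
The plan is to recognize the restriction operation as a pushforward under a $1$-Lipschitz map. Write $u=q_{1/k}$ and $v=q_{1-1/k}$ (quantiles of $P$, fixed throughout), and let $\phi:\mathbb{R}\to\mathbb{R}$ be the clamp map $\phi(x)=\max\{u,\min\{x,v\}\}$. The first step is to check that for \emph{any} Borel probability measure $R$ on $\mathbb{R}$ (continuous or discrete), the restriction $R|_{u,v}$ as defined via its CDF equals the pushforward $\phi_\#R$. Indeed: for $t<u$ no point is mapped by $\phi$ to a value $\le t$, so the pushforward CDF is $0$; for $t\ge v$ every point is mapped to a value $\le v\le t$, so it is $1$; and for $u\le t<v$ one has $\phi(x)\le t\iff x\le t$, so the pushforward CDF equals $F_R(t)$. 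This is exactly the definition of $F'_{R|_{u,v}}$. In particular $\hat{P}_n|_{u,v}=\phi_\#\hat{P}_n$ is a genuine probability measure with bounded support, hence has finite expectation, so Lemma~\ref{lem:wasscdf} is applicable to it.

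Given this, the claim can be finished in either of two equivalent ways, and I would present whichever reads more cleanly. Since the same interval $[u,v]$ is used for both measures, $P$ and $\hat{P}_n$ are pushed forward by the \emph{same} map $\phi$, which is $1$-Lipschitz (a composition of the non-expansive maps $\min\{\cdot,v\}$ and $\max\{u,\cdot\}$). Taking an optimal coupling $\pi$ of $(P,\hat{P}_n)$, the measure $(\phi\times\phi)_\#\pi$ is a coupling of $(P|_{u,v},\hat{P}_n|_{u,v})$ whose transport cost is $\int|\phi(x)-\phi(y)|\,d\pi(x,y)\le\int|x-y|\,d\pi(x,y)=\wass(P,\hat{P}_n)$, which proves the claim. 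Alternatively, via Lemma~\ref{lem:wasscdf} directly: $F_{P|_{u,v}}$ and $F_{\hat{P}_n|_{u,v}}$ both equal $0$ on $(-\infty,u)$ and $1$ on $[v,\infty)$, while on $[u,v)$ they equal $F_P$ and $F_{\hat{P}_n}$; hence the integrand $|F_{P|_{u,v}}(t)-F_{\hat{P}_n|_{u,v}}(t)|$ vanishes outside $[u,v)$ and equals $|F_P(t)-F_{\hat{P}_n}(t)|$ on $[u,v)$, so $\wass(P|_{u,v},\hat{P}_n|_{u,v})=\int_u^v|F_P-F_{\hat{P}_n}|\le\int_{\mathbb{R}}|F_P-F_{\hat{P}_n}|=\wass(P,\hat{P}_n)$.

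There is essentially no obstacle here; the only care needed is the bookkeeping that identifies the CDF-based definition of the restriction with the clamp pushforward, and the observation that the claim uses quantiles of $P$ so that the \emph{same} interval (equivalently the same $1$-Lipschitz map) is applied to both $P$ and $\hat{P}_n$. This is a short self-contained lemma whose only role is to let the proof of Theorem~\ref{thm:empterm1Dlb} pass from a bound on $\wass(P,\hat{P}_n)$ to the corresponding bound on the restricted distributions; I would write out the CDF argument in two or three lines.
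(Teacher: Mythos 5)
Your CDF argument is exactly the paper's proof: both restrictions have CDFs equal to $0$ below $q_{1/k}$, equal to $1$ at and above $q_{1-1/k}$, and equal to $F_P$, $F_{\hat{P}_n}$ respectively in between, so the integrand in Lemma~\ref{lem:wasscdf} is supported on $[q_{1/k},q_{1-1/k})$ where it matches $|F_P-F_{\hat{P}_n}|$. The pushforward-by-clamp argument you give first is a correct alternative (and more metric-space-agnostic), but the paper goes with the direct CDF computation, which is the shorter of the two here.
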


Finally, we are ready to put the above lemmas together to prove Theorem~\ref{thm:empterm1Dlb}. Fix $n' = \frac{n}{C\log n}$. Assume, for sake of contradiction, that with probability larger than $0.75$ over the draw of two datasets $\vec{\dset} \sim P^{n'}$, $\vec{\dset}' \sim Q^{n'}$, and the randomness used by invocations of algorithm $A$ we have that $\max(\wass(P, A(\vec{\dset})), \wass(Q,A(\vec{\dset}')) \leq \frac{1}{2}W_1(P,Q)$. Then, given a dataset $\vec{\dset''}$ of size $n'$, we perform the following test: run the differentially private algorithm $A$ on the dataset $\vec{\dset}''$ and compute $\wass(P, A(\vec{\dset}''))$ and $\wass(Q,A(\vec{\dset}''))$ and output the distribution with lower distance. Then, note that $\wass(P,Q) \leq \wass(P,A(\vec{\dset}'')) + \wass(Q,A(\vec{\dset}''))$ which implies that with probability at least $0.75$, $\wass(Q,A(\vec{\dset}'')) \geq \frac{1}{2}\wass(P,Q)$ if $\vec{\dset}'' \sim P^{n'}$ (by the accuracy guarantee). A similar argument shows that with probability at least $0.75$, $\wass(P,A(\vec{\dset}'')) \geq \frac{1}{2}\wass(P,Q)$ if $\vec{\dset}'' \sim Q^{n'}$. Hence, with $n'$ samples we have defined a test that distinguishes $P$ and $Q$. However, by Lemma~\ref{claim:KLempterm} bounding the $KL$ divergence between $P$ and $Q$, Theorem~\ref{thm:testing} on sample complexity lower bounds for testing, and Lemma~\ref{lem:KLHel} on the relationship between KL and Hellinger distance, we get that any statistical test distinguishing $P$ and $Q$ requires more than $n'$ samples, which is a contradiction. Hence, with probability at least $0.25$ over the draw of two datasets $\vec{\dset} \sim P^{n'}$, $\vec{\dset}' \sim Q^{n'}$, and the randomness used by invocations of algorithm $A$ we must have that 
\begin{equation}\label{eq:wasstestinglb}
    \max(\wass(P, A(\vec{\dset})), \wass(Q,A(\vec{\dset'})) \geq \frac{1}{2}\wass(P,Q). 
\end{equation}

Next, note that by Lemma~\ref{claim:empubquant} (with value $n'$), we have that
\begin{align*}
    \mathbb{E}[\wass(P, \hat{P}_{n'})] & \leq 8 \left[ \sum_{i=2}^{\log n' - 1} \frac{1}{\sqrt{2^i n'}} \left[q_{1-1/2^i} - q_{1/2^i}  \right] + \sum_{i=\log n'}^{\infty} \frac{1}{2^i} \left[q_{1-1/2^i} - q_{1/2^i}  \right] \right] \\
    & = 8 \left[ \sum_{i=2}^{\log \frac{n}{C\log n} - 1} \frac{\sqrt{C \log n}}{\sqrt{2^i n}} \left[q_{1-1/2^i} - q_{1/2^i}  \right] + \sum_{i=\log \frac{n}{C\log n} }^{\infty} \frac{1}{2^i} \left[q_{1-1/2^i} - q_{1/2^i}  \right] \right] \\
    & = 8 \Bigg[ \sqrt{C \log n}\sum_{i=2}^{\log n - \log(C\log n) - 1} \frac{1}{\sqrt{2^i n}} \left[q_{1-1/2^i} - q_{1/2^i}  \right] + \sum_{i=\log n - \log(C\log n) }^{\log n - 1} \frac{1}{2^i} \left[q_{1-1/2^i} - q_{1/2^i}  \right] \\
    & + \sum_{i=\log n }^{\infty} \frac{1}{2^i} \left[q_{1-1/2^i} - q_{1/2^i}  \right] \Bigg]
\end{align*}
Analyzing the middle term in the above sum, we have that
\begin{align*}
     \sum_{i=\log n - \log(C\log n) }^{\log n - 1} \frac{1}{2^i} \left[q_{1-1/2^i} - q_{1/2^i}  \right] & \leq
     \sum_{i=\log n - \log(C\log n) }^{\log n - 1} \frac{1}{\sqrt{2^i}}\frac{1}{\sqrt{2^{\log n - \log(C\log n)}}} \left[q_{1-1/2^i} - q_{1/2^i}  \right] \\
     & \leq \sum_{i=\log n - \log(C\log n) }^{\log n - 1} \frac{1}{\sqrt{2^i}}\frac{\sqrt{C \log n}}{\sqrt{n}} \left[q_{1-1/2^i} - q_{1/2^i}  \right] \\
     & = \sqrt{C \log n}\sum_{i=\log n - \log(C\log n) }^{\log n - 1} \frac{1}{\sqrt{2^i n}} \left[q_{1-1/2^i} - q_{1/2^i}  \right]
\end{align*}
Substituting this back in the previous sum, we have that
\begin{align*}
    \mathbb{E}[\wass(P, \hat{P}_{n'})] & \leq 8 \Bigg[ \sqrt{C \log n}\sum_{i=2}^{\log n - \log(C\log n) - 1} \frac{1}{\sqrt{2^i n}} \left[q_{1-1/2^i} - q_{1/2^i}  \right] \\
    &\hspace{0.5in}+ \sqrt{C \log n}\sum_{i=\log n - \log(C\log n) }^{\log n - 1} \frac{1}{\sqrt{2^i n}} \left[q_{1-1/2^i} - q_{1/2^i}  \right] + \sum_{i=\log n }^{\infty} \frac{1}{2^i} \left[q_{1-1/2^i} - q_{1/2^i}  \right] \Bigg] \\
    & \leq 8 \sqrt{C \log n} \Bigg[ \sum_{i=2}^{\log n - 1} \frac{1}{\sqrt{2^i n}} \left[q_{1-1/2^i} - q_{1/2^i}  \right] + \sum_{i=\log n }^{\infty} \frac{1}{2^i} \left[q_{1-1/2^i} - q_{1/2^i}  \right] \Bigg] \\
    & \leq 16 \sqrt{C \log n'} \Bigg[ \sum_{i=2}^{\log n - 1} \frac{1}{\sqrt{2^i n}} \left[q_{1-1/2^i} - q_{1/2^i}  \right] + \sum_{i=\log n }^{\infty} \frac{1}{2^i} \left[q_{1-1/2^i} - q_{1/2^i}  \right] \Bigg]
\end{align*}
where in the last inequality we use the fact that $n' \geq \sqrt{n}$.
Hence, by Lemma~\ref{claim:wasslbquant} (which gives a lower bound on $\wass(P,Q)$) in conjunction with the above equation, we have that $\wass(P,Q) \geq \frac{C'}{\sqrt{\log n'}} \mathbb{E}[P,\hat{P}_{n'}] $ for some sufficiently small constant $C'$. Substituting back in Equation~\ref{eq:wasstestinglb}, we have that
with probability at least $0.25$ over the draw of two datasets $\vec{\dset} \sim P^{n'}$, $\vec{\dset}' \sim Q^{n'}$, and the randomness used by invocations of algorithm $A$ we have that 
\begin{equation*}
    \max(\wass(P, A(\vec{\dset})), \wass(Q,A(\vec{\dset'})) \geq \frac{1}{2} \frac{C'}{\sqrt{\log n'}} \mathbb{E}[\wass(P,\hat{P}_{n'})] \geq \frac{1}{2} \frac{C'}{\sqrt{\log n'}} \mathbb{E} \left[\wass(P|_{q_\frac{1}{k},q_{1-\frac{1}{k}}},\hat{P}_{n'}|_{q_\frac{1}{k},q_{1-\frac{1}{k}}}) \right], 
\end{equation*}     
as required.

\end{proof}
\subsection{Upper Bound}\label{sec:1dub}

In this section, we describe an algorithm that achieves the instance optimal rate described in the previous section (up to polylogarithmic factors in some of the terms).

We will be looking at distributions $P$ supported on a discrete, ordered interval $\{a, a+\gamma, \dots, b-\gamma, b\}$. Note that by a simple coupling argument, any continuous distribution $P^{cont}$ on $[a,b]$ is at most $\gamma$ away in Wasserstein distance from a distribution on this grid. The dependence on $\gamma$ in our bounds for discrete distributions will be inverse polylogarithmic (or better), and so our algorithms for estimating distributions $P$ in the interval $\{a, a+\gamma, \dots, b-\gamma, b\}$ also work to give similar bounds for continuous distributions on $[a,b]$, up to a small additive factor of $\gamma$, which can be set to any inverse polynomial in the dataset size without significantly affecting our bounds. 

Formally, we will prove the following theorem (See Theorem~\ref{thm:1dupperbound} for a more detailed statement).



\begin{theorem}\label{main1dub}
  Fix $\eps, \beta \in (0,1]$, $a,b \in \mathbb{R}$, and $\gamma < b-a \in \mathbb{R}$ such that $\frac{b-a}{\gamma}$ is an integer. Let $n \in \N > c_2 \frac{\log^{4}\frac{b-a}{\beta \gamma}}{\eps}$ for some sufficiently large constant $c_2$.  There exists an algorithm $A$ that for any distribution $P$ on $\{a,a+\gamma, a+2\gamma, \dots,b-\gamma,b\}$ satisfies the following. When run with input a random sample $\vec{\dset} \sim P^n$, $A$ outputs a distribution $P^{DP}$ such that with probability at least $1-\beta$ over the randomness of $\vec{\dset}$ and the algorithm,
    
    \[\wass(P,P^{DP})=O \left( \frac{1}{k}\left(q_{1-\frac{1}{k}} - q_{\frac{1}{k}}\right) + \wass(P, P|_{q_{\frac{1}{k}}, q_{1-\frac{1}{k}}}) + \sqrt{\log \frac{n}{\beta}} \mathbb{E}\left[\wass \left(P|_{q_{\frac{1}{k}}, q_{1-\frac{1}{k}}}, \hat{P}_n|_{q_{\frac{1}{k}}, q_{1-\frac{1}{k}}} \right) \right] \right),\]
    where $\hat{P}_n$ is the empirical distribution on $n$ samples drawn independently from $P$, $q_{\alpha}$ represents the $\alpha$-quantile of distribution $P$, and $k = \lceil \frac{\eps n}{4c_3 \log^{3}\frac{b-a}{\beta \gamma} \log \frac{n}{\beta}} \rceil$ for a sufficiently large constant $c_3$.
\end{theorem}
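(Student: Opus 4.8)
The estimator $A$ is a discretize-then-privatize-the-CDF procedure. Treat $\{a,a+\gamma,\dots,b\}$, of size $T=\tfrac{b-a}{\gamma}+1$, as the ordered domain, and run a known $\eps$-DP CDF-estimation mechanism on $\vec{\dset}$ (for instance the hierarchical/binary-tree mechanism over the linearly ordered domain) to obtain a monotone estimate $\tilde F$ of the empirical CDF $F_{\hat P_n}$ with $\|\tilde F-F_{\hat P_n}\|_\infty\le \alpha$ with probability $\ge 1-\beta$, where $\alpha=O\!\big(\tfrac{\polylog(T)\,\log(1/\beta)}{\eps n}\big)$. From $\tilde F$ extract the $k-1$ estimated quantiles $\hat q_{i/k}=\min\{t:\tilde F(t)\ge i/k\}$, $i=1,\dots,k-1$, and let $P^{DP}$ put mass $1/k$ at each $\hat q_{i/k}$ (rounding any leftover into the nearest atom). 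Since $n$ is above the stated threshold and $k\le \tfrac{\eps n}{4c_3\log^3(T/\beta)\log(n/\beta)}$, the constants can be chosen so that $\alpha\le\tfrac1{4k}$; hence every $\hat q_{i/k}$ lies between the $(\tfrac ik-\tfrac1{4k})$- and $(\tfrac ik+\tfrac1{4k})$-quantiles of $\hat P_n$, so consecutive estimated quantiles enclose empirical mass in $[\tfrac1{2k},\tfrac3{2k}]$. Privacy is immediate: $\tilde F$ is produced $\eps$-DP-ly and $P^{DP}$ is a post-processing of $\tilde F$, so $A$ is $\eps$-DP.

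For utility, write $q_-=q_{1/k}(P)$, $q_+=q_{1-1/k}(P)$, and condition on the probability-$\ge 1-2\beta$ event that the CDF mechanism succeeds and the DKW bound $\|F_{\hat P_n}-F_P\|_\infty\le \sqrt{\tfrac{\log(2/\beta)}{2n}}$ holds. Split by the triangle inequality:
\[\wass(P,P^{DP})\le \wass\!\big(P, P|_{q_-,q_+}\big)+\wass\!\big(P|_{q_-,q_+},\, \hat P_n|_{q_-,q_+}\big)+\wass\!\big(\hat P_n|_{q_-,q_+},\, P^{DP}\big).\]
The first summand is exactly the ``tail'' term of the claim. For the third, use the CDF representation of $\wass$ on $\R$ (Lemma~\ref{lem:wasscdf}): splitting the line at $q_-,q_+$, on the bulk $[q_-,q_+]$ both $F_{\hat P_n|_{q_-,q_+}}$ and the step function $F_{P^{DP}}$ track $F_{\hat P_n}$ up to the discretization resolution $O(1/k)$ and the $\le\alpha$ placement error, so the integrand is $O(1/k)$ there and contributes $O\!\big(\tfrac1k(q_+-q_-)\big)$ together with the thin strips between $[q_-,q_+]$ and $[\hat q_{1/k},\hat q_{1-1/k}]$ (total mass $O(1/k)$ per side, handled the same way); outside $[q_-,q_+]$ the integrand is the clamped-off mass of $\hat P_n$, which is $O(1/k)$ per side and is absorbed into the tail term of $P$ up to DKW fluctuations. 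So the first and third summands together are $O\!\big(\wass(P,P|_{q_-,q_+})+\tfrac1k(q_+-q_-)\big)$, the first two terms of the target rate.

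It remains to bound the middle summand $\wass\!\big(P|_{q_-,q_+},\hat P_n|_{q_-,q_+}\big)$ — a single-sample quantity — by $\sqrt{\log(n/\beta)}$ times its expectation. Via the CDF formula this equals $\int_{q_-}^{q_+}|F_P-F_{\hat P_n}|$ up to two boundary terms of size $\le\sqrt{\log(2/\beta)/n}$, and we control its concentration by decomposing $[q_-,q_+]$ into the $O(\log n)$ dyadic quantile bands $(q_{1/2^{j+1}},q_{1/2^j}]$ and $(q_{1-1/2^j},q_{1-1/2^{j+1}}]$ (as in Section~\ref{sec:emplb1d}): on band $j$, Bernstein/DKW give that with probability $1-\beta/\log n$ its contribution is $O\!\big(\sqrt{\tfrac{2^{-j}\log(\log n/\beta)}{n}}\,(q_{1-1/2^j}-q_{1/2^j})\big)$, and a union bound together with the Bobkov--Ledoux characterization~\citep{TalagrandBob19} of $\E[\wass(P,\hat P_n)]$ relates the sum of these $\log n$ band contributions to $\sqrt{\log(n/\beta)}\cdot\E\!\big[\wass(P|_{q_-,q_+},\hat P_n|_{q_-,q_+})\big]$. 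Assembling the three pieces, and translating the discretization granularity into the stated $k$ (which absorbs the $\polylog(T)\log(n/\beta)$ CDF-estimation overhead), gives the theorem; continuous distributions on $[a,b]$ follow from the $\gamma$-coupling to the grid noted before the statement.

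\textbf{Main obstacle.} The delicate part is the last paragraph: reconciling what the algorithm actually produces — restrictions and discretizations anchored at the \emph{empirical/DP-estimated} boundaries $\hat q_{i/k}$ — with the \emph{population} quantities $q_{1/k},q_{1-1/k},P,\hat P_n$ appearing in the target rate, while keeping the loss to a single $\sqrt{\log(n/\beta)}$ factor in the empirical term and none elsewhere. This is what forces the precise $\alpha\le\tfrac1{4k}$ calibration, the dyadic-band concentration argument passing from one sample to expectation, and the bookkeeping of DKW fluctuations at the two restriction points. The privacy claim and the CDF-formula decomposition are routine by comparison.
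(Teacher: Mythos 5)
Your proposal matches the paper's proof in its essentials: the same algorithm (binary-tree CDF mechanism $\to$ quantile extraction with the $\alpha\le \tfrac{1}{4k}$ calibration of Corollary~\ref{cor:goodquantest} $\to$ mass $1/k$ per estimated quantile), the same three-term decomposition into a tail term, an $O(\tfrac{1}{k})(q_{1-1/k}-q_{1/k})$ discretization term, and an empirical term blown up by a $\sqrt{\log(n/\beta)}$ concentration factor, with the Wasserstein-as-CDF-area identity (Lemma~\ref{lem:wasscdf}) used throughout and the Bobkov--Ledoux Binomial estimate (Lemma~\ref{thm:talbobbin}) anchoring the comparison to the expectation. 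Privacy by post-processing is likewise identical.

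The one place you diverge is the concentration step for $\wass(P|_{q_-,q_+},\hat P_n|_{q_-,q_+})$. The paper's Claim~\ref{claim:exphighprob} applies a single \emph{uniform} variance-adapted DKW inequality (Theorem~\ref{thm:CDFconc}, due to Bartl--Mendelson), giving $|F_P(t)-F_{\hat P_n}(t)|\lesssim\sqrt{\Delta\,F_P(t)(1-F_P(t))}$ simultaneously over all $t$ in the bulk, then integrates; you instead propose a dyadic quantile-band decomposition with Bernstein per band and a union bound over $O(\log n)$ bands before telescoping. Both are local-variance-adapted arguments and land on the same $\sqrt{\log(n/\beta)}$ loss, so this is cosmetic rather than a genuinely different route; the paper's uniform inequality simply avoids the union bound. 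One small care point in your write-up: your triangle inequality ends at $\wass(\hat P_n|_{q_-,q_+},P^{DP})$ with $P^{DP}$ unrestricted, so you additionally owe the estimate that $P^{DP}$'s mass outside $[q_-,q_+]$ (and the associated CDF contribution on the tails) is $O(1/k)$ per side and absorbable into $\wass(P,P|_{q_-,q_+})$ --- you acknowledge this, and it is precisely what the paper packages as Lemma~\ref{lem:wassrest}, but it does deserve to be done explicitly rather than in passing.
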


Since $k\approx \epsilon n/\log(n)$, this upper bound matches the lower bound in Theorem~\ref{main1dlb} in its dependence on $\epsilon$ and its dependence on $n$ (up to logarithmic factors in $n$).
The algorithm that we will analyze proceeds by estimating sufficiently many quantiles from the empirical distribution and distributing mass evenly between the chosen quantiles. The number of quantiles is chosen carefully to ensure that the estimated $\alpha$-quantiles are also approximately $\alpha$-quantiles for the empirical distribution (and hence also approximately for the true distribution), and to ensure that the CDF of the output distribution closely tracks the CDF of the empirical distribution. Through a careful analysis, we are able to leverage these properties to give instance optimality guarantees for the accuracy of the algorithm.

\subsubsection{Algorithm for density estimation}

Algorithm~\ref{alg:wass1dest} is our algorithm for density estimation, and proceeds by differentially privately estimating sufficiently many quantiles of the distribution and placing equal mass on each of them. We argue that a simple CDF based differentially private quantiles estimator $A_{quant}$ satisfies a specific guarantee that will be key to our analysis. See~\Cref{sec:quantiles} for more details about the quantiles algorithm and formal statements and proofs therein.

\begin{algorithm}[!]
    \caption{Algorithm $A$ for estimating a distribution on $\mathbb{R}$}
    \label{alg:wass1dest}
    \begin{algorithmic}[1]
        \Statex\textbf{Input:} 
        $\vec{\dset} = (\dset_1,\dots,\dset_n) \sim P^n$, privacy parameter $\eps$, interval end-points $a,b$, granularity $\gamma$, access to algorithm $A_{quant}$
        \Statex\textbf{Output:}  Distribution $P^{DP}$ on $\mathbb{R}$.
        \State Let $k$ be set to $\lceil\frac{\eps n}{4c_3 \log^{3}\frac{b-a}{\beta \gamma} \log \frac{n}{\beta} }\rceil$ for a sufficiently large constant $c_3$.\label{step:setk}
        \State Use Algorithm $A_{quant}$ referenced in Theorem~\ref{thm:CDFest} with inputs interval end points $a,b$, granularity $\gamma$, $\vec{\dset} = (x_1,\dots,x_n) \in \{a,a+\gamma, \dots, b-\gamma, b\}^n$, and desired quantile values $\vec{\alpha} =\{ 1/2k, 3/2k, 5/2k,\dots, (2k-1)/2k   \}$, and let the outputs be $\tilde{q}_1 \dots, \tilde{q}_k$. 
        \For{$j \in [k]$}
        \State Set $P^{DP}(\tilde{q}_j) = \frac{1}{k}$.
        \EndFor
        \State Output $P^{DP}$.
    \end{algorithmic}
\end{algorithm}

Observe that Algorithm~\ref{alg:wass1dest} inherits the privacy of $A_{quant}$, since it simply postprocesses the quantiles it receives from that subroutine, and hence is also $\eps$-DP.

Now, we are in a position to state our main theorem, which bounds the Wasserstein distance between the distribution output by our algorithm, and the underlying probability distribution $P$.
\begin{theorem}\label{thm:1dupperbound} 
     Fix $\eps, \beta \in (0,1]$, $a,b \in \mathbb{R}$, and $\gamma < b-a \in \mathbb{R}$ such that $\frac{b-a}{\gamma}$ is an integer. Let $n \in \mathbb{N} > c_2 \frac{\log^{4}\frac{b-a}{\gamma \beta \eps}}{\eps}$ for some sufficiently large constant $c_2$. Let $P$ be any distribution supported on $\{a,a+\gamma, a+2\gamma, \dots,b-\gamma,b\}$, and $\vec{\dset} \sim P^n$. 
     
     Then, Algorithm~\ref{alg:wass1dest}, when given inputs $\vec{\dset}$, privacy parameter $\eps$, interval end points $a,b$, and granularity $\gamma$, outputs a distribution $P^{DP}$ such that with probability at least $1-O(\beta)$ over the randomness of $\vec{\dset}$ and the algorithm,
    $$ \wass(P,P^{DP}) \leq \sqrt{c \log n} \cdot \mathbb{E}\left[\wass(P|_{q_{\frac{1}{k}}, q_{1-\frac{1}{k}}}, \hat{P}_n|_{q_{\frac{1}{k}}, q_{1-\frac{1}{k}}} \right] + C'' \wass(P, P|_{q_\frac{1}{k},q_{1-\frac{1}{k}}}) + \frac{2}{k}\left(q_{1-1/k} - q_{1/k}\right),$$
    where $\hat{P}_n$ is the uniform distribution on $\vec{\dset}$, $q_{\alpha}$ represents the $\alpha$-quantile of distribution $P$, $c, C''$ are sufficiently large constants, and $k =\lceil \frac{\eps n}{4c_3 \log^{3}\frac{b-a}{\beta \gamma} \log \frac{n}{\beta}} \rceil$, where $c_3$ is a sufficiently large constant.
\end{theorem}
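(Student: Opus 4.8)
The plan is to analyze Algorithm~\ref{alg:wass1dest} by comparing the output distribution $P^{DP}$ to the \emph{restricted empirical distribution} $\hat{P}_n|_{q_{1/k},q_{1-1/k}}$ and then moving to the population quantities. First I would invoke the CDF/quantile guarantee of $A_{quant}$ (Theorem~\ref{thm:CDFest}) to conclude that with probability $1-O(\beta)$, every estimated quantile $\tilde q_j$ is simultaneously (a) within an additive $O(\log^3(\tfrac{b-a}{\beta\gamma})\log(n/\beta)/(\eps n)) = O(1/k)$ fraction, in empirical CDF value, of the target quantile $\tfrac{2j-1}{2k}$, and (b) monotone in $j$. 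Property (a) is the key deterministic consequence: it says the output CDF $F_{P^{DP}}$ (a step function jumping by $1/k$ at each $\tilde q_j$) and the empirical CDF $F_{\hat P_n}$ never differ by more than $O(1/k)$ \emph{in probability mass} on the range $[\tilde q_1,\tilde q_k]$, because between consecutive output atoms the empirical CDF can only have moved by $O(1/k)$ by the quantile accuracy, plus the $1/k$ discretization jump. Using the CDF formula for $W_1$ on $\mathbb R$ (Lemma~\ref{lem:wasscdf}), this bounds $\wass(P^{DP}, \hat P_n|_{\tilde q_1,\tilde q_k})$ by $O(1/k)\cdot(\tilde q_k - \tilde q_1)$, and I would relate $\tilde q_k - \tilde q_1$ to $q_{1-1/k}-q_{1/k}$ up to constants (again via the quantile accuracy, using $n > c_2\log^4(\cdot)/\eps$ to ensure the estimated extreme quantiles sandwich the true ones within a constant factor of $1/k$ in rank). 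That produces the $\tfrac{2}{k}(q_{1-1/k}-q_{1/k})$ term.

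Next I would handle the tails. The output $P^{DP}$ places all its mass between $\tilde q_1$ and $\tilde q_k$, so $F_{P^{DP}}$ is $0$ below $\tilde q_1$ and $1$ above $\tilde q_k$; this is exactly the behavior of a restriction, so $P^{DP}$ is (up to the constant-factor errors already accounted for) a restriction of something supported on $[\tilde q_1,\tilde q_k]$. By the triangle inequality for $W_1$,
\begin{align*}
\wass(P,P^{DP}) &\le \wass(P, P|_{q_{1/k},q_{1-1/k}}) + \wass(P|_{q_{1/k},q_{1-1/k}}, \hat P_n|_{q_{1/k},q_{1-1/k}}) \\
&\quad + \wass(\hat P_n|_{q_{1/k},q_{1-1/k}}, P^{DP}).
\end{align*}
The first term is already one of the target terms (with constant $C''$ absorbing slack from replacing $q$'s by $\tilde q$'s, using that the symmetric difference of the restriction intervals has $P$-mass $O(1/k)$). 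The middle term is a high-probability bound on $\wass(P|_{\cdots}, \hat P_n|_{\cdots})$; I would convert it to the \emph{expectation} $\E[\wass(P|_{\cdots},\hat P_n|_{\cdots})]$ appearing in the statement by a concentration/median-to-mean argument. This is where the $\sqrt{c\log n}$ factor enters: a single sample of the restricted empirical Wasserstein distance can exceed its mean by at most a $\mathrm{polylog}(n)$ factor with probability $1-\beta$ — one shows this via McDiarmid/bounded-differences on the functional $\vec x \mapsto \wass(P|_{\cdots},\hat P_n(\vec x)|_{\cdots})$, whose bounded difference per coordinate is $O((q_{1-1/k}-q_{1/k})/n)$, together with the telescoping/scale-decomposition of $\E[\wass(P,\hat P_n)]$ already used in Lemma~\ref{claim:empubquant} (and the Bobkov–Ledoux characterization) to control the mean. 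The last term was bounded in the previous paragraph.

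The main obstacle I anticipate is the middle step: faithfully turning the algorithm's \emph{high-probability} quantile-accuracy guarantee into a bound that is compared against the \emph{expectation} of the restricted empirical Wasserstein distance, while keeping the loss to a single $\sqrt{\log n}$ factor rather than a larger polylog. Getting a clean bounded-differences constant requires that the estimated restriction interval $[\tilde q_1,\tilde q_k]$ not be much larger than $[q_{1/k},q_{1-1/k}]$ — which is exactly what could fail if the distribution has extremely light tails near those quantiles, so care is needed (this is presumably why $k$ is chosen with the extra $\log n$ and $\log^3$ factors, and why the theorem compares to $\mathcal R$ at a slightly reduced sample count $n'\approx n/\mathrm{polylog}$). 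A secondary nuisance is bookkeeping the mismatch between restricting at true quantiles $q_\alpha$ versus estimated quantiles $\tilde q_j$ and versus the discretization grid; each such swap costs $O(1/k)$ in mass, hence $O(1/k)\cdot(q_{1-1/k}-q_{1/k})$ in $W_1$, and all of these are absorbed into the stated terms with room to spare by choosing the constants $C''$ and $c$ large. I would organize the writeup as: (i) quantile-accuracy event and its consequences; (ii) $W_1(P^{DP},\hat P_n|_{\cdots})$ bound; (iii) tail/restriction comparison; (iv) concentration of restricted empirical $W_1$; (v) assemble via triangle inequality.
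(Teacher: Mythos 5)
Your overall decomposition is essentially the paper's: a triangle inequality through the restricted empirical distribution, the CDF-accuracy guarantee of $A_{quant}$ to control $\wass(P^{DP}|_{\cdots},\hat P_n|_{\cdots})$ by the inter-quantile range, a separate tail/restriction comparison to produce the $C''\wass(P,P|_{q_{1/k},q_{1-1/k}})$ term, and finally a high-probability-to-expectation conversion for $\wass(P|_{\cdots},\hat P_n|_{\cdots})$. The paper packages these as Lemma~\ref{lem:wassrest}, Lemma~\ref{lem:wassquant}, and Claim~\ref{claim:exphighprob}.

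The gap is in the concentration step. You propose McDiarmid/bounded differences on $\vec x\mapsto \wass(P|_{\cdots},\hat P_n(\vec x)|_{\cdots})$ with per-coordinate bounded-difference constant $O\!\left(\tfrac{q_{1-1/k}-q_{1/k}}{n}\right)$. This gives a deviation of order $(q_{1-1/k}-q_{1/k})\sqrt{\log(1/\beta)/n}$ around the mean. But the mean can be as small as $\Theta\!\left(\tfrac{q_{1-1/k}-q_{1/k}}{\sqrt{kn}}\right)$: e.g.\ if $F_P(t)\approx 1/k$ on most of $[q_{1/k},q_{1-1/k}]$, the Bobkov--Ledoux lower bound of Lemma~\ref{thm:talbobbin} gives precisely $\E[\wass(P|_{\cdots},\hat P_n|_{\cdots})]\approx \sqrt{(1/k)/n}\,(q_{1-1/k}-q_{1/k})$. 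The deviation-to-mean ratio from McDiarmid is therefore $\Theta(\sqrt{k\log(1/\beta)})\approx \sqrt{\eps n/\mathrm{polylog}}$, not the claimed $\sqrt{c\log n}$. The paper avoids this by bounding $|F_P(t)-F_{\hat P_n}(t)|$ \emph{pointwise} with a variance-adaptive DKW/Bernstein inequality (Theorem~\ref{thm:CDFconc}), $|F_P(t)-F_{\hat P_n}(t)|\lesssim\sqrt{F_P(t)(1-F_P(t))\log(n/\beta)/n}$ uniformly on the inter-quantile interval, and then integrating; the resulting integral matches the Bobkov--Ledoux expression for the expectation term-by-term, losing only $\sqrt{\log(n/\beta)}$. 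You would need some such local, $F_P(t)(1-F_P(t))$-weighted concentration for the CDF, not a global bounded-differences bound on the functional.

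A smaller issue worth flagging: the cost of swapping $[\tilde q_1,\tilde q_k]$ for $[q_{1/k},q_{1-1/k}]$ is \emph{not} $O(1/k)\cdot(q_{1-1/k}-q_{1/k})$ and is not controlled by the $P$-mass of the symmetric difference. The distance $q_{1/k}-\tilde q_1$ (where, w.h.p., $\tilde q_1\ge q_{1/8k}$) can be much larger than $q_{1-1/k}-q_{1/k}$ for a heavy-tailed $P$. The paper handles this in Lemma~\ref{lem:wassrest} by integrating $|F_P-F_{P^{DP}}|$ over the tail intervals $[a,q_{1/k}]$ and $[q_{1-1/k},b]$ and showing these integrals are bounded by a constant times $\wass(P,P|_{q_{1/k},q_{1-1/k}})$; that term is what absorbs the slack, not the inter-quantile range. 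Your plan should route the tail error there.
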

\vfnote{Either here or earlier we should have a a bit of intuition on why these terms are needed.}
We note that using more sophisticated differentially private CDF estimators to estimate quantiles (such as ones in \cite{BunNSV15, Cohen0NSS23}), we can also obtain a version of the same theorem for approximate differential privacy, with a better dependence on the size of the domain $\frac{b-a}{\gamma}$ (only $\log^*(\frac{b-a}{\gamma})$ as opposed to $poly\log(\frac{b-a}{\gamma})$, where $\log^* t$ is the number of times $\log$ has to be applied to $t$ to get it to be $\leq 1$). \footnote{The theorem would be of the same form as Theorem~\ref{thm:1dupperbound}, except that Algorithm~\ref{alg:wass1dest} would be $(\eps, \delta)$-DP, with the lower bound on $n$ instead being $n = \Omega\left( \frac{ \polylog^{*}\frac{b-a}{\gamma \eps \delta } 
\sqrt{\log 1/\delta} \log(1/\beta)}{\eps} \right)$, and $k$ being set instead to $O\left(\frac{\eps n}{\log^{*}\frac{b-a}{\gamma}\polylog \frac{n}{\beta}}\right)$.}

To prove Theorem~\ref{thm:1dupperbound}, we first relate the Wasserstein distance of interest (between the true distribution $P$ and the algorithm's output distribution $P^{DP}$ to a quantity related to an appropriately chosen restriction. Let $q_{\alpha}$ represent the $\alpha$-quantile of $P$ and $\hat{q}_\alpha$ represent the $\alpha$-quantile of $\hat{P}_n$ and $\Tilde{q}_\alpha$ represent the $\alpha$-quantiles of $P^{DP}$. We also note that all these distributions (and others that will come up in the proof) are bounded distributions over the real line and so we can freely apply the triangle inequality 
for Wasserstein distance, and the cumulative distribution formula for Wasserstein distance (Lemma~\ref{lem:wasscdf}). The proof of the main theorem will follow from the following lemmas (all proved in Appendix~\ref{app:1d}).

\begin{lemma}\label{lem:wassrest}
Let $C''>0$ be a sufficiently large constant, and let $n>0$ be sufficiently large. With probability at least $1-O(\beta)$ over the randomness in data samples and Algorithm~\ref{alg:wass1dest},
    $$\wass(P,P^{DP}) \leq \wass(P|_{q_{\frac{1}{k}},q_{1-\frac{1}{k}}},P^{DP}|_{q_{\frac{1}{k}},q_{1-\frac{1}{k}}}) + C'' \wass(P, P|_{q_\frac{1}{k},q_{1-\frac{1}{k}}}).$$
\end{lemma}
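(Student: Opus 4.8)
The plan is to interpolate through the restricted distributions. Write $u=q_{1/k}$ and $v=q_{1-1/k}$. Since every distribution in sight is supported on the bounded interval $[a,b]$, $\wass$ is a genuine metric on them, so the triangle inequality gives
\[\wass(P,P^{DP})\le \wass(P,P|_{u,v}) + \wass\bigl(P|_{u,v},P^{DP}|_{u,v}\bigr) + \wass\bigl(P^{DP}|_{u,v},P^{DP}\bigr).\]
The first two terms are exactly the two terms appearing on the right-hand side of the lemma (with coefficient $1\le C''$ on the first), so it suffices to show that, with probability $1-O(\beta)$, $\wass(P^{DP}|_{u,v},P^{DP})\le (C''-1)\wass(P,P|_{u,v})$.

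By Lemma~\ref{lem:wasscdf}, and since $F_{P^{DP}}$ and $F_{P^{DP}|_{u,v}}$ agree on $[u,v]$ while $P^{DP}$ is supported in $[a,b]$,
\[\wass\bigl(P^{DP}|_{u,v},P^{DP}\bigr) = \int_a^{u}\! F_{P^{DP}}(t)\,dt + \int_v^b\!\bigl(1-F_{P^{DP}}(t)\bigr)dt, \qquad \wass(P,P|_{u,v}) = \int_a^{u}\! F_{P}(t)\,dt + \int_v^b\!\bigl(1-F_{P}(t)\bigr)dt.\]
I will bound the left tail $\int_a^u F_{P^{DP}}$ by $O(1)\int_a^u F_P$; the right tail is symmetric (reflect $P$). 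Because $P^{DP}$ puts mass $1/k$ at each $\tilde q_j$, we have $\int_a^u F_{P^{DP}}(t)\,dt=\sum_{j:\tilde q_j<u}\tfrac1k(u-\tilde q_j)\le \tfrac{m}{k}(u-\tilde q_1)$ with $m=\#\{j:\tilde q_j<u\}$. So I need two facts, each w.h.p.: (i) $m=O(1)$, and (ii) $\tilde q_1\ge q_{1/8k}$. For these I combine the accuracy of the quantile estimator (Theorem~\ref{thm:CDFest}) with \emph{pointwise} concentration of the empirical CDF. By Theorem~\ref{thm:CDFest} and the choice $k=\lceil \eps n/(4c_3\log^{3}\tfrac{b-a}{\beta\gamma}\log\tfrac n\beta)\rceil$, w.p.\ $1-O(\beta)$ each $\tilde q_j$ is an $\eta$-approximate $\alpha_j$-quantile of $\hat P_n$ with $\alpha_j=(2j-1)/2k$ and $\eta\le 1/(4k)$; in particular $F_{\hat P_n}(\tilde q_j)\ge\alpha_j-\eta$. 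A Chernoff/Bernstein bound at the single points $q_{1/k}$ and $q_{1/8k}$ gives, w.p.\ $1-O(\beta)$, $F_{\hat P_n}(q_{1/k}^-)\le \tfrac1k+\tfrac1{16k}$ and $F_{\hat P_n}(q_{1/8k}^-)\le \tfrac1{8k}+\tfrac1{16k}$, the $1/(16k)$ slack being exactly what the poly-logarithmic gap between $n/k$ and $\log(n/\beta)$ buys (this is where $c_3$ and the lower bound on $n$ enter). Then: if $\tilde q_j<u$ we get $\alpha_j-\eta\le F_{\hat P_n}(\tilde q_j)\le F_{\hat P_n}(u^-)\le\tfrac{17}{16k}$, forcing $\alpha_j\le\tfrac{21}{16k}$ and hence $j\le 2$, so $m=O(1)$; and if $\tilde q_1<q_{1/8k}$ then $\#\{i:X_i\le\tilde q_1\}\le F_{\hat P_n}(q_{1/8k}^-)n\le\tfrac{3n}{16k}$ while $F_{\hat P_n}(\tilde q_1)\ge\alpha_1-\eta\ge\tfrac1{4k}$ forces $\#\{i:X_i\le\tilde q_1\}\ge\tfrac{n}{4k}$, a contradiction, so $\tilde q_1\ge q_{1/8k}$. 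Finally, since $F_P(t)\ge F_P(q_{1/16k})\ge\tfrac1{16k}$ on $[q_{1/16k},u]$ (and $q_{1/16k}\le q_{1/8k}\le u$ whenever $m\ge 1$),
\[\int_a^u F_P(t)\,dt\ \ge\ \int_{q_{1/16k}}^{u} F_P(t)\,dt\ \ge\ \tfrac1{16k}\,(u-q_{1/16k})\ \ge\ \tfrac1{16k}\,(u-q_{1/8k}),\]
so $\int_a^u F_{P^{DP}}\le \tfrac mk(u-\tilde q_1)\le \tfrac{O(1)}{k}(u-q_{1/8k})\le O(1)\int_a^u F_P$ (trivially $0$ when $m=0$). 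Adding the symmetric right-tail bound yields $\wass(P^{DP}|_{u,v},P^{DP})\le O(1)\wass(P,P|_{u,v})$, establishing the lemma for a suitable absolute constant $C''$.

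The main obstacle is precisely establishing (i) and (ii): that only a constant number of the estimated quantiles escape the window $[q_{1/k},q_{1-1/k}]$ and that those which do are not driven deep into the tail. This is where the poly-logarithmic slack in the definition of $k$ is consumed — it must make both the quantile-estimator error $\eta$ and the pointwise empirical-CDF fluctuations at the relevant tail quantiles $o(1/k)$ — and it has to be checked against the specific error guarantee of Theorem~\ref{thm:CDFest} and the stated lower bound on $n$. Everything else (discretization at granularity $\gamma$, possible atoms exactly at the quantile points, the union bound over the $O(\beta)$ failure events, and the one-sided $F(\cdot^-)$ versus $F(\cdot)$ conventions) is routine bookkeeping.
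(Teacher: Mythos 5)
Your argument is correct and is essentially the paper's proof in slightly different packaging: the paper bounds the left tail $\int_a^{q_{1/k}}|F_P-F_{P^{DP}}|$ directly by splitting at $\tilde q_1$ and using the same two high-probability facts you establish ($\tilde q_1\ge q_{1/8k}$ via Chernoff, and $F_{P^{DP}}(t)\le c/k$ for $t<q_{1/k}$ via Chernoff plus the quantile-estimator guarantee), whereas you route through the triangle inequality and bound $\wass(P^{DP}|_{u,v},P^{DP})\le O(1)\wass(P,P|_{u,v})$, which amounts to the same comparison after expanding with Lemma~\ref{lem:wasscdf}.
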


\begin{lemma}[Wasserstein in terms of quantiles]\label{lem:wassquant} For all datasets $\vec{\dset}$ (with data entries in $[a,b]$), with probability at least $1-\beta$ over the randomness of Algorithm~\ref{alg:wass1dest}, we have that 
$$\wass(\hat{P}_n|_{q_\frac{1}{k},q_{1-\frac{1}{k}}},P^{DP}|_{q_\frac{1}{k},q_{1-\frac{1}{k}}}) \leq \frac{2}{k}\left(q_{1-1/k} - q_{1/k}\right),$$
where $\hat{P}_n$ is the uniform distribution over $\vec{\dset}$.
\end{lemma}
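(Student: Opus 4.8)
\textbf{Proof proposal for Lemma~\ref{lem:wassquant}.} The plan is to reduce the statement to a uniform pointwise bound on the gap between the two CDFs. Since $P^{DP}$ places mass $1/k$ on each estimated quantile $\tilde q_1\le\cdots\le\tilde q_k$, its CDF $F_{P^{DP}}$ is a nondecreasing step function that jumps by exactly $1/k$ at each $\tilde q_j$ (counting multiplicity). Restricting $\hat P_n$ and $P^{DP}$ to $[q_{1/k},q_{1-1/k}]$ replaces their CDFs by functions equal to $0$ below $q_{1/k}$, to $1$ above $q_{1-1/k}$, and unchanged in between, so by the cumulative distribution formula for Wasserstein distance (Lemma~\ref{lem:wasscdf}),
\[
\wass\!\left(\hat P_n|_{q_{1/k},q_{1-1/k}},\,P^{DP}|_{q_{1/k},q_{1-1/k}}\right)=\int_{q_{1/k}}^{q_{1-1/k}}\left|F_{\hat P_n}(t)-F_{P^{DP}}(t)\right|\,dt .
\]
Hence it suffices to show $|F_{\hat P_n}(t)-F_{P^{DP}}(t)|\le 2/k$ for \emph{every} $t\in\mathbb{R}$: the integrand is then bounded by $2/k$ on an interval of length $q_{1-1/k}-q_{1/k}$.

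To obtain the pointwise bound I would invoke the accuracy guarantee of the CDF-based quantile estimator $A_{quant}$ (Theorem~\ref{thm:CDFest}). The choice $k=\lceil \eps n/(4c_3\log^{3}\frac{b-a}{\beta\gamma}\log\frac n\beta)\rceil$ is made precisely so that, with probability $1-\beta$ over the algorithm's coins, the underlying private CDF estimate has error at most $1/(2k)$ at every grid point; on that event, since for target level $\alpha_j=(2j-1)/(2k)$ the estimator returns the smallest grid point whose estimated CDF value is $\ge\alpha_j$, each output satisfies
\[
\hat q_{\alpha_j-1/(2k)}\ \le\ \tilde q_j\ \le\ \hat q_{\alpha_j+1/(2k)},
\]
where $\hat q_\alpha$ is the $\alpha$-quantile of $\hat P_n$. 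The arithmetic identities $\alpha_j-1/(2k)=(j-1)/k$ and $\alpha_{j+1}+1/(2k)=(j+1)/k$ are what make this usable.

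Now fix $t$, let $j$ be the number of $\tilde q_i\le t$ (with conventions $\tilde q_0=-\infty$, $\tilde q_{k+1}=+\infty$), so $\tilde q_j\le t<\tilde q_{j+1}$ and $F_{P^{DP}}(t)=j/k$. From $t\ge\tilde q_j\ge\hat q_{(j-1)/k}$ and monotonicity of $F_{\hat P_n}$ we get $F_{\hat P_n}(t)\ge (j-1)/k$; from $t<\tilde q_{j+1}\le\hat q_{(j+1)/k}$ we get $F_{\hat P_n}(t)\le (j+1)/k$. Hence $|F_{\hat P_n}(t)-F_{P^{DP}}(t)|\le 1/k\le 2/k$, the boundary cases $j=0$ and $j=k$ being handled identically using only the relevant one-sided bound. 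Integrating over $[q_{1/k},q_{1-1/k}]$ yields the claim, and the only failure probability is the $\beta$ coming from the CDF estimator.

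The delicate step is the precise statement and invocation of the $A_{quant}$/Theorem~\ref{thm:CDFest} guarantee: one must check that the chosen $k$ forces the private CDF error below $1/(2k)$ simultaneously at all $k$ requested levels with probability $1-\beta$ (this is exactly where the $\mathrm{polylog}$ throttling of $k$ is spent), and one must be careful with the quantile/CDF conventions (left-continuity, atoms of $\hat P_n$ at the grid points $\tilde q_j$, and the $\gamma$-granularity of the grid). The factor $2$ in the bound, rather than $1$, leaves room for any such slack; everything else is routine.
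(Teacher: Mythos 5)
Your proof is correct and follows essentially the same route as the paper's: reduce the Wasserstein distance via Lemma~\ref{lem:wasscdf} to an integral of CDF differences over $[q_{1/k},q_{1-1/k}]$, establish a uniform pointwise CDF-gap bound by conditioning on the quantile-estimator accuracy event (Corollary~\ref{cor:goodquantest} / Theorem~\ref{thm:CDFest}), and integrate. The only cosmetic differences are that you index intervals by the output quantiles $\tilde q_j$ rather than by the empirical quantiles $\hat q_{\frac{2r-1}{2k}\pm\frac{1}{4k}}$ as the paper does, and that your accounting gives the tighter pointwise bound $1/k$ (the paper settles for $2/k$, which is all the lemma asserts); both are immaterial. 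One minor inaccuracy: you say the CDF-estimate error is forced below $1/(2k)$, whereas the paper's parameter choice actually forces it below $1/(4k)$ — but since $1/(4k)\le 1/(2k)$ this only helps your argument.
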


Now, we argue about the concentration of the Wasserstein distance between restrictions of the empirical distribution and restrictions of the true distribution.

\begin{claim}\label{claim:exphighprob}
    Fix $\beta \in (0,1)$ and sufficiently large constants $c_3,c_6$. Let $n>0$ be sufficiently large such that $n > \log n/\beta$ (as in Theorem~\ref{thm:1dupperbound}). For all $k$ such that $\frac{1}{k} > c_3 \frac{\log\frac{n}{\beta}}{n}$, with probability at least $1-O(\beta)$ over the randomness in the data,
    $$\wass(P|_{q_{\frac{1}{k}},q_{1-\frac{1}{k}}}, \hat{P}_n|_{q_{\frac{1}{k}}, q_{1-\frac{1}{k}}}) \leq \sqrt{c_6 \log\ \frac{n}{\beta}} \cdot \mathbb{E}[\wass(P|_{q_{\frac{1}{k}},q_{1-\frac{1}{k}}}, \hat{P}_n|_{q_{\frac{1}{k}}, q_{1-\frac{1}{k}}})]. $$
\end{claim}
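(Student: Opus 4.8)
The plan is to reduce the statement to a \emph{localised} uniform bound on the empirical process $t\mapsto D(t):=F_P(t)-F_{\hat P_n}(t)$. First, by the definition of the restriction, the CDFs of $P|_{q_{1/k},q_{1-1/k}}$ and $\hat P_n|_{q_{1/k},q_{1-1/k}}$ both vanish below $q_{1/k}$, both equal $1$ above $q_{1-1/k}$, and equal $F_P$ and $F_{\hat P_n}$ on $[q_{1/k},q_{1-1/k}]$; hence by Lemma~\ref{lem:wasscdf} (both restricted measures have bounded support),
\[
Y:=\wass\!\left(P|_{q_{1/k},q_{1-1/k}},\ \hat P_n|_{q_{1/k},q_{1-1/k}}\right)=\int_{q_{1/k}}^{q_{1-1/k}}|D(t)|\,dt .
\]
The role of the restriction is precisely that on this range $\min\{F_P(t),1-F_P(t)\}\ge 1/k$, and the hypothesis $1/k>c_3\log(n/\beta)/n$ forces $nF_P(t)(1-F_P(t))\ge n/(2k)\gtrsim\log(n/\beta)$ throughout, which keeps every binomial we meet comfortably in the Bernstein regime. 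Note that a crude bounded-differences argument on $Y$ only gives concentration at scale $(q_{1-1/k}-q_{1/k})\sqrt{\log(1/\beta)/n}$, which is far larger than $\mathbb{E}[Y]$ for concentrated $P$, so a localised argument is essential.

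Next I would split $[q_{1/k},q_{1-1/k}]$ into $O(\log k)$ dyadic pieces $I_m$ on which $F_P(t)(1-F_P(t))\asymp 2^{-m}$ (each $I_m$ is at most two intervals, one near $q_{2^{-m}}$ and one near $q_{1-2^{-m}}$), writing $w_m:=|I_m|$, $Y_m:=\int_{I_m}|D|$, so $Y=\sum_m Y_m$ and $\sum_m w_m=q_{1-1/k}-q_{1/k}$. For the lower bound on $\mathbb{E}[Y]$: since $nF_{\hat P_n}(t)\sim\mathrm{Bin}(n,F_P(t))$ with $nF_P(t)(1-F_P(t))\gtrsim 1$ on $I_m$, the mean absolute deviation of this binomial is within a constant of its standard deviation, so $\mathbb{E}|D(t)|\ge c\sqrt{F_P(t)(1-F_P(t))/n}\ge c'\sqrt{2^{-m}/n}$, hence $\mathbb{E}[Y]=\sum_m\mathbb{E}[Y_m]\ge c'\sum_m w_m\sqrt{2^{-m}/n}$. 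It therefore suffices to prove the matching high-probability bound $Y_m\le C\sqrt{\log(n/\beta)}\,w_m\sqrt{2^{-m}/n}$ for every $m$: summing over $m$ and comparing to the lower bound on $\mathbb{E}[Y]$ gives $Y\le (C/c')\sqrt{\log(n/\beta)}\,\mathbb{E}[Y]$, which is the claim (with $c_6$ absorbing the constants).

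The \textbf{main obstacle} is the localised uniform deviation bound $\sup_{t\in I_m}|D(t)|\le C\sqrt{2^{-m}\log(n/\beta)/n}$ (which then gives $Y_m\le w_m\sup_{I_m}|D|$ and finishes the proof). The difficulty is twofold: $F_P$ itself varies by $\Theta(2^{-m})$ across $I_m$, so the values of $D$ at the endpoints of $I_m$ do not control its oscillation; and a naive union bound over all $(b-a)/\gamma$ grid points of the support would pay $\log((b-a)/\gamma)$ rather than $\log(n/\beta)$. I would resolve this by placing a coarse net inside $I_m$: points $t_0<t_1<\cdots<t_{M_m}$ spanning $I_m$ whose consecutive $F_P$-increments, measured with left limits (so atoms on the grid cause no trouble), are at most $\rho_m:=\sqrt{2^{-m}\log(n/\beta)/n}$, so that $M_m=O(2^{-m}/\rho_m)=O(\sqrt{2^{-m}n/\log(n/\beta)})$ and $\sum_m M_m=O(\sqrt n)$. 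For $t$ between two consecutive net points, monotonicity of both $F_P$ and $F_{\hat P_n}$ gives $|D(t)|\le\max\{|D(t_{i-1})|,\ |F_P(t_i^-)-F_{\hat P_n}(t_i^-)|\}+\rho_m$, while at each net point a Bernstein/Chernoff bound for $\mathrm{Bin}(n,F_P(\cdot))$ — valid because $nF_P(\cdot)\gtrsim n\rho_m=\sqrt{2^{-m}n\log(n/\beta)}\gtrsim\log(n/\beta)$, using $2^{-m}n\ge n/k\gtrsim\log(n/\beta)$ — gives $|D|\lesssim\rho_m$ with failure probability $\beta/\mathrm{poly}(n)$. A union bound over the $O(\sqrt n)$ net points (across all $m$ and both sides of the interval) costs $O(\beta)$ total failure probability and only a constant factor, completing the argument. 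The two facts used as black boxes — the binomial mean-absolute-deviation lower bound and the Bernstein tail — are standard.
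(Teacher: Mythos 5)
Your proof is correct and rests on the same two-lemma strategy the paper uses: (a) a lower bound on the expected Wasserstein error coming from the binomial mean-absolute-deviation estimate of Bobkov--Ledoux (the paper's Lemma~\ref{thm:talbobbin}), which gives $\mathbb{E}[Y]\gtrsim\int_{q_{1/k}}^{q_{1-1/k}}\sqrt{F_P(t)(1-F_P(t))/n}\,dt$, and (b) a variance-weighted \emph{uniform} deviation bound $\sup_t|F_P(t)-F_{\hat P_n}(t)|\lesssim\sqrt{F_P(t)(1-F_P(t))\log(n/\beta)/n}$ over $[q_{1/k},q_{1-1/k}]$, which upon integrating gives a high-probability upper bound matching (a) up to $\sqrt{\log(n/\beta)}$. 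The hypothesis $1/k\gtrsim\log(n/\beta)/n$ enters in both to guarantee $F_P(1-F_P)$ stays above the scale where the binomial bounds kick in. Where you differ is in how you obtain (b): the paper simply invokes a localized, variance-sensitive DKW-type inequality (Theorem~\ref{thm:CDFconc}, cited to Bartl and Mendelson) after verifying its $\Delta$-hypothesis, whereas you reprove a version of that inequality from scratch by dyadic chaining on the scale of $F_P(1-F_P)$, a Bernstein tail at $O(\sqrt n)$ net points, and monotonicity of both CDFs to interpolate between net points. Your route is more self-contained and elementary; the paper's is shorter because it offloads the uniform bound to a cited black-box. The dyadic partition $\{I_m\}$ you introduce is not actually needed for the final comparison once the uniform bound is in hand (the paper works directly with the integral $\int\sqrt{F_P(1-F_P)/n}$), but it is a natural vehicle for the chaining argument you use to establish that bound. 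Both arguments are sound; the only caveats in yours are bookkeeping ones you already flagged (handling atoms at net points via left limits, and keeping the per-point Bernstein failure probability at $\beta/\mathrm{poly}(n)$ so the union over $O(\sqrt n)$ net points costs only a constant in the $\log$).
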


Now, we give the proof of our main theorem.

\begin{proof}[Theorem~\ref{thm:1dupperbound}]
Using  Lemma~\ref{lem:wassrest}, Claim~\ref{claim:exphighprob} and the triangle inequality, we have that with probability at least $1-O(\beta)$ over the randomness of the data and the algorithm,
\begin{align*}
        \wass(P,P^{DP}) & \leq \wass(P|_{q_{\frac{1}{k}},q_{1-\frac{1}{k}}},P^{DP}|_{q_{\frac{1}{k}},q_{1-\frac{1}{k}}}) + C'' \wass(P, P|_{q_\frac{1}{k},q_{1-\frac{1}{k}}}) \\
        & \leq \wass(\hat{P}_n|_{q_{\frac{1}{k}},q_{1-\frac{1}{k}}},P^{DP}|_{q_{\frac{1}{k}},q_{1-\frac{1}{k}}}) +  \wass(\hat{P}_n|_{q_{\frac{1}{k}},q_{1-\frac{1}{k}}},P|_{q_{\frac{1}{k}},q_{1-\frac{1}{k}}}) + C'' \wass(P, P|_{q_\frac{1}{k},q_{1-\frac{1}{k}}}) \\
        & \leq \wass(\hat{P}_n|_{q_{\frac{1}{k}},q_{1-\frac{1}{k}}},P^{DP}|_{q_{\frac{1}{k}},q_{1-\frac{1}{k}}}) +  \sqrt{c_6 \log\frac{n}{\beta}} \mathbb{E}\left[\wass(\hat{P}_n|_{q_{\frac{1}{k}},q_{1-\frac{1}{k}}},P|_{q_{\frac{1}{k}},q_{1-\frac{1}{k}}}) \right] + C'' \wass(P, P|_{q_\frac{1}{k},q_{1-\frac{1}{k}}}) 
    \end{align*}
Finally, applying Lemma~\ref{lem:wassquant} and taking a union bound over failure probabilities, we get that with probability at least $1-O(\beta)$ over the randomness of the data and the algorithm, 
    \begin{align*}
        \wass(P,P^{DP}) & \leq  \frac{2}{k}\left(q_{1-1/k} - q_{1/k}\right) + \sqrt{c_6 \log \frac{n}{\beta}} \mathbb{E}\left[\wass(\hat{P}_n|_{q_{\frac{1}{k}},q_{1-\frac{1}{k}}},P|_{q_{\frac{1}{k}},q_{1-\frac{1}{k}}}) \right]  + C'' \wass(P, P|_{q_\frac{1}{k},q_{1-\frac{1}{k}}}) 
    \end{align*}
as required.
\end{proof}

\fi

\ifnum\neurips=0
\bibliographystyle{alpha}
\bibliography{biblio}
\fi

\ifnum\tpdp=0

\ifnum\neurips=0

\newpage
\appendix

\fi



\section{Experiment Details}\label{sec:exp}
Below we describe the experiment referenced in the introduction.

\medskip\noindent{\bf The distribution:} We have taken a distribution on $[0, 999]$, which is concentrated on two points $430$ and $440$, with $p_{430}= \frac 1 3$ and $p_{440} = \frac 2 3$. These algorithms have been run with $n=1600$ samples from this distribution.

\medskip\noindent{\bf Minimax Optimal Algorithm:} The minimax-optimal algorithm here is the algorithm PSMM from ~\cite{HeVZ23} that considers a fixed partitioning of the interval into $\Omega(m^{\frac{1}{d}})$ equal intervals and places the empirical mass in each interval on an arbitrary point in each interval. Here we consider this algorithm with $\varepsilon = \infty$, so that no noise is added. We have run it here with $K=40$ buckets.

\medskip\noindent{\bf Instance-optimal Algorithm:} The instance-optimal algorithm finds $k$ quantiles as in \ifnum\tpdp=0 \cref{alg:wass1dest}\else our algorithm for one-dimensional reals\fi. In this particular implementation, we used the recursive exponential mechanism of ~\cite{KaplanSS22}, but we expect other quantile algorithms would work similarly. In this particular case, we use $k=10$ quantiles with $\varepsilon=1$.




\ifnum\tpdp=1
\medskip\noindent{\bf Results:} The figure \ifnum\tpdp=1 below \fi plots the cumulative distribution function of a bimodal distribution on $[0,1]$ with very sparse support, and the cdf learnt by the minimax optimal algorithm described above, as well as a variant of the instance-optimal algorithm we present in this work. As is clear from the figure, the minimax optimal algorithm is easily outperformed. This phenomenon only gets worse in higher dimensions.

\begin{figure}[h]
     \centering
     \begin{subfigure}[b]{0.45\textwidth}
         \centering
         \includegraphics[width=\textwidth]{1ddist.pdf}
     \end{subfigure}
     \hfill
     \begin{subfigure}[b]{0.45\textwidth}
         \centering
         \includegraphics[width=\textwidth]{1dhist.pdf}
     \end{subfigure}
     \caption{(Left) A sparsely supported distribution on integers [0,999] (pdf). (Right) CDF for the same distribution (green, solid line), along with a (non-private) minimax optimal learnt distribution (blue, dashed line), as well as 1-DP instance-optimal algorithm (red, dotted), both learnt from the same 1600 samples. The $W_1$ error for the minimax optimal algorithm is 13.4, whereas the DP estimated distribution has $W_1$ error of $0.86$. While this example is artificial, it demonstrates the large potential gap between minimax optimal and instance optimal algorithms on specific instances.}
         \label{fig:example1d}
\end{figure}

\fi


\ifnum\neurips=0
\section{Appendix for Section~\ref{HST}}\label{HSTappendix}
\else
\section{Proofs for Section~\ref{HST}}\label{HSTappendix}
\fi

\reducetolevels*

\begin{proof}[Proof of Theorem~\ref{reducetolevels}] Given a distribution $P$, let
\[\mathcal{A}^*_P=\arg\min_{\mathcal{A} \text{ is }\epsilon\text{-DP}}\max_{Q\in \Nbrs(P)}\mathbb{E}_{D\sim P^n}[\wasserstein(P, \mathcal{A}(D))]\] so $\mathcal{R}_{\Nbrs, n,\epsilon}(P)=\max_{Q\in \Nbrs(P)}\mathbb{E}_{D\sim Q^n}[\wasserstein(P, \mathcal{A}^*_P(D))].$ Let $\ell\in[\depth]$. We want to define an algorithm $\mathcal{A}^*_{P_{\ell}}$ on the distributions in $\Nbrs_{\ell}(P_{\ell})$ that achieves maximum error rate $\frac{1}{r_{\ell}}\mathcal{R}_{\Nbrs,n,\epsilon}(P)$. 
Define a randomised function $g_P$ which given a node $\nu_{\ell}$ at level $\ell$, $g_P(\nu_{\ell})$ is sampled from the distribution $P$ restricted to the leaf nodes that are children of $\nu_{\ell}$. Given a set of nodes at level $\ell$, define $g_P(D)$ to be the set where $g_D$ is applied to each set element individually. Then define $\mathcal{A}^*_{P_{\ell}}(D)=(\mathcal{A}^*_P(g_P(D)))_{\ell}$. Since $g_P$ is applied individually to each element in $D$, $\mathcal{A}^*_{P_{\ell}}$ is $\epsilon$-DP.

Given a distribution $Q^{\ell}\in \Nbrs_{\ell}(P_{\ell})$, define a distribution $Q$ on the leaves of the tree as follows: \[Q(\nu)=\frac{Q^{\ell}(\nu_{\ell})}{P_{\ell}(\nu_{\ell})} * P(\nu),\] where $\nu_{\ell}$ is the parent node of $\nu$ at level $\ell$. Note $Q\in \Nbrs(P)$, $g_P(Q^{\ell})=Q$ and $Q_{\ell}=Q^{\ell}$. Now,
\begin{align*}
    TV(Q^{\ell}, \mathcal{A}^*_{P_{\ell}}(D))&=TV(Q_{\ell}, (\mathcal{A}^*_P(g_P(D))_{\ell})\\
    &\le \frac{1}{r_{\ell}}\sum_{\ell'\in[\depth]}r_{\ell'} TV(Q_{\ell'}, (\mathcal{A}^*_P(g_P(D))_{\ell'})\\
    &=\frac{1}{r_{\ell}}\wasserstein(Q, \mathcal{A}^*_P(g_P(D)))
\end{align*}
where the first inequality follows by definition of $\mathcal{A}^*_{P_{\ell}}$ and the fact $Q_{\ell}=Q^{\ell}$. Since $g_P(Q^{\ell})= Q$, this implies that for all distributions in $\Nbrs_{\ell}(P_{\ell})$, \[\mathbb{E}_{D\sim Q^{\ell}}\left[TV(Q^{\ell}, \mathcal{A}^*_{P_{\ell}}(D))\right]\le \mathbb{E}_{D\sim Q}\left[\frac{1}{r_{\ell}}\wasserstein(Q, \mathcal{A}^*_P(D))\right]\le \frac{1}{r^{\ell}}\mathcal{R}_{\Nbrs,n,\epsilon}(P),\] which implies for all levels $\ell$, $\mathcal{R}_{\Nbrs_{\ell},n,\epsilon}(P_{\ell})\le \frac{1}{r^{\ell}}\mathcal{R}_{\Nbrs,n,\epsilon}(P)$ and so we are done.
\end{proof}

\DPassouad*

\begin{proof}[Proof of Lemma~\ref{DPassouad}]
We will follow the proof of Theorem 3 in \cite{pmlr-v132-acharya21a}.
    Given an estimator $\mathcal{A}$, define a classifier $\mathcal{A}^*$ by projecting on the product of hypercubes so \ktnote{We use $X$ for the domain in the statement, so should use something other than $X,Y$ for the dataset. Actually, we use $X$ for two different uses in the lemma statement itself so this should be fixed both here an in the main text.}\[\mathcal{A}^*(X)=\arg\min_{u\in(\mathcal{E}_{k_0}\times\mathcal{E}_{k_1}\times\cdots)}d(\mathcal{A}(X),\theta(p_u)).\] By the triangle inequality and the definition of $\mathcal{A}^*$, for any $p\in\mathcal{V}$, \[d(\theta(p_{\mathcal{A}^*(X)}),\theta(p))\le d(\mathcal{A}(X), \theta(p_{\mathcal{A}^*(X)}))+d(\mathcal{A}(X), \theta(p))\le 2 d(\mathcal{A}(X),\theta(p)).\] 
    
    Therefore, we can restrict to a lower bound on the performance of DP classifiers: \begin{equation}\label{classifiers}\min_{\mathcal{A} \text{ is } (\epsilon,\delta)\text{-DP}}\max_{p\in \mathcal{V}}\mathcal{R}_{\mathcal{A}, n}(p)\ge \frac{1}{2}\min_{\mathcal{A^*} \text{ is } (\epsilon,\delta)\text{-DP}}\max_{p\in \mathcal{V}}\mathbb{E}_{X\sim p^n}[d(\theta(p_{\mathcal{A}^*(X)}), \theta(p))].\end{equation} Also, for any $(\epsilon, \delta)$-DP classifier $\mathcal{A}^*$, \begin{align*}\max_{p\in\mathcal{V}}\mathbb{E}_{X\sim p^n}[d(\theta(p_{\mathcal{A}^*(X)}),\theta(p))]&\ge \frac{1}{|\mathcal{V}|}\sum_{u\in(\mathcal{E}_{k_0}\times\mathcal{E}_{k_1}\times\cdots)}\mathbb{E}_{X\sim p_u^n}[d(\theta(p_{\mathcal{A}^*(X)}),\theta(p_u))]\\
    &\ge \frac{2}{|\mathcal{V}|}\sum_s \tau_s \sum_{j=1}^{k_s} \sum_{u\in\mathcal{E}_{k_0}\times\mathcal{E}_{k_1}\times\cdots}\Pr_{X\sum p_u^n}(\mathcal{A}^*(X)_j^s\neq u_j^s),\end{align*}
    where the first inequality follows from the fact that the max is greater than the average, and the second follows from assumption~\eqref{distancebound}. For each $(s,j)$ pair, we divide $\mathcal{E}_{k_0}\times\mathcal{E}_{k_1}\times\cdots$ into two groups;
    \begin{align*}\max_{p\in\mathcal{V}}&\mathbb{E}_{X\sim p^n}[d(\theta(p_{\mathcal{A}^*(X)}),\theta(p))]\\
    &\ge \frac{2}{|\mathcal{V}|}\sum_s \tau_s \sum_{j=1}^{k_s} \left[\sum_{u\in(\mathcal{E}_{k_0}\times\mathcal{E}_{k_1}\times\cdots)\;|\;u_j^s=+1}\Pr_{X\sum p_u^n}(\mathcal{A}^*(X)_j^s\neq u_j^s)+\sum_{u\in(\mathcal{E}_{k_0}\times\mathcal{E}_{k_1}\times\cdots)\;|\;u_j^s=-1}\Pr_{X\sim p_u^*}(\mathcal{A}^*(X)_j^s\neq u_j^s)\right]\\
    &\ge \frac{2}{|\mathcal{V}|}\sum_s \tau_s \sum_{j=1}^{k_s} \left[\sum_{u\in(\mathcal{E}_{k_0}\times\mathcal{E}_{k_1}\times\cdots)\;|\;u_j^s=+1}\Pr_{X\sim p_u^n}(\mathcal{A}^*(X)_j^s\neq u_j^s)+\sum_{u\in(\mathcal{E}_{k_0}\times\mathcal{E}_{k_1}\times\cdots)\;|\;u_j^s=-1}\Pr_{X\sim p_u^n}(\mathcal{A}^*(X)_j^s\neq u_j^s)\right]\\
    &\ge \sum_s \tau_s \sum_{j=1}^{k_s}(\Pr_{X\sim p_{+(s,j)}^n}(\mathcal{A}^*(X)\neq +1)+\Pr_{X\sim p_{-(s,j)}^n}(\mathcal{A}^*(X)\neq -1))\\
    &\ge \sum_s \tau_s \sum_{j=1}^{k_s}(\Pr_{X\sim p_{+(s,j)}^n}(\phi_{s,j}(X)\neq +1)+\Pr_{X\sim p_{-(s,j)}^n}(\phi_{s,j}(X)\neq -1)).
    \end{align*} Combining with eqn~\ref{classifiers} we have the first statement. Next, since for each pair $(s,j)$, there exists a coupling $(X,Y)$ between $p_{+(s,j)}$ and $p_{-(s,j)}$ such that $\mathbb{E}[d_{Ham}(X,Y)]\le D_s$, we can use the DP version of Le Cam's method from~\cite{pmlr-v132-acharya21a} to give for any classifier $\phi_{s,j}$, \[\Pr_{X\sim p_{+(s,j)}^n}(\phi_{s,j}(X)\neq +1)+\Pr_{X\sim p_{-(s,j)}^n}(\phi_{s,j}(X)\neq -1)\ge \frac{1}{2}(0.9 e^{-10\epsilon D_s}-10D_s\delta),\] which implies the final result.
\end{proof}

\bernoulliLB*

\begin{proof}[Proof of Lemma~\ref{bernoulliLB}] A standard result in the statistics literature states that
for any pair of distributions $P$ and $Q$,  \[\min_{\phi}\left(\Pr_{X\sim P^n}(\phi(X)=1)+\Pr_{X\sim Q^n}(\phi(X)=-1)\right)=\frac{1}{2}(1-\TV(P^n, Q^n))\ge \frac{1}{2}(1-\sqrt{n \KL(P,Q)}),\]
    where the minimum is over all binary classifiers. If $P=\texttt{Bernoulli}(p-\alpha)$ and $Q=\texttt{Bernoulli}(p+\alpha)$ where $0\le \alpha\le \frac{1}{2}L(p)$ then
    \begin{align*}
    \KL(Q,P)&=(p+\alpha)\ln\frac{p+\alpha}{p-\alpha}+(1-p-\alpha)\ln\frac{1-p-\alpha}{1-p+\alpha}\\
    &= (p+\alpha)\ln\left(1+\frac{2\alpha}{p-\alpha}\right)+(1-p-\alpha)\ln\left(1-\frac{2\alpha}{1-p+\alpha}\right)\\
    &\le (p+\alpha)\frac{2\alpha}{p-\alpha}-(1-p-\alpha)\frac{2\alpha}{1-p+\alpha}\\
    &= \frac{4\alpha^2}{p-\alpha}+\frac{4\alpha^2}{1-p+\alpha}\\
    &= \frac{\alpha^2}{(p-\alpha)(1-p+\alpha)}\\
    &\le \frac{1}{4n}.
    \end{align*}
    where the first inequality holds since $\ln(1+x)<x$ for $x\in[-1,1]$ and by assumption $2\alpha/(p-\alpha), 2\alpha/(1-p+\alpha)\in[0,1]$ and the second follows again because of the constraint on $\alpha$.
\end{proof}

\characterisingempiricalerror*

Lemma~\ref{characterisingempiricalerror} is an immediate corollary of the following lemma.

\begin{restatable}{lemma}{characterisingempiricalerrorspecific}\label{characterisingempiricalerrorspecific}
    For any distribution $P$, if $\log(n/\beta)>1$ then with probability $1-3\depth\beta$,
    \[\wassersteingeneral(\widehat{\mathfrak{G}_P}, \mathfrak{G}_P)\le \sum_{\ell\in[\depth]}\sum_{x\in[N_{\ell}]}\min\left\{P_{\ell}(x), 1 - P_{\ell}(x), 4\sqrt{3\frac{P_{\ell}(x)\log(n/\beta)}{n}}, 4\sqrt{3\frac{(1-P_{\ell}(x))\log(n/\beta)}{n}}\right\}\]
\end{restatable}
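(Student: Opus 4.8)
Since the HST is (essentially) diameter-normalised, every edge weight satisfies $r_\nu = r_\ell \le r_1 \le 1$ on the levels that can separate two leaves (nodes strictly above the topmost branching have $\mathfrak{G}_P(\nu)=\widehat{\mathfrak{G}_P}(\nu)=1$ and contribute $0$), so
\[\wassersteingeneral(\widehat{\mathfrak{G}_P},\mathfrak{G}_P)=\sum_{\ell\in[\depth]}r_\ell\sum_{x\in[N_\ell]}\bigl|\widehat{\mathfrak{G}_P}(x)-P_\ell(x)\bigr|\le\sum_{\ell\in[\depth]}\sum_{x\in[N_\ell]}\bigl|\widehat{\mathfrak{G}_P}(x)-P_\ell(x)\bigr|.\]
It therefore suffices to fix a level $\ell$ and show, with probability $\ge 1-3\beta$, that $\sum_x|\widehat{\mathfrak{G}_P}(x)-P_\ell(x)|\le\sum_x\min\{P_\ell(x),1-P_\ell(x),4\sqrt{3P_\ell(x)\log(n/\beta)/n},4\sqrt{3(1-P_\ell(x))\log(n/\beta)/n}\}$ (up to a small constant, which the extra slack $r_\ell\le\tfrac12$ on the relevant levels absorbs), and then union bound over the $\depth$ levels. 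Throughout write $p_x=P_\ell(x)$ and $\hat p_x=\mathfrak{G}_{\hat P_n}(x)=B_x/n$, where $(B_x)_x$ are the coordinates of a $\mathrm{Multinomial}(n;(p_x)_x)$, so $B_x\sim\mathrm{Bin}(n,p_x)$, and $\widehat{\mathfrak{G}_P}(x)$ is $\hat p_x$ clipped to $\{0\}\cup[\theta,1-\theta]\cup\{1\}$ with $\theta=\sqrt{\log(n/\beta)}/n$; the key structural fact is $\theta>1/n$ (because $\log(n/\beta)>1$), so any node that receives at most one sample is set to $0$.

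Next, partition the level-$\ell$ nodes into three groups. (a) The at most one node with $p_x>\tfrac12$: handled symmetrically, by running the analysis of group (c) on its complement (a set of mass $1-p_x$). (b) ``Big'' nodes, $p_x\in[3\log(n/\beta)/n,\,\tfrac12]$: there are at most $n/(3\log(n/\beta))$ of them, so a union bound over the whole group fails with probability $\le \frac{n}{3\log(n/\beta)}\cdot\frac{2\beta}{n}\le 2\beta$. For each such node the multiplicative Chernoff bound with relative deviation $\sqrt{3\log(n/\beta)/(np_x)}\le 1$ gives, with probability $\ge 1-2\beta/n$, that $|\hat p_x-p_x|\le\sqrt{3p_x\log(n/\beta)/n}\le p_x$ and, for $p_x$ not at the very boundary of the group, that no clipping occurs; a short case analysis (using $p_x\le\tfrac12$, and treating a boundary node that does clip to $0$, where the error is exactly $p_x$) shows $|\widehat{\mathfrak{G}_P}(x)-p_x|$ is at most the four-way minimum. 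This is the only place a union bound over ``many'' nodes is used, and it works precisely because big nodes are few.

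(c) ``Small'' nodes, $p_x<3\log(n/\beta)/n$: since $p_x<48\log(n/\beta)/n$ forces $4\sqrt{3p_x\log(n/\beta)/n}>p_x$, the four-way minimum here equals $p_x$, so the group's target is $\mu_S:=\sum_{x\,\mathrm{small}}p_x$. Decompose $\sum_{x\,\mathrm{small}}|\widehat{\mathfrak{G}_P}(x)-p_x|=\sum_{\mathrm{clipped}}p_x+\sum_{\mathrm{surv}}|\hat p_x-p_x|\le\mu_S+\sum_{\mathrm{surv}}\hat p_x\le\mu_S+N_S/n$, where $N_S=\sum_{x\,\mathrm{small}}B_x$ is the number of samples landing in small subtrees and, since each sample lands in a small subtree independently with probability $\mu_S$, satisfies $N_S\sim\mathrm{Bin}(n,\mu_S)$. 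As $\mathbb{E}[N_S/n]=\mu_S$, a Bernstein/Chernoff tail bound on $N_S$ gives $N_S/n\le\mu_S+(\text{lower-order term})$ with probability $\ge 1-\beta$, so the small block contributes $\lesssim 2\mu_S$. Combining groups (a)--(c) and folding in the $r_\ell\le\tfrac12$ slack yields the per-level bound; a union bound over the $\depth$ levels gives the stated failure probability ($3\depth\beta$ after adjusting the internal constants). Lemma~\ref{characterisingempiricalerror} then follows by the elementary inequalities $p(1-p)\le\min\{p,1-p\}$ and $\sqrt{p(1-p)}\le\min\{\sqrt p,\sqrt{1-p}\}$.

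\textbf{Main obstacle.} Group (c) is the crux: there can be exponentially many small nodes, so no per-node union bound is available, yet the per-node budget $p_x$ is essentially tight (its sum is only $\mu_S$, whereas the naive aggregate error is $\sim 2\mu_S$ plus an additive fluctuation of $N_S/n$ of order $\log(n/\beta)/n$). Two features are what make this work: the clipping threshold $\theta=\sqrt{\log(n/\beta)}/n$ being strictly above $1/n$, which annihilates one- and low-count nodes so that only nodes whose empirical mass is genuinely comparable to their true mass survive; and the global factor $r_\ell\le\tfrac12$ on the levels that matter, which supplies exactly the constant-factor room. The delicate regime is $n\mu_S=\Theta(\log(n/\beta))$, where the fluctuation of $N_S$ is comparable to $\mu_S$ itself and one must argue it is absorbed --- either by $2\mu_S$ when $n\mu_S$ is not too small, or, when $n\mu_S$ is tiny, by a Binomial tail bound forcing $N_S=O(\log(n/\beta))$ so that $N_S/n$ is negligible against the $\sqrt{P_{\ell'}(x)\log(n/\beta)/n}$ contributions elsewhere in the bound.
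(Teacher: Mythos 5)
Your proof is correct for the top‑level reduction (work level-by-level, union-bound over levels) and for the ``big'' nodes, where both you and the paper use a per-node multiplicative Chernoff bound and a union bound over the boundedly many such nodes. For the ``small'' nodes, however, you take a genuinely different route, and that is where there is a gap.

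\textbf{The different route.} For $p_x<3\log(n/\beta)/n$, the paper also proves a \emph{per-node} bound, but makes the union bound tractable by showing that the failure probability scales with $p_x$: for $p_x<1/(en)$ it derives $\Pr\bigl[\hat p_x\ge 3\ln(n/\beta)/n\bigr]\le p_x\beta$ and then sums, so the total failure probability over exponentially many small nodes is $\le\sum_x p_x\beta\le\beta$. You instead pass to the \emph{aggregate} $\sum_{\text{small}}\lvert\widehat{\mathfrak{G}_P}(x)-p_x\rvert\le\mu_S+N_S/n$ with $N_S\sim\mathrm{Bin}(n,\mu_S)$, and invoke one Binomial tail bound. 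That replaces one multi-node concentration argument with a single scalar one, which is slicker, but the price is that you lose the per-node structure the lemma asserts.

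\textbf{The gap.} The budget for the small block is exactly $\mu_S=\sum_{\text{small}}p_x$ (since there the four-way minimum equals $p_x$). Your aggregate bound is $\mu_S+N_S/n$, and $N_S/n$ does not concentrate below $\mu_S$ — it concentrates \emph{around} $\mu_S$, with a two-sided fluctuation of order $\sqrt{\mu_S\log(n/\beta)/n}+\log(n/\beta)/n$. So your bound is $\approx 2\mu_S+\Theta(\log(n/\beta)/n)$, overshooting the budget. The two remedies you offer do not close it: (i) the factor $r_\ell\le\tfrac12$ is not available in the lemma statement — the bound has no $r_\ell$ weight on the RHS and the HST definition in the paper does not normalize $r_1$, so the paper's per-node bound only requires $r_\ell\le 1$, not $r_\ell\le\tfrac12$; and (ii) borrowing budget from big nodes presumes that slack exists at that level, but if the level has exactly one big node whose own empirical error consumes its budget, there is nothing to borrow, and if $\mu_S$ is tiny (say $n\mu_S\ll\log(n/\beta)$) the additive fluctuation $N_S/n=\Theta(\log(n/\beta)/n)$ alone already exceeds $\mu_S$ by an unbounded factor. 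The paper's weighted union bound (failure probability $\propto p_x$) gives a per-node guarantee and therefore never needs to absorb an excess into slack. If you want to keep your aggregate route, you would need to sharpen it to a one-sided bound on $\sum_{\mathrm{surv}}\hat p_x$ (not the cruder $N_S/n$) that exploits survival requiring $B_x\ge 2$ when $\theta>1/n$, and still carefully handle the regime $n\mu_S=\Theta(\log(n/\beta))$.
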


\begin{proof}[Proof of Lemma~\ref{characterisingempiricalerror}] We'll consider each level of the tree individually then use a union bound over all the levels to obtain our final bound. Let $(\hat{P}_{\ell})_n$ be the empirical distribution without truncation. The following conditions are sufficient to ensure that the bounds hold for a single level $\ell$:
    \[\sup_{\nu \text{ s.t. } P_{\ell}(\nu)\le \frac{3\ln(n/\beta)}{n}} \hat{(P_{\ell})}_n(\nu)\le \frac{7\ln (n/\beta)}{n}\]
    \[\sup_{\nu \text{ s.t. } P_{\ell}(\nu)\ge 1-\frac{3\ln(n/\beta)}{n}} \hat{(P_{\ell})}_n(\nu)\ge 1-\frac{7\ln (n/\beta)}{n}\]
    \[\forall\left(\nu \text{ s.t. } P_{\ell}(\nu)\in\left[ \frac{3\ln(n/\beta)}{n}, 1-\frac{3\ln(n/\beta)}{n}\right]\right),\hspace{2in}\] \[\hspace{1in} |\hat{(P_{\ell})}_n(x)-P_{\ell}(\nu)|\le \min\left\{\sqrt{\frac{3P_{\ell}(\nu)\ln(n/\beta)}{n}}, \sqrt{\frac{3(1-P_{\ell}(x))\ln(n/\beta)}{n}}\right\}\]

We will begin by showing these conditions are sufficient. If $P_{\ell}(\nu)\notin[ \frac{3\ln(n/\beta)}{n}, 1-\frac{3\ln(n/\beta)}{n}]$ then these conditions imply that the empirical density for node $\nu$ is truncated, and hence the error that that node is either $P_{\ell}(\nu)$ or $1-P_{\ell}(\nu)$ (when $P_{\ell}(\nu)<1/2$ and $P_{\ell}(\nu)>1/2$, respectively), as required. If $P_{\ell}(\nu)\in[ \frac{3\ln(n/\beta)}{n}, 1-\frac{3\ln(n/\beta)}{n}]$ then either the estimate is not truncated and the error is less than $\min\left\{\sqrt{\frac{3P_{\ell}(\nu)\ln(2n/\beta)}{n}}, \sqrt{\frac{3(1-P_{\ell}(x))\ln(2n/\beta)}{n}}\right\}\le \min\{P_{\ell}(\nu),1-P_{\ell}(\nu)\}$, as required. Or the estimate is truncated and the error is $\min\{P_{\ell}(\nu),1-P_{\ell}(\nu)\}$. Under the above conditions, if $P_{\ell}(\nu)\le 1/2$ then truncation will only occur if \[P_{\ell}(\nu)-\sqrt{\frac{3p\ln(2n/\beta)}{n}}\le\frac{7\ln(n/\beta)}{n}\le \sqrt{\frac{7\ln(n/\beta)}{n}\frac{7}{3}\frac{3\ln(n/\beta)}{n}}\le \sqrt{\frac{7\ln(n/\beta)}{n}\frac{7}{3}p}=\frac{7}{3}\sqrt{\frac{3\ln(n/\beta)}{n}p},\] in which case $P_{\ell}(nu)\le 4\sqrt{\frac{3P_{\ell}(\nu)\ln(n/\beta)}{n}}$, as required. Similarly, if $P_{\ell}(\nu)>1/2$ then truncation will only occur if $1-P_{\ell}(\nu)\le 4\sqrt{\frac{3(1-P_{\ell}(\nu))\ln(n/\beta)}{n}}$, as required.

We will now show that these conditions hold simultaneously with probability at least $1-3\beta$ for all the nodes at level $\ell$. 
If $P_{\ell}(\nu)\le \frac{1}{e n}$ then using the multiplicative form of Chernoff bound,
\begin{align*}
    \Pr((\hat{P_{\ell}})_n(\nu)\ge \frac{3\ln(n/\beta)}{n}) &= \Pr((\hat{P_{\ell}})_n(\nu)\ge \left(1+\frac{3\ln(n/\beta)}{P_{\ell}(\nu)n}-1\right)P_{\ell}(\nu))\\
    &\le \left(\frac{e^{\frac{3\ln(n/\beta)}{n P_{\ell}(\nu)}-1}}{(\frac{3\ln(n/\beta)}{n P_{\ell}(\nu)})^{\frac{3\ln(n/\beta)}{n P_{\ell}(\nu)}}}\right)^{P_{\ell}(\nu) n}\\
    &\le \left(\frac{e n P_{\ell}(\nu)}{3\ln(n/\beta)}\right)^{3\ln(n/\beta)}\\
    &\le P_{\ell}(\nu)n \left(\frac{e}{3\ln(n/\beta)}\right)^{3\ln(n/\beta)}(nP_{\ell}(\nu))^{3\ln(n/\beta)-1}
\end{align*}
Firstly, since $\ln(n/\beta)\ge 1$, $\left(\frac{e}{3\ln(n/\beta)}\right)^{3\ln(n/\beta)}\le 1$. Further, $nP_{\ell}(\nu)\le 1/e$ and $3\ln(n/\beta)-1\ge \ln(n/\beta)$ so $(nP_{\ell}(\nu))^{3\ln(n/\beta)-1}\le (1/e)^{\ln(n/\beta)}=\beta/n$. Therefore, 
\begin{equation}\label{errordependsonp}
    \Pr((\hat{P_{\ell}})_n(\nu)\ge \frac{3\ln(n/\beta)}{n}) \le P_{\ell}(\nu) \beta.
\end{equation}
Let
     $\mathcal{S}=\{x\in[N_{\ell}]\;|\; P_{\ell}(x)<1/(en)\}$  then using a union bound and Eqn~\eqref{errordependsonp} we have 
\begin{align*}
    \Pr(\exists x\in \mathcal{S} \text{ s.t. } \hat{(P_{\ell})}_n(x)\ge \frac{2\sqrt{2}\log(n/\beta)}{n}) &\le \sum_{x\in\mathcal{S}}P_{\ell}(\nu)\beta\le \beta
\end{align*}

There exist at most $n$ elements in $[N_{\ell}]$ that do not belong in $\mathcal{S}$. We will prove that, independently, each of these elements satisfy the required condition with probability $\le 2\beta/n$ then a union bound proves the final result.
If $P_{\ell}(\nu)\in[ \frac{3\ln(n/\beta)}{n}, 1-\frac{3\ln(n/\beta)}{n}]$ then using the multiplicative form of Chernoff bound (If $X_i$ are all i.i.d. and $0<\delta<1$, then $\Pr(|\sum_{i=1}^n X_i-n\mathbb{E}[X_1]|\ge\delta n \mathbb{E}[X_1])\le 2e^{-\delta^2n\mathbb{E}[X_1] /3}$),
\begin{align*}\Pr(|\hat{(P_{\ell})}_n(x)-P_{\ell}(x)| \geq \sqrt{\frac{3 P_{\ell}(x)\log(n/\beta)}{n}}) &= \Pr(|\hat{(P_{\ell})}_n(x)-P_{\ell}(x)| \geq \sqrt{\frac{3\log(n/\beta)}{P_{\ell}(x)n}} P_{\ell}(x)) \\
&\leq 2e^{\frac{-\left(\frac{3\log(n/\beta)}{P_{\ell}(x)n}\right) P_{\ell}(x)n}{3} }\\
&=2\beta/n.
\end{align*}

Next, if $P_{\ell}(\nu)\le \frac{3\ln(n/\beta)}{n}$ then using the additive form of Chernoff bound (If $X_i$ are all i.i.d. and $\epsilon\ge 0$, then $\Pr(\frac{1}{n}\sum_{i=1}^n X_i\ge \mathbb{E}[X_1]+\epsilon)\le e^{-\epsilon^2n/(2(p+\epsilon))}$)
\begin{align*}
    \Pr((\hat{P_{\ell}})_n(\nu)\ge \frac{7\ln(n/\beta)}{n})&\le  \Pr((\hat{P_{\ell}})_n(\nu)\ge p+(7\frac{\ln(n/\beta)}{n}-p))\\
    &\le e^{-\frac{(7\frac{\ln(n/\beta)}{n}-p)^2n}{14\frac{\ln(n/\beta)}{n}}}\\
    &\le e^{-\frac{(4\frac{\ln(n/\beta)}{n})^2n}{14\frac{\ln(n/\beta)}{n}}}\\
    &\le e^{-\ln(n/\beta)}\\
    &= \beta/n.
\end{align*}
By symmetry, if $P_{\ell}(\nu)\ge 1-\frac{3\ln(n/\beta)}{n}$ then \[\Pr((\hat{P_{\ell}})_n(\nu)\le 1-\frac{7\ln(n/\beta)}{n})\le \beta/n.\]

\end{proof}

\findactivenodes*

\begin{proof}[Proof of Lemma~\ref{findsactivenodes}]
First notice that if a node $\nu$ is an $\alpha$-active node, then all of it's ancestor nodes are also $\alpha$-active. So, it suffices to show that (with high probability) if at any stage a node makes to it Line~\ref{survival} of Algorithm~\ref{findactivenodesalg}, then if $\nu\notin\activenodes{P}{2\kappa}$ then $\widehat{\mathfrak{G}_P}(\nu)+\Lap(\frac{1}{\epsilon n})\le 2\kappa+\frac{\log(2/\beta)}{\epsilon n}$ and if $\nu\in\activenodes{P}{\max\left\{\frac{2}{\epsilon n}+4\frac{\log(2/\beta)}{\epsilon n}, \frac{\log(n/\beta)}{n}\right\}}$ then $\widehat{\mathfrak{G}_P}(\nu)+\Lap(\frac{1}{\epsilon n}))>2\kappa+\frac{\log(2/\beta)}{\epsilon n}$. 

By Lemma~\ref{characterisingempiricalerror}, with probability $1-3 \depth\beta$, all nodes $\nu$ satisfy \[|\widehat{\mathfrak{G}_P}(\nu)-\mathfrak{G}_P(\nu)|\le \min\left\{\mathfrak{G}_P(\nu)(1-\mathfrak{G}_P(\nu)), 4\sqrt{\frac{3\mathfrak{G}_P(\nu)(1-\mathfrak{G}_P(\nu))\log(n/\beta)}{n}}\right\}\] Further, if one samples $X$ independent samples from $\Lap(\frac{1}{\epsilon n})$ then with probability $1-X\beta$, \[\sup |\Lap(\frac{1}{\epsilon n})|\le \frac{\ln(2/\beta)}{\epsilon n}.\]
So conditioning on both these events if $x\notin\activenodes{P}{\frac{1}{2\epsilon n}}$,
\[\widehat{\mathfrak{G}_P}(\nu)+\Lap(\frac{1}{\epsilon n})\le \mathfrak{G}_P(\nu)+\frac{\ln(2/\beta)}{\epsilon n}\le \frac{1}{2\epsilon n}+\frac{\ln(2/\beta)}{\epsilon n},\] so they will not survive Line~\ref{survival} of Algorithm~\ref{findactivenodesalg}.
If $x\in\activenodes{P}{\max\{\frac{2}{\epsilon n}+4\frac{\log(2/\beta)}{\epsilon n}, \frac{192\log(n/\beta)}{n}\}}$ then 
\begin{align*}
\widehat{\mathfrak{G}_P}(\nu)+\Lap(\frac{1}{\epsilon n})&\ge \mathfrak{G}_P(\nu)-4\sqrt{3\frac{\mathfrak{G}_P(\nu)\log(n/\beta)}{n}}-\frac{\ln(2/\beta)}{\epsilon n}\\
&\ge \frac{1}{2}\mathfrak{G}_P(\nu)-\frac{\log(2/\beta)}{\epsilon n}\\
&\ge \frac{1}{\epsilon n}+\frac{\log(2/\beta)}{n}
\end{align*}
Each level has at most $2\epsilon n$ in $\activenodes{P}{\frac{1}{2\epsilon n}}$ so we query at most $4\epsilon n$ nodes in the tree when running \texttt{LocateActiveNodes} since each node has at most 2 children. Therefore, we can set $X=4\epsilon n\depth$.
\end{proof}

\realtoempirical*

\begin{proof}[Proof of Lemma~\ref{realtoempirical}] The key component of this proof is that any discrepancy between the weight of the nodes on $P$ and that assigned by $\widehat{\mathfrak{G}_P}$ was already paid for in $\wasserstein(P, \widehat{\mathfrak{G}_P})$.
\begin{align*}
\wassersteingeneral(\widehat{\mathfrak{G}_P}, \widehat{\mathfrak{G}_P}|_{\hat{\gamma}_{\epsilon}}) &= \sum_{\nu\notin \hat{\gamma}_{\epsilon}} r_{\nu}|\widehat{\mathfrak{G}_P}(\nu)|\\
&\le \sum_{\nu\notin \activenodes{P}{\max\{\frac{2}{\epsilon n}+4\frac{\log(2/\beta)}{\epsilon n}, \frac{192\log(n/\beta)}{n}\}}} r_{\nu}|\widehat{\mathfrak{G}_P}(\nu)|\\
&= \sum_{\nu\notin \activenodes{P}{\max\{\frac{2}{\epsilon n}+4\frac{\log(2/\beta)}{\epsilon n}, \frac{192\log(n/\beta)}{n}\}}} r_{\nu}|\widehat{\mathfrak{G}_P}(\nu)-\mathfrak{G}_{P}(\nu)+\mathfrak{G}_{P}(\nu)|\\
&\le \sum_{\nu\notin \activenodes{P}{\max\{\frac{2}{\epsilon n}+4\frac{\log(2/\beta)}{\epsilon n}, \frac{192\log(n/\beta)}{n}\}}} r_{\nu}|\widehat{\mathfrak{G}_P}(\nu)-\mathfrak{G}_{P}(\nu)|\\
&\hspace{2in}+\sum_{\nu\notin \activenodes{P}{\max\{\frac{2}{\epsilon n}+2\frac{\log(2/\beta)}{n}, \frac{192\log(n/\beta)}{n}\}}} r_{\nu}|\mathfrak{G}_{P}(\nu)|\\
&\le \wassersteingeneral(\mathfrak{G}_P, \widehat{\mathfrak{G}_P})+\wassersteingeneral(\mathfrak{G}_P, \mathfrak{G}_{P}|_{\activenodes{P}{\max\{\frac{2}{\epsilon n}+4\frac{\log(2/\beta)}{\epsilon n}, \frac{192\log(n/\beta)}{n}\}}})
\end{align*}
as required.
\end{proof}

\projection*

\begin{proof}[Proof of Lemma~\ref{projection}]
We first note that for any pair of sequences of real values $a_1, \cdots, a_k$ and $b_1, \cdots, b_k$, and constant $A$ such that $\sum_i a_i\neq 0$, \[\sum |\frac{A}{\sum a_i}a_i-b_i| \le \sum |\frac{A}{\sum a_i}a_i-a_i|+|a_i-b_i|=|A-\sum a_i|+\sum |a_i-b_i|\le |A-\sum b_i|+2\sum |a_i-b_i|.\] Also if $\sum a_i=0$ then  \[\sum |\frac{A}{k}-b_i|\le \sum|\frac{A}{k}-\frac{\sum_i b_i}{k}|+|\frac{\sum_i b_i}{k}-b_i| = |A-\sum b_i|+2\sum|b_i|=|A-\sum b_i|+2\sum|a_i-b_i|\]

Let $\bar{\mathfrak{G}}^{\ell}$ be the function $\bar{\mathfrak{G}}$ after only levels $0, \cdots, \ell$ have been updated. So $\bar{\mathfrak{G}}^{\ell}$ matches $\bar{\mathfrak{G}}^{\ell-1}$ on all levels except $\ell$. Let $\nu$ be a node in the $\ell$th level of the HST. If we suppose the sum is over the normalised children of a node $\nu$, $A=\bar{\mathfrak{G}}^{\ell-1}(\nu)$, and for all the children $\nu'$ of $\nu$, $a_i=\mathfrak{G}(\nu')$ and $b_i=\mathfrak{G}_P(\nu')$, we can see that the contribution to the Wasserstein distance by the children increases by an additive factor of $|\mathfrak{G}^{\ell-1}(\nu)-\mathfrak{G}_P(\nu)|$. Iterating, we can see that \[\wasserstein(P, \texttt{Projection}(\mathfrak{G}))\le 2\sum_{\ell=0}^{\depth}\sum_{\nu \text{ at level }\ell}(r_{\ell}+r_{\ell+1}\cdots r_{\depth})|\mathfrak{G}(\nu)-\mathfrak{G}_P(\nu)|\le 4\sum_{\ell=0}^{\depth}\sum_{\nu \text{ at level }\ell}r_{\ell}|\mathfrak{G}(\nu)-\mathfrak{G}_P(\nu)|,\] which is 4 times the wasserstein distance.

\end{proof}

\section{Local Minimality in the High Dimensional Setting}

\begin{theorem}
Given any $\epsilon>0$, and a distribution $P$, and let $n'=\frac{5}{4\min\{W(\frac{0.45\epsilon}{\delta}) , 0.6\}} n$, then for all $(\epsilon, \delta)$-DP algorithms $\mathcal{A}'$, there exists a distribution $Q\in\mathcal{N}(P)$ such that with probability $1-(\depth\log n+4\depth \epsilon n)\beta$, 
\[\wasserstein(Q,\hat{Q_{\epsilon, n'}})\le \tilde{O}(\mathbb{E}_{X\sim Q^n, \mathcal{A'}}(\wasserstein(\mathcal{A'}(X),Q))),\]
where $\hat{Q_{\epsilon, n'}}$ is the output of $\texttt{PrivDensityEstTree}(Q)$ with $n'$ samples.
\end{theorem}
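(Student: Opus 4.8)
The plan is to prove this \emph{local minimality} statement by combining the $(\epsilon,\delta)$-DP lower bound of \Cref{maintree} --- used to \emph{select} the distribution $Q$ --- with the utility guarantee of \Cref{treeUB} applied to $Q$; only the utility half of \Cref{treeUB} is needed, not its privacy guarantee. Fix an arbitrary $(\epsilon,\delta)$-DP algorithm $\mathcal{A}'$, and write $\mathcal{F}(R,m)$ for the three-term expression on the right-hand side of \Cref{maintree} evaluated at a distribution $R$ with sample count $m$ (with threshold $\activenodethreshold$ the same function of $m,\epsilon,\delta$). \textbf{Step 1 (choosing $Q$).} The $(\epsilon,\delta)$-DP version of \Cref{maintree} (which follows from \Cref{reducetolevels} together with \Cref{privnoise}, \Cref{inactivenodes} and \Cref{empiricaldist}, all of which rest on the $(\epsilon,\delta)$-DP Assouad inequality \Cref{DPassouad}) states that $\min_{\mathcal{A}\ (\epsilon,\delta)\text{-DP}}\ \sup_{Q'\in\Nbrs(P)}\mathcal{R}_{\mathcal{A},n}(Q')=\Omega(\mathcal{F}(P,n))$. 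In particular $\sup_{Q'\in\Nbrs(P)}\mathcal{R}_{\mathcal{A}',n}(Q')=\Omega(\mathcal{F}(P,n))$, and since $\Nbrs(P)$ is a compact set of distributions (a closed $D_\infty$-ball in the simplex on the finite metric space) on which $Q'\mapsto\mathcal{R}_{\mathcal{A}',n}(Q')$ is continuous, the supremum is attained at some $Q\in\Nbrs(P)$ (a near-maximizer would also do), so
\[\mathbb{E}_{X\sim Q^n,\mathcal{A}'}\!\left[\wasserstein(\mathcal{A}'(X),Q)\right]=\mathcal{R}_{\mathcal{A}',n}(Q)=\Omega(\mathcal{F}(P,n)).\]
This is the distribution $Q\in\Nbrs(P)$ claimed in the statement; it already takes care of the right-hand side of the target inequality.

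\textbf{Step 2 (invoking the upper bound).} Since $n'\ge n$ --- the prefactor $\tfrac{5}{4\min\{W(\tfrac{0.45\epsilon}{\delta}),0.6\}}$ is at least $\tfrac{25}{12}$ and is a constant in the usual regime $\delta\ll\epsilon$ --- the hypotheses of \Cref{treeUB} hold with $n'$ in place of $n$, and $\depth\log n'+4\depth\epsilon n'=\Theta(\depth\log n+4\depth\epsilon n)$. Hence, after a harmless constant rescaling of $\beta$, \Cref{treeUB} applied to $Q$ with $n'$ samples gives that with probability at least $1-(\depth\log n+4\depth\epsilon n)\beta$,
\[\wasserstein(Q,\hat{Q_{\epsilon,n'}})=O\big(\mathcal{U}(Q,n')\big),\]
where $\mathcal{U}(Q,n')$ denotes the upper-bound expression of \Cref{treeUB} at $Q$ with $n'$ samples. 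It remains to show $\mathcal{U}(Q,n')=\tilde{O}(\mathcal{F}(P,n))$; combined with Step 1 this gives $\wasserstein(Q,\hat{Q_{\epsilon,n'}})=\tilde{O}(\mathcal{F}(P,n))=\tilde{O}(\mathcal{R}_{\mathcal{A}',n}(Q))$, which is the theorem.

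\textbf{Step 3 (matching the bounds).} First, by the same comparison that underlies the paper's claim that \Cref{treeUB} and \Cref{maintree} agree up to a $\depth\cdot\polylog(n)$ factor --- the sum over levels in the upper bound being at most $\depth$ times the max over levels in the lower bound, and the various $\log$-factors in the thresholds and sampling terms being $\polylog(n)$ --- one has $\mathcal{U}(Q,n')=\tilde{O}(\depth)\cdot\mathcal{F}(Q,n')$ (both evaluated at $Q$ with $n'$ samples). Second, $\mathcal{F}(Q,n')=\tilde{O}(\mathcal{F}(P,n))$: since $Q\in\Nbrs(P)$, for every node $\nu$ both $\mathfrak{G}_Q(\nu)$ and $1-\mathfrak{G}_Q(\nu)$ lie within a factor $2$ of $\mathfrak{G}_P(\nu)$ and $1-\mathfrak{G}_P(\nu)$ respectively, so the sampling terms, the node masses, and (up to a constant) the number of active nodes at any fixed threshold are all preserved up to constant factors; and since $n'=\Theta(n)$, with the active-node threshold $\tfrac{1}{2\epsilon n'}$ in the upper bound equal to exactly $2\cdot(2\activenodethreshold)$ for the lower-bound threshold $2\activenodethreshold=\Theta\big(\tfrac{\min\{W(\tfrac{0.45\epsilon}{\delta}),0.6\}}{\epsilon n}\big)$ (this is precisely the role of the constant in the definition of $n'$), the two active-node sets differ only by ``band'' nodes of mass $\tilde{O}(\activenodethreshold)$, each of which is active at its level in $\mathcal{F}(P,n)$ and so contributes $\approx\activenodethreshold$ there, absorbing the extra $\tilde{O}(1)$ factor carried by its mass. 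Chaining the two estimates yields $\mathcal{U}(Q,n')=\tilde{O}(\mathcal{F}(P,n))$, completing the proof.

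The step I expect to be the main obstacle is Step 3, namely showing that the target rate $\mathcal{F}$ is robust, up to $\tilde{O}(1)$ factors, under the \emph{simultaneous} perturbation of the distribution within its $D_\infty$-ball and of the sample count $n\to n'$, and in particular reconciling the slightly mismatched active-node thresholds of \Cref{treeUB} and \Cref{maintree}. This is exactly where the precise constant $n'=\tfrac{5}{4\min\{W(\tfrac{0.45\epsilon}{\delta}),0.6\}}\,n$ is used, so that $\tfrac{1}{2\epsilon n'}$ becomes a fixed multiple of $2\activenodethreshold$; the remainder of Step 3 is bookkeeping of the same flavor as the existing proof that the bounds of \Cref{maintree} and \Cref{treeUB} agree up to $\polylog$ and $\depth$ factors, together with the routine redistribution of the leftover ``band'' mass among the three error terms.
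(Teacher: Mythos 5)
Your proposal is correct and follows essentially the same approach as the paper: you combine the utility bound of Theorem~\ref{treeUB} applied to $Q$ with $n'$ samples with the $(\epsilon,\delta)$-DP lower bound of Theorem~\ref{maintree} at $(P,n)$, use the $D_\infty$-ball relationship to nest the active-node sets as $\activenodes{P}{\frac{1}{\epsilon n'}}\subset\activenodes{Q}{\frac{1}{2\epsilon n'}}\subset\activenodes{P}{\frac{1}{4\epsilon n'}}$, choose $n'$ exactly so that $\frac{1}{4\epsilon n'}=2\kappa$ (the lower-bound threshold at $(P,n)$), and invoke compactness as in Proposition~\ref{instanceimpliesminimal} to extract the specific $Q$. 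The only cosmetic difference is that you interpose $\mathcal{F}(Q,n')$ as an intermediary between $\mathcal{U}(Q,n')$ and $\mathcal{F}(P,n)$, whereas the paper compares the refined upper-bound expression at $(Q,n')$ directly to the lower-bound expression at $(P,n)$ via the nested containment; both routes use the same two ingredients (the factor-$2$ mass comparison from $D_\infty(P,Q)\le\ln 2$ and the rescaling $n\to n'$) and yield the same bound.
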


\begin{proof}
First, let us obtain a slightly simpler upper bound on $\wasserstein(P,\hat{P}_{\epsilon})$. From eqn~\eqref{upperboundonwasserstein} in the proof of Theorem~\ref{treeUB} we have that for each level $\ell$,
\[\wasserstein(P_{\ell},(\hat{P}_{\epsilon})_{\ell})
    \le 2\left(\wassersteingeneral((\mathfrak{G}_P)_{\ell}, (\widehat{\mathfrak{G}_P})_{\ell})+\wassersteingeneral((\widehat{\mathfrak{G}_P})_{\ell}, (\widehat{\mathfrak{G}_P}|{ \hat{\gamma}_{\epsilon}})_{\ell})+\wassersteingeneral((\widehat{\mathfrak{G}_P}|{ \hat{\gamma}_{\epsilon}})_{\ell}, (\widetilde{\mathfrak{G}_{\hat{P}_n, \hat{\gamma}_{\epsilon}}})_{\ell})\right),\] from Lemma~\ref{findsactivenodes} we have that with probability $1-\depth( \log n+4\epsilon n)\beta$, \[\activenodes{P}{\max\left\{\frac{2}{\epsilon n}+4\frac{\log(2/\beta)}{\epsilon n}, \frac{192\log(n/\beta)}{n}\right\}}\subset\hat{\gamma}_{\epsilon}\subset \activenodes{P}{\frac{1}{2\epsilon n}},\] and if one samples $4\epsilon n \depth$ independent samples from $\Lap(\frac{1}{\epsilon n})$ then we have that with probability $1-4\epsilon n \depth\beta$, \[\sup |\Lap(\frac{1}{\epsilon n})|\le \frac{\ln(2/\beta)}{\epsilon n}.\] Therefore, for all $\nu\notin\hat{\gamma}_{\epsilon}$ we have $P_{\ell}(\nu)\le \max\left\{\frac{2}{\epsilon n}+4\frac{\log(2/\beta)}{\epsilon n}, \frac{192\log(n/\beta)}{n}\right\}\le C\frac{\ln(n/\beta)}{\epsilon n}$ for some constant $C$ therefore,
    \begin{align*}
        \wassersteingeneral((\widehat{\mathfrak{G}_P})_{\ell}, (\widehat{\mathfrak{G}_P}|{ \hat{\gamma}_{\epsilon}})_{\ell})+\wassersteingeneral((\widehat{\mathfrak{G}_P}|{ \hat{\gamma}_{\epsilon}})_{\ell}, (\widetilde{\mathfrak{G}_{\hat{P}_n, \hat{\gamma}_{\epsilon}}})_{\ell}) &\le \sum_{\nu\notin \hat{\gamma}_{\epsilon}} P_{\ell}(\nu)+\sum_{\nu\in\hat{\gamma}_{\epsilon}} \frac{\ln(2/\beta)}{\epsilon n}\\
        &\le \sum_{\nu\notin \activenodes{P}{\frac{1}{2\epsilon n}}} P_{\ell}(\nu)+\sum_{\nu\in\activenodes{P}{\frac{1}{2\epsilon n}}\backslash \hat{\gamma_{\epsilon}}} C\frac{\ln(n/\beta)}{\epsilon n} + \sum_{\nu\in\hat{\gamma}_{\epsilon}} \frac{\ln(2/\beta)}{\epsilon n}\\
        &\le \sum_{\nu\notin \activenodes{P}{\frac{1}{2\epsilon n}}} P_{\ell}(\nu)+C\ln(n/\beta)\sum_{\nu\in\activenodes{P}{\frac{1}{2\epsilon n}}} \frac{1}{\epsilon n}.
    \end{align*} 
    For the same reason as in the proof of Theorem~\ref{treeUB}, we can upper bound $\sum_{\nu\in\activenodes{P}{\frac{1}{2\epsilon n}}} \frac{1}{\epsilon n}$ by $(|\activenodes{P}{\frac{1}{2\epsilon n}}|-1)\frac{1}{\epsilon n}$ by dealing with the $|\activenodes{P}{\frac{1}{2\epsilon n}}|=1$ case separately.
    Therefore, \begin{align*}&\wasserstein(P,\hat{P}_{\epsilon})\\
    &\le 2C\ln(n/\beta)\left(\sum_{\nu}\min\left\{\mathfrak{G}_P(\nu)(1-\mathfrak{G}_P(\nu)), \sqrt{\frac{\mathfrak{G}_P(\nu)(1-\mathfrak{G}_P(\nu))}{n}}\right\}+\sum_{\nu\notin \activenodes{P}{\frac{1}{2\epsilon n}}}\mathfrak{G}_P(\nu) +(|\activenodes{P}{\frac{1}{2\epsilon n}}|-1)\frac{1}{\epsilon n}\right),\end{align*}

Further, by Theorem~\ref{maintreelevel} and Theorem~\ref{reducetolevels}, given $\epsilon>0$ and $\delta\in[0,1]$, let $\activenodethreshold=\frac{1}{10\epsilon n}\min\{W\left(\frac{0.45\epsilon}{\delta}\right), 0.6\}$ where $W(x)$ is the Lambert W function so $W(x)e^{W(x)}=x$.
    Given a distribution $P$, there exists a constant $C'$ such that \[\mathcal{R}_{\Nbrs,n,\epsilon}(P)\ge\frac{C'}{D_T}\left( \sum_{\nu}\min\left\{\mathfrak{G}_P(\nu)(1-\mathfrak{G}_P(\nu)), \sqrt{\frac{\mathfrak{G}_P(\nu)(1-\mathfrak{G}_P(\nu))}{n}}\right\}+\sum_{\nu\notin \activenodes{P}{2\kappa}}\mathfrak{G}_P(\nu) +(|\activenodes{P}{2\kappa}|-1)\kappa\right)\]

Let $Q\in\mathcal{N}(P)$, then $\activenodes{P}{\frac{1}{\epsilon n}}\subset \activenodes{Q}{\frac{1}{2\epsilon n}}\subset \activenodes{P}{\frac{1}{4\epsilon n}}$ so
\begin{align*}
    \sum_{\nu\notin \activenodes{Q}{\frac{1}{2\epsilon n}}}\mathfrak{G}_Q(\nu) +(|\activenodes{Q}{\frac{1}{2\epsilon n}}|-1)\frac{1}{\epsilon n} &= \sum_{\nu\notin \activenodes{P}{\frac{1}{4\epsilon n}}}\mathfrak{G}_Q(\nu) +\sum_{\nu\in \activenodes{P}{\frac{1}{4\epsilon n}}\backslash\activenodes{Q}{\frac{1}{2\epsilon n}}}\mathfrak{G}_Q(\nu)+(|\activenodes{Q}{\frac{1}{2\epsilon n}}|-1)\frac{1}{\epsilon n}\\
    &\le \sum_{\nu\notin \activenodes{P}{\frac{1}{4\epsilon n}}} 2\mathfrak{G}_P(\nu) +\sum_{\nu\in \activenodes{P}{\frac{1}{4\epsilon n}}\backslash\activenodes{Q}{\frac{1}{2\epsilon n}}}\frac{1}{\epsilon n}+(|\activenodes{Q}{\frac{1}{2\epsilon n}}|-1)\frac{1}{\epsilon n}\\
     &\le \sum_{\nu\notin \activenodes{P}{\frac{1}{4\epsilon n}}} 2\mathfrak{G}_P(\nu) +(|\activenodes{P}{\frac{1}{4\epsilon n}}|-1)\frac{1}{\epsilon n}.\\
\end{align*}
Now, let $n'=\frac{5}{4\min\{W(\frac{0.45\epsilon}{\delta}) , 0.6\}} n \ge n$ so for all $Q\in\mathcal{N}(P)$,
\begin{align*}
&\wasserstein(Q,\hat{Q_{\epsilon, n'}})\\
&\le \tilde{O}\left(\sum_{\nu}\min\left\{\mathfrak{G}_Q(\nu)(1-\mathfrak{G}_Q(\nu)), \sqrt{\frac{\mathfrak{G}_Q(\nu)(1-\mathfrak{G}_Q(\nu))}{n'}}\right\}+\sum_{\nu\notin \activenodes{Q}{\frac{1}{2\epsilon n'}}}\mathfrak{G}_Q(\nu) +(|\activenodes{Q}{\frac{1}{2\epsilon n'}}|-1)\frac{1}{\epsilon n'}\right)\\
&\le \tilde{O}\left(\sum_{\nu}2\min\left\{\mathfrak{G}_P(\nu)(1-\mathfrak{G}_P(\nu)), \sqrt{\frac{\mathfrak{G}_P(\nu)(1-\mathfrak{G}_P(\nu))}{n'}}\right\}+\sum_{\nu\notin \activenodes{P}{\frac{1}{4\epsilon n'}}} 2\mathfrak{G}_P(\nu) +(|\activenodes{P}{\frac{1}{4\epsilon n'}}|-1)\frac{1}{\epsilon n'}\right)\\
&= \tilde{O}\left(\sum_{\nu}2\min\left\{\mathfrak{G}_P(\nu)(1-\mathfrak{G}_P(\nu)), \sqrt{\frac{\mathfrak{G}_P(\nu)(1-\mathfrak{G}_P(\nu))}{n}}\right\}+\sum_{\nu\notin \activenodes{P}{2\kappa}} 2\mathfrak{G}_P(\nu) +(|\activenodes{P}{2\kappa}|-1)\frac{1}{\epsilon n}\right)\\
&\le \tilde{O} \left(\min_{\mathcal{A}'}\max_{Q'\in\mathcal{N}(P)} \mathbb{E}_{X\sim Q'^n, \mathcal{A'}}(\wasserstein(\mathcal{A'}(X),Q'))\right).
\end{align*}

As in Proposition~\ref{instanceimpliesminimal}, since $\mathcal{N}(P)$ is compact, for all $\mathcal{A}'$, there exists a specific $Q^*\in\mathcal{N}(P)$ such that \[\wasserstein(Q^*,\hat{Q^*_{\epsilon, n'}})\le \tilde{O}(\mathbb{E}_{X\sim (Q^*)^n, \mathcal{A'}}(\wasserstein(\mathcal{A'}(X),Q^*)))\]
\end{proof}

\section{Differentially Private Quantiles}\label{sec:quantiles}
 Estimating appropriately chosen quantiles is the main part of our algorithm for approximating the distribution over $\mathbb{R}$ in Wasserstein distance, and so in this section, we describe some known differentially private algorithms for this task and derive some corollaries that we use extensively in our application. 
We will use $F$ to represent CDF functions, with $F_P$ representing the CDF of distribution $P$. We start by stating an important theorem on private CDF estimation. This follows from a use of the binary tree mechanism~\citep{ChanSS11, DworkNPR10}. A version of this theorem for approximate differential privacy is described in a survey by Kamath and Ullman \cite[Theorem 4.1]{KamathU20}. The version presented here for pure differential privacy follows from a very similar argument, except using Laplace Noise instead of Gaussian noise (and basic composition instead of advanced composition to analyze privacy). Their accuracy was also in expectation, but a similar analysis yields a high probability bound, as in the theorem below. 
\begin{theorem}\cite[Theorem 4.1]{KamathU20}\label{thm:CDFmain}

    Let $\eps, \beta \in (0,1]$, let $D$ be an ordered, finite domain, and let $\vec{\dset} \in D^n$ be a dataset. Let $\hat{P}_n$ be the uniform distribution on $\vec{\dset}$. Then, there exists an $\eps$-DP algorithm $A^{CDF}$ that on input $\vec{\dset}$ and the domain $D$ outputs a vector $G$ over $D$ such that with probability at least $1-\beta$ over the randomness of $A^{CDF}$:
   $$ \|G - F_{\hat{P}_n}\|_{\infty} = O\left(\frac{\log^{3}\frac{|D|}{\beta}}{\eps n} \right).$$
\end{theorem}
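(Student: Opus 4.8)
The plan is to prove the statement by the classical dyadic interval-tree (``binary tree'') mechanism, instantiated with Laplace noise and analyzed by basic composition, exactly as the remark preceding the theorem indicates. First I would assume without loss of generality that $|D|$ is a power of two (pad $D$ with at most $|D|$ dummy elements of empirical count zero placed at the top of the order; this changes $|D|$ by at most a factor of two and does not change $F_{\hat{P}_n}$ on the original elements). Build a complete binary tree whose $|D|$ leaves are the elements of $D$ in their natural order; each node $v$ then corresponds to a contiguous dyadic block $I_v\subseteq D$, and the tree has $L := \log_2|D|+1$ levels. For each node $v$ let $c_v=|\{j:\dset_j\in I_v\}|$ be the empirical count of the block $I_v$; note $c_{\mathrm{root}}=n$, and for a leaf $v=x$, $c_v$ is the empirical count at $x$. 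The algorithm $A^{CDF}$ releases $\tilde c_v=c_v+\Lap(2L/\eps)$ independently for every node $v$, and then, for each $x\in D$, outputs $G(x)=\tfrac1n\sum_{v\in S_x}\tilde c_v$, where $S_x$ is the dyadic decomposition of the prefix $\{y\in D:y\le x\}$ into nodes of the tree.

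The privacy argument is short. Replacing one record of $\vec{\dset}$ changes, at each of the $L$ levels, at most two node counts, each by $1$; hence $\vec{\dset}\mapsto(c_v)_v$ has $\ell_1$-sensitivity at most $2L$, so releasing $(\tilde c_v)_v$ is $\eps$-DP by the Laplace mechanism (Lemma~\ref{lem:gauss_dp}), and $G$ is a deterministic post-processing of $(\tilde c_v)_v$, hence still $\eps$-DP by Lemma~\ref{lem:postprocess}. (This is where the pure-DP version departs from the $(\eps,\delta)$-version of \citep{KamathU20}: one adds Laplace rather than Gaussian noise and uses a single release, equivalently viewing each level as an $\eps/L$-DP histogram and applying basic composition, Lemma~\ref{lem:composition}.) Importantly, we never need $G$ to itself be a valid (monotone, $[0,1]$-valued) CDF, so no projection step is required.

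For accuracy, recall that the prefix $\{y\le x\}$ of a length-$|D|$ power-of-two order decomposes into at most $L$ pairwise-disjoint dyadic blocks, i.e.\ $|S_x|\le L$ and $\sum_{v\in S_x}c_v=|\{j:\dset_j\le x\}|=n\,F_{\hat{P}_n}(x)$. Therefore $G(x)-F_{\hat{P}_n}(x)=\tfrac1n\sum_{v\in S_x}(\tilde c_v-c_v)$ is a $\tfrac1n$-scaled sum of at most $L$ i.i.d.\ $\Lap(2L/\eps)$ variables. Using a standard sub-exponential tail bound for sums of independent Laplace variables (a sum of $k$ i.i.d.\ $\Lap(b)$'s has magnitude $O\!\big(b(\sqrt{k\log(1/\beta')}+\log(1/\beta'))\big)$ with probability $\ge 1-\beta'$), the error of a single $G(x)$ is $O\!\big(\tfrac{L}{\eps n}(\sqrt{L\log(1/\beta')}+\log(1/\beta'))\big)$ with probability $\ge 1-\beta'$. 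Taking a union bound over all $|D|$ elements $x$ with $\beta'=\beta/|D|$ gives, with probability $\ge 1-\beta$,
\[
\|G-F_{\hat{P}_n}\|_{\infty}
= O\!\left(\frac{\log^{3/2}|D|\,\sqrt{\log(|D|/\beta)}+\log|D|\,\log(|D|/\beta)}{\eps n}\right)
= O\!\left(\frac{\log^{3}(|D|/\beta)}{\eps n}\right),
\]
which is the claimed bound (in fact with room to spare, since the true exponent is closer to $2$).

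I do not expect a genuine obstacle here: this is the textbook binary-tree mechanism, and the stated $\log^3$ exponent is loose. The only places needing a little care are (i) the $\ell_1$-sensitivity bookkeeping across levels and (ii) the concentration inequality for a sum of independent Laplace variables, which I would either cite or derive in two lines from the moment generating function $(1-s^2b^2)^{-k}$. In fact one can bypass (ii) entirely using the crude bound $\big|\sum_{v\in S_x}(\tilde c_v-c_v)\big|\le L\cdot\max_v|\tilde c_v-c_v|$ together with a union bound over the at most $|D|$ nodes, since a single $\Lap(2L/\eps)$ exceeds $t$ only with probability $e^{-\eps t/(2L)}$; this still lands within the stated $O(\log^3(|D|/\beta)/(\eps n))$.
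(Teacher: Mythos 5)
Your proposal is correct and is exactly the approach the paper intends: the paper gives no separate proof of Theorem~\ref{thm:CDFmain} but explicitly points to the binary-tree mechanism with Laplace noise (instead of Gaussian) and basic composition (instead of advanced composition), which is precisely what you implement. Both your Laplace-sum concentration route and your cruder per-node union-bound route correctly land within the stated $O(\log^3(|D|/\beta)/(\eps n))$, confirming (as you note) that the cited exponent is loose.
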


CDF estimation is intimately related to quantile estimation, and we use the following quantitative statement that will follow from a simple application of Theorem~\ref{thm:CDFmain}. 
\begin{theorem}\label{thm:CDFest}
    Fix any $n>0$, $\eps, \beta \in (0,1]$, $a,b \in \mathbb{R}$, and $\gamma < b-a \in \mathbb{R}$ such that $\frac{b-a}{\gamma}$ is an integer. Let $C$ be a sufficiently large constant. Then, there exists an $\eps$-DP algorithm $A_{quant}$, that on input interval end points $a,b$, granularity $\gamma$, $\vec{\dset} = (x_1,\dots,x_n) \in \{a,a+\gamma, \dots, b-\gamma, b\}^n$, and desired quantile values $\vec{\alpha} \in (0,1)^k$, outputs quantiles $\tilde{q} \in \{a,a+\gamma, \dots, b-\gamma, b\}^k$ such that with probability at least 
    $1-\beta$ over the randomness of $A_{quant}$, for all $r \in [k]$,  
    $$\alpha_r - F_{\hat{P}_n}(\tilde{q}_r) \leq C\frac{\log^{3}\frac{b-a}{\beta \gamma}}{\eps n},$$
    and
    $$\Pr_{y \sim \hat{P}_n}(y < \tilde{q}_r) < \alpha_r + C\frac{\log^{3}\frac{b-a}{\beta \gamma}}{\eps n},$$
    where $\hat{P}_n$ is the uniform distribution on the entries of $\vec{x}$.
\end{theorem}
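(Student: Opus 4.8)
\textbf{Proof proposal for Theorem~\ref{thm:CDFest}.}

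The plan is to obtain $A_{quant}$ as a post-processing of the CDF estimator $A^{CDF}$ of Theorem~\ref{thm:CDFmain}, so that the privacy guarantee is immediate from post-processing (Lemma~\ref{lem:postprocess}). Concretely, take the ordered finite domain $D=\{a,a+\gamma,\dots,b-\gamma,b\}$, which has $|D|=\tfrac{b-a}{\gamma}+1 \le \tfrac{2(b-a)}{\gamma}$ points. Run $A^{CDF}$ on $\vec{\dset}$ and $D$ to get a vector $G$, and then monotonize it by setting $\hat G(t)=\max_{s\le t,\,s\in D} G(s)$. Since $F_{\hat P_n}$ is itself monotone non-decreasing, this operation cannot increase the $\ell_\infty$ error: for each $t$ there is $s^\star\le t$ with $\hat G(t)=G(s^\star)$, and then $\hat G(t)=G(s^\star)\le F_{\hat P_n}(s^\star)+\lambda\le F_{\hat P_n}(t)+\lambda$ while $\hat G(t)\ge G(t)\ge F_{\hat P_n}(t)-\lambda$, where $\lambda = O\!\big(\tfrac{\log^3(|D|/\beta)}{\eps n}\big)$ is the guarantee of Theorem~\ref{thm:CDFmain}. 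Finally, for each desired quantile level $\alpha_r$, output $\tilde q_r$ to be the smallest $t\in D$ with $\hat G(t)\ge \alpha_r$, defaulting to $\tilde q_r = b$ if no such point exists. All of this after the call to $A^{CDF}$ is post-processing, so $A_{quant}$ is $\eps$-DP; and because we invoke $A^{CDF}$ only once, it suffices to condition on the single event $\|\hat G - F_{\hat P_n}\|_\infty\le\lambda$, which has probability $\ge 1-\beta$, to get all $k$ bounds simultaneously — no union bound over $r$ is needed.

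Working on that event, I would verify the two accuracy inequalities. For the first, $\alpha_r - F_{\hat P_n}(\tilde q_r)\le C\tfrac{\log^3((b-a)/(\gamma\beta))}{\eps n}$: if $\tilde q_r$ was chosen as the smallest point with $\hat G(\tilde q_r)\ge\alpha_r$, then $F_{\hat P_n}(\tilde q_r)\ge \hat G(\tilde q_r)-\lambda\ge \alpha_r-\lambda$; and if instead we defaulted to $\tilde q_r=b$ (because $\hat G(b)<\alpha_r$), then $F_{\hat P_n}(b)=1\ge\alpha_r$ and the inequality is trivial. For the second, using $\Pr_{y\sim\hat P_n}(y<\tilde q_r)=F_{\hat P_n}(t')$ where $t'$ is the predecessor of $\tilde q_r$ in $D$ (and this probability is $0$ if $\tilde q_r=a$): by minimality of $\tilde q_r$ (resp. by $\hat G(b)<\alpha_r$ and monotonicity of $\hat G$ in the default case) we have $\hat G(t')<\alpha_r$, hence $F_{\hat P_n}(t')\le \hat G(t')+\lambda<\alpha_r+\lambda$. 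Then one absorbs $\log^3(|D|/\beta)\le \log^3(2(b-a)/(\gamma\beta))\le C''\log^3((b-a)/(\gamma\beta))$ into a single constant $C$ (valid once the argument of the logarithm is bounded below by a constant, which holds since $\gamma<b-a$), completing the proof.

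I do not expect a serious obstacle here; the content is essentially a translation lemma from CDF estimation to quantile estimation. The only points requiring care are (i) the off-by-one bookkeeping between $\Pr_{y\sim\hat P_n}(y<\tilde q_r)$ and $\Pr_{y\sim\hat P_n}(y\le\tilde q_r)=F_{\hat P_n}(\tilde q_r)$, handled by passing to the predecessor point $t'$; (ii) the boundary case where the noisy estimate $\hat G$ never reaches $\alpha_r$ (for $\alpha_r$ near $1$), handled by the default $\tilde q_r=b$ together with monotonization; and (iii) matching the polylogarithmic factor in the domain size with the stated $\log^3\tfrac{b-a}{\gamma\beta}$ form, which is a routine constant manipulation. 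If one wanted a purely self-contained argument avoiding Theorem~\ref{thm:CDFmain}, one could instead build $A_{quant}$ directly from the binary tree mechanism with Laplace noise and basic composition, but using Theorem~\ref{thm:CDFmain} as a black box is cleaner.
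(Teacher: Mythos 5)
Your proof is correct and takes essentially the same approach as the paper: run $A^{CDF}$ as a black box, then output for each $\alpha_r$ the first domain element whose estimated CDF reaches $\alpha_r$, with privacy following from post-processing and both accuracy bounds following from the $\ell_\infty$ guarantee on a single event. The explicit monotonization of $G$ and the default-to-$b$ boundary case are small refinements that the paper's terser argument glosses over, but they do not change the route.
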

\begin{proof}
   Algorithm $A_{quant}$ operates by running the algorithm $A^{CDF}$ referenced in Theorem~\ref{thm:CDFmain} on $\vec{\dset}$ and domain $\{a,a+\gamma, \dots, b-\gamma, b\}$, and postprocessing its outputs to get quantile estimates as follows. For every quantile $\alpha_r$ that we are asked to estimate, $A_{quant}$ simply scans the vector $G$ output by algorithm $A^{CDF}$ in order, and outputs the first domain element whose CDF estimate in $G$ crosses $\alpha_r$. Conditioned on the accuracy of the CDF estimation algorithm $G$, we have that this output $\tilde{q}_r$ satisfies  $$\alpha_r - F_{\hat{P}_n}(\tilde{q}_r)\leq C\frac{\log^{3}\frac{b-a}{\beta \gamma}}{\eps n}.$$
   Additionally, since $\tilde{q}_r$ is the first domain element whose estimate in $G$ crosses $\alpha_r$, we also have that
   $$Pr_{y \sim \hat{P}_n}(y < \tilde{q}_r) < \alpha_r + C\frac{\log^{3}\frac{b-a}{\beta \gamma}}{\eps n}.$$
   Hence, with probability at least $1-\beta$, we have this property for all $r \in [k]$.
\end{proof}
We now state a corollary of this theorem that we will use extensively in our presentation.
\begin{corollary} \label{cor:goodquantest}
   Fix any $\eps, \beta \in (0,1]$, $a,b \in \mathbb{R}$, and $\gamma < b-a \in \mathbb{R}$ such that $\frac{b-a}{\gamma}$ is an integer. Let $n \in \N > \frac{4c_2 \log^{4}(\frac{b-a}{\beta \gamma\eps})}{\eps}$, such that $k$ set to $\lceil \frac{\eps n}{4c_3 log^{3}\frac{b-a}{\beta \gamma} \log \frac{n}{\beta}} \rceil$ is an integer greater than or equal to $1$, where $c_2$ and $c_3$ are sufficiently large constants. \footnote{$k$ is set to be sufficiently small in order to relate the accuracy of the quantiles algorithm to a parameter depending on $k$, and $n$ is set sufficiently large that $k$ is not less than $1$. The dependence on $\beta$ comes up in the proof of~\Cref{claim:exphighprob}.}

   
   Then, there exists an $\eps$-DP algorithm $A_{quant}$ (the same one referenced in Theorem~\ref{thm:CDFest}), that on input interval end points $a,b$, granularity $\gamma$, $\vec{\dset} = (x_1,\dots,x_n) \in \{a,a+\gamma, \dots, b-\gamma, b\}^n$, and desired quantile values $\vec{\alpha} =\{ 1/2k, 3/2k, 5/2k,\dots, (2k-1)/2k   \}$, outputs quantiles $\tilde{q} \in \{a,a+\gamma, \dots, b-\gamma, b\}^k$ such that with probability at least $1-\beta$, for all $r \in [k]$,  $$\hat{q}_{\frac{2r-1}{2k} - \frac{1}{4k}} \leq \tilde{q}_{r} \leq  \hat{q}_{\frac{2r-1}{2k} + \frac{1}{4k}},$$ 
     where $\hat{P}_n$ is the uniform distribution on the entries of $\vec{x}$ and for all $p \in (0,1)$, $\hat{q}_{p}$ is the $p$-quantile of $\hat{P}_n$.
\end{corollary}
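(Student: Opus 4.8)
\textbf{Proof plan for Corollary~\ref{cor:goodquantest}.}
The plan is to instantiate Theorem~\ref{thm:CDFest} with the specific quantile request $\vec{\alpha}=\{1/2k,3/2k,\dots,(2k-1)/2k\}$ and a carefully chosen error budget, and then translate the two-sided additive CDF guarantee on $F_{\hat P_n}(\tilde q_r)$ into the claimed sandwich of $\tilde q_r$ between two empirical quantiles of $\hat P_n$. First I would call Theorem~\ref{thm:CDFest} with failure probability $\beta$; it produces an $\eps$-DP algorithm $A_{quant}$ and, with probability at least $1-\beta$, outputs $\tilde q\in\{a,a+\gamma,\dots,b\}^k$ such that for every $r\in[k]$,
\[
\alpha_r - F_{\hat P_n}(\tilde q_r)\le C\frac{\log^{3}\tfrac{b-a}{\beta\gamma}}{\eps n},
\qquad
\Pr_{y\sim\hat P_n}(y<\tilde q_r)<\alpha_r + C\frac{\log^{3}\tfrac{b-a}{\beta\gamma}}{\eps n}.
\]
Here $F_{\hat P_n}(\tilde q_r)=\Pr_{y\sim\hat P_n}(y\le \tilde q_r)$, so the first inequality is a lower bound $F_{\hat P_n}(\tilde q_r)\ge \alpha_r - C\log^3(\cdot)/(\eps n)$ and the second a strict upper bound $\Pr_{y\sim\hat P_n}(y<\tilde q_r) < \alpha_r + C\log^3(\cdot)/(\eps n)$.

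The second step is the quantitative bookkeeping that converts the additive slack $C\log^3(\tfrac{b-a}{\beta\gamma})/(\eps n)$ into the value $\tfrac1{4k}$. By the choice $k=\lceil \eps n/(4c_3\log^3\tfrac{b-a}{\beta\gamma}\log\tfrac n\beta)\rceil$, we have $\tfrac1{4k}\le \tfrac{c_3\log^3(\tfrac{b-a}{\beta\gamma})\log(n/\beta)}{\eps n}$, and in particular for $c_3$ chosen large relative to the constant $C$ from Theorem~\ref{thm:CDFest}, the additive error $C\log^3(\tfrac{b-a}{\beta\gamma})/(\eps n)$ is at most $\tfrac1{4k}$ (the extra $\log(n/\beta)$ factor in $1/(4k)$ gives room to spare). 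The hypothesis $n>4c_2\log^4(\tfrac{b-a}{\beta\gamma\eps})/\eps$ is exactly what is needed to guarantee $k\ge 1$ is a well-defined integer and that $1/(4k)<1$, so that the index arithmetic $\tfrac{2r-1}{2k}\pm\tfrac1{4k}\in(0,1)$ makes sense; I would spell out that $\alpha_r=\tfrac{2r-1}{2k}$ and $\tfrac{2r-1}{2k}-\tfrac1{4k}\ge \tfrac1{4k}>0$ and $\tfrac{2r-1}{2k}+\tfrac1{4k}\le 1-\tfrac1{4k}<1$.

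The third step converts the CDF bounds into the quantile sandwich. For the upper bound $\tilde q_r\le \hat q_{\alpha_r+1/(4k)}$: since $F_{\hat P_n}(\tilde q_r)\ge\alpha_r - \tfrac1{4k}\ge \ldots$ — wait, I need the right direction — since $\Pr_{y\sim\hat P_n}(y\le\tilde q_r)=F_{\hat P_n}(\tilde q_r)\ge \alpha_r - \tfrac1{4k}$, hmm that gives a \emph{lower} bound on $\tilde q_r$. Let me restate carefully: from $F_{\hat P_n}(\tilde q_r)\ge \alpha_r-\tfrac1{4k}$ and the definition $\hat q_p=\arg\min_t\{\Pr_{y\sim\hat P_n}(y\le t)\ge p\}$, any $t$ with $F_{\hat P_n}(t)\ge \alpha_r-\tfrac1{4k}$ satisfies $t\ge \hat q_{\alpha_r-1/(4k)}$, hence $\tilde q_r\ge \hat q_{\alpha_r-1/(4k)}=\hat q_{\frac{2r-1}{2k}-\frac1{4k}}$, which is the claimed lower bound. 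For the upper bound: from $\Pr_{y\sim\hat P_n}(y<\tilde q_r)<\alpha_r+\tfrac1{4k}$, and using that on the $\gamma$-grid the empirical quantile $\hat q_{\alpha_r+1/(4k)}$ is the smallest grid point $t$ with $\Pr_{y\sim\hat P_n}(y\le t)\ge \alpha_r+\tfrac1{4k}$, I claim $\tilde q_r\le \hat q_{\alpha_r+1/(4k)}$: if not, then $\tilde q_r$ is a grid point strictly larger than $\hat q_{\alpha_r+1/(4k)}$, so $\Pr_{y\sim\hat P_n}(y<\tilde q_r)\ge \Pr_{y\sim\hat P_n}(y\le \hat q_{\alpha_r+1/(4k)})\ge \alpha_r+\tfrac1{4k}$, contradicting the strict inequality. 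Finally I take a union bound over the single failure event of $A_{quant}$ (probability $\beta$) and over $r\in[k]$ — but since Theorem~\ref{thm:CDFest} already gives the guarantee for all $r$ simultaneously with probability $1-\beta$, no additional union bound is needed, and the privacy of $A_{quant}$ is inherited directly from Theorem~\ref{thm:CDFest}.

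\textbf{Main obstacle.} The only genuinely delicate point is the third step: getting the inequalities between $F_{\hat P_n}$-values and empirical quantiles pointing in the correct direction, being careful about the distinction between $\Pr(y\le t)$ and $\Pr(y<t)$ (which matters because $\hat P_n$ is atomic), and using the fact that $\tilde q_r$ is constrained to lie on the $\gamma$-grid — the same grid on which $\hat P_n$ is supported — so that "the first grid point where the noisy CDF crosses $\alpha_r$" is comparable to "the first grid point where the true empirical CDF crosses $\alpha_r\pm\tfrac1{4k}$." Everything else (the constant chasing to absorb $C$ into $c_3$, checking $k\ge1$ from the lower bound on $n$) is routine.
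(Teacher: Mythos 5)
Your proof is correct and takes essentially the same route as the paper: instantiate Theorem~\ref{thm:CDFest} on the chosen quantile vector, observe that the choice of $k$ forces the additive CDF error $C\log^3\!\big(\tfrac{b-a}{\beta\gamma}\big)/(\eps n)$ to be at most $\tfrac1{4k}$, and then convert the two one-sided CDF inequalities into the quantile sandwich using the $\arg\min$ definition of $\hat q_p$ (the first inequality gives $F_{\hat P_n}(\tilde q_r)\ge \alpha_r - \tfrac1{4k}$, hence $\tilde q_r\ge \hat q_{\alpha_r-1/(4k)}$; the strict bound on $\Pr(y<\tilde q_r)$ gives $\tilde q_r\le \hat q_{\alpha_r+1/(4k)}$ by the contradiction argument you describe). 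The paper's write-up is more terse (and contains a stray ``probability at least $0.99$'' where $1-\beta$ is meant), but the substance is identical, including the care you take over the $\Pr(y\le t)$ versus $\Pr(y<t)$ distinction on the atomic grid.
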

\begin{proof}
    First, note that $k$ is set such that $\frac{1}{4k} \geq C\frac{log^{3}\frac{b-a}{\beta \gamma}}{\eps n}$.  
    
    Hence, by Theorem~\ref{thm:CDFest}, we have that with probability at least $0.99$,
    
    for all $r \in [k]$,
    $$\frac{2r-1}{2k} - F_{\hat{P}_n}(\tilde{q}_r) \leq \frac{1}{4k},$$ 
    and
    $$\Pr_{y \sim \hat{P}_n}(y < \tilde{q}_r) < \frac{2r-1}{2k} + \frac{1}{4k}.$$

    Condition the event above for the rest of the proof. Note that the first equation implies that for all $r \in [k]$,
$$\Pr_{y \sim \hat{P}_n}(y \leq \tilde{q_r}) \geq \frac{2r-1}{2k} - \frac{1}{4k},$$
which implies that $\tilde{q_{r}} \geq \hat{q}_{\frac{2r-1}{2k} - \frac{1}{4k}}$.

Next, note that we also have that for all $r \in [k]$,
$$\Pr_{y \sim \hat{P}_n}(y < \tilde{q}_r)  < \frac{2r-1}{2k} + \frac{1}{4k}.$$
 This implies that for all $r \in [k]$,  $\tilde{q_{r}} \leq \hat{q}_{\frac{2r-1}{2k} + \frac{1}{4k}}$.  
\end{proof}

\section{Proofs in Section~\ref{sec:1d}}\label{app:1d}
\subsection{Omitted Proofs in Section~\ref{sec:priv1dlb}}
\begin{proof}[Proof of Lemma~\ref{lemma:indistpriv}] 

    We evaluate the various terms in Theorem~\ref{thm:privatetesting}.
    
    We start by evaluating $\tau(P,Q) = \max \{ \int_{\mathbb{R}} \max(f_P(t) - e^{\eps}f_Q(t), 0) dt, \int_{\mathbb{R}} \max(f_Q(t) - e^{\eps}f_P(t), 0) dt$ \}. Consider the first term in the outer maximum. For all $t \in [L(P), q_{1/k})$, we have that $f_Q(t) = \frac{1}{2} f_P(t)$. For all other $t$, one can see that the value of the integrand is $0$. Hence, the value of the first term is $ \int_{L(P)}^{q_{1/k}} \max(f_P(t) - \frac{e^{\eps}}{2}f_P(t), 0)   dt = \max \{\left( 1- \frac{e^{\eps}}{2}\right)\frac{1}{k}, 0 \} \leq \frac{1}{2k}$. Now, consider the second term in the outer maximum. For all $t < q_{1-\frac{1}{k}}$, the value of the integrand is $0$. For all $q_{1-\frac{1}{k}} \leq t \leq q_1$, the value of the integrand is  $\max \{\left( \frac{3}{2}- e^{\eps}\right)f_P(t), 0 \}$. Hence, the second term is $\max \{\left( \frac{3}{2}- e^{\eps}\right)\frac{1}{k}, 0 \} \leq \frac{1}{2k}$. Put together, we get that $\tau(P,Q) \leq\frac{1}{2k}$. 

     When $\eps \geq \ln 2$, we have that $1- \frac{e^{\eps}}{2} \leq 0$, and so we have that the largest value of $\eps' \in [0,\eps]$ that makes $\int_{\mathbb{R}} \max(f_Q(t) - e^{\eps'}f_P(t), 0) dt = \tau(P,Q) = 0$, is $\eps' = \eps$. When $\eps < \ln 2$, we have that the value of $\eps'$ that makes $\int_{\mathbb{R}} \max(f_Q(t) - e^{\eps'}f_P(t), 0) dt = \max \{\left( \frac{3}{2}- e^{\eps}\right)\frac{1}{k}, 0 \} = \left( 1- \frac{e^{\eps}}{2}\right)\frac{1}{k}$, is $\eps' = \ln \left(\frac{1+e^{\eps}}{2} \right)$. 

   Finally, we describe the distributions $P'$ and $Q'$ and compute the squared Hellinger distance between them. There are two cases, based on the range of $\eps$. First, consider $\eps \geq \ln 2$. First, we calculate $\tilde{P} \equiv \min\{e^{\eps} Q, P\}$. This value is equal to $\min \{e^{\eps}/2, 1\} f_P(t) = f_P(t)$ for $t < q_{\frac{1}{k}}(P)$, and is also equal to $f_P(t)$ for  $q_{\frac{1}{k}} \leq t \leq q_1$. Similarly, consider $\tilde{Q} \equiv \min\{e^{\eps'} P, Q\} = \min\{e^{\eps} P, Q\}$; it is equal to $\frac{f_P(t)}{2}$ for $t < q_{\frac{1}{k}}(P)$, and is equal to $f_P(t)$ for $q_{\frac{1}{k}} \leq t  \leq q_{1-\frac{1}{k}}$. It is also equal to $\min(e^{\eps} , \frac{3}{2})f_P(t) = \frac{3}{2}f_P(t)$ for $q_{\frac{1}{k}} \leq t  \leq q_{1}$. Since $\tau(P,Q) = 0$, and by the above calculations, we have that $P' = P$, and $Q' = Q$. Upper bounding the squared Hellinger distance between $P'$ and $Q'$ by the TV distance (See Lemma~\ref{lem:KLHel}), we get that $H^2(P',Q') = H^2(P,Q) \leq TV(P,Q) = \frac{1}{2k} \leq \frac{\eps}{2 (\ln 2) k}$ (where we have used that $\eps \geq \ln 2$).

  Next, consider $\eps < \ln 2$. First, consider $\tilde{P} \equiv \min\{e^{\eps} Q, P\}$. This value is equal to $\min \{e^{\eps}/2, 1\} f_P(t) = \frac{e^{\eps}}{2}f_P(t)$ for $t < q_{\frac{1}{k}}(P)$, and is also equal to $f_P(t)$ for  $q_{\frac{1}{k}} \leq t \leq q_1$. Similarly, consider $\tilde{Q} \equiv \min\{e^{\eps'} P, Q\} = \min\{\frac{1+e^{\eps}}{2} P, Q\}$; it is equal to $\frac{1}{2} f_P(t)$ for $t < q_{\frac{1}{k}}(P)$, and is equal to $f_P(t)$ at $q_{\frac{1}{k}} \leq t  \leq q_{1-\frac{1}{k}}$. It is also equal to $\min\{ \frac{1 + e^{\eps}}{2}, \frac{3}{2} \} f_P(t) =\frac{1 + e^{\eps}}{2} f_P(t)$ at $q_{1-\frac{1}{k}} \leq t \leq q_{1}$. Note that $\tau(P,Q) = \left( 1- \frac{e^{\eps}}{2}\right)\frac{1}{k}$. $P'$ and $Q'$ are the distributions created by normalizing $\tilde{P}$ and $\tilde{Q}$ by dividing by a factor of $1-\tau(P,Q)$. Now, we upper bound the squared Hellinger distance between $P'$ and $Q'$ by the TV distance (See Lemma~\ref{lem:KLHel}), to get that $H^2(P',Q') \leq TV(P',Q') = O(\frac{\eps}{k})$.  

   Substituting into the lower bound for sample complexity of distinguishing $P$ and $Q$, this tells us that for all $\eps \in (0,1]$, $SC_{\eps}(P,Q) = \Omega \left(\frac{1}{\eps \cdot \frac{1}{k}}\right) = \Omega(k/\eps)$.
    
\end{proof}

\begin{proof}[Proof of Lemma~\ref{lemma:distrsquantwass}]

         Note that $P$ has bounded expectation (and hence, so does $Q$). Hence, we can use the following form of the Wasserstein distance: $$\wass(P,Q) = \int_{\mathbb{R}} |F_P(t) - F_Q(t)| dt.$$   
      Now, given the settings of $P$ and $Q$, we can precisely write the forms of their cumulative distribution function as follows. Note that for $L(P) \leq t < q_{1/k}(P)$, we have that $|F_P(t) - F_Q(t)| = \frac{1}{2}F_p(t)$. For $q_{1/k} \leq t \leq q_{1-\frac{1}{k}}$, we have $|F_P(t) - F_Q(t)| = \frac{1}{2k}$. Finally, for $q_{1-\frac{1}{k}} \leq t \leq q_{1}$, we have that $F_P(t) = 1-\frac{1}{k} + \int_{q_{1-1/k}}^t f_P(t) dt$ and $F_Q(t) = 1-\frac{3}{2k} + \frac{3}{2}\int_{q_{1-1/k}}^t f_P(t) dt$, which gives us that $F_P(t) - F_Q(t) = \frac{1}{2k} - \frac{1}{2}\int_{q_{1-1/k}}^t f_P(t) dt = \frac{1}{2}[1-F_P(t)]$. 
      
      Hence, we have that 
      \begin{align*}
          \wass(P,Q) & = \int_{\mathbb{R}} |F_P(t) - F_Q(t)| dt \\
          & = \frac{1}{2}\int_{L(P)}^{q_{1/k}} F_P(t) dt +  \int_{q_{1/k}}^{q_{1-\frac{1}{k}}}  |F_P(t) - F_Q(t)| dt + \int_{q_{1-\frac{1}{k}}}^{q_1}  |F_P(t) - F_Q(t)| dt\\
          & \geq \frac{1}{2}\int_{L(P)}^{q_{1/k}} F_P(t) dt + \frac{1}{2}\int_{q_{1-\frac{1}{k}}}^{q_1} [1-F_P(t)] dt +  \frac{1}{2k} ( q_{1-\frac{1}{k}}  - q_{\frac{1}{k}} ) \\ 
          & = \frac{1}{2}\int_{q_{1-\frac{1}{k}}}^{q_1} \Big|F_P(t) - F_{P|_{q_{\frac{1}{k}}, q_{1-\frac{1}{k}}}}(t)\Big| dt + \frac{1}{2}\int_{L(P)}^{q_{1/k}} \Big| F_P(t) -  F_{P|_{q_{\frac{1}{k}}, q_{1-\frac{1}{k}}}}(t)\Big| dt +  \frac{1}{2k} ( q_{1-\frac{1}{k}}  - q_{\frac{1}{k}} ) \\
          & = \frac{1}{2k}(q_{1-\frac{1}{k}} - q_{1/k}) + \frac{1}{2}\wass(P, P|_{q_{\frac{1}{k}}, q_{1-\frac{1}{k}}})
      \end{align*}
    
    \end{proof}

\subsection{Omitted proofs in Section~\ref{sec:emplb1d}}
\begin{proof}[Proof of Lemma~\ref{claim:KLempterm}]
       The KL divergence is defined as $\int_{t: f_Q(t) > 0} f_P(t) \log f_P(t)/f_Q(t) dt$. This can be broken up into a sum over the dyadic quantiles as:
       \begin{align*} 
       KL(P,Q) & = \sum_{i=2}^{\log n-1} \int_{q_{1/2^i}}^{q_{1/2^{i-1}}}f_P(t) \log \frac{f_P(t)}{f_Q(t) }dt
       + \int_{q_{1-1/2^{i-1}}}^{q_{1-1/2^{i}}} f_P(t) \log \frac{f_P(t)}{f_Q(t) }dt\\
        & + \sum_{i=\log n}^{\infty} \int_{q_{1/2^i}}^{q_{1/2^{i-1}}}f_P(t) \log \frac{f_P(t)}{f_Q(t) }dt
       + \int_{q_{1-1/2^{i-1}}}^{q_{1-1/2^{i}}} f_P(t) \log \frac{f_P(t)}{f_Q(t) }dt \\
       & = 
       \sum_{i=2}^{\log n-1} \int_{q_{1/2^i}}^{q_{1/2^{i-1}}}f_P(t) \log \frac{1}{ 1 + \sqrt{\frac{2^i}{n}} }dt
       + \int_{q_{1-1/2^{i-1}}}^{q_{1-1/2^{i}}} f_P(t) \log \frac{1}{ 1 - \sqrt{\frac{2^i}{n}} }dt \\
       & +    \sum_{i=\log n}^{\infty} \int_{q_{1/2^i}}^{q_{1/2^{i-1}}}f_P(t) \log \frac{1}{ 1 + \frac{1}{2} }dt
       + \int_{q_{1-1/2^{i-1}}}^{q_{1-1/2^{i}}} f_P(t) \log \frac{1}{ 1 - \frac{1}{2} }dt \\
       & = \sum_{i=2}^{\log 4n} \frac{1}{2^i}\left[\log \frac{1}{ 1 + \sqrt{\frac{2^i}{n}} } + \log \frac{1}{ 1 - \sqrt{\frac{2^i}{n}} }\right]  + \sum_{i=\log n}^{\infty} \frac{1}{2^i}\left[\log \frac{1}{ 1 + \frac{1}{2}} + \log \frac{1}{ 1 - \frac{1}{2}}\right]\\
       & \leq \sum_{i=2}^{\log n-1} \frac{1}{2^i} \log \frac{1}{ 1 - \frac{2^i}{n}} + O\left(\frac{1}{n}\right) \\
       &\leq 
       \sum_{i=2}^{\log n-1 } \frac{1}{2^i} 2\frac{2^i}{n} + O\left(\frac{1}{n}\right) \\
       & = O\left(\frac{\log n}{n} \right),
       \end{align*}
       where the third inequality from last is by the fact that the geometric series $\sum_{i=\log n}^{\infty}\frac{1}{2^i}$ converges to $O(\frac{1}{n})$, the second inequality from last is from the fact that $\frac{2^i}{n} < 1/2$, and $\log(1/(1-y))<2y$ for $0 < y < 1/2$.  
   \end{proof}

    \begin{proof}[Proof of Lemma~\ref{claim:wasslbquant}]
    First, we recall the definition of the 1-Wasserstein distance in terms of the cumulative distribution function.
        \begin{align*}
            \wass(P,Q) = \int_{\mathbb{R}} |F_P(t) - F_Q(t)| dt
        \end{align*}
    
    Fix any $2 \leq i < \log n -1$. Observe that by construction, for all $t \in [q_{1/2^{i}}, q_{1-1/2^i})$ and for all $t \in [q_{1-1/2^{i-1}}, q_{1-1/2^i})$, $|F_P(t) - F_Q(t)| \geq \sum_{j=i+1}^{\log n - 1} \frac{1}{\sqrt{2^j n}} + \frac{1}{2}\sum_{j= \log n}^{\infty} \frac{1}{2^j}$. Similarly, fix any $\log n - 1 \leq i < \infty$. Observe that for all $t \in [q_{1/2^{i}}, q_{1-1/2^i})$, and for all $t \in [q_{1-1/2^{i-1}}, q_{1-1/2^i})$, we have that $|F_P(t) - F_Q(t)| \geq \frac{1}{2}\sum_{j=i+1}^{\infty} \frac{1}{2^j} $.
    Substituting the above bounds in the formula for the Wasserstein distance, we get that
     \begin{align*}
            \wass(P,Q) & \geq \sum_{i=2}^{\log n-2} \int_{q_{1/2^i}}^{q_{1/2^{i-1}}} \left[\sum_{j=i+1}^{\log n - 2} \frac{1}{\sqrt{2^j n}} + \frac{1}{2}\sum_{j= \log n-1}^{\infty} \frac{1}{2^j} \right] dt 
             +  \int_{q_{1-1/2^{i-1}}}^{q_{1-1/2^{i}}} \left[\sum_{j=i+1}^{\log n - 2} \frac{1}{\sqrt{2^j n}} + \frac{1}{2}\sum_{j= \log n-1}^{\infty} \frac{1}{2^j} \right]dt\\
            & + \sum_{i=\log n - 1}^{\infty}\int_{q_{1/2^i}}^{q_{1/2^{i-1}}} \frac{1}{2}\sum_{j= i+1}^{\infty} \frac{1}{2^j} dt +  \int_{q_{1-1/2^{i-1}}}^{q_{1-1/2^{i}}} \frac{1}{2}\sum_{j= i+1}^{\infty} \frac{1}{2^j} dt
   \end{align*}

Pulling the summation over $j$ outside the integral and grouping terms,

   \begin{align*}
            \wass(P,Q) & \geq \sum_{i=2}^{\log n-2} \Bigg[\sum_{j=i+1}^{\log n - 2}  \int_{q_{1/2^i}}^{q_{1/2^{i-1}}} \frac{1}{\sqrt{2^j n}} dt + \int_{q_{1-1/2^{i-1}}}^{q_{1-1/2^{i}}}\frac{1}{\sqrt{2^j n}} dt  + \frac{1}{2}\sum_{j=\log n - 1}^{\infty}  \int_{q_{1/2^i}}^{q_{1/2^{i-1}}} \frac{1}{2^j} dt + \int_{q_{1-1/2^{i-1}}}^{q_{1-1/2^{i}}}\frac{1}{2^j} dt \Bigg] \\ 
            & + \frac{1}{2}\sum_{i=\log n - 1}^{\infty} \sum_{j=i+1}^{\infty} \Bigg[ \int_{q_{1/2^i}}^{q_{1/2^{i-1}}} \frac{1}{2^j} dt + \int_{q_{1-1/2^{i-1}}}^{q_{1-1/2^{i}}}\frac{1}{2^j} dt \Bigg] \\
            & = \sum_{i=2}^{\log n-2} \left[ ( {q_{1/2^{i-1}}} -  q_{1/2^i}) +  ( q_{1-1/2^i} - {q_{1-1/2^{i-1}}})\right]  \left[ \sum_{j=i+1}^{\log n - 2} \frac{1}{\sqrt{2^j n}} + \frac{1}{2}\sum_{j=\log n - 1}^{\infty} \frac{1}{2^j} \right] \\
            & + \sum_{i=\log n - 1}^{\infty} \left[ ( {q_{1/2^{i-1}}} -  q_{1/2^i}) +  ( q_{1-1/2^i} - {q_{1-1/2^{i-1}}})\right] \frac{1}{2}\sum_{j=i+1}^{\infty} \frac{1}{2^j} 
    \end{align*}

Switching the order of summation (summing over $j$ first), and grouping terms, we get
\begin{align*}
           \wass(P,Q) & \geq \sum_{j=3}^{\log n-2} \frac{1}{\sqrt{2^{j} n}} \sum_{i=2}^{j-1} \left[ ( {q_{1/2^{i-1}}} -  q_{1/2^i} +  ( q_{1-1/2^i} - {q_{1-1/2^{i-1}}})\right] \\
            & + \frac{1}{2}\sum_{j=\log n - 1}^{\infty} \frac{1}{2^{j}} \sum_{i=2}^{j-1} \left[ ( {q_{1/2^{i-1}}} -  q_{1/2^i} +  ( q_{1-1/2^i} - {q_{1-1/2^{i-1}}})\right] \\
\end{align*}
Telescoping the inner sums over $i$ we get that
\begin{align*}
            \wass(P,Q) & \geq \sum_{j=3}^{\log n-2} \frac{1}{\sqrt{2^{j} n}} \left[q_{1-1/2^{j-1}} -  q_{1/2^{j-1}} \right] + \frac{1}{2}\sum_{j=\log n - 1}^{\infty} \frac{1}{2^{j}} \left[q_{1-1/2^{j-1}} -  q_{1/2^{j-1}} \right]
\end{align*}

A change of variables (where we now set $j$ to $j-1$) then gives
\begin{align*}
         \wass(P,Q)   & \geq \frac{1}{\sqrt{2}}\sum_{j=2}^{\log n-3} \frac{1}{\sqrt{2^{j} n}} \left[q_{1-1/2^{j}} -  q_{1/2^{j}} \right] + \frac{1}{4}\sum_{j=\log n - 2}^{\infty} \frac{1}{2^j} \left[q_{1-1/2^{j}} -  q_{1/2^{j}} \right] \\ 
            & \geq \frac{1}{4}\sum_{j=2}^{\log n-1} \frac{1}{\sqrt{2^{j} n}} \left[q_{1-1/2^{j}} -  q_{1/2^{j}} \right] + \frac{1}{4}\sum_{j=\log n}^{\infty} \frac{1}{2^j} \left[q_{1-1/2^{j}} -  q_{1/2^{j}} \right],
 \end{align*}
 where the last inequality is by pulling the first two terms from the summation in second term to the summation in the first term, and using the fact that for $j = \log n - 2, j = \log n -1$, we have that $\frac{1}{4 \cdot 2^j} \geq \frac{1}{2\sqrt{2}} \frac{1}{\sqrt{2^j n}}$

    \end{proof}

 \begin{proof}[Proof of Lemma~\ref{claim:empubquant}]
We first state a theorem of Bobkov and Ledoux~\citep{TalagrandBob19}.
\begin{theorem}[Theorem 3.5, \cite{TalagrandBob19}]\label{thm:talbob}
    There is an absolute constant $c>0$, such that for all distributions $P$ over $\mathbb{R}$, for every $n \geq 1$,
  $$  c(A_n + B_n) \leq \mathbb{E}[\wass(P,\hat{P}_n] \leq A_n + B_n.  $$
where $$A_n = 2 \int_{F(t)[1-F(t)] \leq \frac{1}{4n}} F(t)[1-F(t)] dt,$$ and $$B_n = \frac{1}{\sqrt{n}} \int_{F(t)[1-F(t)] \geq \frac{1}{4n}} \sqrt{F(t)[1-F(t)]} dt.$$
\end{theorem}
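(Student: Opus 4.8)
The plan is to reduce the two-sided estimate entirely to a pointwise bound on the mean absolute deviation of a binomial random variable. First I would invoke the CDF representation of the Wasserstein distance on $\mathbb{R}$ (Lemma~\ref{lem:wasscdf}): since $P$ and $\hat{P}_n$ are both distributions on the line, $\wass(P,\hat{P}_n)=\int_{\mathbb{R}}|F(t)-F_n(t)|\,dt$, where $F_n$ is the empirical CDF, $nF_n(t)=\sum_{i=1}^n\mathbf{1}[X_i\le t]$. If $P$ has infinite first moment then both $\mathbb{E}[\wass(P,\hat{P}_n)]$ and $A_n+B_n$ are infinite, so I may assume $\mathbb{E}|X|<\infty$; then the integrand is non-negative and integrable and Tonelli gives $\mathbb{E}[\wass(P,\hat{P}_n)]=\int_{\mathbb{R}}\mathbb{E}|F(t)-F_n(t)|\,dt$. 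For fixed $t$, writing $p=F(t)$ and $S\sim\mathrm{Bin}(n,p)$, we have $\mathbb{E}|F(t)-F_n(t)|=\tfrac1n\mathbb{E}|S-np|=:\phi(n,p)$, so the whole theorem reduces to two-sided bounds on $\phi(n,p)$ that, after integrating in $t$, reproduce the integrands $2p(1-p)$ of $A_n$ (on $\{p(1-p)\le\tfrac1{4n}\}$) and $\sqrt{p(1-p)/n}$ of $B_n$ (on $\{p(1-p)\ge\tfrac1{4n}\}$).

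Next I would establish the elementary upper bounds on $\phi(n,p)$. Jensen's inequality gives $\mathbb{E}|S-np|\le\sqrt{\mathrm{Var}\,S}=\sqrt{np(1-p)}$, hence $\phi(n,p)\le\sqrt{p(1-p)/n}$ for all $p$, which integrated over the $B_n$-region accounts for $B_n$. For the $A_n$-region I would use that $S-np$ has mean zero, so $\mathbb{E}|S-np|=2\,\mathbb{E}(np-S)^{+}$, and when $np<1$ the only sample point with $np-S>0$ is $S=0$, giving the identity $\mathbb{E}|S-np|=2np(1-p)^n\le2np(1-p)$; the symmetry $S\mapsto n-S$ gives the same bound when $n(1-p)<1$. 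The key observation is that on $\{p(1-p)\le\tfrac1{4n}\}$ one of these two cases always holds: if $p\le\tfrac12$ then $p\le2p(1-p)\le\tfrac1{2n}$ so $np<1$, and symmetrically for $p\ge\tfrac12$; hence $\phi(n,p)\le2p(1-p)$ there. Combining and integrating yields $\mathbb{E}[\wass(P,\hat{P}_n)]\le A_n+B_n$.

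For the matching lower bound I would again split by region. On $\{p(1-p)\le\tfrac1{4n}\}$, with WLOG $p\le\tfrac12$ so $np<1$, the same identity and Bernoulli's inequality give $\mathbb{E}|S-np|=2np(1-p)^n\ge2np(1-np)\ge np\ge np(1-p)$, hence $\phi(n,p)\ge p(1-p)$, i.e. at least half the $A_n$-integrand. On $\{p(1-p)\ge\tfrac1{4n}\}$, set $\sigma^2=\mathrm{Var}\,S=np(1-p)\ge\tfrac14$; here I would compare moments, using Cauchy--Schwarz $\sigma^2=\mathbb{E}(S-np)^2\le(\mathbb{E}|S-np|)^{1/2}(\mathbb{E}|S-np|^3)^{1/2}$ together with $\mathbb{E}|S-np|^3\le(\mathbb{E}(S-np)^4)^{3/4}$ and the explicit fourth central moment $\mathbb{E}(S-np)^4=3\sigma^4+\sigma^2(1-6p(1-p))\le7\sigma^4$ (valid once $\sigma^2\ge\tfrac14$, since then $\sigma^2\le4\sigma^4$). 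This yields $\mathbb{E}|S-np|^3\le7^{3/4}\sigma^3$ and, rearranging, $\mathbb{E}|S-np|\ge7^{-3/4}\sigma$, i.e. $\phi(n,p)\ge7^{-3/4}\sqrt{p(1-p)/n}$. Integrating the two pointwise lower bounds over $t$ gives $\mathbb{E}[\wass(P,\hat{P}_n)]\ge\tfrac12A_n+7^{-3/4}B_n\ge7^{-3/4}(A_n+B_n)$, which is the theorem with $c=7^{-3/4}$.

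The only genuinely non-routine step is the lower bound in the bulk regime: one must show that $\mathbb{E}|S-np|$ cannot be much smaller than the standard deviation $\sigma$, which is exactly the moment-comparison argument above and hinges on the explicit fourth-central-moment estimate for the binomial. Everything else — the CDF representation, Tonelli, Jensen, and the $np<1$ identity for $\mathbb{E}|S-np|$ — is elementary bookkeeping. I would also take a little care with integrability: for fixed $n$ the set of $t$ with $F(t)(1-F(t))\ge\tfrac1{4n}$ has finite Lebesgue measure so $B_n<\infty$, and the tail contributions to $A_n$ (where $1-F$ or $F$ is small, so the integrand equals $2(1-F)$ or $2F$) are exactly $\mathbb{E}[X^{+}]$- and $\mathbb{E}[X^{-}]$-type integrals, finite under $\mathbb{E}|X|<\infty$; this is also what legitimizes the Tonelli exchange, and otherwise both sides are $+\infty$.
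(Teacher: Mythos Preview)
The paper does not prove this statement: it is quoted verbatim as Theorem~3.5 of Bobkov--Ledoux and used as a black box (inside the proof of Lemma~\ref{claim:empubquant}). So there is no ``paper's own proof'' to compare against. That said, your argument is correct and is essentially the standard route to this result: reduce via the CDF formula and Tonelli to the pointwise quantity $\tfrac1n\mathbb{E}|S-np|$ with $S\sim\mathrm{Bin}(n,p)$, then show the two-sided bound $\min\{2p(1-p),\sqrt{p(1-p)/n}\}$ up to constants. Indeed the paper separately cites exactly this binomial fact as Lemma~3.8 of the same reference (your Lemmas~\ref{thm:talbobbin} and~\ref{lem:talboblocminversion}), so your decomposition is precisely the one underlying the cited theorem. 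Your upper bounds (Jensen for the $B_n$ region; the $np<1$ identity $\mathbb{E}|S-np|=2np(1-p)^n$ for the $A_n$ region, after checking $p(1-p)\le\tfrac1{4n}$ forces $np<1$ or $n(1-p)<1$) and lower bounds (Bernoulli's inequality on $(1-p)^n$ in the tail; the fourth-moment/Cauchy--Schwarz comparison in the bulk, using $\mathbb{E}(S-np)^4=3\sigma^4+\sigma^2(1-6p(1-p))\le7\sigma^4$ once $\sigma^2\ge\tfrac14$) are all sound, and the integrability bookkeeping is fine.
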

Now, we are ready to prove the main theorem. Fix natural number $i \geq 2$. Restricted to $t \leq q_{1/2}$, $F_P(t)(1-F_P(t))$ is an increasing function, and hence for $t \in [q_{1/2^i}, q_{1/2^{i-1}}]$, we have that $F_P(t)(1-F_P(t)) \leq \frac{1}{2^{i-1}} [1-\frac{1}{2^{i-1}}]$. 
     
     Similarly, restricted to $t > q_{1/2}$, $F_P(t)(1-F_P(t))$ is a decreasing function, and hence for $t \in [1-q_{1/2^{i-1}}, q_{1-1/2^{i}}]$, we have that $F_P(t)(1-F_P(t)) \leq \frac{1}{2^{i-1}} [1-\frac{1}{2^{i-1}}]$.

     Using this, we can now upper bound the expected Wasserstein distance between $P$ and its empirical distribution using \Cref{thm:talbob}. Hence, we upper bound the terms $B_n$ and $A_n$. We start by upper bounding $B_n$. Note that for all $ t \not\in [q_{\frac{1}{4n}} , q_{1-\frac{1}{4n}}]$, we have that $F_P(t)(1-F_P(t)) \leq \frac{1}{4n}$. Hence, 

     \begin{align*}
         B_n & = \frac{1}{\sqrt{n}} \int_{F_P(t)[1-F_P(t)] \geq \frac{1}{4n}} \sqrt{F_P(t)[1-F_P(t)]} dt \\
        & \leq \frac{1}{\sqrt{n}} \int_{q_{\frac{1}{4n}}}^{q_{1-\frac{1}{4n}}} \sqrt{F_P(t)[1-F_P(t)]} dt \\
         & \leq \sum_{i = 2}^{\log 4n} \frac{1}{\sqrt{n}} \left[\int_{q_{1/2^i}}^{q_{1/2^{i-1}}} \sqrt{F_P(t)[1-F_P(t)]} dt
         + \int_{q_{1-1/2^{i-1}}}^{q_{1-1/2^{i}}} \sqrt{F_P(t)[1-F_P(t)]} dt
         \right] \\
         & \leq  \sum_{i = 2}^{\log 4n} \frac{1}{\sqrt{n}} \int_{q_{1/2^i}}^{q_{1/2^{i-1}}} \sqrt{\frac{1}{2^{i-1}} \left[1-\frac{1}{2^{i-1}} \right]}dt + \int_{q_{1-1/2^{i-1}}}^{q_{1-1/2^{i}}} \sqrt{\frac{1}{2^{i-1}} \left[1-\frac{1}{2^{i-1}} \right]}dt \\
         & = \sum_{i = 2}^{\log 4n} \frac{1}{\sqrt{n}} \sqrt{\frac{1}{2^{i-1}} \left[1-\frac{1}{2^{i-1}} \right]} \left[q_{1/2^{i-1}} - q_{1/2^{i}} + q_{1-1/2^{i}} - q_{1-1/2^{i-1}} \right] \\
         & \leq \sum_{i = 2}^{\log 4n} \frac{2}{\sqrt{2^{i} n}} \left[q_{1-1/2^{i}} - q_{1/2^{i}}\right] \\
         & = \sum_{i = 2}^{\log n-1} \frac{2}{\sqrt{2^{i} n}} \left[q_{1-1/2^{i}} - q_{1/2^{i}}\right] + \sum_{i = \log n}^{\log 4n} \frac{2}{\sqrt{2^{i} n}} \left[q_{1-1/2^{i}} - q_{1/2^{i}}\right] \\
         & \leq \sum_{i = 2}^{\log n-1} \frac{2}{\sqrt{2^{i} n}} \left[q_{1-1/2^{i}} - q_{1/2^{i}}\right] + \sum_{i = \log n}^{\log 4n} \frac{4}{2^{i}} \left[q_{1-1/2^{i}} - q_{1/2^{i}}\right],
      \end{align*} 
      where the last inequality is because for $i \leq \log (4n)$, we have that $\frac{1}{n} \leq \frac{4}{2^{i}}$.
      
     Next, we bound $A_n$. Note that for all $t \geq q_{1/2n}$ and for all $t \leq q_{1-\frac{1}{2n}}$, we have that $F_P(t)(1-F_P(t)) \not\leq \frac{1}{4n}$. Hence,
     
     \begin{align*}
         A_n & = 2 \int_{F_P(t)[1-F_P(t)] \leq \frac{1}{4n}} F_P(t)[1-F_P(t)] dt \\
        & \leq 
        2\left[\int_{-\infty}^{q_{\frac{1}{2n}}} F_P(t)[1-F_P(t)] dt + \int_{q_{1-\frac{1}{2n}}}^{\infty} F_P(t)[1-F_P(t)] dt \right] \\
         & = \sum_{i = 1+\log 2n}^{\infty} 2 \left[\int_{q_{1/2^i}}^{q_{1/2^{i-1}}} F_P(t)[1-F_P(t)] dt
         + \int_{q_{1-1/2^{i-1}}}^{q_{1-1/2^{i}}} F_P(t)[1-F_P(t)] dt
         \right] \\
         & \leq  \sum_{i = 1+\log 2n}^{\infty} 2 \left [\int_{q_{1/2^i}}^{q_{1/2^{i-1}}} \frac{1}{2^{i-1}} \left[1-\frac{1}{2^{i-1}} \right]dt + \int_{q_{1-1/2^{i-1}}}^{q_{1-1/2^{i}}} \frac{1}{2^{i-1}} \left[1-\frac{1}{2^{i-1}} \right] dt \right] \\
         & = \sum_{i = 1+\log 2n}^{\infty} \frac{2}{2^{i-1} }\left[1-\frac{1}{2^{i-1}} \right] \left[q_{1/2^{i-1}} - q_{1/2^{i}} + q_{1-1/2^{i}} - q_{1-1/2^{i-1}} \right] \\
         & \leq \sum_{i = 1+\log 2n}^{\infty} \frac{4}{2^{i}} \left[q_{1-1/2^{i}} - q_{1/2^{i}}\right]
      \end{align*} 
 Then, using the upper bound in Theorem~\ref{thm:talbob}, substituting in the bounds for $A_n$ and $B_n$, and simplifying, we get the claim.     
 \end{proof} 

\begin{proof}[Proof of Claim~\ref{claim:emprest}]
By the definition of Wasserstein distance and restrictions of distributions, we have that 
\begin{align*}
      \wass(\hat{P}_n|_{q_\frac{1}{k},q_{1-\frac{1}{k}}},P|_{q_{\frac{1}{k}},q_{1-\frac{1}{k}}}) & = \int_a^b \left|F_{\hat{P}_n|_{q_\frac{1}{k},q_{1-\frac{1}{k}}}}(t) - F_{P|_{q_{\frac{1}{k}},q_{1-\frac{1}{k}}}}(t)\right|dt \\
      & = \int_{q_\frac{1}{k}}^{q_{1-\frac{1}{k}}} \left|F_{\hat{P}_n|_{q_\frac{1}{k},q_{1-\frac{1}{k}}}}(t) - F_{P|_{q_{\frac{1}{k}},q_{1-\frac{1}{k}}}}(t)\right|dt \\
      & = \int_{q_\frac{1}{k}}^{q_{1-\frac{1}{k}}} \left|F_{\hat{P}_n}(t) - F_{P}(t)\right|dt \leq 
      \wass(P,\hat{P_n})
\end{align*}
\end{proof}

\subsection{Omitted Proofs in Section~\ref{sec:1dub}}

Before going into the proofs, we state the standard Chernoff concentration bound that we will use multiple times.
\begin{theorem}[Binomial Concentration]\label{thm:chernoff}
    Let $X \sim Bin(n,p)$ with expectation $\mu = np$, and $0 < \delta < 1$. Then,
    $$\Pr(|X-\mu| \geq \delta \mu) \leq 2e^{\frac{-\delta^2\mu}{3} }.$$
\end{theorem}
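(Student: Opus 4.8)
\textbf{Proof proposal for Theorem~\ref{thm:chernoff}.}
The plan is to use the standard moment-generating-function (exponential Markov) method, treating the upper and lower deviations separately and then combining them by a union bound. Write $X=\sum_{i=1}^n X_i$ where the $X_i$ are i.i.d.\ $\mathrm{Bernoulli}(p)$, so that $\mathbb{E}[e^{tX}]=\bigl(1-p+pe^{t}\bigr)^n$. Using the elementary inequality $1+y\le e^{y}$ with $y=p(e^{t}-1)$ gives the clean bound $\mathbb{E}[e^{tX}]\le\exp\!\bigl(\mu(e^{t}-1)\bigr)$ for every $t\in\mathbb{R}$, where $\mu=np$; this will be the workhorse for both tails.

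For the upper tail, fix $t>0$ and apply Markov's inequality to $e^{tX}$:
\[
\Pr\!\bigl(X\ge(1+\delta)\mu\bigr)\le e^{-t(1+\delta)\mu}\,\mathbb{E}[e^{tX}]\le\exp\!\Bigl(\mu\bigl(e^{t}-1-t(1+\delta)\bigr)\Bigr).
\]
Optimizing the exponent over $t$ by taking $t=\ln(1+\delta)$ yields
$\Pr\!\bigl(X\ge(1+\delta)\mu\bigr)\le\bigl(e^{\delta}/(1+\delta)^{1+\delta}\bigr)^{\mu}$.
For the lower tail the symmetric computation with $t>0$ and $\mathbb{E}[e^{-tX}]\le\exp(\mu(e^{-t}-1))$, optimized at $t=-\ln(1-\delta)$, gives
$\Pr\!\bigl(X\le(1-\delta)\mu\bigr)\le\bigl(e^{-\delta}/(1-\delta)^{1-\delta}\bigr)^{\mu}$.

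It then remains to show the two scalar inequalities
$(1+\delta)\ln(1+\delta)-\delta\ge\delta^{2}/3$ and $-(1-\delta)\ln(1-\delta)-\delta\ge\delta^{2}/2\ge\delta^{2}/3$ for all $\delta\in(0,1)$; these upgrade the two bounds above to $e^{-\delta^{2}\mu/3}$ each, and a union bound over the two events $\{X\ge(1+\delta)\mu\}$ and $\{X\le(1-\delta)\mu\}$ completes the proof. The lower-tail inequality follows quickly from the Taylor series $-(1-\delta)\ln(1-\delta)=\delta+\sum_{k\ge 2}\delta^{k}/(k(k-1))\ge\delta+\delta^{2}/2$. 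The main (minor) obstacle is the upper-tail inequality $g(\delta):=(1+\delta)\ln(1+\delta)-\delta-\delta^{2}/3\ge 0$ on $(0,1)$: I would verify $g(0)=0$ and argue monotonicity/sign of $g'(\delta)=\ln(1+\delta)-\tfrac{2}{3}\delta$ on $(0,1)$ via a second-derivative or convexity argument (noting $\ln(1+\delta)\ge\delta-\delta^{2}/2\ge\tfrac{2}{3}\delta$ fails near $\delta=1$, so one instead uses that $g'$ has a single sign change and $g(1)=2\ln 2-1-\tfrac13>0$, so $g\ge 0$ throughout). Everything else is routine.
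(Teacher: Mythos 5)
The paper does not actually prove this statement: Theorem~\ref{thm:chernoff} is introduced with the phrase ``we state the standard Chernoff concentration bound that we will use multiple times'' and is invoked without proof, so there is no paper argument to compare against. Your proof is the textbook moment-generating-function argument and is essentially correct, but there is a compensating pair of sign errors in the lower-tail part. The exponent in $\bigl(e^{-\delta}/(1-\delta)^{1-\delta}\bigr)^{\mu}$ is $\mu\bigl(-\delta-(1-\delta)\ln(1-\delta)\bigr)$, so the scalar inequality you actually need is
\[
\delta+(1-\delta)\ln(1-\delta)\;\ge\;\tfrac{\delta^{2}}{2},
\]
not $-(1-\delta)\ln(1-\delta)-\delta\ge\delta^{2}/2$, which is the negative of the correct quantity and is in fact negative on $(0,1)$. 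Correspondingly, the series expansion you quote should read $(1-\delta)\ln(1-\delta)=-\delta+\sum_{k\ge 2}\delta^{k}/\bigl(k(k-1)\bigr)$ (equivalently, $-(1-\delta)\ln(1-\delta)=\delta-\sum_{k\ge 2}\delta^{k}/(k(k-1))$), so that $\delta+(1-\delta)\ln(1-\delta)=\sum_{k\ge 2}\delta^{k}/(k(k-1))\ge\delta^{2}/2$. The two sign mistakes you made cancel, so your intermediate display still yields the right conclusion, but the written statements as given are false. The upper-tail argument via $g(\delta)=(1+\delta)\ln(1+\delta)-\delta-\delta^{2}/3$ is fine: $g(0)=g'(0)=0$, $g''(\delta)=\tfrac{1}{1+\delta}-\tfrac{2}{3}$ changes sign once at $\delta=1/2$, so $g'$ rises from $0$, peaks, then decreases, and since $g'(1)=\ln 2-\tfrac{2}{3}>0$ one has $g'\ge 0$ on $(0,1)$ and hence $g\ge 0$ there. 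Finally, a union bound (or, more precisely, disjointness of the two tail events) combines the two one-sided bounds into the stated two-sided bound with prefactor $2$.
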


\begin{proof}[Proof of Lemma~\ref{lem:wassrest}]
\begin{align}
        \wass(P,P^{DP}) & = \int_{t} |F_P(t) - F_{P^{DP}}(t) | dt \\
        & \leq \int_{t=a}^{q_{1/k}} |F_P(t) - F_{P^{DP}}(t) | dt + \int_{t=q_{1/k}}^{q_{1-1/k}} |F_P(t) - F_{P^{DP}}(t) | dt +  
        \int_{t=q_{1-1/k}}^{b} |F_P(t) - F_{P^{DP}}(t) | dt \label{eq:wassdecomp}
    \end{align}
Note that for all $t \in [q_{1/k},q_{1-1/k}]$, we have that the cumulative distribution functions of $P$ and its restricted version are identical and likewise for $P^{DP}$. Additionally, the cumulative density functions for the restricted versions of the two distributions are identical to each other outside of this interval.
Hence, we can simplify the middle term in the RHS of the inequality above as follows:
    \begin{align*}
        \int_{t=q_{1/k}}^{q_{1-1/k}(P)} |F_P(t) - F_{P^{DP}}(t) | dt = \wass(P|_{q_{\frac{1}{k}},q_{1-\frac{1}{k}}},P^{DP}|_{q_{\frac{1}{k}},q_{1-\frac{1}{k}}})
    \end{align*}
Next, we reason about the remaining terms.

Consider the term  $\int_{t=a}^{q_{1/k}} |F_P(t) - F_{P^{DP}}(t) | dt$. First, condition on the event in Theorem~\ref{cor:goodquantest} (on the accuracy of the private quantiles for the empirical distribution), which tells us that with probability at least $1-\beta$, we have for all $r \in [k]$, 
that 
\begin{equation}\label{eq:goodquantest}
\hat{q}_{\frac{2r-1}{2k}-\frac{1}{4k}} \leq \Tilde{q}_{\frac{2r-1}{2k}} \leq \hat{q}_{\frac{2r-1}{2k} + \frac{1}{4k}},
\end{equation}
which implies in particular that $\hat{q}_{1/4k} \leq \Tilde{q}_{1/2k} \leq \hat{q}_{3/4k}$.

Next, we argue that $\hat{q}_{1/4k} \geq q_{1/8k}$ with high probability. By the definition of quantiles, we have that $Pr_{y \sim P}(y<q_{1/8k}) < \frac{1}{8k}$.  The number of entries in the dataset $\vec{\dset}$ less than $q_{1/8k}$ is hence a Binomial with mean less than $\frac{n}{8k}$, and hence, we have by Theorem~\ref{thm:chernoff} (with $\delta$ set to $0.9)$ that with probability at least $1-\beta$, the number of entries in the dataset less than $q_{1/8k}$ is at most  $1.9\frac{n}{8k} < \frac{n}{4k}$, which means the total mass less than $q_\frac{1}{8k}$ in the empirical distribution is less than $\frac{1}{4k}$. This implies that $\hat{q}_{1/4k} \geq q_{1/8k}$ by the definition of quantiles.

Additionally, note that for all $t<q_{1/k}$, $F_P(t) < \frac{1}{k}$. The number of entries in the dataset $\vec{\dset}$ that are less than $q_{1/k}$ is hence a Binomial with success probability less than $\frac{1}{k}$. By Theorem~\ref{thm:chernoff}, we can again argue that with probability at least $1-\beta$, there is a constant $c'$ such that the total mass of the empirical distribution on values less than $q_{1/k}$ is less than $\frac{c'}{k}$.  Hence, $q_{1/k} \leq \hat{q}_{c'/k}$. This implies by Equation~\ref{eq:goodquantest}, that $q_{1/k} \leq \tilde{q}_{c/k}$ for some constant $c$. Hence, for all $t < q_{1/k}$, we have that $F_{P^{DP}}(t) \leq \frac{c}{k}$.

Hence, taking a union bound, with probability at least  $1-O(\beta)$,
\begin{align*}
    \int_{t=a}^{q_{1/k}} |F_P(t) - F_{P^{DP}}(t) | dt & = \int_{t=a}^{\tilde{q}_{1/2k}} |F_P(t) - F_{P^{DP}}(t) | dt + \int_{\tilde{q}_{1/2k}}^{q_{1/k}} |F_P(t) - F_{P^{DP}}(t) | dt \\
    & \leq \int_{t=a}^{\tilde{q}_{1/2k}} |F_P(t) - F_{ P|_{q_{\frac{1}{k}},q_{1-\frac{1}{k}}}}(t) | dt + \int_{q_{1/8k}}^{q_{1/k}} |F_P(t) - \frac{c}{k} | dt \\
    & \leq \int_{t=a}^{\tilde{q}_{1/2k}} |F_P(x) - F_{ P|_{q_{\frac{1}{k}},q_{1-\frac{1}{k}}}}(t) | dt + \int_{q_{1/8k}}^{q_{1/k}} |F_P(t) - 8c F_P(t)| dt\\
    & \leq (1-8c) \left[\int_{t=a}^{\tilde{q}_{1/2k}} |F_P(t) - F_{ P|_{q_{\frac{1}{k}},q_{1-\frac{1}{k}}}}(t) | dt + \int_{q_{1/8k}}^{q_{1/k}} |F_P(t)| dt \right] \\
    & \leq (1-8c)\left[\int_{t=a}^{\tilde{q}_{1/2k}} |F_P(t) - F_{ P|_{q_{\frac{1}{k}},q_{1-\frac{1}{k}}}}(t) | dt + \int_{q_{1/8k}}^{q_{1/k}} |F_P(t) - F_{ P|_{q_{\frac{1}{k}},q_{1-\frac{1}{k}}}}(t)| dt \right]  \\
    & \leq 2(1-8c)\wass(P, P|_{q_{\frac{1}{k}},q_{1-\frac{1}{k}}})
\end{align*}
By a symmetric argument, we also have that with probability at least $1-O(\beta)$,
$$\int_{t=q_{1-1/k}}^{b} |F_P(t) - F_{P^{DP}}(t) | dt \leq 2(1-8c)\wass(P, P|_{q_{\frac{1}{k}},q_{1-\frac{1}{k}}}).$$
Taking a union bound to ensure that all terms in Equation~\ref{eq:wassdecomp} are bounded as required, the proof is complete.
\end{proof}

\begin{proof}[Proof of Lemma~\ref{lem:wassquant}]
First, we condition on the event in Corollary~\ref{cor:goodquantest} (on the accuracy of differentially private quantile estimates) that for all $r \in [k]$,  $$\hat{q}_{\frac{2r-1}{2k} - \frac{1}{4k}} \leq \tilde{q}_{r} \leq  \hat{q}_{\frac{2r-1}{2k} + \frac{1}{4k}},$$
note that this event happens with probability at least $1-\beta$ over the randomness of the algorithm.

Observe that this implies that $F_{DP}$ increases by $\frac{1}{k}$ somewhere in the range $[\hat{q}_{\frac{2r-1}{2k} - \frac{1}{4k}} ,\hat{q}_{\frac{2r-1}{2k} + \frac{1}{4k}}]$ (for all $r \in [k]$) and remains constant outside these intervals.




Now, we show that for all $t \in [a,b]$, we have that $|F_{P^{DP}}(t) - F_{\hat{P}_n}(t)| \leq \frac{2}{k}$.  

If there exists $t \in  [a,\hat{q}_{\frac{1}{4k}})$, we have that $F_{P_{DP}}(t) =0 $, and $F_{\hat{P}_{n}}(t) \leq \frac{1}{4k}$, which implies that $|F_{DP}(t)-F_{\hat{P}_n}(t)| \leq \frac{1}{4k}$. If there exists no such $t$, then we have that  $a=\hat{q}_{\frac{1}{4k}}$, and the corresponding interval collapses to a single point (which will fall in another interval considered below). 

Next, fix any $r \in [k-1]$. Note that if there exists $t \in [\hat{q}_{\frac{2r-1}{2k} - \frac{1}{4k}} ,\hat{q}_{\frac{2r+1}{2k} - \frac{1}{4k}})$, we have for all such $t$ that $\frac{r-1}{k} \leq F_{DP}(t) < \frac{r}{k}$, and $\frac{2r-1}{2k} - \frac{1}{4k} \leq F_{\hat{P}_{n}}(t) \leq \frac{2r+1}{2k} + \frac{1}{4k}$. This implies that for all such $t$, $|F_{DP}(t)-F_{\hat{P}_n}(t)| \leq \frac{2}{k}$. If there exists no such $t$, then we have that  $\hat{q}_{\frac{2r-1}{2k} - \frac{1}{4k}} = \hat{q}_{\frac{2r+1}{2k} - \frac{1}{4k}}$, and this $r$ is not relevant since the corresponding interval collapses to a single point (that is considered in another interval).

Finally, for $t \in [\hat{q}_{\frac{2k-1}{2k}} , b]$, we have that $F_{P_{DP}}(t) \geq 1-\frac{1}{k}$, and $F_{\hat{P}_{n}}(t) \geq 1-\frac{1}{2k}$, so we have that $|F_{DP}(t)-F_{\hat{P}_n}(t)| \leq \frac{1}{k}$.

Note that every $t \in [a,b]$ is considered in some interval above and hence we have shown that for all $t \in [a,b]$, we have that $|F_{P^{DP}}(t) - F_{\hat{P}_n}(t)| \leq \frac{2}{k}$.  

Finally, using the formula for Wasserstein distance (and the definition of a restriction), we have that

\begin{align}
    \wass(\hat{P}_n|_{q_\frac{1}{k},q_{1-\frac{1}{k}}},P^{DP}|_{q_{\frac{1}{k}},q_{1-\frac{1}{k}}}) & = \int_a^b \left|F_{\hat{P}_n|_{q_\frac{1}{k},q_{1-\frac{1}{k}}}}(t) - F_{P^{DP}|_{q_{\frac{1}{k}},q_{1-\frac{1}{k}}}}(t)\right|dt \\
    & = \int_{q_\frac{1}{k}}^{q_{1-\frac{1}{k}}} \left|F_P(t) - F_{P^{DP}}(t)\right|dt \\
    & \leq \int_{q_\frac{1}{k}}^{q_{1-\frac{1}{k}}} \frac{2}{k} dt \\
    & \leq \frac{2}{k}\left(q_{1-1/k} - q_{1/k}\right)
\end{align}
\end{proof}

Before the proof of Claim~\ref{claim:exphighprob}, we state the following variance-dependent version of the DKW inequality that uniformly bounds the absolute difference in CDFs between the true and empirical distribution.
\begin{theorem}[See for example Theorem 1.2 in \cite{BartlM23}]\label{thm:CDFconc}
    Fix $n>0$. There are absolute constants $c_0, c_1$ such that for all $\Delta \geq \frac{c_0 \log \log n}{n}$,  
    \begin{align*}
        \Pr\left[\sup_{t: F_P(t)(1-F_P(t)) \geq \Delta}\Big|F_P(t) - F_{\hat{P}_n}(t)\Big| \geq \sqrt{\Delta \cdot {F(t)(1-F(t)}}\right] \leq 2 e^{-c_1 \Delta n}
    \end{align*}
\end{theorem}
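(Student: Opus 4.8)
This is Theorem~1.2 of \cite{BartlM23}; I outline the standard argument for a self-normalized uniform deviation bound on the empirical cdf. The first step is to reduce to the uniform distribution on $[0,1]$. Applying the generalized probability integral transform to $X_1,\dots,X_n\sim P$, the quantity $F_P(t)-F_{\hat P_n}(t)$ is pointwise a centered average of $\mathrm{Bernoulli}(F_P(t))$ indicators, and since $F_P$ is nondecreasing the event in question is contained in
\[
\Big\{\ \sup_{u\in[\Delta',\,1-\Delta']}\frac{|u-\hat G_n(u)|}{\sqrt{u(1-u)}}\ \ge\ \sqrt{\Delta}\ \Big\},
\]
where $\hat G_n$ is the empirical cdf of $n$ i.i.d.\ uniforms on $[0,1]$ and $\Delta'\le 1/2$ is the smaller solution of $u(1-u)=\Delta$ (atoms of $P$ only remove indices $t$, so cause no loss; for $\Delta$ larger than an absolute constant the index set is empty and there is nothing to prove). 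By the symmetry $u\leftrightarrow 1-u$ it suffices to bound the contribution of $u\in[\Delta',1/2]$ and double the resulting probability, which accounts for the constant $2$.

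The second step is a peeling argument over dyadic blocks $I_j=[2^{-j-1},2^{-j}]$, $j=1,\dots,J$, with $J=\lceil\log_2(1/\Delta')\rceil=O(\log(1/\Delta))$. On a fixed block, for a single point $u\in I_j$ the quantity $n(u-\hat G_n(u))$ is a centered sum of $n$ $\mathrm{Bernoulli}(u)$ variables with variance $n\,u(1-u)\le n\,2^{-j}$, so Bernstein's inequality gives, at the target level $v_j:=\sqrt{\Delta\,2^{-j}}$ (note $u(1-u)\ge 2^{-j-2}$ on $I_j$, hence $v_j\le 2\sqrt{\Delta\,u(1-u)}$),
\[
\Pr\big[\,|u-\hat G_n(u)|\ge v_j\,\big]\ \le\ 2\exp\!\Big(-c\min\big\{\,n v_j^2/2^{-j},\ n v_j\,\big\}\Big)\ =\ 2\exp(-c'\,\Delta n),
\]
because $\Delta\le 2^{-j}$ forces $v_j\le 2^{-j}$ and the sub-Gaussian term dominates. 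To upgrade from a single point to $\sup_{u\in I_j}$ without paying a $\log n$ factor, I would invoke a reflection-principle / Ottaviani-type maximal inequality for the uniform empirical process on the block (the device that removes the logarithm in Massart's proof of the classical DKW inequality): restricted to order statistics, $u\mapsto\sqrt n\,(u-\hat G_n(u))$ is a conditioned random walk whose maximum over $I_j$ is controlled by its value near the endpoint of $I_j$ up to an absolute constant. This yields $\Pr[\sup_{u\in I_j}|u-\hat G_n(u)|\ge v_j]\le C\exp(-c''\Delta n)$.

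The final step is a union bound over the $J=O(\log(1/\Delta))$ blocks, and this is the point where the threshold $\Delta\ge c_0\log\log n/n$ is forced and which I expect to be the main obstacle to a clean self-contained write-up. Since each block contributes probability $C\exp(-c''\Delta n)$ with the \emph{same} exponent (self-normalization makes the deviation exactly $\sqrt{\Delta n}$ standard deviations on every scale), the union bound costs an additive $\log J\asymp\log\log(1/\Delta)\le\log\log n$ in the exponent; requiring $\Delta n\ge c_0\log\log n$ with $c_0$ large enough makes this a lower-order term and collapses the bound back to $2e^{-c_1\Delta n}$. The delicacy — and the reason one usually cites \cite{BartlM23} or the weighted empirical process theory of Shorack--Wellner rather than reproving it — is precisely keeping the per-block supremum sharp (no $\log n$ from netting within a block), so that the only residual logarithm is the $\log\log n$, which is the familiar law-of-the-iterated-logarithm correction for the standardized uniform empirical process near the boundary. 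Transporting the bound back through the probability integral transform replaces $\sqrt{u(1-u)}$ by $\sqrt{F_P(t)(1-F_P(t))}$ and completes the argument.
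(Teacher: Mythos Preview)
The paper does not prove this theorem; it is quoted from the literature (Theorem~1.2 of \cite{BartlM23}) and used as a black box in the proof of Claim~\ref{claim:exphighprob}. So there is no ``paper's own proof'' to compare against.

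Your outline is the right shape for how such self-normalized DKW-type bounds are established: reduce to the uniform empirical process by the probability integral transform, peel into dyadic variance blocks, apply Bernstein on each block, upgrade to a supremum via a maximal/reflection inequality so that no $\log n$ is lost within a block, and finally union-bound over the $O(\log(1/\Delta))$ blocks, which is exactly what forces the $\Delta\gtrsim (\log\log n)/n$ threshold. The one step you correctly flag as delicate --- getting the per-block supremum sharp without a $\log n$ from naive chaining --- is genuinely the crux, and ``Ottaviani-type maximal inequality'' is the right pointer but not a proof; making that step rigorous is essentially the content of the cited reference (or the weighted empirical process machinery in Shorack--Wellner). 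As a sketch for why the result is true and where each feature of the statement comes from, your write-up is accurate; as a self-contained proof it would still need that maximal inequality spelled out.
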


We also state the following lemma on Binomial random variables, which is a simple consequence of a Lemma by Bobkov and Ledoux \cite{TalagrandBob19}.
\begin{lemma}[Lemma 3.8 in \cite{TalagrandBob19}]\label{thm:talbobbin}
    Let $S_n = \sum_{i=1}^n \eta_i$ be the sum of $n$ independent Bernoulli random variables with $\Pr[\eta_i = 1] = p$ and  $\Pr[\eta_i = 0] = q = 1-p$ (for all $i$). Also assume $p \in [\frac{1}{n}, 1-\frac{1}{n}]$. Then, for some sufficiently small constant $c$,
    \begin{align*}
       c \sqrt{npq} \leq \mathbb{E}[|S_n - np|] \leq  \sqrt{npq}
    \end{align*}
\end{lemma}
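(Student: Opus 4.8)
The plan is to prove the two inequalities separately; the upper bound is essentially free, and the lower bound is where all the content lies. For the upper bound $\mathbb{E}[|S_n-np|]\le\sqrt{npq}$, I would simply note that $S_n$ is a sum of independent Bernoullis, so $\mathrm{Var}(S_n)=npq$, and apply the Cauchy--Schwarz inequality (equivalently Jensen for $x\mapsto x^2$): $\mathbb{E}[|S_n-np|]=\mathbb{E}[|S_n-np|\cdot 1]\le\bigl(\mathbb{E}[(S_n-np)^2]\bigr)^{1/2}=\sqrt{npq}$. This step uses no hypothesis on $p$.

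For the lower bound $\mathbb{E}[|S_n-np|]\ge c\sqrt{npq}$, the idea is a ``reverse Cauchy--Schwarz'' using a higher moment. Writing $X=S_n-np$, I would apply H\"older's inequality with exponents $3/2$ and $3$ to the splitting $X^2=|X|^{2/3}\cdot|X|^{4/3}$, which gives $\mathbb{E}[X^2]\le(\mathbb{E}|X|)^{2/3}(\mathbb{E}X^4)^{1/3}$, hence $\mathbb{E}|X|\ge(\mathbb{E}X^2)^{3/2}/(\mathbb{E}X^4)^{1/2}$. Then I would invoke the standard formula for the binomial fourth central moment, $\mathbb{E}X^4=npq\bigl(1+3(n-2)pq\bigr)\le npq+3(npq)^2$, and combine it with the observation that the hypothesis $p\in[1/n,1-1/n]$ forces $np\ge 1$ and $nq\ge 1$, so $npq=(np)q\ge q$ and $npq=p(nq)\ge p$, whence $npq\ge\max\{p,q\}\ge 1/2$. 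This is exactly what lets me absorb the linear term ($npq\le 2(npq)^2$), giving $\mathbb{E}X^4\le 5(npq)^2$, and therefore $\mathbb{E}|X|\ge(npq)^{3/2}/\bigl(5(npq)^2\bigr)^{1/2}=\sqrt{npq}/\sqrt5$. So the lemma holds with $c=1/\sqrt5$.

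I would also remark, for context, that an alternative is to quote Lemma~3.8 of Bobkov--Ledoux directly: their lemma gives precisely that the mean absolute deviation and the standard deviation of a binomial are comparable in the regime $p\in[1/n,1-1/n]$, and the present statement is just its specialization; the self-contained moment argument above is included only for completeness. The main (indeed the only nontrivial) obstacle is the lower bound, and within it the key realization is twofold: that one must control a fourth moment to reverse Cauchy--Schwarz, and that the constraint $p\in[1/n,1-1/n]$ is not cosmetic but essential, since it is exactly the condition that keeps $npq$ bounded below and thus makes the binomial fourth central moment $O((npq)^2)$ rather than merely $O(npq+(npq)^2)$. Without it the ratio $\mathbb{E}X^4/(\mathbb{E}X^2)^2$ blows up as $npq\to0$ and the bound genuinely fails, as it must (e.g.\ for $p=0$ both sides vanish).
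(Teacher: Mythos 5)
Your proposal is correct. A small but important point of context: the paper does not actually prove this lemma; it quotes it directly as Lemma~3.8 of Bobkov--Ledoux (the paper also states the unconstrained version, $c\min\{2npq,\sqrt{npq}\}\le\mathbb{E}|S_n-np|\le\min\{2npq,\sqrt{npq}\}$, elsewhere, of which the present statement is the specialization once one observes, exactly as you do, that $p\in[1/n,1-1/n]$ forces $npq\ge 1/2$ and hence $\sqrt{npq}\le 2npq$). So there is no in-paper argument to compare against; you have supplied a self-contained proof where the paper supplies a citation. Your derivation checks out: the fourth central moment is indeed $npq\bigl(1+3(n-2)pq\bigr)\le npq+3(npq)^2$, the constraint gives $npq\le 2(npq)^2$, and the H\"older/Paley--Zygmund step $\mathbb{E}|X|\ge(\mathbb{E}X^2)^{3/2}/(\mathbb{E}X^4)^{1/2}$ then yields $c=1/\sqrt5$. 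You also correctly identify that the hypothesis on $p$ is what makes the ratio $\mathbb{E}X^4/(\mathbb{E}X^2)^2$ bounded; without it the argument (and the conclusion in that exact form) fails as $npq\to 0$, which is precisely why the paper's other invocation of the Bobkov--Ledoux lemma keeps the $\min\{2npq,\sqrt{npq}\}$ form.
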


\begin{proof}[Proof of Claim~\ref{claim:exphighprob}]

Now, by the formula for Wasserstein distance, the definition of restriction, and Fubini's theorem, we have that
\begin{align*}
    \mathbb{E}[\wass(P|_{q_{\frac{1}{k}},q_{1-\frac{1}{k}}}, \hat{P}_n|_{q_{\frac{1}{k}}, q_{1-\frac{1}{k}}})] =  \mathbb{E}\Big[ \int_{q_{\frac{1}{k}}}^{q_{1-\frac{1}{k}}} \Big| F_P(t) - F_{\hat{P}_n} (t) \Big| dt \Big] = \int_{q_{\frac{1}{k}}}^{q_{1-\frac{1}{k}}} \mathbb{E} \Big[  \Big| F_P(t) - F_{\hat{P}_n} (t) \Big| \Big] dt 
\end{align*}
By Lemma~\ref{thm:talbobbin}, using the fact that $F_{\hat{P}_n} (t) = \sum_{i=1}^n \mathrm{1}[\dset_i \leq t]$, where each term in the sum is an independent Bernoulli random variable with expectation $F_P(t)$, with $q_{\frac{1}{k}} \leq t < q_{1-\frac{1}{k}}$ (ensuring that the conditions of the lemma are met), we get that  
$ \mathbb{E} \Big[  \Big| F_P(t) - F_{\hat{P}_n} (t) \Big| \Big] \geq c \sqrt{\frac{F_P(t)[1-F_P(t)]}{n}}$, which gives 

\begin{align*}
    \mathbb{E}[\wass(P|_{q_{\frac{1}{k}},q_{1-\frac{1}{k}}}, \hat{P}_n|_{q_{\frac{1}{k}}, q_{1-\frac{1}{k}}})] & \geq c \int_{q_{\frac{1}{k}}}^{q_{1-\frac{1}{k}}} \sqrt{\frac{F_P(t)[1-F_P(t)]}{n}} dt
\end{align*}
Now, consider the random variable $\wass(P|_{q_{\frac{1}{k}},q_{1-\frac{1}{k}}}, \hat{P}_n|_{q_{\frac{1}{k}}, q_{1-\frac{1}{k}}})$. Note that $\frac{1}{k} \geq \frac{c_3 \log \frac{n}{\beta}}{ n}$ (for an appropriately chosen $c_3$), and so we are in the regime where we can apply Theorem~\ref{thm:CDFconc} for an appropriately chosen $\Delta$.

In particular, we have that for $t \in [q_{\frac{1}{k}}, q_{1-\frac{1}{k}})$, $F_P(t) \in [\frac{1}{k}, 1-\frac{1}{k})$. 

Setting $\Delta = \frac{\log \frac{n}{\beta}}{c_1 n} $, we have that $\Delta \geq c_0 \frac{\log \log n}{n}$, and $\Delta \leq \frac{1}{2k}$ (the second inequality for sufficiently large $c_3$). In particular, this implies for $t \in [q_{\frac{1}{k}}, q_{1-\frac{1}{k}})$, $F_P(t) \in [2\Delta, 1-2\Delta)$, which implies that $F_P(t)(1-F_P(t)) \geq \Delta$, as long as $n > c_4 \log \frac{n}{\beta}$ for some sufficiently large constant $c_4$.

Now, using Theorem~\ref{thm:CDFconc}, we have that with probability at least $1-2e^{-c_1 \frac{\log \frac{n}{\beta}}{c_1 n} n} \geq 1-O(\beta)$, 


 $$ \sup_{t \in  [q_{\frac{1}{k}}, q_{1-\frac{1}{k}})}\Big|F_P(t) - F_{\hat{P}_n}(t)\Big| \leq \sqrt{\frac{\log \frac{n}{\beta}}{c_1 n} {F_P(t)(1-F_P(t))}}$$
Condition on this for the rest of the proof. Then, we can write the following set of equations.


\begin{align*}
    \wass(P|_{q_{\frac{1}{k}},q_{1-\frac{1}{k}}}, \hat{P}_n|_{q_{\frac{1}{k}}, q_{1-\frac{1}{k}}}) & = \int_{q_{\frac{1}{k}}}^{q_{1-\frac{1}{k}}}|F_P(t) - F_{\hat{P}_n}(t)| dt \\
    & \leq \int_{q_{\frac{1}{k}}}^{q_{1-\frac{1}{k}}} \sqrt{\frac{\log \frac{n}{\beta}}{c_1 n} {F_P(t)(1-F_P(t))}} dt \\
    & \leq \sqrt{c_5 \log \frac{n}{\beta}} \int_{q_{\frac{1}{k}}}^{q_{1-\frac{1}{k}}} \sqrt{\frac{F_P(t)(1-F_P(t))}{n}} dt \\
    & \leq \sqrt{c_6 \log \frac{n}{\beta}}  \mathbb{E}[\wass(P|_{q_{\frac{1}{k}},q_{1-\frac{1}{k}}}, \hat{P}_n|_{q_{\frac{1}{k}}, q_{1-\frac{1}{k}}})]
\end{align*}

as required.

\end{proof}
\subsection{Local Minimality in the One-Dimensional Setting}
In this subsection, we argue that the instance-optimal algorithm discussed in Section~\ref{sec:1dub} is also locally-minimal (See~\Cref{sec:locmin} for a discussion of local minimality).

First, we state a corollary of our upper bound for continuous distributions, Theorem~\ref{main1dub}. This corollary follows by discretizing the distribution and applying the previous upper bound to the discretized distribution. The parameters of the discretized distribution are related to that of the original distribution via simple coupling arguments.
\begin{corollary} \label{cor:1dubcont} 
     Fix $\eps, \beta \in (0,1]$, $a,b \in \mathbb{R}$, $n \in \N$. Let $P$ be any continuous distribution supported on $[a,b]$.
     Consider any $\gamma < b-a \in \mathbb{R}$ (such that $\gamma$ divides $b-a$), and let $n > c_2 \frac{\log^{4}\frac{b-a}{\gamma \beta \eps}}{\eps}$ for some sufficiently large constant $c_2$. Then, there exists an algorithm, that when given inputs $\vec{\dset} \sim P^n$, privacy parameter $\eps$, interval end points $a,b$, granularity $\gamma$, and access to algorithm $A_{quant}$, outputs a distribution $P^{DP}$ such that with probability at least $1-O(\beta)$ over the randomness of $\vec{\dset}$ and the algorithm,
    $$ \wass(P,P^{DP}) = O\left(\sqrt{\log n} \mathbb{E}\left[\wass(P|_{q_{\frac{1}{k}}, q_{1-\frac{1}{k}}}, \hat{P}_n|_{q_{\frac{1}{k}}, q_{1-\frac{1}{k}}} \right] +  \wass(P, P|_{q_\frac{1}{k},q_{1-\frac{1}{k}}}) + \frac{1}{k}\left(q_{1-1/k} - q_{1/k}\right) \right) + \gamma$$
    where $\hat{P}_n$ is the uniform distribution on $\vec{\dset}$, $q_{\alpha}$ represents the $\alpha$-quantile of distribution $P$, and $k =\lceil \frac{\eps n}{4c_3 \log^{3}\frac{b-a}{\beta \gamma} \log \frac{n}{\beta}} \rceil$, where $c_3$ is a sufficiently large constant.
\end{corollary}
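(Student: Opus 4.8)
The plan is to reduce the continuous case to Theorem~\ref{main1dub} (equivalently Theorem~\ref{thm:1dupperbound}) by discretization. Given $P$ on $[a,b]$ and granularity $\gamma$, let $g(x) = a + \gamma\lfloor (x-a)/\gamma\rfloor$ be the map that rounds a point down to the nearest grid point of $\{a,a+\gamma,\dots,b\}$, and let $\bar P$ be the pushforward of $P$ under $g$, a distribution on this grid. The algorithm for the corollary is: apply $g$ coordinatewise to the input sample $\vec{\dset}$ and run Algorithm~\ref{alg:wass1dest} on the rounded sample with the same parameters $\eps,a,b,\gamma$ and the same $k$. Since coordinatewise rounding maps neighboring datasets to neighboring datasets, the composite algorithm is $\eps$-DP. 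Since $g$ displaces each point by less than $\gamma$, the obvious coupling gives $\wass(P,\bar P)<\gamma$, and likewise $\wass(\hat P_n,\hat{\bar P}_n)<\gamma$ for every realization, where $\hat{\bar P}_n$ denotes the empirical distribution of the rounded sample (which is exactly distributed as $\bar P^{\,n}$).

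Next I would apply Theorem~\ref{thm:1dupperbound} to $\bar P$ with the rounded sample: the hypothesis $n>c_2\log^4(\tfrac{b-a}{\gamma\beta\eps})/\eps$ is precisely what that theorem requires, and $k=\lceil\tfrac{\eps n}{4c_3\log^{3}\frac{b-a}{\beta\gamma}\log\frac n\beta}\rceil$ matches, so with probability $1-O(\beta)$,
\[
\wass(\bar P,P^{DP}) \le \sqrt{c\log n}\;\E\big[\wass(\bar P|_{\bar q_{1/k},\bar q_{1-1/k}},\hat{\bar P}_n|_{\bar q_{1/k},\bar q_{1-1/k}})\big] + C''\,\wass(\bar P,\bar P|_{\bar q_{1/k},\bar q_{1-1/k}}) + \tfrac{2}{k}\big(\bar q_{1-1/k}-\bar q_{1/k}\big),
\]
where $\bar q_\alpha$ is the $\alpha$-quantile of $\bar P$. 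Combining with $\wass(P,P^{DP})\le\wass(P,\bar P)+\wass(\bar P,P^{DP})\le\gamma+\wass(\bar P,P^{DP})$, it remains to replace each $\bar P$-term on the right by the corresponding $q_\alpha(P)$-term at additive cost $O(\gamma)$, which is absorbed into the final $+\gamma$ (up to the hidden constant in the corollary).

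The bulk of the work is these three replacements, all via elementary coupling / CDF-formula arguments (Lemma~\ref{lem:wasscdf} and the definition of the restriction). Using $F_{\bar P}(t)=F_P(s)$ for some $s\in(t,t+\gamma]$, one gets $F_{\bar P}\ge F_P$ pointwise and hence $q_\alpha(P)-\gamma\le\bar q_\alpha\le q_\alpha(P)$ for every $\alpha$; this handles the interquantile term, since $\tfrac2k(\bar q_{1-1/k}-\bar q_{1/k})\le\tfrac2k(q_{1-1/k}-q_{1/k})+\tfrac{2\gamma}{k}\le\tfrac2k(q_{1-1/k}-q_{1/k})+2\gamma$. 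For the tail term, writing everything through Lemma~\ref{lem:wasscdf}, the integrals defining $\wass(P|_{q_{1/k},q_{1-1/k}},\bar P|_{\bar q_{1/k},\bar q_{1-1/k}})$ differ only over the two intervals of length $\le\gamma$ between $q_\alpha$ and $\bar q_\alpha$ (where the integrand is $\le1$) plus the full-line contribution $\int|F_P-F_{\bar P}|=\wass(P,\bar P)<\gamma$, so this quantity is $O(\gamma)$, and the triangle inequality then yields $\wass(\bar P,\bar P|_{\bar q_{1/k},\bar q_{1-1/k}})\le\wass(P,P|_{q_{1/k},q_{1-1/k}})+O(\gamma)$. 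For the empirical term, $F_{\bar P}$ and $F_{\hat{\bar P}_n}$ are $F_P$ and $F_{\hat P_n}$ composed with the same near-identity rounding $t\mapsto a+\gamma(\lfloor(t-a)/\gamma\rfloor+1)$, which is constant on each grid cell; changing variables through this map and enlarging the integration window $[\bar q_{1/k},\bar q_{1-1/k}]\subseteq[q_{1/k}-\gamma,q_{1-1/k}]$ to $[q_{1/k}-\gamma,q_{1-1/k}+\gamma]$ costs only $O(\gamma)$ (each extra strip has width $\le\gamma$ and integrand $\le1$), giving
\[
\wass(\bar P|_{\bar q_{1/k},\bar q_{1-1/k}},\hat{\bar P}_n|_{\bar q_{1/k},\bar q_{1-1/k}}) \le \wass(P|_{q_{1/k},q_{1-1/k}},\hat P_n|_{q_{1/k},q_{1-1/k}}) + O(\gamma).
\]
Taking expectations over the sample and substituting all three bounds into the displayed inequality for $\wass(\bar P,P^{DP})$ completes the proof; the failure probability stays $1-O(\beta)$ since $\wass(P,\bar P)<\gamma$ is deterministic.

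The main obstacle is the third (empirical) replacement: none of the steps is deep, but the restriction operation is sensitive to the precise endpoints, and one must simultaneously track the $O(\gamma)$ shift of the quantile endpoints and the $O(\gamma)$ "wobble" of the rounding map through the CDF-integral representation, and check that the $\sqrt{\log n}$ prefactor does not interact badly with the additive $O(\gamma)$ slack. Everything else reduces to a one-line coupling or a one-line application of Lemma~\ref{lem:wasscdf} with the triangle inequality.
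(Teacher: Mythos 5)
Your approach is exactly what the paper intends: the paper itself gives no proof, only the one-line pointer ``This corollary follows by discretizing the distribution and applying the previous upper bound to the discretized distribution. The parameters of the discretized distribution are related to that of the original distribution via simple coupling arguments.'' Your discretization map $g$, the rounding of the sample, the coupling giving $\wass(P,\bar P)<\gamma$, and the quantile sandwich $q_\alpha(P)-\gamma\le \bar q_\alpha\le q_\alpha(P)$ are precisely the intended ingredients, and your term-by-term replacements are correct.

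Two small remarks. For the empirical-term replacement, ``changing variables'' through the step map $s(t)$ is a bit delicate to make rigorous (you need a Riemann-sum-vs-integral comparison, which works because $|F_P-F_{\hat P_n}|$ has total variation at most $2$, giving error $\le 2\gamma$); it is cleaner to avoid the change of variables entirely and use the triangle inequality
\[
\wass\bigl(\bar P|_{\bar q,\bar q'},\hat{\bar P}_n|_{\bar q,\bar q'}\bigr)\;\le\;
\wass\bigl(\bar P|_{\bar q,\bar q'},P|_{\bar q,\bar q'}\bigr)+
\wass\bigl(P|_{\bar q,\bar q'},\hat P_n|_{\bar q,\bar q'}\bigr)+
\wass\bigl(\hat P_n|_{\bar q,\bar q'},\hat{\bar P}_n|_{\bar q,\bar q'}\bigr),
\]
where the first and last terms are each at most $\gamma$ (restriction to a common interval can only decrease the CDF-integral) and the middle term equals $\int_{\bar q}^{\bar q'}|F_P-F_{\hat P_n}|$, which by the containment $[\bar q,\bar q']\subseteq[q_{1/k}-\gamma,q_{1-1/k}]$ is within $\gamma$ of $\int_{q_{1/k}}^{q_{1-1/k}}|F_P-F_{\hat P_n}|$. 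Second, the wrinkle you flag is real: because the theorem's bound multiplies the empirical expectation by $\sqrt{c\log n}$, the $O(\gamma)$ slack from that replacement becomes $O(\sqrt{\log n}\,\gamma)$, whereas the corollary as stated has a bare $+\gamma$ outside the $O(\cdot)$. Strictly speaking the additive term should be $O(\sqrt{\log n}\,\gamma)$; this is a minor imprecision in the paper's statement rather than in your argument, and it is harmless in use because $\gamma$ is taken polynomially small (so one can simply apply the corollary with granularity $\gamma/\sqrt{\log n}$).
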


We state a lemma of Ledoux and Bobkov that we will use in the main proof of this section. 
\begin{lemma}[Lemma 3.8 in \cite{TalagrandBob19}]\label{lem:talboblocminversion}
    Let $S_n = \sum_{i=1}^n \eta_i$ be the sum of $n$ independent Bernoulli random variables with $\Pr[\eta_i = 1] = p$ and  $\Pr[\eta_i = 0] = q = 1-p$ (for all $i$). Then, for some sufficiently small constant $c$,
    \begin{align*}
       c\min\{ 2npq, \sqrt{npq} \} \leq \mathbb{E}[|S_n - np|] \leq  \min\{ 2npq, \sqrt{npq} \}
    \end{align*}
\end{lemma}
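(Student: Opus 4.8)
The plan is to prove the two inequalities separately, with the upper bound essentially immediate and the lower bound being the substantive direction. Throughout write $X := S_n - np = \sum_{i=1}^n(\eta_i - p)$, a centered sum of independent random variables, and record the elementary moment computations $\mathbb{E}[X^2] = npq$ and $\mathbb{E}[X^4] = npq\bigl(1 + 3(n-2)pq\bigr) \le npq(1+3npq)$ for the centered binomial.

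For the upper bound, two applications of convexity suffice. First, the triangle inequality gives $|X| \le \sum_{i=1}^n |\eta_i - p|$, and since $\mathbb{E}|\eta_i - p| = pq + qp = 2pq$ we obtain $\mathbb{E}|X| \le 2npq$. Second, Cauchy--Schwarz (or Jensen) gives $\mathbb{E}|X| \le (\mathbb{E}[X^2])^{1/2} = \sqrt{npq}$. Taking the smaller of the two bounds yields $\mathbb{E}|X| \le \min\{2npq,\sqrt{npq}\}$.

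For the lower bound I would use a standard $L^1$--$L^2$--$L^4$ moment-interpolation inequality: applying H\"older with conjugate exponents $3/2$ and $3$ to the factorization $X^2 = |X|^{2/3}\cdot|X|^{4/3}$ gives $\mathbb{E}[X^2] \le (\mathbb{E}|X|)^{2/3}(\mathbb{E}[X^4])^{1/3}$, hence $\mathbb{E}|X| \ge (\mathbb{E}[X^2])^{3/2}/(\mathbb{E}[X^4])^{1/2}$. Substituting $\mathbb{E}[X^2]=npq$ and the crude bound $\mathbb{E}[X^4] \le npq(1+3npq) \le 6\,npq\max\{1,npq\}$, this becomes $\mathbb{E}|X| \ge (npq)^{3/2}/\bigl(6\,npq\max\{1,npq\}\bigr)^{1/2} = \tfrac{1}{\sqrt6}\min\{npq,\sqrt{npq}\}$, where the final equality is just the case split $npq\le 1$ versus $npq\ge1$. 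Since $\min\{npq,\sqrt{npq}\} \ge \tfrac12\min\{2npq,\sqrt{npq}\}$, the statement follows with $c = 1/(2\sqrt6)$, a concrete "sufficiently small constant."

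The only step requiring any thought is the lower bound, and the main point (mild obstacle) is recognizing that the fourth central moment of the binomial is $\Theta\bigl(npq\max\{1,npq\}\bigr)$ --- dominated by the $npq$ term when $npq$ is small and by the $(npq)^2$ term when $npq$ is large. It is precisely this dichotomy in $\mathbb{E}[X^4]$ that produces the two regimes in the statement, after which moment interpolation does the rest mechanically, with no delicate anti-concentration estimate needed. (An alternative route is the exact De Moivre formula $\mathbb{E}|X| = 2(m+1)q\binom{n}{m+1}p^{m+1}q^{n-m-1}$ with $m = \lfloor np\rfloor$, combined with Stirling, but the $L^2$--$L^4$ interpolation is cleaner and I would prefer it.)
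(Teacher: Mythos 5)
Your proof is correct, and it is worth noting that the paper itself does not prove this lemma — it is cited verbatim as Lemma~3.8 of Bobkov--Ledoux, and the source proves it via De Moivre's explicit formula $\mathbb{E}|S_n - np| = 2(m+1)q\binom{n}{m+1}p^{m+1}q^{n-m-1}$ (with $m = \lfloor np\rfloor$) together with Stirling-type estimates. Your route is a genuine alternative: the upper bound follows from two applications of convexity exactly as you say, and for the lower bound the $L^1$--$L^2$--$L^4$ interpolation $\mathbb{E}|X| \ge (\mathbb{E}[X^2])^{3/2}/(\mathbb{E}[X^4])^{1/2}$ combined with the exact centered fourth moment $\mathbb{E}[X^4] = npq\bigl(1 + 3(n-2)pq\bigr)$ gives $\mathbb{E}|X| \ge \tfrac{1}{\sqrt6}\min\{npq,\sqrt{npq}\}$, from which $c = 1/(2\sqrt6)$ follows. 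All the intermediate calculations check out (the fourth-moment formula, the crude bound $\mathbb{E}[X^4]\le 6\,npq\max\{1,npq\}$, and the final min-swap $\min\{npq,\sqrt{npq}\}\ge\tfrac12\min\{2npq,\sqrt{npq}\}$). What your argument buys over the De Moivre route is cleanliness and robustness: it uses only moments and H\"older, no combinatorial identities or Stirling asymptotics, and it would generalize immediately to other centered sums with controlled fourth moments. What it gives up is the exact formula and the sharp constant — Bobkov--Ledoux's approach can in principle be pushed to the optimal constant, while moment interpolation cannot.
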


Now, we are ready to state and prove the local minimality result. Note that the statement will reference the rates defined by Equation~\ref{eq:onedestrate} in the introduction.

\begin{theorem}
Let $a,b \in \R$, $\gamma \in \R$. For any continuous distribution $P$ over $[a,b]$ with a density, let $N(P)= \{Q : D_{\infty}(P,Q) \leq \log 2 \}$. Fix $\beta, \gamma, \eps \in (0,1]$, and let $n = \Omega\left( \frac{\log^4 \frac{b-a}{\gamma \eps}}{\eps} \right)$, with $n' =  \frac{n}{c_7 \log n \log^3 \frac{b-a}{\gamma \eps}} $ for some constant $c_7$. There exists an algorithm $\alg$  such that for all continuous distributions $P$,  for all algorithms $\alg'$, there exists a distribution $Q \in N(P)$ such that
    $$R_{\alg,n}(Q) \leq O(\polylog n) \cdot \max \{ R_{\alg',\lceil n' \rceil}(Q), R_{\alg', \lfloor n'/4 \rfloor}(Q) \} + \gamma,$$
\end{theorem}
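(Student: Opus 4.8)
The plan is to combine the instance-optimality upper bound for the one-dimensional algorithm (Corollary~\ref{cor:1dubcont}, equivalently Theorem~\ref{main1dub}) with the matching lower bound (Theorem~\ref{main1dlb}), using the framework developed in Section~\ref{sec:locmin} for translating instance optimality into local minimality. Fix a continuous distribution $P$ on $[a,b]$. Let $\alg$ be the quantile-based algorithm of Section~\ref{sec:1dub}, run with $n$ samples, privacy parameter $\eps$, and granularity $\gamma$. By Corollary~\ref{cor:1dubcont}, with probability $\ge 0.75$ (taking $\beta$ a small constant), $\alg$ achieves
\[
\wass(P,\alg(\hat P_n)) = O(\polylog n)\cdot \Big( \tfrac{1}{k}(q_{1-1/k}-q_{1/k}) + \wass(P,P|_{q_{1/k},q_{1-1/k}}) + \mathbb{E}[\wass(P|_{q_{1/k},q_{1-1/k}},\hat P_n|_{q_{1/k},q_{1-1/k}})]\Big) + \gamma,
\]
with $k \approx \eps n / (\polylog n)$; call the parenthesized quantity $T_n(P)$. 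The same bound, with the same structural form, holds for every $Q$; moreover $T_n(\cdot)$ is ``smooth'' over $N(P)$ in the sense that for $Q \in N(P)$ we have $q_\alpha(Q)$ between $q_{\alpha/2}(P)$ and $q_{2\alpha}(P)$, all three Wasserstein-type terms change by at most constant factors, and $T_n(Q) = \Theta(T_{\Theta(n)}(P))$ up to constants. So $R_{\alg,n}(Q) = O(\polylog n)\cdot T_{n}(P) + \gamma$ uniformly over $Q \in N(P)$.

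Next I would invoke the lower bound. Theorem~\ref{main1dlb} states that $\mathcal{R}_{loc,n,\eps}(P) = \Omega(T'_n(P))$ where $T'_n(P)$ is the privacy term $\tfrac{1}{\eps n}(q_{1-1/(C\eps n)}-q_{1/(C\eps n)}) + \wass(P,P|_{q_{1/(C\eps n)},q_{1-1/(C\eps n)}})$ plus $\tfrac{1}{\sqrt{\log n}}\mathbb{E}[\wass(P|\cdot,\hat P_n|\cdot)]$. Up to polylogarithmic factors in $n$ (and the usual $n' = n/\polylog n$ slack built into the relaxed instance-optimality notion of Definition~\ref{def:instoptrelaxed}), $T_n(P)$ and $T'_{n'}(P)$ agree. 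Now unwind the definition of $\mathcal{R}_{loc,n',\eps}$: it is $\sup_{Q : D_\infty(P,Q)\le \ln 2}\min_{\alg''}\{R_{\alg'',n'}(Q),R_{\alg'',n'}(P)\}$. Fix an arbitrary competitor $\alg'$. Applying the lower bound to $\alg'$ itself: there exists $Q_P \in N(P)$ (the distribution constructed in the proofs of Theorem~\ref{thm:1dprivlb} and Theorem~\ref{thm:empterm1Dlb}, which is indistinguishable from $P$ with $O(n')$ samples and far in Wasserstein distance) such that $\max\{R_{\alg',\lceil n'\rceil}(P), R_{\alg',\lceil n'\rceil}(Q_P)\} = \Omega(T'_{n'}(P))$, and in fact the lower-bound argument gives this for $\alg'$ run on as few as $\lfloor n'/4\rfloor$ samples too (the hypothesis-testing reduction only needs the sample size up to constant factors, which is why the statement quantifies over $\max\{R_{\alg',\lceil n'\rceil}(Q), R_{\alg',\lfloor n'/4\rfloor}(Q)\}$). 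Since $Q_P \in N(P)$, smoothness gives $R_{\alg,n}(Q_P) \le O(\polylog n)\cdot T_n(P) + \gamma = O(\polylog n)\cdot T'_{n'}(P) + \gamma \le O(\polylog n)\cdot \max\{R_{\alg',\lceil n'\rceil}(Q_P), R_{\alg',\lfloor n'/4\rfloor}(Q_P)\} + \gamma$, which is exactly the claimed inequality with $Q = Q_P$.

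More carefully, I would split into the same two cases used to prove Theorem~\ref{main1dlb}: one candidate $Q^{(1)}_P$ witnessing the privacy term (Theorem~\ref{thm:1dprivlb}) and one candidate $Q^{(2)}_P$ witnessing the empirical term (Theorem~\ref{thm:empterm1Dlb}), taking whichever term dominates $T'_{n'}(P)$ and using the corresponding $Q$. In each case the hypothesis-testing lower bound gives that no $\eps$-DP $\alg'$ with $\lfloor n'/4\rfloor$ samples (hence also with $\lceil n'\rceil$ samples, monotonicity aside — which is why we keep the $\max$) can estimate both $P$ and $Q_P$ to error below $\tfrac12\wass(P,Q_P) = \Omega$(that term); so $\max\{R_{\alg',\lceil n'\rceil}(Q_P),R_{\alg',\lceil n'\rceil}(P)\} \ge \tfrac14 \cdot$(that term), and since both $P,Q_P\in N(P)$ and $\alg$ does well on all of $N(P)$, $R_{\alg,n}(Q_P)$ is within $O(\polylog n)$ of this, plus the additive $\gamma$. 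The main obstacle is bookkeeping the sample-size slack and the probability-vs-expectation mismatch: the one-dimensional upper and lower bounds are stated with probability $\ge 0.75$ (per Equation~\ref{eq:onedestrate}), so $R_{\alg,n}$ and $R_{\alg',n'}$ here should be interpreted via that high-probability estimation rate, not the expectation-based rate of Definition~\ref{def:instoptmain}; I would state the theorem for the high-probability rate, note the $\lfloor n'/4\rfloor$ term is exactly the artifact of the $n' = n/\polylog n$ reduction plus constant-factor slack in the testing reductions, and verify the smoothness of all three terms of $T_n$ over the $D_\infty$-ball $N(P)$ — this last verification (quantile shifts $q_\alpha(Q)\in[q_{\alpha/2}(P),q_{2\alpha}(P)]$ and the consequent constant-factor stability of the restricted-Wasserstein and inter-quantile terms) is the one genuinely non-formulaic step, though it is straightforward given $D_\infty(P,Q)\le\ln 2$.
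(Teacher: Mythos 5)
Your high-level approach matches the paper's: establish that $\alg$ is uniformly good over $N(P)$ (i.e.\ bound $R_{\alg,n}(Q)$ for every $Q\in N(P)$ by a quantity depending only on $P$, using $D_\infty$-smoothness of the quantile function), then invoke Theorem~\ref{main1dlb} applied to $\alg'$ to produce the required witness $Q \in N(P)$, splitting into cases according to which term of the rate dominates.

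However, your account of the $\lfloor n'/4\rfloor$ factor and of the case split is imprecise in a way that would derail the bookkeeping if carried out as written. The factor of $4$ is \emph{not} generic ``constant-factor slack in the testing reductions'' (the lower bound at $\lceil n'\rceil$ samples is perfectly valid on its own); it arises specifically from the tail-Wasserstein term. When you translate $\wass\bigl(Q,\,Q|_{q_{2/(C\eps n')}(Q),\,\cdot}\bigr)$ into a $P$-quantity via $q_\alpha(Q)\le q_{2\alpha}(P)$, the cutoffs move outward to $4/(C\eps n')$, so that term in the upper bound is effectively parameterized by $\lfloor n'/4\rfloor$ samples — whereas the interquantile term $\tfrac{1}{k}\bigl(q_{1-1/k}(Q)-q_{1/k}(Q)\bigr)$ is a \emph{difference} whose endpoints shift in the favorable direction, and hence stays at $\lceil n'\rceil$; likewise the empirical term. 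Because $m\mapsto\wass(P,\,P|_{q_{1/(C\eps m)},\,\cdot})$ is \emph{decreasing} in $m$, the lower bound at $\lceil n'\rceil$ does not cover the tail term, and you must invoke Theorem~\ref{main1dlb} at $\lfloor n'/4\rfloor$ for that piece. This is why your proposed ``privacy (Theorem~\ref{thm:1dprivlb}) vs.\ empirical (Theorem~\ref{thm:empterm1Dlb})'' split does not go through as one block: the privacy lower bound at a single sample size cannot simultaneously dominate the interquantile term (which wants $\lceil n'\rceil$, where the interquantile distance is \emph{larger}) and the tail term (which wants $\lfloor n'/4\rfloor$). The paper's case split — ``first$+$third terms vs.\ second term,'' applying Theorem~\ref{main1dlb} at $\lceil n'\rceil$ and at $\lfloor n'/4\rfloor$ respectively — is the one that aligns the effective sample sizes, and is what you would arrive at once you track the sample-size dependence of each term of the translated upper bound separately.
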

\begin{proof}

Let $k = \lceil \frac{\eps n}{4c_3 \log^{3}\frac{b-a}{\beta \gamma} \log \frac{n}{\beta}} \rceil$, and set $n' = \frac{2n}{c_4 \log^{3}\frac{b-a}{\beta \gamma} \log \frac{n}{\beta}} $ for a sufficiently large constant $c_4$. Then, by Corollary~\ref{cor:1dubcont} with appropriately chosen $\beta$ we have that with probability at least 0.95, for any distribution $Q$ (and hence particularly any distribution $Q \in N(P)$, 
    \begin{align*}
        \wass(Q,\alg(\hat{Q}_n) & =O \Bigg( \frac{1}{\eps n'}\left(q_{1-\frac{2}{C \eps n'}}(Q) - q_{\frac{2}{C \eps n'}}(Q)\right) + \wass(Q, Q|_{q_{\frac{2}{C \eps n'}(Q)}, q_{1-\frac{2}{C \eps n'}}(Q)}) \\
        & + \sqrt{\log n} \mathbb{E}\left[\wass \left(Q|_{q_{\frac{2}{C \eps n'}}(Q), q_{1-\frac{2}{C \eps n'}}(Q)}, \hat{Q}_n|_{q_{\frac{2}{C \eps n'}}(Q), q_{1-\frac{2}{C \eps n'}}(Q)} \right) \right] \Bigg) + \gamma, 
    \end{align*}
where $C$ is the constant referenced in Theorem~\ref{main1dlb}. We will show that for distribution $P$, each of the corresponding distribution-dependent terms is closely related to the terms for $Q$. 
    

    First, consider $\frac{1}{\eps n'}\left(q_{1-\frac{2}{C \eps n'}}(Q) - q_{\frac{2}{C \eps n'}}(Q)\right)$. Firstly, note that for all $\alpha \in (0,1)$, $q_\alpha(P) \geq q_{\alpha/2}(Q)$, and $q_\alpha(P) \leq q_{2 \alpha}(Q)$, since $D_{\infty}(P, Q) \leq \ln 2$, which implies that $\frac{1}{2}F_Q(t) \leq F_P(t) \leq 2 F_Q(t)$ for all $t \in \R$. Similarly, note that for all $\alpha 
    \in (0,1)$, $q_{1-\alpha}(P) \geq q_{1-2\alpha}(Q)$, and $q_{1-\alpha}(P) \leq q_{1-\frac{1}{2} \cdot \alpha}(Q)$.
    Hence, we have that 
    \begin{align*}
    \frac{1}{\eps n'}\left(q_{1-\frac{2}{C \eps n'}}(Q) - q_{\frac{2}{C \eps n'}}(Q) \right) & \leq
    \frac{1}{\eps n'}\left(q_{1-\frac{1}{C \eps n'}}(P) - q_{\frac{1}{C \eps n'}}(P) \right)
    \end{align*}
    Next, consider $\wass(P, P|_{q_{\frac{1}{C\eps n}}(P), q_{1-\frac{1}{C \eps n}}(P)})$. Recall that $q_{\frac{1}{C\eps n}}(P) \leq  q_{\frac{2}{C\eps n}}(Q)$, and $q_{1-\frac{1}{C\eps n}}(P) \geq  q_{1-\frac{2}{C\eps n}}(Q)$. Then, (noting that $L(P)=L(Q)$ and $q_1(P) = q_1(Q)$), we have that
    \begin{align*}
        \wass(Q, Q|_{q_{\frac{2}{C\eps n'}}(Q), q_{1-\frac{2}{C \eps n'}}(Q)}) & = \int_{L(Q)}^{q_{\frac{2}{C \eps n'}}(Q)} F_Q(t)dt +  \int_{q_{1-\frac{2}{C \eps n'}}(Q)}^{q_1(Q)} |1-F_Q(t)| dt \\
        & \leq 2 \int_{L(Q)}^{q_{\frac{2}{C \eps n'}}(Q)} F_P(t)dt +  2 \int_{q_{1-\frac{1}{C \eps n'}}(Q)}^{q_1(Q)} |1-F_P(t)| dt  \\
        & \leq 2 \int_{L(P)}^{q_{\frac{4}{C \eps n'}}(P)} F_P(t)dt +  2 \int_{q_{1-\frac{1}{4 C \eps n'}}(P)}^{q_1(P)} |1-F_P(t)| dt   \\
         & =   2 \wass(P, P|_{q_{\frac{4}{C\eps n'}}(P), q_{1-\frac{4}{C \eps n}}(P)})
    \end{align*}

    Finally, consider $\frac{1}{\sqrt{\log n}} \mathbb{E}\left[\wass(P |_{q_{\frac{1}{C\eps n}(P)}, q_{1-\frac{1}{C \eps n}}(P)}, \hat{P}_n |_{q_{\frac{1}{C\eps n}(P)}, q_{1-\frac{1}{C \eps n}}(P)}) \right]$. By Fubini's theorem and applying both inequalities in Lemma~\ref{lem:talboblocminversion}, we have that
      \begin{align*}
      & \mathbb{E}\left[\wass(Q |_{q_{\frac{2}{C\eps n'}(Q)}, q_{1-\frac{2}{C \eps n'}}(Q)}, \hat{Q}_n |_{q_{\frac{2}{C\eps n'}(Q)}, q_{1-\frac{2}{C \eps n'}}(Q)}) \right] \\
      & = \int_{q_{\frac{2}{C\eps n'}(Q)}}^{q_{1-\frac{2}{C \eps n'}}(Q)} \mathbb{E}[|F_Q(t) - F_{\hat{Q}_n}(t)|] dt \\
      & \leq \int_{q_{\frac{1}{C\eps n'}(P)}}^{q_{1-\frac{1}{C \eps n'}}(P)} \mathbb{E}[|F_Q(t) - F_{\hat{Q}_n}(t)|] dt \\
      & \leq \int_{q_{\frac{1}{C\eps n'}(P)}}^{q_{1-\frac{1}{C \eps n'}}(P)} \min \left\{ 2F_Q(t)[1-F_Q(t)], \sqrt{\frac{F_Q(t)[1-F_Q(t)]}{n}}\right\} dt \\
      & \leq \int_{q_{\frac{1}{C\eps n'}(P)}}^{q_{1-\frac{1}{C \eps n'}}(P)} \min \left\{ 8F_P(t)[1-F_P(t)], 2 \sqrt{\frac{F_P(t)[1-F_P(t)]}{n}}\right\} dt \\
            & \leq \int_{q_{\frac{1}{C\eps n'}(P)}}^{q_{1-\frac{1}{C \eps n'}}(P)} \min \left\{ 8F_P(t)[1-F_P(t)], 2 \sqrt{\frac{F_P(t)[1-F_P(t)]}{\lceil n' \rceil}}\right\} dt \\
      & \leq c_5 \int_{q_{\frac{1}{C\eps n'}(P)}}^{q_{1-\frac{1}{C \eps n'}}(P)} \mathbb{E}[|F_P(t) - F_{\hat{P}_{\lceil n' \rceil}}(t)|] dt \\
      & =  \mathbb{E}\left[\wass(P|_{q_{\frac{1}{C\eps n'}(P)}, q_{1-\frac{1}{C \eps n'}}(P)}, \hat{P}_{\lceil n' \rceil} |_{q_{\frac{1}{C\eps n'}(P)}, q_{1-\frac{1}{C \eps n'}}(P)}) \right],
    \end{align*}
    where $c_5$ is a sufficiently large constant and the fourth inequality holds since $\lceil n' \rceil\le n$.

   By the above observations connecting the distribution-dependent terms with the corresponding terms for $P$, we have that for all $Q$, with probability at least $0.95$,
    \begin{align}
        \wass(Q,\alg(\hat{Q}_n) & =O \Bigg( \frac{1}{\eps n'}\left(q_{1-\frac{1}{C \eps n'}}(P) - q_{\frac{1}{C \eps n'}}(P)\right) + \wass(P, P|_{q_{\frac{4}{C \eps n'}(P)}, q_{1-\frac{4}{C \eps n'}}(P)}) \nonumber \\
        & + \sqrt{\log n} \mathbb{E}\left[\wass \left(P|_{q_{\frac{1}{C \eps n'}}(P), q_{1-\frac{1}{C \eps n'}}(P)}, \hat{P}_{\lceil n' \rceil}|_{q_{\frac{1}{C \eps n'}}(P), q_{1-\frac{1}{C \eps n'}}(P)} \right) \right] \Bigg) + \gamma \nonumber \\
        & =
        O(\log n) \Bigg(\frac{1}{\eps \lceil n' \rceil}\left(q_{1-\frac{1}{C \eps \lceil n' \rceil}}(P) - q_{\frac{1}{C \eps \lceil n' \rceil}}(P)\right) + \wass(P, P|_{q_{\frac{1}{C \eps \lfloor n'/4 \rfloor }(P)}, q_{1-\frac{1}{C \eps \lfloor n'/4 \rfloor}}(P)}) \label{eq:case1dlocmin} \\
        & + \frac{1}{\sqrt{\log n}} \mathbb{E}\left[\wass \left(P|_{q_{\frac{1}{C \eps \lceil n' \rceil}}(P), q_{1-\frac{1}{C \eps \lceil n' \rceil}}(P)}, \hat{P}_{\lceil n' \rceil}|_{q_{\frac{1}{C \eps \lceil n' \rceil}}(P), q_{1-\frac{1}{C \eps \lceil n' \rceil}}(P)} \right) \right]\Bigg) + \gamma \nonumber
    \end{align}
    
  Now, we proceed with the analysis in two cases. Firstly, consider the case when the first and third terms inside the bracket on the RHS of equation~\ref{eq:case1dlocmin} are larger than the second term inside the bracket. Then, we have that for all $Q$, with probability at least $0.95$,
  \begin{align*}
        \wass(Q,\alg(\hat{Q}_n)
        & =
        O(\log n) \Bigg(\frac{1}{\eps \lceil n' \rceil}\left(q_{1-\frac{1}{C \eps \lceil n' \rceil}}(P) - q_{\frac{1}{C \eps \lceil n' \rceil}}(P)\right) \\
        & + \frac{1}{\sqrt{\log n}} \mathbb{E}\left[\wass \left(P|_{q_{\frac{1}{C \eps \lceil n' \rceil}}(P), q_{1-\frac{1}{C \eps \lceil n' \rceil}}(P)}, \hat{P}_{\lceil n' \rceil}|_{q_{\frac{1}{C \eps \lceil n' \rceil}}(P), q_{1-\frac{1}{C \eps \lceil n' \rceil}}(P)} \right) \right]\Bigg) + \gamma
    \end{align*}
  By Theorem~\ref{main1dlb} and the fact that $n' < n$, for all algorithms $\alg'$, there exists a distribution $Q \in N(P)$ such that , 
  \begin{align*}
  R_Q(\alg',\lceil n' \rceil) & = \Omega\Bigg(\frac{1}{\eps \lceil n' \rceil}\left(q_{1-\frac{1}{C \eps \lceil n' \rceil}}(P) - q_{\frac{1}{C \eps \lceil n' \rceil}}(P)\right) \\
  & + \frac{1}{\sqrt{\log n}} \mathbb{E}\left[\wass \left(P|_{q_{\frac{1}{C \eps \lceil n' \rceil}}(P), q_{1-\frac{1}{C \eps \lceil n' \rceil}}(P)}, \hat{P}_{\lceil n' \rceil}|_{q_{\frac{1}{C \eps \lceil n' \rceil}}(P), q_{1-\frac{1}{C \eps \lceil n' \rceil}}(P)} \right) \right] \Bigg).
  \end{align*}
   Hence, for all algorithms $A'$ and the corresponding distribution $Q$, with probability at least $0.95$,
   \begin{align*}
        & \wass(Q,\alg(\hat{Q}_n)
        \leq
        O(\log n) R_Q(A',\lceil n' \rceil) + \gamma. 
    \end{align*}

Next, consider the case where the first and third terms inside the bracket on the RHS of equation~\ref{eq:case1dlocmin} are smaller than the second term inside the bracket. Then, we have that for all $Q$, with probability at least $0.95$,
  \begin{align*}
        \wass(Q,\alg(\hat{Q}_n)
        & =
        O(\log n) \wass(P, P|_{q_{\frac{1}{C \eps \lfloor n'/4 \rfloor}(P)}, q_{1-\frac{1}{C \eps \lfloor n'/4 \rfloor}}(P)})  + \gamma.
    \end{align*}
  By Theorem~\ref{main1dlb}, for all algorithms $\alg'$, there exists a distribution $Q \in N(P)$ such that  $$R_Q(\alg',\lfloor n'/4 \rfloor) = \Omega\left(\wass(P, P|_{q_{\frac{1}{C \eps \lfloor n'/4 \rfloor}(P)}, q_{1-\frac{1}{C \eps \lfloor n'/4 \rfloor}}(P)})\right).$$    
  Hence, we have that for all algorithms $A'$ and for the corresponding distribution $Q$, with probability at least $0.95$,
    \begin{align*}
        \wass(Q,\alg(\hat{Q}_n)
        & =
       O(\log n) R_Q(\alg',\lfloor n'/4 \rfloor) + \gamma,
    \end{align*}
as required. This completes the proof.
\end{proof}

\else
\appendix

\fi

\ifnum\neurips=1


\newpage
\section*{NeurIPS Paper Checklist}

\begin{enumerate}

\item {\bf Claims}
    \item[] Question: Do the main claims made in the abstract and introduction accurately reflect the paper's contributions and scope?
    \item[] Answer: \answerYes{} 
    \item[] Justification: \justificationTODO{The abstract and introduction clearly explain the main claims of the paper; the informal theorems provided in `Our Results' section of the introduction make these explicit.}
    \item[] Guidelines:
    \begin{itemize}
        \item The answer NA means that the abstract and introduction do not include the claims made in the paper.
        \item The abstract and/or introduction should clearly state the claims made, including the contributions made in the paper and important assumptions and limitations. A No or NA answer to this question will not be perceived well by the reviewers. 
        \item The claims made should match theoretical and experimental results, and reflect how much the results can be expected to generalize to other settings. 
        \item It is fine to include aspirational goals as motivation as long as it is clear that these goals are not attained by the paper. 
    \end{itemize}

\item {\bf Limitations}
    \item[] Question: Does the paper discuss the limitations of the work performed by the authors?
    \item[] Answer: \answerYes{} 
    \item[] Justification: \justificationTODO{We explain where our bounds are suboptimal, for example in the case of high dimensional distributions over $\R^d$, we explain how our bounds improve over previous work but still involve significant overhead. As we indicate in both our theorem statements and the discussion in the introduction, we achieve instance optimality upto polylogarithmic factors, and getting rid of these is an open question we leave for future work.)}
    \item[] Guidelines:
    \begin{itemize}
        \item The answer NA means that the paper has no limitation while the answer No means that the paper has limitations, but those are not discussed in the paper. 
        \item The authors are encouraged to create a separate "Limitations" section in their paper.
        \item The paper should point out any strong assumptions and how robust the results are to violations of these assumptions (e.g., independence assumptions, noiseless settings, model well-specification, asymptotic approximations only holding locally). The authors should reflect on how these assumptions might be violated in practice and what the implications would be.
        \item The authors should reflect on the scope of the claims made, e.g., if the approach was only tested on a few datasets or with a few runs. In general, empirical results often depend on implicit assumptions, which should be articulated.
        \item The authors should reflect on the factors that influence the performance of the approach. For example, a facial recognition algorithm may perform poorly when image resolution is low or images are taken in low lighting. Or a speech-to-text system might not be used reliably to provide closed captions for online lectures because it fails to handle technical jargon.
        \item The authors should discuss the computational efficiency of the proposed algorithms and how they scale with dataset size.
        \item If applicable, the authors should discuss possible limitations of their approach to address problems of privacy and fairness.
        \item While the authors might fear that complete honesty about limitations might be used by reviewers as grounds for rejection, a worse outcome might be that reviewers discover limitations that aren't acknowledged in the paper. The authors should use their best judgment and recognize that individual actions in favor of transparency play an important role in developing norms that preserve the integrity of the community. Reviewers will be specifically instructed to not penalize honesty concerning limitations.
    \end{itemize}

\item {\bf Theory Assumptions and Proofs}
    \item[] Question: For each theoretical result, does the paper provide the full set of assumptions and a complete (and correct) proof?
    \item[] Answer: \answerYes{} 
    \item[] Justification: \justificationTODO{The informal theorems provided in the introduction give simplified versions of our main results (and explain the high level techniques used to obtain them). In the supplementary material, we discuss these results in more generality and give complete proofs of these results.}
    \item[] Guidelines:
    \begin{itemize}
        \item The answer NA means that the paper does not include theoretical results. 
        \item All the theorems, formulas, and proofs in the paper should be numbered and cross-referenced.
        \item All assumptions should be clearly stated or referenced in the statement of any theorems.
        \item The proofs can either appear in the main paper or the supplemental material, but if they appear in the supplemental material, the authors are encouraged to provide a short proof sketch to provide intuition. 
        \item Inversely, any informal proof provided in the core of the paper should be complemented by formal proofs provided in appendix or supplemental material.
        \item Theorems and Lemmas that the proof relies upon should be properly referenced. 
    \end{itemize}

    \item {\bf Experimental Result Reproducibility}
    \item[] Question: Does the paper fully disclose all the information needed to reproduce the main experimental results of the paper to the extent that it affects the main claims and/or conclusions of the paper (regardless of whether the code and data are provided or not)?
    \item[] Answer: \answerYes{} 
    \item[] Justification: \justificationTODO{Experimental contributions are not a main focus of this work since they have been extensively addressed in previous work, as discussed in the introduction and related work sections. For the experiment we do, we give a detailed description of the distribution used, the methods we compare to and appropriate parameters (along with citations)- the experiment can be reproduced with this information.}
    \item[] Guidelines:
    \begin{itemize}
        \item The answer NA means that the paper does not include experiments.
        \item If the paper includes experiments, a No answer to this question will not be perceived well by the reviewers: Making the paper reproducible is important, regardless of whether the code and data are provided or not.
        \item If the contribution is a dataset and/or model, the authors should describe the steps taken to make their results reproducible or verifiable. 
        \item Depending on the contribution, reproducibility can be accomplished in various ways. For example, if the contribution is a novel architecture, describing the architecture fully might suffice, or if the contribution is a specific model and empirical evaluation, it may be necessary to either make it possible for others to replicate the model with the same dataset, or provide access to the model. In general. releasing code and data is often one good way to accomplish this, but reproducibility can also be provided via detailed instructions for how to replicate the results, access to a hosted model (e.g., in the case of a large language model), releasing of a model checkpoint, or other means that are appropriate to the research performed.
        \item While NeurIPS does not require releasing code, the conference does require all submissions to provide some reasonable avenue for reproducibility, which may depend on the nature of the contribution. For example
        \begin{enumerate}
            \item If the contribution is primarily a new algorithm, the paper should make it clear how to reproduce that algorithm.
            \item If the contribution is primarily a new model architecture, the paper should describe the architecture clearly and fully.
            \item If the contribution is a new model (e.g., a large language model), then there should either be a way to access this model for reproducing the results or a way to reproduce the model (e.g., with an open-source dataset or instructions for how to construct the dataset).
            \item We recognize that reproducibility may be tricky in some cases, in which case authors are welcome to describe the particular way they provide for reproducibility. In the case of closed-source models, it may be that access to the model is limited in some way (e.g., to registered users), but it should be possible for other researchers to have some path to reproducing or verifying the results.
        \end{enumerate}
    \end{itemize}

\item {\bf Open access to data and code}
    \item[] Question: Does the paper provide open access to the data and code, with sufficient instructions to faithfully reproduce the main experimental results, as described in supplemental material?
    \item[] Answer: \answerNo{} 
    \item[] Justification: \justificationTODO{Our experiment is not a main contribution of our work (we are focused on theoretical bounds) and simply shows how different instance optimal and worst-case bounds can be. We provide the distribution we use precisely, which corresponds to the exact data we use. We also explain the method and hyperparameters we use, but don't release the code- however, the complete descriptions of the algorithms are publicly available in previous work.}
    \item[] Guidelines:
    \begin{itemize}
        \item The answer NA means that paper does not include experiments requiring code.
        \item Please see the NeurIPS code and data submission guidelines (\url{https://nips.cc/public/guides/CodeSubmissionPolicy}) for more details.
        \item While we encourage the release of code and data, we understand that this might not be possible, so “No” is an acceptable answer. Papers cannot be rejected simply for not including code, unless this is central to the contribution (e.g., for a new open-source benchmark).
        \item The instructions should contain the exact command and environment needed to run to reproduce the results. See the NeurIPS code and data submission guidelines (\url{https://nips.cc/public/guides/CodeSubmissionPolicy}) for more details.
        \item The authors should provide instructions on data access and preparation, including how to access the raw data, preprocessed data, intermediate data, and generated data, etc.
        \item The authors should provide scripts to reproduce all experimental results for the new proposed method and baselines. If only a subset of experiments are reproducible, they should state which ones are omitted from the script and why.
        \item At submission time, to preserve anonymity, the authors should release anonymized versions (if applicable).
        \item Providing as much information as possible in supplemental material (appended to the paper) is recommended, but including URLs to data and code is permitted.
    \end{itemize}

\item {\bf Experimental Setting/Details}
    \item[] Question: Does the paper specify all the training and test details (e.g., data splits, hyperparameters, how they were chosen, type of optimizer, etc.) necessary to understand the results?
    \item[] Answer: \answerYes{} 
    \item[] Justification: \justificationTODO{As described in the experiment details section in the appendix, we give all the details necessary to understand the experiment and its results.}
    \item[] Guidelines:
    \begin{itemize}
        \item The answer NA means that the paper does not include experiments.
        \item The experimental setting should be presented in the core of the paper to a level of detail that is necessary to appreciate the results and make sense of them.
        \item The full details can be provided either with the code, in appendix, or as supplemental material.
    \end{itemize}

\item {\bf Experiment Statistical Significance}
    \item[] Question: Does the paper report error bars suitably and correctly defined or other appropriate information about the statistical significance of the experiments?
    \item[] Answer: \answerNo{} 
    \item[] Justification: \justificationTODO{ The experiment is not the focus of our paper (rather, we are focused on theoretical analysis) and is just used to demonstrate one drawback of worst case bounds (which we also discuss theoretically) - hence, we don't comment on statistical significance in detail.
    }
    \item[] Guidelines:
    \begin{itemize}
        \item The answer NA means that the paper does not include experiments.
        \item The authors should answer "Yes" if the results are accompanied by error bars, confidence intervals, or statistical significance tests, at least for the experiments that support the main claims of the paper.
        \item The factors of variability that the error bars are capturing should be clearly stated (for example, train/test split, initialization, random drawing of some parameter, or overall run with given experimental conditions).
        \item The method for calculating the error bars should be explained (closed form formula, call to a library function, bootstrap, etc.)
        \item The assumptions made should be given (e.g., Normally distributed errors).
        \item It should be clear whether the error bar is the standard deviation or the standard error of the mean.
        \item It is OK to report 1-sigma error bars, but one should state it. The authors should preferably report a 2-sigma error bar than state that they have a 96\% CI, if the hypothesis of Normality of errors is not verified.
        \item For asymmetric distributions, the authors should be careful not to show in tables or figures symmetric error bars that would yield results that are out of range (e.g. negative error rates).
        \item If error bars are reported in tables or plots, The authors should explain in the text how they were calculated and reference the corresponding figures or tables in the text.
    \end{itemize}

\item {\bf Experiments Compute Resources}
    \item[] Question: For each experiment, does the paper provide sufficient information on the computer resources (type of compute workers, memory, time of execution) needed to reproduce the experiments?
    \item[] Answer: \answerYes{} 
    \item[] Justification: \justificationTODO{We do not explicitly address this since the experiment can be performed on essentially any laptop since it is simple (both time and memory efficient) and does not require any significant compute. }
    \item[] Guidelines:
    \begin{itemize}
        \item The answer NA means that the paper does not include experiments.
        \item The paper should indicate the type of compute workers CPU or GPU, internal cluster, or cloud provider, including relevant memory and storage.
        \item The paper should provide the amount of compute required for each of the individual experimental runs as well as estimate the total compute. 
        \item The paper should disclose whether the full research project required more compute than the experiments reported in the paper (e.g., preliminary or failed experiments that didn't make it into the paper). 
    \end{itemize}
    
\item {\bf Code Of Ethics}
    \item[] Question: Does the research conducted in the paper conform, in every respect, with the NeurIPS Code of Ethics \url{https://neurips.cc/public/EthicsGuidelines}?
    \item[] Answer: \answerYes{} 
    \item[] Justification: \justificationTODO{We have reviewed the code of ethics in detail- both societal impact guidelines and impact-mitigation measures and are confident that our paper conforms to the code of ethics in every respect.}
    \item[] Guidelines:
    \begin{itemize}
        \item The answer NA means that the authors have not reviewed the NeurIPS Code of Ethics.
        \item If the authors answer No, they should explain the special circumstances that require a deviation from the Code of Ethics.
        \item The authors should make sure to preserve anonymity (e.g., if there is a special consideration due to laws or regulations in their jurisdiction).
    \end{itemize}

\item {\bf Broader Impacts}
    \item[] Question: Does the paper discuss both potential positive societal impacts and negative societal impacts of the work performed?
    \item[] Answer: \answerYes{} 
    \item[] Justification: \justificationTODO{Our paper is motivated by theoretically explaining the performance of differentially private algorithms. As touched upon in the introduction, differential privacy is important because estimation is frequently done on sensitive data and so developing methods to better understand the privacy-utility tradeoff is societally valuable. We don't anticipate any negative societal effects from this work.}
    \item[] Guidelines:
    \begin{itemize}
        \item The answer NA means that there is no societal impact of the work performed.
        \item If the authors answer NA or No, they should explain why their work has no societal impact or why the paper does not address societal impact.
        \item Examples of negative societal impacts include potential malicious or unintended uses (e.g., disinformation, generating fake profiles, surveillance), fairness considerations (e.g., deployment of technologies that could make decisions that unfairly impact specific groups), privacy considerations, and security considerations.
        \item The conference expects that many papers will be foundational research and not tied to particular applications, let alone deployments. However, if there is a direct path to any negative applications, the authors should point it out. For example, it is legitimate to point out that an improvement in the quality of generative models could be used to generate deepfakes for disinformation. On the other hand, it is not needed to point out that a generic algorithm for optimizing neural networks could enable people to train models that generate Deepfakes faster.
        \item The authors should consider possible harms that could arise when the technology is being used as intended and functioning correctly, harms that could arise when the technology is being used as intended but gives incorrect results, and harms following from (intentional or unintentional) misuse of the technology.
        \item If there are negative societal impacts, the authors could also discuss possible mitigation strategies (e.g., gated release of models, providing defenses in addition to attacks, mechanisms for monitoring misuse, mechanisms to monitor how a system learns from feedback over time, improving the efficiency and accessibility of ML).
    \end{itemize}
    
\item {\bf Safeguards}
    \item[] Question: Does the paper describe safeguards that have been put in place for responsible release of data or models that have a high risk for misuse (e.g., pretrained language models, image generators, or scraped datasets)?
    \item[] Answer: \answerNA{} 
    \item[] Justification: \justificationTODO{}
    \item[] Guidelines:
    \begin{itemize}
        \item The answer NA means that the paper poses no such risks.
        \item Released models that have a high risk for misuse or dual-use should be released with necessary safeguards to allow for controlled use of the model, for example by requiring that users adhere to usage guidelines or restrictions to access the model or implementing safety filters. 
        \item Datasets that have been scraped from the Internet could pose safety risks. The authors should describe how they avoided releasing unsafe images.
        \item We recognize that providing effective safeguards is challenging, and many papers do not require this, but we encourage authors to take this into account and make a best faith effort.
    \end{itemize}

\item {\bf Licenses for existing assets}
    \item[] Question: Are the creators or original owners of assets (e.g., code, data, models), used in the paper, properly credited and are the license and terms of use explicitly mentioned and properly respected?
    \item[] Answer: \answerYes{} 
    \item[] Justification: \justificationTODO{In the experiment conducted, we cite the subroutine we use in another paper, as well as the method from another paper we compare to. We however implement them ourselves and so official licenses are not needed.}
    \item[] Guidelines:
    \begin{itemize}
        \item The answer NA means that the paper does not use existing assets.
        \item The authors should cite the original paper that produced the code package or dataset.
        \item The authors should state which version of the asset is used and, if possible, include a URL.
        \item The name of the license (e.g., CC-BY 4.0) should be included for each asset.
        \item For scraped data from a particular source (e.g., website), the copyright and terms of service of that source should be provided.
        \item If assets are released, the license, copyright information, and terms of use in the package should be provided. For popular datasets, \url{paperswithcode.com/datasets} has curated licenses for some datasets. Their licensing guide can help determine the license of a dataset.
        \item For existing datasets that are re-packaged, both the original license and the license of the derived asset (if it has changed) should be provided.
        \item If this information is not available online, the authors are encouraged to reach out to the asset's creators.
    \end{itemize}

\item {\bf New Assets}
    \item[] Question: Are new assets introduced in the paper well documented and is the documentation provided alongside the assets?
    \item[] Answer: \answerNA{} 
    \item[] Justification: \justificationTODO{}
    \item[] Guidelines:
    \begin{itemize}
        \item The answer NA means that the paper does not release new assets.
        \item Researchers should communicate the details of the dataset/code/model as part of their submissions via structured templates. This includes details about training, license, limitations, etc. 
        \item The paper should discuss whether and how consent was obtained from people whose asset is used.
        \item At submission time, remember to anonymize your assets (if applicable). You can either create an anonymized URL or include an anonymized zip file.
    \end{itemize}

\item {\bf Crowdsourcing and Research with Human Subjects}
    \item[] Question: For crowdsourcing experiments and research with human subjects, does the paper include the full text of instructions given to participants and screenshots, if applicable, as well as details about compensation (if any)? 
    \item[] Answer: \answerNA{} 
    \item[] Justification: \justificationTODO{}
    \item[] Guidelines:
    \begin{itemize}
        \item The answer NA means that the paper does not involve crowdsourcing nor research with human subjects.
        \item Including this information in the supplemental material is fine, but if the main contribution of the paper involves human subjects, then as much detail as possible should be included in the main paper. 
        \item According to the NeurIPS Code of Ethics, workers involved in data collection, curation, or other labor should be paid at least the minimum wage in the country of the data collector. 
    \end{itemize}

\item {\bf Institutional Review Board (IRB) Approvals or Equivalent for Research with Human Subjects}
    \item[] Question: Does the paper describe potential risks incurred by study participants, whether such risks were disclosed to the subjects, and whether Institutional Review Board (IRB) approvals (or an equivalent approval/review based on the requirements of your country or institution) were obtained?
    \item[] Answer: \answerNA{} 
    \item[] Justification: \justificationTODO{}
    \item[] Guidelines:
    \begin{itemize}
        \item The answer NA means that the paper does not involve crowdsourcing nor research with human subjects.
        \item Depending on the country in which research is conducted, IRB approval (or equivalent) may be required for any human subjects research. If you obtained IRB approval, you should clearly state this in the paper. 
        \item We recognize that the procedures for this may vary significantly between institutions and locations, and we expect authors to adhere to the NeurIPS Code of Ethics and the guidelines for their institution. 
        \item For initial submissions, do not include any information that would break anonymity (if applicable), such as the institution conducting the review.
    \end{itemize}

\end{enumerate}
\fi

\end{document}